\documentclass{article}
\usepackage[T1]{fontenc}
\usepackage[utf8]{inputenc}
\usepackage{lmodern}
\usepackage[a4paper,margin=25mm,footskip=10mm]{geometry}
\usepackage{microtype}
\usepackage{graphicx}
\usepackage{tikz}
\usetikzlibrary{arrows.meta,positioning,calc}
\usepackage{enumitem} 
\usepackage{float,subcaption} 
\usepackage{booktabs,multirow,makecell} 
\usepackage{algorithm,algpseudocode,listings}  
\usepackage{authblk} 
\usepackage{tcolorbox}
\usepackage{upgreek} 

\usepackage[tbtags]{amsmath}
\usepackage{amssymb,amsthm}
\usepackage{mathtools}
\usepackage{mathrsfs}
\usepackage{physics2}
\usephysicsmodule{ab} 

\usepackage{sota-commands}

\usepackage[round]{natbib} 
\usepackage{bm} 
\usepackage[colorlinks,linktoc=page,linkcolor=purple,citecolor=teal]{hyperref}
\usepackage[noabbrev,capitalize]{cleveref}

\allowdisplaybreaks

\newtheorem{theorem}{Theorem}[section]
\newtheorem*{theorem*}{Theorem}

\newtheorem*{result*}{Result}
\newtheorem{proposition}[theorem]{Proposition}
\newtheorem*{proposition*}{Proposition}

\newtheorem*{conjecture*}{Conjecture}
\newtheorem{lemma}[theorem]{Lemma}
\newtheorem*{lemma*}{Lemma}
\newtheorem{corollary}[theorem]{Corollary}
\newtheorem*{corollary*}{Corollary}

\theoremstyle{definition}
\newtheorem{assumption}[theorem]{Assumption}
\newtheorem*{assumption*}{Assumption}
\newtheorem{definition}[theorem]{Definition}
\newtheorem*{definition*}{Definition}

\theoremstyle{remark}
\newtheorem{remark}[theorem]{Remark}
\newtheorem*{remark*}{Remark}

\crefname{assumption}{Assumption}{Assumptions}

\usepackage{pifont}
\newcommand{\cmark}{\ding{51}}
\newcommand{\xmark}{\ding{55}}

\newcommand{\MP}{\mathrm{MP}}

\newcommand{\dist}{\mathsf{dist}}
\newcommand{\Etrain}{\mathscr{L}}
\newcommand{\Etest}{\mathscr{R}}
\newcommand{\ind}{\mathbb{I}}
\DeclareMathOperator{\dirac}{\updelta}

\renewcommand{\tilde}{\widetilde}
\renewcommand{\bar}{\overline}

\begin{document}

\title{High-Dimensional Limit of Stochastic Gradient Flow\\via Dynamical Mean-Field Theory}

\author[1,2]{Sota Nishiyama\thanks{\texttt{snishiyama@g.ecc.u-tokyo.ac.jp}}}
\author[1,2,3]{Masaaki Imaizumi\thanks{\texttt{imaizumi@g.ecc.u-tokyo.ac.jp}}}

\affil[1]{The University of Tokyo}
\affil[2]{RIKEN Center for Advanced Intelligence Project}
\affil[3]{Kyoto University}

\maketitle

\begin{abstract}
    Modern machine learning models are typically trained via multi-pass stochastic gradient descent (SGD) with small batch sizes, and understanding their dynamics in high dimensions is of great interest.
    However, an analytical framework for describing the high-dimensional asymptotic behavior of multi-pass SGD with small batch sizes for nonlinear models is currently missing.
    In this study, we address this gap by analyzing the high-dimensional dynamics of a stochastic differential equation called a \emph{stochastic gradient flow} (SGF), which approximates multi-pass SGD in this regime.
    In the limit where the number of data samples $n$ and the dimension $d$ grow proportionally, we derive a closed system of low-dimensional and continuous-time equations and prove that it characterizes the asymptotic distribution of the SGF parameters.
    Our theory is based on the dynamical mean-field theory (DMFT) and is applicable to a wide range of models encompassing generalized linear models and two-layer neural networks.
    We further show that the resulting DMFT equations recover several existing high-dimensional descriptions of SGD dynamics as special cases, thereby providing a unifying perspective on prior frameworks such as online SGD and high-dimensional linear regression.
    Our proof builds on the existing DMFT technique for gradient flow and extends it to handle the stochasticity in SGF using tools from stochastic calculus.
\end{abstract}

\tableofcontents

\section{Introduction}

Stochastic gradient descent (SGD) \citep{robbins1951stochastic} is a fundamental optimization algorithm widely used in machine learning.
Stochasticity in SGD affects its dynamics in a nontrivial way, and it has been observed empirically and theoretically that SGD exhibits different behaviors from the noiseless counterparts, such as gradient descent and gradient flow \citep{keskar2017largebatch,jastrzebski2017three}.
Thus, understanding the dynamics of SGD and how the noise influences learned solutions is crucial for analyzing and improving machine learning algorithms.

To understand the behavior of SGD in modern machine learning with high-dimensional models and large datasets, many works have developed frameworks for \emph{high-dimensional asymptotic analysis} for SGD, where one derives a low-dimensional equation that precisely characterizes the macroscopic behavior of the learning problem. For noiseless gradient descent and its continuous-time version, gradient flow, methods such as random matrix theory and dynamical mean-field theory (DMFT) have proven to be useful for deriving such low-dimensional equations \citep{advani2020highdimensional,bodin2021model,mignacco2020dynamical,celentano2021highdimensional,bordelon2024dynamical}. For SGD, there are several frameworks that apply to restricted settings. In particular, \emph{one-pass} or \emph{online} SGD, where each data sample is used only once during training, has been studied extensively \citep{saad1995online,goldt2019dynamics,benarous2022highdimensional}. For analysis of \emph{multi-pass} SGD, where data samples are used multiple times during training, \citet{mignacco2020dynamical,gerbelot2024rigorous} have developed a DMFT-based approach applicable to SGD with a proportionally large batch size with respect to the dataset size. Restricted to least-square linear settings, \citet{paquette2021sgd,paquette2025homogenization} analyzed multi-pass SGD with small batch sizes and derived low-dimensional Volterra equations for tracking the dynamics of summary statistics, such as the training and test errors, using the theory of \emph{homogenized SGD} (HSGD) and random matrix theory. A more detailed discussion of related work is provided in \cref{app:related_work}.

Despite these developments, there are two main challenges:
\begin{enumerate}
    \item The high-dimensional analysis of multi-pass SGD with small batch sizes and nonlinear models is currently missing.
    \item Existing analytical frameworks have been developed in isolation, and their relationships remain unclear.
\end{enumerate}
In particular, existing frameworks apply to (i) online SGD, which cannot capture the effect of overfitting, (ii) SGDs with proportionally large batch sizes, which are unrealistic in practice for large datasets, and (iii) linear models, which cannot capture the complexity of nonlinear neural networks in full.

In this work, we address these questions by studying the high-dimensional dynamics of a stochastic differential equation (SDE) called a \emph{stochastic gradient flow} (SGF), which approximates the dynamics of multi-pass SGD with a small batch size. We derive a DMFT-type system of equations that precisely describes the SGF dynamics in high dimensions and connect our new DMFT equations to existing frameworks for SGD.

We summarize our contributions below.
\begin{description}
    \item[Novel DMFT analysis of SGF.] Extending the DMFT framework, we derive a system of low-dimensional stochastic processes that characterizes the empirical distribution of the entries of the SGF parameters in the proportional high-dimensional limit, where the number of data samples $n$ and the parameter dimension $d$ diverge proportionally. See \cref{fig:roadmap-sgd-dmft} for a logical roadmap of our theory and \cref{tab:comparison_frameworks} for comparison with existing frameworks.
    \item[Clarifying connections among previous studies.] In certain limits of parameters, the DMFT equation for SGF reduces to the equations derived for noiseless gradient flow and for online SGD in previous works. Furthermore, in the special case of linear regression, we show that the DMFT equation for SGF simplifies significantly and matches the Volterra equations for describing the summary statistics of SGD dynamics for linear regression derived in \citet{paquette2021sgd,paquette2025homogenization}. These results show that our DMFT analysis of SGF provides a unified view on several lines of research on high-dimensional asymptotics of SGD.
    \item[Proof technique.] We build on the approach by \citet{celentano2021highdimensional}, who rigorously derived the DMFT equation for noiseless gradient flow. To handle the additional stochasticity in SGF, we make use of tools from stochastic calculus, a novel truncation argument, and a further manipulation of the DMFT equations using Stein's lemma.
\end{description}

Concurrent with our work, \citet{fan2026highdimensional} worked on a similar problem and derived a DMFT-type equation for describing the high-dimensional asymptotic behavior of both SGD and SGF. Our work was developed independently and offers a complementary perspective; see the discussion that appears after \cref{thm:dmft_sgf}.

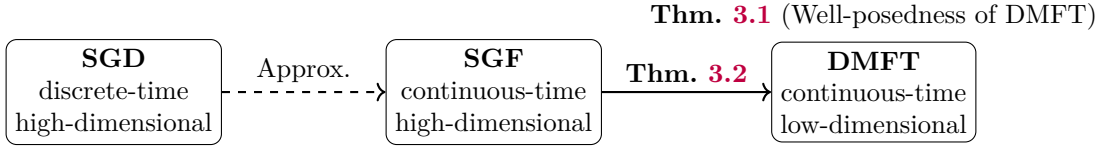
\begin{figure}[t]
    \centering
    \begin{tikzpicture}

        \node[draw, rounded corners, align=center] (SGD) at (0,0)
        {\textbf{SGD}\\{discrete-time}\\{high-dimensional}};

        \node[draw, rounded corners, align=center] (SGF) at (5,0)
        {\textbf{SGF}\\{continuous-time}\\{high-dimensional}};

        \node[draw, rounded corners, align=center] (DMFT) at (10,0)
        {\textbf{DMFT}\\{continuous-time}\\{low-dimensional}};

        \draw[->, thick, dashed] (SGD) -- node[above]
        {Approx.} (SGF);

        \draw[->, thick] (SGF) -- node[above]
        {\textbf{Thm.~\ref{thm:dmft_sgf}}} (DMFT);

        \node[align=center] (Thm1) at (10,1)
        {\textbf{Thm.~\ref{thm:dmft_sol}} (Well-posedness of DMFT)};
    \end{tikzpicture}

    \caption{
        A logical roadmap of our framework.
        We model the SGD dynamics with the stochastic gradient flow (SGF), and analyze the SGF in high dimensions.
        We show that the empirical distribution of SGF parameters converges to the DMFT solution (\cref{thm:dmft_sgf}),
        which uniquely exists (\cref{thm:dmft_sol}).
    }
    \label{fig:roadmap-sgd-dmft}
\end{figure}

\begin{table}[h]
    \caption{Comparison of frameworks for high-dimensional analysis of SGD.}
    \label{tab:comparison_frameworks}
    \begin{center}
        \small
        \begin{tabular}{c|cccc}
            \toprule
                       & \makecell{\textbf{Online SGD}                            \\\citep{benarous2022highdimensional}} & \makecell{\textbf{Previous DMFT}\\\citep{mignacco2020dynamical}} & \makecell{\textbf{HSGD}\\\citep{paquette2025homogenization}} & \textbf{Ours} \\ \midrule
            Multi-pass & \xmark                        & \cmark & \cmark & \cmark \\ \midrule
            \makecell{Small                                                       \\batch sizes} & \cmark                        & \xmark & \cmark & \cmark \\ \midrule
            \makecell{Nonlinear                                                   \\models}          & \cmark                        & \cmark & \xmark & \cmark \\
            \bottomrule
        \end{tabular}
    \end{center}
\end{table}

\paragraph{Notation.}

For vectors $\bx = (x_1,...,x_d)^\transpose$ and $\by = (y_1,...,y_d)^\transpose \in \reals^d$, $\bx \odot \by$ denotes entry-wise multiplication, i.e., $\bx \odot \by = (x_1 y_1, \dots, x_d y_d)^\transpose \in \reals^d$.
$\bI_d \in \reals^{d\times d}$ denotes the $d \times d$ identity matrix.
$\bone_d \in \reals^{d}$ denotes the all-ones vector $\bone_d = (1,\dots,1)^\transpose$.
$\GP(0,Q)$ denotes a centered Gaussian process with covariance kernel $Q$.
$\ind(\cdot)$ denotes the indicator function that returns $1$ if the argument is true and $0$ otherwise.
$\norm{\cdot}_2$ denotes the $\ell_2$ norm for vectors and the 2-operator norm for matrices.
$\frobnorm{\cdot}$ denotes the Frobenius norm for matrices.
$W_p$ denotes the $p$-Wasserstein distance between probability measures.
$\plim$ denotes convergence in probability.
$\sfP(X)$ denotes the law of a random variable $X$. $\sfP(X,Y)$ denotes the joint law of random variables $X$ and $Y$.
For a matrix $\bx \in \reals^{d \times m}$, we denote by $\hat\sfP(\bx)$ the empirical distribution of its rows, i.e., $\hat\sfP(\bx) = \frac{1}{d} \sum_{i=1}^d \delta_{x_i}$.
Similarly, $\hat\sfP(\bx,\by)$ denotes the empirical joint distribution of the rows of $\bx$ and $\by$, i.e., $\hat\sfP(\bx,\by) = \frac{1}{d} \sum_{i=1}^d \delta_{x_i,y_i}$.
$\norm{\cdot}_{\psi_2}$ denotes the sub-Gaussian norm ($\psi_2$-Orlicz norm).

\section{Setup}
\label{sec:setup}

\paragraph{Stochastic gradient descent.}
Fix $m \in \naturals$, a learning rate $\eta > 0$, a batch size $B \in \naturals$, a data matrix $\bX\in\reals^{n\times d}$, a noise vector $\bz\in\reals^n$, the initial parameter $\btheta^0\in\reals^{d \times m}$, and functions $h \colon \reals^m \times \reals_{\geq 0} \to \reals^m; (\theta,t) \mapsto h_t(\theta)$ and $\ell \colon \reals^m \times \reals \times \reals_{\geq 0} \to \reals^m; (r,z,t) \mapsto \ell_t(r;z)$.
We consider the following stochastic process for $\hat\btheta^k \in\reals^{d\times m}$ for $k = 0, 1, \dots$, initialized with $\hat\btheta^0 = \btheta^0$:
\begin{align}
    \hat\btheta^{k+1} = \hat\btheta^{k} - \eta \cdot \ab(\frac{1}{d} h_{t_k}(\hat\btheta^{k}) + \frac{1}{B} \sum_{i\in \calB^k} \bx_i \ell_{t_k}(\hat r_i^{k};z_i)^\transpose) \,, \quad \hat\br^{k} = \bX \hat\btheta^{k} \in \reals^{n \times m} \,, \label{eq:sgd}
\end{align}
where $t_k \coloneqq \eta k / d$ and $\bx_i \in \reals^d$ and $\hat r_i^k \in \reals^m$ are the $i$-th row of $\bX$ and $\hat\br^k$, respectively.
Here, $h_t$ is applied row-wise to $\hat\btheta^k$. In each update $k$, the mini-batch $\calB^k$ is sampled uniformly at random from all subsets of $[n] \coloneqq \{1,2,\dots,n\}$ with size $B$.

This stochastic process includes the mini-batch stochastic gradient descent on the following training objective with the learning rate $\eta$ and the batch size $B$:
\begin{align}
    \calL(\btheta) \coloneqq \frac{1}{n} \sum_{i=1}^n L(\btheta^\transpose \bx_i;z_i) + \frac{1}{d} \sum_{i=1}^d H(\theta_i) \,,
\end{align}
where $\theta_i \in \reals^m$ is the $i$-th row of $\btheta \in \reals^{d \times m}$, $L \colon \reals^m \times \reals \to \reals$ is a loss function, and $H \colon \reals^m \to \reals$ is a regularization function.
In this case, we have $h_t(\theta) = \nabla_{\theta} H(\theta)$ and $\ell_t(r;z) = \nabla_r L(r;z)$.
This setting includes generalized linear models ($m = 1$) and two-layer neural networks with width $m$.
In the following, we refer to the general process \eqref{eq:sgd} as an `SGD' for simplicity, although it may not correspond to the gradient descent on any objective function.

Note that $\bz$ is interpreted as a noise vector independent of the data $\bX$. The setting of supervised learning with (noisy) target labels $y_i = f_*(\btheta^{*\transpose} \bx_i) + z_i$ for some target parameter $\btheta^* \in \reals^{d \times m}$ and target function $f_* \colon \reals^m \to \reals$ is included in the above setting by considering the loss function $L(\btheta^\transpose \bx,\btheta^{*\transpose} \bx; z)$ and treating the pair $(\btheta,\btheta^*) \in \reals^{d \times 2m}$ as a single SGD parameter; see \cref{sec:planted_model} for details.

\paragraph{Stochastic gradient flow.}
Instead of working directly with the discrete-time SGD iteration \eqref{eq:sgd}, we work with its continuous-time analogue modeled by a stochastic differential equation (SDE) with the matching first and second moments, called a \emph{stochastic gradient flow} (SGF) \citep{ali2020implicit} (also called a \emph{stochastic modified equation} (SME) in the literature \citep{li2017stochastic,li2019stochastic}):
\begin{align}
    \de \btheta^t = - \ab(h_t(\btheta^t) + \frac{1}{\delta} \bX^\transpose \ell_{t}(\br^t;\bz) ) \de t + \sqrt{\frac{\tau}{\delta}} \sum_{i=1}^n \bx_i \ell_t(r^t_i;z_i)^\transpose \de B_i^t \,, \quad \br^t = \bX \btheta^t \,, \label{eq:sgf}
\end{align}
where $\bB^t=(B_i^t)_{i\in[n]}$ is a Brownian motion in $\reals^n$, and we set $\delta \coloneqq n/d$ and $\tau \coloneqq \eta/B$.
Here, $h_t$ and $\ell_t$ are applied row-wise to $\btheta^t$ and $(\br^t,\bz)$, respectively.
The parameter $\tau$ is defined as the ratio of the learning rate $\eta$ and the batch size $B$, which controls the noise intensity of the SGF and is called a \emph{temperature} parameter in the literature \citep{jastrzebski2017three}.
The time $t$ is scaled so that it corresponds to approximately $k d / \eta$ SGD updates.
Again, although we call \eqref{eq:sgf} an `SGF', it may not correspond to the gradient flow of any objective function.
In the noiseless case $\tau = 0$, the SGF \eqref{eq:sgf} reduces to the gradient flow dynamics studied in \citet{celentano2021highdimensional}, thus generalizing their setting to stochastic dynamics.

To see that the SGF \eqref{eq:sgf} approximates the SGD \eqref{eq:sgd}, consider the one-step increment of the SGD \eqref{eq:sgd} given by $\Delta\hat\btheta^k = \hat\btheta^{k+1} - \hat\btheta^k$ and compute its first and second moments conditioned on $\hat\btheta^k$:
\begin{align}
    \E[\Delta \hat\btheta^k  \mid \hat\btheta^k]  & = -\frac{\eta}{d} \ab(h_{t_k}(\hat\btheta^k) + \frac{1}{\delta} \bX^\transpose \ell_{t_k}(\hat\br^k;\bz)) \,,                                   \\
    \Cov(\Delta \hat\btheta^k \mid \hat\btheta^k) & = \frac{\eta^2 (n-B)}{Bn(n-1)}\sum_{i=1}^n \bx_i \ell_{t_k}(\hat r_i^{k};z_i)^\transpose \otimes \bx_i \ell_{t_k}(\hat r_i^{k};z_i) \notag         \\
                                                  & \qquad + \frac{\eta^2(n-B)}{Bn^2(n-1)} \bX^\transpose \ell_{t_k}(\bX\hat\btheta^k;\bz) \otimes \bX^\transpose \ell_{t_k}(\bX\hat\btheta^k;\bz) \,.
\end{align}
Here, the outer product $\bA \otimes \bB$ for matrices $\bA,\bB\in\reals^{d \times m}$ is interpreted as $\operatorname{vec}(\bA) \operatorname{vec}(\bB)^\transpose \in \reals^{dm \times dm}$, where $\operatorname{vec}$ is the vectorization operator.
In the proportional high-dimensional limit where $n,d \to \infty$ with $n/d \to \delta \in (0,\infty)$ and $B = o(n)$, the leading term of the covariance simplifies to
\begin{align}
    \Cov(\Delta \hat\btheta^k \mid \hat\btheta^k) & = \frac{\eta}{d} \cdot \frac{\tau}{\delta} \sum_{i=1}^n \bx_i \ell_{t_k}(\hat r_i^{k};z_i)^\transpose \otimes \bx_i \ell_{t_k}(\hat r_i^{k};z_i) + (\text{sub-leading terms}) \,.
\end{align}
Thus, the first and second moments of the increment $\Delta \hat\btheta^k$ match those of the increment of the SGF \eqref{eq:sgf} up to the leading order in $d$ over the time interval of length $\Delta t = \eta/d$.

The SDE approximation of SGD has been used extensively in the literature as a continuous model of discrete-time SGD \citep{mandt2017stochastic,jastrzebski2017three,chaudhari2018stochastic,ali2020implicit,pesme2021implicit}.
Tools from the It\^o stochastic calculus can be used to analyze the dynamics of the SDE in detail, and this approach has led to fruitful insights into the behavior of SGD.

Although we do not prove that the high-dimensional behaviors of the SGD \eqref{eq:sgd} and the SGF \eqref{eq:sgf} match (and indeed they do not match exactly, as shown in the concurrent work \citep{fan2026highdimensional}), we take the SGF \eqref{eq:sgf} as an approximate continuous model of the SGD \eqref{eq:sgd} and analyze its high-dimensional limiting behavior.
It is shown that in the fixed-dimensional setting, when the learning rate is small ($\eta \to 0$), the dynamics of SGD is well approximated by that of the SGF \eqref{eq:sgf} \citep{li2017stochastic,li2019stochastic,cheng2020stochastic}.
Hence, we expect that the SGF \eqref{eq:sgf} approximates the SGD \eqref{eq:sgd} well when $\tau$ is sufficiently small, which we empirically confirm by numerical experiments in \cref{sec:numerical}.

\paragraph{Assumptions.}
Next, we state the assumptions used in the analysis.
\begin{assumption}[Data distribution] \label{ass:data}
    \noindent
    \begin{itemize}
        \item The entries $\bX=(x_{ij})_{i\in[n],j\in[d]}$ are independent, satisfying $\E x_{ij}=0$, $ \E x_{ij}^2=1/d$, and $\norm{x_{ij}}_{\psi_2}\leq C/\sqrt{d}$ for some constant $C > 0$.
        \item Proportional high-dimensional asymptotics: $n,d\to\infty,\;n/d\to\delta\in(0,\infty).$
        \item $\bz\in\reals^n$ and $\btheta^0\in\reals^{d\times m}$ are independent of $\bX$, and for all $p \geq 1$, their empirical distributions $\hat \sfP(\btheta^0)$ and $\hat \sfP(\bz)$ converge in $p$-Wasserstein distance to $\sfP(\theta^0)$ and $\sfP(z)$ respectively, almost surely as $d \to \infty$.
    \end{itemize}
\end{assumption}
The distribution of the data $\bX$ is not restricted to the Gaussian distribution, and hence our analysis is \emph{universal} with respect to the data distribution.
The last condition on $\bz$ and $\btheta^0$ is satisfied, for example, when their entries are i.i.d.\ samples from distributions with bounded moments of all orders.

The following assumption is used to guarantee that the SGF solution does not grow too fast and that the DMFT equation to be introduced is well-defined.
\begin{assumption}[Function regularity] \label{ass:func}
    There exists a constant $M > 0$ such that the following holds.
    \begin{itemize}
        \item $h_t(\theta)$ and its Jacobian $\De h = (\nabla_\theta h, \partial_t h)$ are Lipschitz continuous in $t$ and $\theta$, i.e., for $t_1,t_2 \geq 0$ and $\theta_1,\theta_2 \in \reals^m$,
              \begin{equation}
                  \norm{h_{t_1}(\theta_1) - h_{t_2}(\theta_2)}_2 + \norm{\De h_{t_1}(\theta_1) - \De h_{t_2}(\theta_2)}_2 \leq M(\norm{\theta_1-\theta_2}_2 + \abs{t_1-t_2}) \,.
              \end{equation}
        \item $\ell_t(r;z)$, its Jacobian $\De\ell = (\nabla_r \ell, \partial_t \ell)$, and its Hessian $\De^2 \ell$ are Lipschitz continuous in $t$ and $r$ for any $z \in \reals$, i.e., for $t_1,t_2 \geq 0$ and $r_1,r_2 \in \reals^m$,
              \begin{multline}
                  \norm{\ell_{t_1}(r_1;z) - \ell_{t_2}(r_2;z)}_2 + \norm{\De\ell_{t_1}(r_1;z) - \De\ell_{t_2}(r_2;z)}_2 + \norm{\De^2 \ell_{t_1}(r_1;z) - \De^2 \ell_{t_2}(r_2;z)}_2 \\
                  \leq M(\norm{r_1-r_2}_2 + \abs{t_1-t_2}) \,.
              \end{multline}
        \item $\ell_t(r;z)$ has polynomial growth for any $t \geq 0$, i.e., there exists some $p \geq 1$ such that $\norm{\ell_t(r;z)}_2 \leq M(1 + \norm{r}_2 + \abs{z})^p$.
    \end{itemize}
\end{assumption}

\section{Main Result}

In this section, we first present the DMFT equation that characterizes the high-dimensional dynamics of the SGF \eqref{eq:sgf}, extending the DMFT framework for noiseless gradient flow \citep{celentano2021highdimensional}.
Then, we state our main theorem that the empirical distribution of the SGF parameters converges to the solution of the DMFT equation in the proportional high-dimensional limit.

\paragraph{DMFT equation.}
The DMFT framework provides a low-dimensional effective description of the high-dimensional dynamics \eqref{eq:sgf} by averaging out microscopic fluctuations to capture its macroscopic behavior.
In our setting, we obtain the following system of equations.
We (informally) define the DMFT equation $\mathfrak{S}$ for functions $C_\theta,C_\ell,R_\theta,\allowbreak R_\ell \colon \reals_{\geq 0}^2 \to \reals^{m \times m}$ and $\Gamma \colon \reals_{\geq 0} \to \reals^{m \times m}$ as the following system of stochastic integro-differential equations.
\begin{equation}
    \begin{aligned}
        \diff{}{t} \theta^t & = u^t - (h_t(\theta^t) + \Gamma(t)\theta^t) - \int_0^t R_\ell(t,s) \theta^{s} \de s \,, \quad u \sim \GP(0,C_\ell/\delta) \,,         \\
        r^t                 & = w^t -\frac{1}{\delta} \int_0^t R_\theta(t,s) \ell_s(r^s;z) (\de s + \sqrt{\tau\delta} \de B^s) \,, \quad w \sim \GP(0,C_\theta) \,, \\
        C_\theta(t,t')      & = \E[\theta^t \theta^{t'\transpose}] \,, \quad R_\theta(t,t') = \E\ab[\diffp{\theta^t}{u^{t'}}] \,,                                   \\
        C_\ell(t,t')        & = \E[\ell_t(r^t;z) (1 + \sqrt{\tau\delta} \dot B^t) \ell_{t'}(r^{t'};z)^\transpose (1 + \sqrt{\tau\delta} \dot B^{t'})] \,,           \\
        R_\ell(t,t')        & = \E\ab[\diffp{\ell_t(r^t;z)}{w^{t'}}] \,, \quad \Gamma(t) = \E[\nabla_r\ell_t(r^t;z)] \,,
    \end{aligned} \label{eq:dmft_informal}
\end{equation}
where $B^t$ is a Brownian motion in $\reals$, $\dot B^t = \difs{B^t}{t}$ is the formal derivative of $B^t$, and we set $R_\theta(t,t') = R_\ell(t,t') = 0$ for $t < t'$. The integral $\int f(s) (\de s + \de B^s)$ is interpreted as $\int f(s) \de s + \int f(s) \de B^s$ where the second term is the It\^o integral.
All expectations are with respect to the Gaussian processes $u$ and $w$, the Brownian motion $B^t$, and the random variables $\theta^0$ and $z$.
The formal definition of the DMFT equation $\mathfrak{S}$ appears in \cref{app:rigorous_dmft}.
Note that the appearance of the derivative $\dot B^t$ in the definition of $C_\ell$ is purely formal, and $C_\ell$ will be replaced by a more rigorous object $\Sigma_\ell$ defined in \cref{eq:Sigma_ell} in the rigorous definition.

The DMFT equation \eqref{eq:dmft_informal} consists of three types of objects:
\begin{itemize}
    \item \emph{Effective processes} $\theta^t \in \reals^m$ and $r^t \in \reals^m$ that represent the asymptotic distribution of the entries of the SGF parameters $\btheta^t$ and the predictions $\br^t = \bX \btheta^t$, respectively.
    \item \emph{Correlation functions} $C_\theta(t,t')$ and $C_\ell(t,t')$ that represent the covariance of the stochastic processes $\theta^t$ and $\ell_t(r^t;z)$, respectively.
    \item \emph{Response functions} $R_\theta(t,t')$ and $R_\ell(t,t')$ that represent the sensitivity of the stochastic processes $\theta^t$ and $\ell_t(r^t;z)$ to perturbations in the Gaussian processes $u^{t'}$ and $w^{t'}$, respectively.
\end{itemize}
The DMFT equation is a self-consistent system: the law of the effective processes $(\theta^t,r^t)$ determines the correlation and response functions $(C_\theta,C_\ell,R_\theta,R_\ell,\Gamma)$, which in turn determine the law of $(\theta^t,r^t)$.

The Brownian motion $B^t$ in the equation for $r^t$ captures the stochasticity in the SGF \eqref{eq:sgf}.
When $\tau = 0$, the Brownian motion vanishes, and the DMFT equation \eqref{eq:dmft_informal} reduces to that for noiseless gradient flow derived in \citet{celentano2021highdimensional}.
Also, the discretized version of our DMFT equation given in \cref{app:dmft_disc} has a similar form to the DMFT equation for SGD with proportionally large batch sizes \citep{mignacco2020dynamical,gerbelot2024rigorous}.
Their DMFT equation captures the SGD noise by Bernoulli random variables acting as batch selectors, while our DMFT equation has Gaussian randomness with matching moments as the Bernoulli variables.

Although the above definition \eqref{eq:dmft_informal} is informal due to the presence of the functional derivatives $\difsp{\theta^t}{u^{t'}}$ and $\difsp{\ell_t(r^t;z)}{w^{t'}}$ and the derivative of the Brownian motion $\dot B^t$, it aligns well with the existing DMFT literature \citep{mignacco2020dynamical,celentano2021highdimensional,gerbelot2024rigorous,fan2025dynamical} and can be intuitively understood.
Furthermore, in numerical analysis of the DMFT equation, we discretize time, and the above DMFT definition can be made rigorous.
For subsequent rigorous analysis, we work with the rigorous definition given in \cref{app:rigorous_dmft}.

\paragraph{Main results.}

We first establish the existence and uniqueness of the solution of the DMFT equation $\mathfrak{S}$.
We provide a proof in \cref{app:proof_dmft_sol}.

\begin{theorem}[Existence and uniqueness of the DMFT equation] \label{thm:dmft_sol}
    Suppose \cref{ass:data,ass:func} hold.
    Then, there exists some $T_* > 0$ such that for any $T \in [0, T_*]$, the DMFT equation $\mathfrak{S}$ admits a unique bounded fixed point $(C_\theta, \Sigma_\ell, R_\theta, R_\ell, \Gamma)$ on the interval $[0,T]$.
    Moreover, the stochastic processes $\theta^t$ and $r^t$ have continuous sample paths.

    If, in addition, either $\tau = 0$ or $\nabla_r^2 \ell_t(r;z) = 0$ for all $t \geq 0$, $r \in \reals^m$, and $z \in \reals$, the result holds globally for all $T > 0$ (implying $T_* = \infty$).
\end{theorem}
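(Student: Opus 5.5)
The plan is to follow the contraction-mapping strategy of \citet{celentano2021highdimensional} for the noiseless DMFT. The map to contract is built as follows.

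\emph{The map.} Given a candidate tuple $(C_\theta,\Sigma_\ell,R_\theta,R_\ell,\Gamma)$ on $[0,T]$ in a suitable bounded space $\mathcal{S}_T$, we define $\mathcal{F}$ in two steps.
\begin{itemize}
\item We sample $u\sim\GP(0,\Sigma_\ell/\delta)$, $w\sim\GP(0,C_\theta)$, an independent Brownian motion $B$, and $(\theta^0,z)$.
\item We solve the two effective equations for $\theta^t$ and $r^t$ and output the new correlation and response functions.
\end{itemize}
The $\theta$-equation is a linear Volterra ODE with Lipschitz drift. Existence, uniqueness and continuity of its paths are therefore standard by Picard iteration.

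The $r$-equation is a stochastic Volterra equation driven by $\ds+\sqrt{\tau\delta}\,\de B^s$ with Lipschitz coefficient $\ell$. Well-posedness follows from Picard iteration in $L^2(\Omega;C[0,T])$, combining the It\^o isometry with the BDG inequality. Path continuity of the $r$-process comes from Kolmogorov's criterion applied to the stochastic integral; this uses moment bounds inherited from the polynomial growth of $\ell$ and the Gaussian tails of $w$.

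The response functions are defined rigorously as expectations of linearized processes, i.e.\ solutions of the variational equations obtained by differentiating the effective dynamics in the direction of an impulse at time $t'$. Differentiating the $r$-equation in $w^{t'}$ gives
\[
\partial r^t=\delta_{t'}-\frac1\delta\int_{t'}^t R_\theta(t,s)\,\nabla_r\ell_s(r^s)\,\partial r^s\,(\ds+\sqrt{\tau\delta}\,\de B^s).
\]
Hence $R_\ell(t,t')$ involves $\nabla_r\ell$ evaluated along a solution of a linear SDE with random coefficients.

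We show $\mathcal{F}$ maps a ball of $\mathcal{S}_T$ into itself, using Gronwall bounds. We then show it is a contraction in a weighted sup norm (e.g.\ with weight $e^{-\lambda t}$). The Lipschitz constants are obtained by coupling: the Gaussian processes for two inputs are realized on a common probability space with $\mathbb{E}\sup_t\|u_1^t-u_2^t\|^2\lesssim\|\Sigma_{\ell,1}-\Sigma_{\ell,2}\|$, and the same Brownian motion is used for both.

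The main obstacle is controlling the linearized process $\partial r$ in the noisy case. Its equation is linear but multiplicatively driven by $\nabla_r\ell\cdot\de B$, so $\partial r$ behaves like a stochastic exponential. The difference of responses for two inputs then involves $\nabla_r^2\ell$ multiplied by this exponential. Uniform moment bounds on the exponential hold only while the drift weights $R_\theta$ remain small, which suggests restricting to a short horizon $T_*$, chosen so that the ball radius is preserved and the contraction factor is below one. This restriction is the source of the local-in-time statement.

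The two global cases avoid this obstacle for different reasons.
\begin{itemize}
\item If $\tau=0$, the stochastic term vanishes and the argument of \citet{celentano2021highdimensional} applies verbatim with the weighted norm, giving a contraction for any $T$ with $\lambda$ large.
\item If $\nabla_r^2\ell=0$, then $\nabla_r\ell_t(r;z)=\nabla_r\ell_t(z)$ does not depend on $r$. The linearized equations therefore do not depend on $r$, and the responses become Lipschitz functions of $R_\theta$ alone, with no moment blow-up. The bounded-growth estimates then hold on any $[0,T]$, and choosing $\lambda$ large in the weighted norm yields a contraction for every $T$, so that $T_*=\infty$.
\end{itemize}
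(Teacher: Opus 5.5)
\textbf{Gap 1: the response $R_\ell$ for $\tau>0$.} The main gap is your definition of $R_\ell$ in the noisy case. An impulse in $w^{t'}$ enters the $r$-equation through $\ell_{t'}(r^{t'};z)(\de s+\sqrt{\tau\delta}\,\de B^s)$ at $s=t'$. Your variational equation therefore produces the factor $(1+\sqrt{\tau\delta}\,\dot B^{t'})$ and is not well defined. Its expectation only makes sense after Gaussian integration by parts in the Brownian increment at $t'$; the paper does this in discrete time with Stein's lemma.

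That step produces the extra term $\sqrt{\tau\delta}\,\E[D_\ell^{t,t'}]$ in \eqref{eq:R_ell}. Here $D_\ell$ is driven by $\nabla_r^2\ell_t(r^t;z)[D_r^{t,t'}]\rho_r^{t,t'}$, and by \eqref{eq:D_r} the process $D_r$ is forced by $-\sqrt{\tau/\delta}\,R_\theta(t,t')\ell_{t'}(r^{t'};z)$, i.e.\ by the size of $\ell$ itself. Keeping only the adapted linearization gives a different fixed-point problem from $\mathfrak{S}$.

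This also makes your explanation of the short horizon wrong.
\begin{itemize}
\item The linearized processes solve linear stochastic Volterra equations with bounded coefficients ($\|\nabla_r\ell\|\le M$, $R_\theta$ bounded). BDG and Gr\"onwall therefore bound all their moments on every $[0,T]$; the paper does exactly this for $\E\|\rho_r^{t,t'}\|_2^4$.
\item $\nabla_r^2\ell$ already enters differences of responses when $\tau=0$, so it cannot be what separates the two cases.
\end{itemize}
The actual obstruction is a circular a priori bound in the invariance step. Through $D_r$, the bound on $R_\ell$ picks up $\E\|\ell_{t'}(r^{t'};z)\|_2^4\lesssim\napier^{4\bar\lambda t'}$, which grows with $C_\theta$. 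Meanwhile, keeping $(C_\theta,R_\theta)$ in the admissible set requires $\bar\lambda$ to dominate $\Phi_{R_\ell}$. This loop closes only when $\bar\lambda T\le 1$. The contraction estimates themselves hold for every $T$ once $\lambda$ is large.

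With your $R_\ell$ this loop is absent, and the estimates you describe would close on every $[0,T]$ for every nonlinear $\ell$. That contradicts your claim that a short horizon is needed. The case $\nabla_r^2\ell=0$ is global precisely because $D_\ell\equiv 0$ there.

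\textbf{Gap 2: the metric in the contraction step.} Your coupling bound $\E\sup_t\|u_1^t-u_2^t\|^2\lesssim\|\Sigma_{\ell,1}-\Sigma_{\ell,2}\|$ cannot be improved in general; consider scalar Gaussians with one variance near $0$. It only yields
\[
\|C_{\theta,1}-C_{\theta,2}\|\lesssim(\E\|\theta_1-\theta_2\|^2)^{1/2}\lesssim\|\Sigma_{\ell,1}-\Sigma_{\ell,2}\|^{1/2}.
\]
A H\"older-$\tfrac12$ modulus gives no contraction in a weighted sup norm on the kernels.

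Following \citet{celentano2021highdimensional}, the paper instead metrizes the covariance components by the coupling distance $\inf\sup_t\napier^{-\lambda t}(\E\|w_1^t-w_2^t\|_2^2)^{1/2}$, with the infimum over couplings. The output kernels are then coupled through the jointly Gaussian process with the covariance of $(\theta_1,\theta_2)$ (resp.\ $(L_1,L_2)$). This gives genuine Lipschitz bounds, and completeness of this metric is checked separately.

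\textbf{A smaller point: path continuity.} Continuity of $w$ and $U$, and hence of $r$ and $\theta$, does not follow from moment bounds. It needs the increment bound $\|C_\theta(t,t)-2C_\theta(t,t')+C_\theta(t',t')\|_2\le M_\theta|t-t'|$, and its analogue for $\Sigma_\ell$, to be part of the invariant set and propagated by the map.
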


\Cref{thm:dmft_sol} guarantees the global existence and uniqueness of the solution for the noiseless case $\tau = 0$ (thus reducing to the result for gradient flow in \citet[Theorem 1]{celentano2021highdimensional}) and the case $\nabla_r^2 \ell_t(r;z) = 0$, i.e., when $\ell_t(r;z)$ is linear in $r$. The latter case covers linear regression settings, which we discuss in detail in \cref{sec:linear_regression}.
In the general case, we provide a rough estimate of the time horizon $T_* \gtrsim \delta^2/(\tau^2 M^7 m^2)$ up to the leading order in $\tau,M,m,1/\delta$ and ignoring the constant factor; see the proof of \cref{thm:dmft_sol} in \cref{app:proof_dmft_sol} for details.
To keep the analysis simple, we do not optimize this bound.
We believe that the solution exists globally for general nonlinear $\ell_t$ with $\tau > 0$ since the SGF \eqref{eq:sgf} does not blow up in finite time under the assumptions considered in this work, but proving this requires further technical work, which we leave for future work.

Next, we state our main theorem that provides the asymptotic characterization of the SGF \eqref{eq:sgf} by the unique solution of the DMFT equation $\mathfrak{S}$.

\begin{theorem}[DMFT characterization of SGF] \label{thm:dmft_sgf}
    Suppose \cref{ass:data,ass:func} hold and take $T_* > 0$ in \cref{thm:dmft_sol}.
    Then, for any $T \in [0,T_*]$, $L \in \naturals$, and $0 \leq t_1 < \dots < t_L \leq T$, we have
    \begin{align}
        \plim_{n,d\to\infty} W_2\ab(\hat \sfP(\btheta^{t_1},\dots,\btheta^{t_L}),\sfP(\theta^{t_1},\dots,\theta^{t_L})) & = 0 \,, \\
        \plim_{n,d\to\infty} W_2\ab(\hat \sfP(\br^{t_1},\dots,\br^{t_L},\bz),\sfP(r^{t_1},\dots,r^{t_L},z))             & = 0 \,.
    \end{align}
\end{theorem}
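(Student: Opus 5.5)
The plan follows \citet{celentano2021highdimensional}: discretize time, reduce the discretized SGF to an approximate message passing (AMP) iteration, apply the AMP state-evolution theorem, and then remove the discretization. Fix a step $\Delta = T/K$ and let $\btheta_\Delta^k$ be the Euler--Maruyama discretization of \eqref{eq:sgf} on the grid $t_k = k\Delta$. The Brownian increments $\Delta\bB^k \in \reals^n$ are independent of $\bX$, so I treat them as extra row-wise side information attached to each sample, just as $\bz$ is. The SGF increment can then be written as $\bX^\transpose \tilde\ell_k(\br^k;\bz,\Delta\bB^k)$, where
\begin{equation}
    \tilde\ell_k(r;z,b) = \ell_{t_k}(r;z)\bigl(\Delta/\delta + \sqrt{\tau/\delta}\, b\bigr),
\end{equation}
plus the separable term $h$. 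The discretized SGF is therefore a first-order method with separable nonlinearities whose row-wise inputs include the i.i.d.\ Gaussian variables $\Delta B_i^k$. This puts it in the class of general first-order methods (GFOM) that can be written as AMP with memory and Onsager corrections. I would invoke AMP state evolution with non-separable memory and universality for sub-Gaussian $\bX$ (e.g.\ the universality results for AMP used in the GFOM literature). The result is that the empirical distributions of $(\btheta^k_\Delta)_{k\le K}$ and $(\br^k_\Delta,\bz)_{k\le K}$ converge in $W_2$ to the laws of a discrete DMFT system, namely the discretization in \cref{app:dmft_disc}. In that system the Onsager terms become the response functions $R_\theta,R_\ell,\Gamma$ via Stein's lemma. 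The factor $(1+\sqrt{\tau\delta}\dot B)$ in $C_\ell$ arises because the effective Gaussian $u$ has covariance given by the empirical second moment of $\tilde\ell_k$, and this is exactly the discrete analogue of $\Sigma_\ell$.

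Second, I will show the discrete DMFT converges, as $\Delta\to0$, to the fixed point of $\mathfrak{S}$ given by \cref{thm:dmft_sol}, uniformly on $[0,T]$ for $T\le T_*$. The argument compares the discrete fixed point with the continuous one through the same contraction estimates used in the existence proof. It controls $\sup_t \E\|\theta^t_\Delta-\theta^t\|^2$ and the differences of the kernels $C,\Sigma,R,\Gamma$ by a Grönwall-type inequality in a weighted sup norm, using Itô isometry for the $\de B$ term. Third, I will show that, uniformly in $n,d$ with high probability, the SGF and its discretization are close:
\begin{equation}
    \frac1d\|\btheta^t-\btheta^t_\Delta\|_F^2 \le C(T)\,\Delta .
\end{equation}
This comes from Itô's formula and Grönwall, using $\|\bX\|_{\mathrm{op}}=O(1)$ and Lipschitzness of $h,\ell$. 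A quadratic-variation bound of the form
\begin{equation}
    \frac{\tau}{\delta}\sum_i \|\bx_i\|^2\|\ell\|^2\,\frac1d
\end{equation}
is $O(1)$ per unit time provided $\frac1n\sum_i\|\ell_t(r_i^t;z_i)\|^2$ stays bounded. The analogous bound for $\br=\bX\btheta$ follows from the operator norm bound. Combining the three steps via the triangle inequality for $W_2$ and letting $d\to\infty$ and then $\Delta\to0$ gives the theorem.

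I expect the main obstacle to be the moment control in the third step, together with the polynomial growth of $\ell$ in $z$. The noise term scales with $\|\ell\|$, and because $\ell$ is Lipschitz in $r$ the quadratic variation grows quadratically, so uniform-in-$d$ bounds on $\sup_{t\le T}\frac1d\|\btheta^t\|_F^2$ in high probability need a stopping-time or truncation argument. I would truncate: replace $\ell$ by a bounded Lipschitz $\ell^{(A)}$ that agrees with $\ell$ on $\{\|r\|+|z|\le A\}$, run the whole argument for the truncated dynamics, and use the DMFT moments together with Burkholder--Davis--Gundy to show that both the truncated SGF and the truncated DMFT differ from the untruncated ones by $o_A(1)$ in normalized $L^2$. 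A secondary difficulty is that the AMP step needs state evolution for nonlinearities that depend on Gaussian side information and on all past iterates; I would handle this by lifting to a matrix-valued AMP whose dimension grows with $K$ but is fixed as $d\to\infty$.
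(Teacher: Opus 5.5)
Your overall architecture matches the paper's: Euler--Maruyama discretization, reduction to AMP with universality, identification of the state evolution with $\mathfrak{S}^\gamma$ via Stein's lemma, and the continuum limit through the contraction estimates of \cref{thm:dmft_sol}. The AMP step, however, has a gap that your truncation does not close.

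The nonlinearity you hand to AMP is $\tilde\ell_k(r;z,b)=\ell_{t_k}(r;z)(\Delta/\delta+\sqrt{\tau/\delta}\,b)$. Its Lipschitz constant in $r$ is proportional to $\abs{\Delta/\delta+\sqrt{\tau/\delta}\,b}$, and the side variable $b=\Delta B_i^k$ is unbounded. The universality theorem used for this step (\citet[Theorem 2.21]{wang2024universality}) is stated for nonlinearities Lipschitz in the iterates, so it does not apply as written. Replacing $\ell$ by a bounded $\ell^{(A)}$ does not repair this, since $\ell^{(A)}(r;z)\,b$ still has $r$-Lipschitz constant of order $\abs{b}$. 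The matrix lifting handles memory, not this unboundedness.

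The missing idea is to truncate the Gaussian increments rather than $\ell$. Write $\Delta\bB^k=\sqrt{\Delta}\,\bG^k$ and clip entrywise to $[\bG^k]_M$, so the AMP nonlinearities become genuinely Lipschitz. Then show the clipped iterates $\check\btheta^{t_k}$ stay close to the unclipped ones uniformly in $d$. This works because $\bG^k$ is independent of $\br^{t_k}$, so
$\E\frobnorm{(\bG^k-[\bG^k]_M)\odot\ell_{t_k}(\br^{t_k};\bz)}^2=f(M)\,\E\frobnorm{\ell_{t_k}(\br^{t_k};\bz)}^2$ with $f(M)=\E(G-[G]_M)^2\to0$. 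A discrete Gr\"onwall argument then gives $\lim_{M\to\infty}\limsup_{d}\frac1d\E\max_k\frobnorm{\btheta^{t_k}-\check\btheta^{t_k}}^2=0$. Only after sending $M\to\infty$ in the state evolution are the increments exactly Gaussian. That is what lets Stein's lemma in $\bar G^j$ produce the correction $\sqrt{\tau\delta}\,\E[D_\ell]$ in $R_\ell$.

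Conversely, the step you single out as the main obstacle is routine. The diffusion coefficients $\bsigma_i(t,\btheta)=\sqrt{\tau/\delta}\,\bx_i\ell_t(\btheta^\transpose\bx_i;z_i)^\transpose$ obey the linear-growth bound $\sum_i\frobnorm{\bsigma_i(t,\btheta)}^2\le C(d+\frobnorm{\btheta}^2)$. This uses $\norm{\bX}_2=O(1)$, $\max_i\norm{\bx_i}_2=O(1)$, and $\frac1n\frobnorm{\ell_t(0;\bz)}^2=O(1)$, the last coming from Wasserstein convergence of $\hat\sfP(\bz)$. The standard Euler--Maruyama argument with BDG and Gr\"onwall, tracking constants in $d$, then gives $\frac1d\E_{\bB}\sup_{t\le T}\frobnorm{\btheta^t-\btheta^t_\Delta}^2\le C\Delta$ almost surely over $\bX,\bz,\btheta^0$, with no stopping times or truncation. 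Truncating $\ell$ would in fact create obligations your plan leaves open: continuity of the DMFT fixed point in $\ell$, and control, under the smoothing, of the Hessian--Lipschitz constants on which $T_*$ depends.
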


This result extends \citet[Theorem 2]{celentano2021highdimensional} to the SGF dynamics.

By a standard result on the Wasserstein distance \citep[Theorem 6.9]{villani2009optimal}, this result is equivalent to the following fact: for any continuous functions $\psi \colon \reals^m \to \reals$ and $\tilde \psi \colon \reals^{m} \times \reals \to \reals$ with at most quadratic growth, and any $L \in \naturals$ and $0 \leq t_1 < \dots < t_L \leq T$, we have
\begin{align}
    \plim_{n,d\to\infty} \max_{l=1,\dots,L} \abs*{\frac{1}{d}\sum_{i=1}^d \psi(\theta_i^{t_l}) - \E[\psi(\theta^{t_l})]}           & = 0 \,, \\
    \plim_{n,d\to\infty} \max_{l=1,\dots,L} \abs*{\frac{1}{n}\sum_{i=1}^n \tilde \psi(r_i^{t_l},z_i) - \E[\tilde \psi(r^{t_l},z)]} & = 0 \,.
\end{align}

\paragraph{Comparison with the concurrent work \citep{fan2026highdimensional}.}
A concurrent work by \citet{fan2026highdimensional} also derives a DMFT equation characterizing the high-dimensional dynamics of SGD and SGF (which they call SME).
Their DMFT equation for SGF is equivalent to ours, although defined differently; they define the response function $R_\ell$ as a linear operator, while we define it as a continuous function given by the expectation of a stochastic process.
Another important difference is that they consider bounded $\ell_t$, while we allow for unbounded, Lipschitz continuous $\ell_t$. This allows us to rigorously apply our theory to linear regression settings which involve unbounded loss gradients (see \cref{sec:linear_regression}).

\paragraph{Proof sketch.}
Below, we provide a proof sketch. The full proof appears in \cref{app:proof_dmft_sgf}.

We follow the approach of the proof of \citet[Theorem 2]{celentano2021highdimensional}, which showed similar results for noiseless gradient flow.
We need additional care to handle the stochastic terms in our SGF setting and to derive correction terms due to the stochasticity.
The proof proceeds as follows.
\begin{enumerate}
    \item We discretize the SGF \eqref{eq:sgf} with step size $\gamma > 0$ and control the discretization error (\cref{lem:sgf_discrete}).
    We follow the standard Euler--Maruyama discretization scheme for SDEs \citep[Section 10]{kloeden1992numerical} to develop a bound applicable to our high-dimensional settings.
    \item We map the discretized SGF to an \emph{approximate message passing} (AMP) iteration and apply the general AMP universality result \citep{wang2024universality} to characterize the asymptotic distribution of the AMP iterates using a low-dimensional \emph{state evolution} recursion (\cref{lem:dmft_sgf_discrete}).
    Due to the unboundedness of the Gaussian increments arising from the Brownian motion $B$, which prohibits direct application of the existing AMP results, we first truncate the Gaussian increments and then control the truncation error.
    We then show that the state evolution recursion is equivalent to the discretized version of the DMFT equation $\mathfrak{S}$.
    Employing the Gaussianity of the increments and utilizing Stein's lemma (the Gaussian integration by parts), we derive the correction terms in the DMFT equation due to the stochasticity.
    \item Finally, we take the continuous-time limit $\gamma \to 0$ and show that the solution of the discretized DMFT equation converges to the unique solution of the DMFT equation established in \cref{thm:dmft_sol} (\cref{lem:dmft_discrete}).
\end{enumerate}

\section{Applications and Special Cases}
\label{sec:applications}

\subsection{Infinite Data Limit and Online SGD}

In the infinite data limit $\delta \to \infty$, the response functions vanish and the DMFT equation $\mathfrak{S}$ reduces significantly.
As we show in \cref{app:infinite_data_limit}, the DMFT equation simplifies to the following form in the infinite data limit.
\begin{equation}
    \begin{gathered}
        \de \theta^t  = - (h_t(\theta^t) + \Gamma(t)\theta^t) \de t + \sqrt{\tau C_\ell(t)}\de W^t \,, \quad C_\theta(t,t') = \E[\theta^t \theta^{t'\transpose}] \,, \\
        C_\ell(t) = \E[\ell_t(w^t;z) \ell_t(w^t;z)^\transpose] \,, \quad \Gamma(t) = \E[\nabla_r\ell_t(w^t;z)] \,, \quad w \sim \GP(0,C_\theta) \,,
    \end{gathered} \label{eq:dmft_infinite_data}
\end{equation}
where $W^t$ is a Brownian motion in $\reals^m$.

This low-dimensional SDE allows us to recover existing equations derived from high-dimensional analysis of online SGD (see \cref{app:related_work} for related works).
As an example, in \cref{app:infinite_data_limit}, we show that this equation recovers the same characterization of online SGD for linear regression models derived in \citet{wang2017scaling}.

The connection between the infinite data limit $\delta \to \infty$ and online SGD can be intuitively understood as follows.
When we have a large amount of data, sampling a mini-batch of constant size $B$ at each iteration does not select previously seen data points with high probability, and in effect, each iteration uses fresh data points.
This corresponds to online SGD.

\subsection{Planted Models}
\label{sec:planted_model}

As explained in \cref{sec:setup}, the SGD for supervised learning with target labels generated from a ground-truth parameter $\btheta^* \in \reals^{d \times m}$ can be modeled as a special case of our general SGD setup \eqref{eq:sgd}.
Consider the following stochastic process with a planted signal $\btheta^* \in \reals^{d \times m}$:
\begin{align}
    \hat\btheta^{k+1} & = \hat\btheta^k - \eta \cdot \ab(\frac{1}{d} h_{t_k}(\hat\btheta^k) + \frac{1}{B} \sum_{i \in \calB^k}  \bx_i \ell_{t_k}(\hat r^k_i, r^*_i;z_i)^\transpose) \,, \label{eq:sgd_planted}
\end{align}
with $\hat \br^k = \bX \hat\btheta^k$ and $\br^* = \bX \btheta^*$.
We consider the SGF with the planted signal that approximates the above SGD dynamics:
\begin{align}
    \de\btheta^t & = - \ab(h_t(\btheta^t) + \frac{1}{\delta} \bX^\transpose \ell_t(\br^t,\br^*;\bz)) \de t + \sqrt{\frac{\tau}{\delta}} \sum_{i=1}^n \bx_i \ell_{t}(r^t_i, r^*_i;z_i)^\transpose \de B_i^t \,, \label{eq:sgf_planted}
\end{align}
with $\br^t = \bX \btheta^t$.
This can be mapped to our general SGF setup \eqref{eq:sgf} by concatenating the parameters $\btheta^t$ and $\btheta^*$ as follows.
\begin{equation}
    \begin{aligned}
        \de (\btheta^t,\btheta^*) & = - \ab(\ab(h_t(\btheta^t),0)+ \frac{1}{\delta} \bX^\transpose \ab(\ell_t(\bX(\btheta^t,\btheta^*);\bz),0)) \de t                                                               \\
                                  & \qquad + \sqrt{\frac{\tau}{\delta}} \sum_{i=1}^n \bx_i \ab(\ell_{t}((\btheta^t,\btheta^*)^\transpose \bx_i;z_i)^\transpose, 0) \de B_i^t \,. \label{eq:sgd_planted_transformed}
    \end{aligned}
\end{equation}
Here, $\btheta^*$ is constant over time.
By applying \cref{thm:dmft_sol,thm:dmft_sgf} to the concatenated parameter of shape $\reals^{d \times 2m}$, we obtain a DMFT characterization of the SGF with a planted signal \eqref{eq:sgf_planted} as a corollary.
The full form of the DMFT equation and the proof appear in \cref{app:dmft_planted}.

\begin{corollary}[DMFT characterization of SGF with a planted signal] \label{cor:sgf_dmft_planted}
    Suppose \cref{ass:data,ass:func} hold. Take $T_* > 0$ in \cref{thm:dmft_sol}.
    Furthermore, assume that $\btheta^* \in \reals^{d \times m}$ is independent of $\bX,\bz,\btheta^0$ and for all $p \geq 1$, its empirical distribution converge in $p$-Wasserstein distance to $\sfP(\theta^*)$ almost surely as $d \to \infty$.
    Then, for any $T \in [0, T_*]$, $L \in \naturals$, and $0 \leq t_1 < \dots < t_L \leq T$, we have
    \begin{align}
        \plim_{n,d\to\infty} W_2\ab(\hat\sfP(\btheta^{t_1},\dots,\btheta^{t_L},\btheta^*),\sfP(\theta^{t_1},\dots,\theta^{t_L},\theta^*))^2 & = 0 \,, \\
        \plim_{n,d\to\infty} W_2\ab(\hat\sfP(\br^{t_1},\dots,\br^{t_L},\br^*,\bz), \sfP(r^{t_1},\dots,r^{t_L},r^*,z))^2                     & = 0 \,,
    \end{align}
    where $(\theta^t,\theta^*,r^t,r^*,z)$ is the unique solution of the DMFT equation given in \cref{app:dmft_eq_planted}.
\end{corollary}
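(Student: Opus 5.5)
The plan is to reduce the corollary to \cref{thm:dmft_sol,thm:dmft_sgf}, applied to the concatenated parameter $\bar\btheta^t \coloneqq (\btheta^t,\btheta^*) \in \reals^{d\times 2m}$ with width $\bar m = 2m$, as in \eqref{eq:sgd_planted_transformed}.

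\textbf{Step 1: the lifted functions.} Set
\[
\bar h_t(\theta,\theta^*) \coloneqq (h_t(\theta),0), \qquad \bar\ell_t(r,r^*;z) \coloneqq (\ell_t(r,r^*;z),0) \in \reals^{2m}.
\]
Then \eqref{eq:sgf_planted}, together with $\de\btheta^* = 0$, is exactly an instance of the general SGF \eqref{eq:sgf} for $\bar\btheta^t$. This holds because the second block of both drift and diffusion vanishes identically, so $\btheta^*$ stays constant. In addition, $\bar\br^t = \bX\bar\btheta^t = (\br^t,\br^*)$.

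\textbf{Step 2: the assumptions.} Next I would check \cref{ass:data,ass:func} for the lifted problem.
\begin{itemize}
\item Regularity and Lipschitz bounds pass to $\bar h,\bar\ell$ and their derivatives with the same constant $M$, since padding by zeros preserves norms. Here I read $\ell_t(r,r^*;z)$ as a function of the joint input in $\reals^{2m}$ satisfying \cref{ass:func}, which is the intended meaning of the hypothesis.
\item Polynomial growth is preserved because $\norm{(r,r^*)}_2$ dominates each block.
\item For \cref{ass:data}, the only new item is that $\hat\sfP(\btheta^0,\btheta^*)$ converges in every $W_p$ almost surely to some joint law $\sfP(\theta^0,\theta^*)$.
\end{itemize}
Step 2 needs the most care. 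The marginals converge by assumption, and independence of $\btheta^*$ from $\btheta^0$ is given. Still, marginal convergence does not by itself yield joint convergence of the empirical distribution of the paired rows. I would handle this in one of two ways:
\begin{itemize}
\item interpret the hypothesis as joint convergence of $\hat\sfP(\btheta^0,\btheta^*)$, with the limiting coupling specifying $\sfP(\theta^0,\theta^*)$ in the DMFT equation of \cref{app:dmft_eq_planted}; or
\item in the natural case where rows are i.i.d.\ (or $\btheta^0$ is deterministic, e.g.\ zero), argue joint convergence directly, for instance by a law of large numbers for the paired rows.
\end{itemize}
Independence from $\bX$ and $\bz$ is immediate.

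\textbf{Step 3: apply the theorems.} \Cref{thm:dmft_sol} then gives a unique solution on $[0,T_*]$, where $T_*$ is computed for the lifted problem. \Cref{thm:dmft_sgf} gives the convergence
\[
W_2\bigl(\hat\sfP(\bar\btheta^{t_1},\dots,\bar\btheta^{t_L}),\,\sfP(\bar\theta^{t_1},\dots,\bar\theta^{t_L})\bigr)\to 0,
\]
and similarly for $(\bar\br^{t_l},\bz)$. Since each $\bar\theta^{t_l}$ contains $\theta^*$ as a block, dropping the duplicated copies by a projection (a $1$-Lipschitz map, so $W_2$ does not increase) yields the stated convergence for $(\btheta^{t_1},\dots,\btheta^{t_L},\btheta^*)$ and $(\br^{t_1},\dots,\br^{t_L},\br^*,\bz)$. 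Squaring preserves convergence in probability to zero.

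\textbf{Step 4: the DMFT system.} Finally I would write out the lifted DMFT system in block form. The components are as follows.
\begin{itemize}
\item $\bar\ell$ has zero second block, so the Gaussian driver $\bar u$ has zero second component. The second column block of $\Gamma$ and of $R_\ell$ vanishes, since $\nabla_{r}\bar\ell$ has zero lower rows.
\item Consequently $\theta^{*,t} \equiv \theta^*$ is constant.
\item $r^{*,t} = w^{*}$ is a time-independent Gaussian jointly correlated with $w^t$ through $C_\theta$.
\end{itemize}
This is exactly the system stated in \cref{app:dmft_eq_planted}. This step is bookkeeping: uniqueness of the fixed point identifies the two systems.
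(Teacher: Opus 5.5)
Your route is the paper's: lift to $\bar\btheta^t=(\btheta^t,\btheta^*)$ with $\bar h=(h,0)$ and $\bar\ell=(\ell,0)$, apply \cref{thm:dmft_sol,thm:dmft_sgf} at width $2m$, project, and reduce the lifted DMFT system block by block. Your caveat that the lifted \cref{ass:data} needs joint convergence of $\hat\sfP(\btheta^0,\btheta^*)$ is correct, since its limit enters through $C_\theta(0,*)$; the paper leaves this implicit.

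The problem is Step 4. It is the entire content of the paper's proof of this corollary, and you misstate it at its one non-trivial point. Because $\bar\ell$ has zero lower block, it is the second \emph{row} block of $\bar\Gamma$, $\bar R_\ell$ and $\nabla\bar h$ that vanishes, and $\bar\Sigma_\ell$ is supported on its top-left block. This is what gives $\bar U_2=0$ and $\bar\theta_2^t\equiv\theta^*$. The second \emph{column} (upper-right) blocks do not vanish. They are $\bar\Gamma_2(t)=\E[\nabla_{r^*}\ell_t(r^t,r^*;z)]$ and $\bar R_\ell^{2}(t,t')=\E[\rho_{\ell*}^{t,t'}]+\sqrt{\tau\delta}\,\E[D_{\ell*}^{t,t'}]$, the response of $\ell_t$ to $\bar w_2^{t'}$, which equals $w^*$ for every $t'$.

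These blocks act on $\bar\theta_2^{s'}=\theta^*$. Since that is constant in time, the memory term collapses:
\[
\bar\Gamma_2(s)\,\theta^*+\int_0^s\bar R_\ell^{2}(s,s')\,\theta^*\,\mathrm{d}s' = R_\ell(s,*)\,\theta^*, \qquad R_\ell(s,*)\coloneqq\bar\Gamma_2(s)+\int_0^s\bar R_\ell^{2}(s,s')\,\mathrm{d}s'.
\]
This is exactly the signal-forcing term in \eqref{eq:theta_planted}, with $R_\ell(\cdot,*)$ as in \eqref{eq:R_ell_star_planted}. Had the column blocks vanished as you wrote, the reduced $\theta$-equation would have no coupling to $\theta^*$. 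It would then not be $\mathfrak{S}^*$, contradicting your own conclusion that the systems match.

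Two further points complete the bookkeeping.
\begin{itemize}
\item The upper-right block of $\bar\rho_\theta$, and hence of $\bar R_\theta$, is nonzero but never enters. It always multiplies the zero lower block of $\bar\ell$, $\nabla\bar\ell$, $\bar\rho_\ell$ or $\bar D_\ell$.
\item To assert uniqueness for $\mathfrak{S}^*$ itself, you need the converse direction. Any solution of $\mathfrak{S}^*$ lifts back to a solution of the $2m$-system by solving the linear Volterra equation for that irrelevant block.
\end{itemize}
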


The proof appears in \cref{app:proof_sgf_dmft_planted}.
This result is an extension of \citet[Corollary 4.1]{celentano2021highdimensional} to the SGF dynamics.

\subsection{Linear Regression}
\label{sec:linear_regression}

We apply the DMFT framework for planted models developed in the previous section to analyze a special case of linear regression, in which the DMFT equation simplifies significantly.
Consider the linear regression problem with the following training objective and the corresponding SGD update:
\begin{align}
    \calL(\btheta) = \frac{1}{2n} \norm{\bX \btheta - \by}_2^2 \,, \quad \by = \bX\btheta^* + \bz\,. \label{eq:lin_setup}
\end{align}
The corresponding SGF for this problem is given as follows:
\begin{align}
    \de \btheta^t = -\frac{1}{\delta} \bX^\transpose (\bX \btheta - \by) \de t + \sqrt{\frac{\tau}{\delta}} \sum_{i=1}^n \bx_i(\bx_i^\transpose \btheta^t - y_i) \de B_i^t \,. \label{eq:sgf_lin}
\end{align}
Suppose, for simplicity, the case of zero initialization $\btheta^0 = 0$.
This is a special case of the SGF for planted models \eqref{eq:sgf_planted} with $m=1$, $h_t = 0$, and $\ell_t(r,r^*;z) = r - r^* - z$.
We are particularly interested in the training and test errors of the parameter $\btheta^t$ given by
\begin{align}
    \Etrain(\btheta) = \frac{1}{n} \sum_{i=1}^n (\bx_i^\transpose \btheta - y_i)^2 \,, \quad \Etest(\btheta) = \E_{(\bx,y)}[(\bx^\transpose \btheta - y)^2] \,.
\end{align}
Applying \cref{cor:sgf_dmft_planted}, we obtain a DMFT characterization of the SGF for linear regression \eqref{eq:sgf_lin}.
Note that since we have $\nabla_r^2 \ell_t(r,r^*;z) = 0$, the DMFT equation is guaranteed to have a unique solution for all $T > 0$ by \cref{thm:dmft_sol}.
Solving the DMFT equation, we can characterize the training and test errors of SGD for linear regression by a simple set of linear Volterra equations.

\begin{proposition}[DMFT characterization of SGD for linear regression] \label{prop:dmft_linear}
    Assume that the noise $\bz \in \reals^n$ and the target parameter $\btheta^* \in \reals^d$ satisfy the same assumptions as in \cref{cor:sgf_dmft_planted}.
    Let $\mu_\MP$ be the Marchenko--Pastur distribution with parameter $\delta$ given by
    \begin{equation}
        \mu_{\mathrm{MP}}(x) = \frac{\delta \sqrt{(\lambda_+ - x)(x - \lambda_-)}}{2\pi x} + \ab(1 - \delta) \dirac(x) \ind(\delta < 1) \,, \quad x \in [\lambda_-,\lambda_+] \,, \label{eq:mp_law}
    \end{equation}
    where $\lambda_\pm = (1 \pm 1/\sqrt{\delta})^2$ and $\dirac$ is the Dirac delta function (an upright $\dirac$ is used to distinguish it from $\delta = n/d$).

    Then, for any $0 \leq t_1,\dots,t_L < \infty$, we have
    \begin{align}
        \plim_{n,d\to\infty} \max_{l=1,\dots,L} \abs{\Etrain(\btheta^{t_l}) - \Etrain(t_l)} & = 0 \,, \quad \plim_{n,d\to\infty} \max_{l=1,\dots,L} \abs{\Etest(\btheta^{t_l}) - \Etest(t_l)} = 0 \,,
    \end{align}
    where $\Etrain(t)$ and $\Etest(t)$ solve the following system of linear Volterra equations:
    \begin{equation}
        \Etrain(t) = \Etrain_0(t) + \tau \int_0^t H_2(t-s) \Etrain(s) \de s \,, \quad \Etest(t) = \Etest_0(t) + \tau \int_0^t H_1(t-s) \Etrain(s) \de s \,, \label{eq:dmft_linear_errors}
    \end{equation}
    where $H_i(t) \coloneqq \int x^i \napier^{-2xt} \de\mu_\MP(x)$, and $\Etrain_0(t)$ and $\Etest_0(t)$ are the asymptotic train and test errors for the noiseless case $\tau = 0$, which are given by the following equations with $\rho^2 \coloneqq \E[(\theta^*)^2]$ and $\sigma^2 \coloneqq \E[z^2]$:
    \begin{equation}
        \begin{aligned}
            \Etrain_0(t) & = \rho^2 H_1(t) + \frac{\sigma^2}{\delta} H_0(t) + \frac{\delta-1}{\delta}\sigma^2 \,,                      \\
            \Etest_0(t)  & = \rho^2 H_0(t) + \frac{\sigma^2}{\delta} \int \frac{(1 - \napier^{-xt})^2}{x} \de\mu_\MP(x) + \sigma^2 \,.
        \end{aligned}
    \end{equation}
\end{proposition}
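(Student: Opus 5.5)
The plan has three parts: reduce both errors to second moments of the DMFT processes of \cref{cor:sgf_dmft_planted}; solve the linear DMFT system in closed form; and identify the kernels with spectral integrals against $\mu_\MP$. Here $m=1$, $h_t=0$ and $\ell_t(r,r^*;z)=r-r^*-z$.

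\textbf{Step 1: reduction to the DMFT.} Since $\nabla_r^2\ell=0$, \cref{thm:dmft_sol} gives $T_*=\infty$, so \cref{cor:sgf_dmft_planted} holds on every finite horizon. The training error $\Etrain(\btheta^t)=\frac1n\sum_i(r_i^t-r_i^*-z_i)^2$ is an empirical average of a quadratic-growth function of $(r^t_i,r^*_i,z_i)$. Hence by the $W_2$ convergence (the equivalent test-function form stated after \cref{thm:dmft_sgf}) it converges in probability to $\Etrain(t):=\E[(r^t-r^*-z)^2]$.

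For the test error, independence of a fresh $(\bx,z)$ and $\E x_j^2=1/d$ give $\Etest(\btheta)=\frac1d\norm{\btheta-\btheta^*}_2^2+\sigma^2$. This converges in probability to $\Etest(t):=\E[(\theta^t-\theta^*)^2]+\sigma^2$. It therefore suffices to show that these two DMFT quantities satisfy \eqref{eq:dmft_linear_errors}.

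\textbf{Step 2: solving the linear DMFT.} With $\ell$ affine, $\Gamma\equiv 1$ and the effective equations are linear in $(\theta,r)$, driven by the Gaussian processes $u,w$, the Brownian motion $B$, and $(\theta^*,z)$. The first claim is that the response functions $R_\theta,R_\ell$ do not depend on $\tau$. The functional derivative of the Itô term $\int_0^t R_\theta(t,s)\ell_s\,\de B^s$ with respect to $w^{t'}$ is again an Itô integral of an adapted integrand, so it has mean zero. Consequently the linear Volterra equations determining $R_\theta,R_\ell$ are exactly those of the noiseless gradient flow. Their solution is known from the $\tau=0$ analysis \citep[Corollary 4.1]{celentano2021highdimensional}: the responses are time-translation invariant and are expressed through $\int x^k\napier^{-xt}\de\mu_\MP(x)$.

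The noise enters only through the extra covariance. Rigorously, $\Sigma_\ell$ from \cref{app:rigorous_dmft} carries an additional singular diagonal part equal to $\tau\delta\,\E[\ell_t^2]=\tau\delta\,\Etrain(t)$. This part acts as white noise of intensity $\tau\,\Etrain(s)$ added to $u$, together with the explicit $\sqrt{\tau\delta}\,\de B$ term in $r$.

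By linearity, decompose $\theta^t=\theta^t_0+\xi^t$ and $r^t=r^t_0+\zeta^t$. Here $(\theta_0,r_0)$ is driven by the noiseless sources, so its error functionals are $\Etrain_0,\Etest_0$. The fluctuation $(\xi,\zeta)$ is a stochastic integral of the same noiseless propagators against the white noise, and it is uncorrelated with the noiseless part. By the Itô isometry,
\[
\E[(\xi^t)^2]=\tau\int_0^t K_1(t-s)\Etrain(s)\,\de s,\qquad \E[(\zeta^t)^2]=\tau\int_0^t K_2(t-s)\Etrain(s)\,\de s ,
\]
where $K_1,K_2$ are quadratic in the noiseless propagators. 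Note that the coefficient $\Etrain(s)$ contains the unknown again, which is exactly what yields the Volterra structure of \eqref{eq:dmft_linear_errors}.

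\textbf{Step 3: identifying the kernels (main obstacle).} It remains to show $K_1=H_1$ and $K_2=H_2$, and to confirm the stated closed forms of $\Etrain_0,\Etest_0$. Manipulating the DMFT propagators directly is messy, so I would avoid it.

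Instead I would use universality and uniqueness. The DMFT solution does not depend on the law of $\bX$, and its quadratic observables equal the limits of the finite-$n$ SGF. For a single injection of residual-driven noise at time $s$, the limit can be computed exactly from the spectral decomposition of $\bX^\transpose\bX/\delta$ under gradient flow:
\begin{itemize}
\item the impulse $\sqrt{\tau/\delta}\sum_i\bx_i e_i\,\de B_i$ has covariance $\frac{\tau}{\delta}\sum_i e_i^2\bx_i\bx_i^\transpose$;
\item propagating it by $\napier^{-\bX^\transpose\bX(t-s)/\delta}$ and taking normalized traces gives $\frac1d\tr(S\napier^{-2S(t-s)})\to H_1(t-s)$ for the parameter error, where $S$ denotes the normalized sample covariance $\bX^\transpose\bX/\delta$ whose spectral law is $\mu_\MP$;
\item the training residual picks up one more factor of $S$, giving $H_2(t-s)$.
\end{itemize}

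The delicate point is the weights $e_i^2$. I would handle this in the DMFT itself, where the effective noise intensity is exactly $\tau\,\Etrain(s)$ because the Itô integrand is $\ell_s$ and $\E[\ell_s^2]=\Etrain(s)$. The matrix computation is used only to evaluate the deterministic kernels, i.e.\ the case of a constant unit weight.

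The noiseless formulas $\Etrain_0,\Etest_0$ follow in the same way from $\btheta^t-\btheta^*=\napier^{-St}(\btheta^0-\btheta^*)+\ldots$ with $\btheta^0=0$. Finally, uniqueness of the solution of \eqref{eq:dmft_linear_errors}, which is a linear Volterra equation with bounded continuous kernels, identifies the limits.
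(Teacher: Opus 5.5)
\textbf{There is a genuine gap, and it is in Step 3.} Step 1 is fine. So is your observation that $R_\theta,R_\ell$ do not depend on $\tau$: $D_\ell\equiv 0$ here, and the It\^o integral in $\rho_r$ has mean zero.

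Step 2, however, only shows that the noise contributes $\tau\int_0^t K_i(t,s)\Etrain(s)\,\de s$ for some $\tau$-independent kernels $K_i$, and even that needs more than you wrote.
\begin{itemize}
\item You need the closed linear equations for $(C_\theta^*,L)$.
\item Those equations must include the cross terms $\E[\int\ell\,\de s\int\ell\,\de B]$ in $\Sigma_\ell$. These are what produce the $\tau\delta R_\ell L$ terms in $C_\ell$.
\item The resulting kernels are resolvents of the feedback loop between the $\theta$-side and the $r$-side (the factor $(1-\delta\bar R_\ell(p)\bar R_\ell(q))^{-1}$ in the paper). They are not simply quadratic in the propagators.
\item The split into a noiseless part plus a fluctuation holds for correlation functions, not for processes ``driven by noiseless sources'', because $w$ and $U$ carry the noisy covariances.
\end{itemize}
So everything hinges on showing $K_1=H_1$ and $K_2=H_2$, and Step 3 does not establish this.

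\textbf{Why the unit-weight calculation does not identify the kernels.} With unit weight you compute second moments of $\de\btheta=-S(\btheta-\btheta^*)\,\de t+\frac1\delta\bX^\transpose\bz\,\de t+\sqrt{\tau/\delta}\,\bX^\transpose\de\bB$, with $S=\bX^\transpose\bX/\delta$. Its diffusion coefficient does not depend on the state. This is not an SGF of the form \eqref{eq:sgf}, where drift and diffusion share the same $\ell$. So nothing in the paper says its limit is governed by a DMFT with your kernels. Universality in the law of $\bX$ is unrelated to the weights and does not help.

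For the actual SGF, variation of constants gives exactly
\[
\E_{\bB}\Bigl[\tfrac1d\norm{\btheta^t-\btheta^*}_2^2\Bigr]=\tfrac1d\norm{\btheta^t_{\mathrm{GF}}-\btheta^*}_2^2+\frac{\tau}{\delta d}\int_0^t\sum_{i=1}^n\E_{\bB}\bigl[(e_i^s)^2\bigr]\,\bx_i^\transpose\napier^{-2S(t-s)}\bx_i\,\de s ,
\]
where $e_i^s=\bx_i^\transpose\btheta^s-y_i$ and $\btheta^t_{\mathrm{GF}}$ is the gradient-flow path. The DMFT fixes the limiting intensity $\tau\Etrain(s)$. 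It says nothing about how the random weights $(e_i^s)^2$ interact with the forms $\bx_i^\transpose\napier^{-2Su}\bx_i$. So deferring the weights to the DMFT does not close the argument.

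\textbf{Ways to close the gap.} You need one of the following.
\begin{enumerate}
\item[(i)] Compute the kernels inside the DMFT. Derive the closed equations (via discretization and Stein's lemma) and solve them by Laplace transform. Use $\bar R_\theta(p)=S_\MP(-p)$ together with the identity $1-\delta\bar R_\ell(p)\bar R_\ell(q)=(q-p)\bar R_\theta(p)\bar R_\theta(q)/(\bar R_\theta(p)-\bar R_\theta(q))$ to resolve the feedback. This is the paper's proof.
\item[(ii)] Keep the true weights. Prove that $\max_i\abs{\bx_i^\transpose f(S)\bx_i-\frac1d\tr(Sf(S))}\to 0$, uniformly in $u\le t$, for $f(x)=\napier^{-2xu}$ and $f(x)=x\napier^{-2xu}$. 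Leave-one-out on resolvents works, or Haar concentration after reducing to Gaussian $\bX$ by universality. Add uniform integrability to replace $\E_{\bB}\Etrain(\btheta^s)$ by $\Etrain(s)$. The weights then factor out and the Volterra equations follow directly, so Step 2 becomes unnecessary. This is essentially the homogenized-SGD argument.
\item[(iii)] Prove a DMFT theorem for the unit-weight dynamics, and check that its noise kernels are the same functional of $(R_\theta,R_\ell)$ as for the SGF.
\end{enumerate}
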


We derive this result in \cref{app:proof_dmft_ridge}. The result for non-zero ridge regularization with $h_t(\theta) = \lambda \theta$ with $\lambda > 0$ is also provided in the appendix.

The equation for $\Etrain$ in \eqref{eq:dmft_linear_errors} is a scalar linear Volterra integral equation, which can be solved numerically efficiently.
Once we obtain $\Etrain$, we can compute $\Etest$ using the second equation in \eqref{eq:dmft_linear_errors}.

The equations \eqref{eq:dmft_linear_errors} are equivalent to those derived in \citet{paquette2021sgd,paquette2025homogenization} using the theory of homogenized SGD and random matrix theory (up to time rescaling to match their settings).
Our result provides an alternative derivation of these equations using the DMFT framework (although our framework does not directly apply to SGD rigorously).
For further analysis of these equations, such as their exact solutions and long-time behaviors, see \citet{paquette2021sgd,paquette2025homogenization}.

\section{Numerical Simulations}
\label{sec:numerical}

The flexibility of our DMFT framework allows us to describe the dynamics of multi-pass SGD with small batch sizes (approximated by SGF) for nonlinear models, which have not been analyzed in prior works.
We illustrate this with a canonical example of a nonlinear model: logistic regression.
We conduct numerical simulations of SGD and compare the results with the theoretical prediction obtained by numerically solving the DMFT equation.

\paragraph{Setup and numerical procedure.}
The input data $\bx_i$ ($i=1,\dots,n$) is sampled independently from the isotropic Gaussian distribution $\normal(0,\bI_d/d)$ with $d=1024$ and $n = 2048$ ($\delta = 2$).
The labels $y_i \in \{\pm 1\}$ are generated according to a linear model with additive Gaussian noise: $y_i = \sign(\btheta^{*\transpose} \bx_i + z_i)$, where $\btheta^*$ is sampled from $\normal(0,\bI_d)$ and $z_i$ is sampled from $\normal(0,\sigma^2)$ with $\sigma^2 = 0.01$, both independently from other variables.
We use the following training objective with logistic loss and $\ell_2$ regularization:
\begin{align}
    \calL(\btheta) = \frac{1}{n} \sum_{i=1}^n \log(1 + \exp(-y_i \bx_i^\transpose \btheta)) + \frac{\lambda}{2d} \norm{\btheta}_2^2 \,.
\end{align}
We set the regularization parameter to $\lambda = 0.01$.
The multi-pass SGD for this model is given by the general SGD for planted models \eqref{eq:sgd_planted} with $m=1$, $h_t(\theta) = \lambda \theta$, and $\ell_t(r,r^*;z) = -y/(1 + \exp(y r))$ where $y = \sign(r^* + z)$.
Although this model is not formally covered by our theoretical setting due to the non-differentiability of $\ell$ with respect to $r^*$, we heuristically apply \cref{cor:sgf_dmft_planted} to derive the DMFT equation for this model.
We measure the performance of the model using the zero-one loss and define the train and test errors as
\begin{align}
    \Etrain(\btheta) = \frac{1}{n} \sum_{i=1}^n \ind(\sign(\bx_i^\transpose \btheta) \neq y_i) \,, \quad \Etest(\btheta) = \E_{(\bx,y)}[\ind(\sign(\bx^\transpose \btheta) \neq y)] \,.
\end{align}

The SGD is run with a batch size of $B=10$ and varying learning rates $\eta$ to change the temperature $\tau = \eta/B$.
We run the SGD on independent data for $10$ trials and compute the average and standard deviation of the train and test errors.

We solve the DMFT equation for logistic regression numerically by time discretization. The details of the numerical procedure are provided in \cref{app:numerics_details}.

\paragraph{Results.}
\begin{figure}
    \begin{center}
        \includegraphics[width=\textwidth]{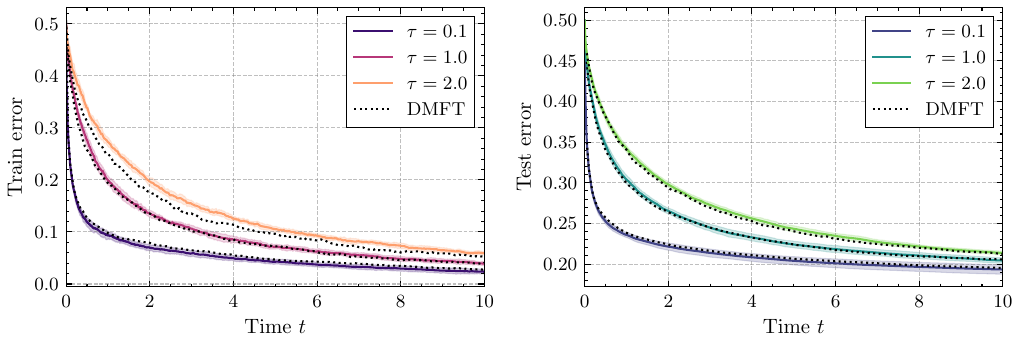}
    \end{center}
    \caption{Train (left) and test (right) error dynamics of SGD for logistic regression with various temperature values $\tau = \eta / B$. (Solid) Average errors of $10$ trials of SGD with $d=1024$ and $n=2048$. Shaded regions represent one standard deviation. (Dotted) Predictions from the DMFT equation.}
    \label{fig:simulation_logistic}
\end{figure}
We plot the train and test error dynamics of SGD for logistic regression with various temperature parameters $\tau = \eta / B$ in \cref{fig:simulation_logistic}.
The theoretical prediction from the DMFT equation shows good agreement with the empirical results from SGD simulations across different temperature settings.

\section{Discussion}

In this work, we presented a new DMFT equation that characterizes the asymptotic behavior of SGF in high dimensions, which approximates multi-pass SGD dynamics.
Our analysis opens several directions for future work, including the analysis of long-time behavior of the DMFT equations (as done for noiseless gradient flow in prior works \citep{celentano2021highdimensional}) and detailed studies of specific models (as done for gradient flow learning in prior works \citep{bordelon2024dynamical,montanari2025dynamical,nishiyama2026precise}).
We leave these directions for future investigation.

\section*{Acknowledgements}
We thank Shogo Nakakita for helpful discussions and comments on the manuscript.
Sota Nishiyama was supported by WINGS-FMSP at the University of Tokyo.
Masaaki Imaizumi was supported by JSPS KAKENHI (24K02904), JST CREST (JPMJCR21D2), JST FOREST (JPMJFR216I), and JST BOOST (JPMJBY24A9).

\bibliography{dmft-sgd}

\begin{thebibliography}{56}
\providecommand{\natexlab}[1]{#1}
\providecommand{\url}[1]{\texttt{#1}}
\expandafter\ifx\csname urlstyle\endcsname\relax
  \providecommand{\doi}[1]{doi: #1}\else
  \providecommand{\doi}{doi: \begingroup \urlstyle{rm}\Url}\fi

\bibitem[Advani et~al.(2020)Advani, Saxe, and Sompolinsky]{advani2020highdimensional}
M.~S. Advani, A.~M. Saxe, and H.~Sompolinsky.
\newblock High-dimensional dynamics of generalization error in neural networks.
\newblock \emph{Neural Networks}, 132:\penalty0 428--446, 2020.
\newblock \doi{10.1016/j.neunet.2020.08.022}.

\bibitem[Agoritsas et~al.(2018)Agoritsas, Biroli, Urbani, and Zamponi]{agoritsas2018outofequilibrium}
E.~Agoritsas, G.~Biroli, P.~Urbani, and F.~Zamponi.
\newblock Out-of-equilibrium dynamical mean-field equations for the perceptron model.
\newblock \emph{Journal of Physics A: Mathematical and Theoretical}, 51\penalty0 (8):\penalty0 085002, 2018.
\newblock \doi{10.1088/1751-8121/aaa68d}.

\bibitem[Ali et~al.(2020)Ali, Dobriban, and Tibshirani]{ali2020implicit}
A.~Ali, E.~Dobriban, and R.~Tibshirani.
\newblock The implicit regularization of stochastic gradient flow for least squares.
\newblock In \emph{Proceedings of the 37th {{International Conference}} on {{Machine Learning}}}, volume 119, pages 233--244, 2020.

\bibitem[Arnaboldi et~al.(2023)Arnaboldi, Stephan, Krzakala, and Loureiro]{arnaboldi2023highdimensional}
L.~Arnaboldi, L.~Stephan, F.~Krzakala, and B.~Loureiro.
\newblock From high-dimensional \& mean-field dynamics to dimensionless {{ODEs}}: a unifying approach to {{SGD}} in two-layers networks.
\newblock In \emph{Proceedings of {{Thirty Sixth Conference}} on {{Learning Theory}}}, volume 195, pages 1199--1227, 2023.

\bibitem[Arous et~al.(2021)Arous, Gheissari, and Jagannath]{arous2021online}
G.~B. Arous, R.~Gheissari, and A.~Jagannath.
\newblock Online stochastic gradient descent on non-convex losses from high-dimensional inference.
\newblock \emph{Journal of Machine Learning Research}, 22\penalty0 (106):\penalty0 1--51, 2021.

\bibitem[Bayati and Montanari(2011)]{bayati2011dynamics}
M.~Bayati and A.~Montanari.
\newblock The dynamics of message passing on dense graphs, with applications to compressed sensing.
\newblock \emph{IEEE Transactions on Information Theory}, 57\penalty0 (2):\penalty0 764--785, 2011.
\newblock \doi{10.1109/TIT.2010.2094817}.

\bibitem[Ben~Arous et~al.(2022)Ben~Arous, Gheissari, and Jagannath]{benarous2022highdimensional}
G.~Ben~Arous, R.~Gheissari, and A.~Jagannath.
\newblock High-dimensional limit theorems for {{SGD}}: effective dynamics and critical scaling.
\newblock In \emph{Advances in {{Neural Information Processing Systems}}}, volume~35, pages 25349--25362, 2022.

\bibitem[Berger and Mizel(1980)]{berger1980volterra}
M.~A. Berger and V.~J. Mizel.
\newblock Volterra equations with {{It\^o}} integrals---{{I}}.
\newblock \emph{Journal of Integral Equations}, 2\penalty0 (3):\penalty0 187--245, 1980.

\bibitem[Biehl and Schwarze(1995)]{biehl1995learning}
M.~Biehl and H.~Schwarze.
\newblock Learning by on-line gradient descent.
\newblock \emph{Journal of Physics A: Mathematical and General}, 28\penalty0 (3):\penalty0 643--656, 1995.
\newblock \doi{10.1088/0305-4470/28/3/018}.

\bibitem[Bodin and Macris(2021)]{bodin2021model}
A.~Bodin and N.~Macris.
\newblock Model, sample, and epoch-wise descents: exact solution of gradient flow in the random feature model.
\newblock In \emph{Advances in {{Neural Information Processing Systems}}}, volume~34, pages 21605--21617, 2021.

\bibitem[Bordelon et~al.(2024)Bordelon, Atanasov, and Pehlevan]{bordelon2024dynamical}
B.~Bordelon, A.~Atanasov, and C.~Pehlevan.
\newblock A dynamical model of neural scaling laws.
\newblock In \emph{Proceedings of the 41st {{International Conference}} on {{Machine Learning}}}, volume 235, pages 4345--4382, 2024.

\bibitem[Bordelon et~al.(2025)Bordelon, Atanasov, and Pehlevan]{bordelon2025how}
B.~Bordelon, A.~Atanasov, and C.~Pehlevan.
\newblock How feature learning can improve neural scaling laws.
\newblock \emph{Journal of Statistical Mechanics: Theory and Experiment}, 2025\penalty0 (8):\penalty0 084002, 2025.
\newblock \doi{10.1088/1742-5468/adefb1}.

\bibitem[Celentano et~al.(2020)Celentano, Montanari, and Wu]{celentano2020estimation}
M.~Celentano, A.~Montanari, and Y.~Wu.
\newblock The estimation error of general first order methods.
\newblock In \emph{Proceedings of {{Thirty Third Conference}} on {{Learning Theory}}}, volume 125, pages 1078--1141, 2020.

\bibitem[Celentano et~al.(2021)Celentano, Cheng, and Montanari]{celentano2021highdimensional}
M.~Celentano, C.~Cheng, and A.~Montanari.
\newblock The high-dimensional asymptotics of first order methods with random data.
\newblock arXiv:2112.07572, 2021.

\bibitem[Chaudhari and Soatto(2018)]{chaudhari2018stochastic}
P.~Chaudhari and S.~Soatto.
\newblock Stochastic gradient descent performs variational inference, converges to limit cycles for deep networks.
\newblock In \emph{2018 {{Information Theory}} and {{Applications Workshop}} ({{ITA}})}, pages 1--10, 2018.
\newblock \doi{10.1109/ITA.2018.8503224}.

\bibitem[Cheng et~al.(2020)Cheng, Yin, Bartlett, and Jordan]{cheng2020stochastic}
X.~Cheng, D.~Yin, P.~Bartlett, and M.~Jordan.
\newblock Stochastic gradient and {{Langevin}} processes.
\newblock In \emph{Proceedings of the 37th {{International Conference}} on {{Machine Learning}}}, volume 119, pages 1810--1819, 2020.

\bibitem[{Collins-Woodfin} et~al.(2024){Collins-Woodfin}, Paquette, Paquette, and Seroussi]{collins-woodfin2024hitting}
E.~{Collins-Woodfin}, C.~Paquette, E.~Paquette, and I.~Seroussi.
\newblock Hitting the high-dimensional notes: an {{ODE}} for {{SGD}} learning dynamics on {{GLMs}} and multi-index models.
\newblock \emph{Information and Inference: A Journal of the IMA}, 13\penalty0 (4):\penalty0 iaae028, 2024.
\newblock \doi{10.1093/imaiai/iaae028}.

\bibitem[Crisanti et~al.(1993)Crisanti, Horner, and Sommers]{crisanti1993spherical}
A.~Crisanti, H.~Horner, and H.-J. Sommers.
\newblock The spherical p-spin interaction spin-glass model: the dynamics.
\newblock \emph{Zeitschrift f\"ur Physik B Condensed Matter}, 92\penalty0 (2):\penalty0 257--271, 1993.

\bibitem[Cugliandolo(2024)]{cugliandolo2024recent}
L.~F. Cugliandolo.
\newblock Recent applications of dynamical mean-field methods.
\newblock \emph{Annual Review of Condensed Matter Physics}, 15\penalty0 (1):\penalty0 177--213, 2024.
\newblock \doi{10.1146/annurev-conmatphys-040721-022848}.

\bibitem[Cugliandolo and Kurchan(1993)]{cugliandolo1993analytical}
L.~F. Cugliandolo and J.~Kurchan.
\newblock Analytical solution of the off-equilibrium dynamics of a long-range spin-glass model.
\newblock \emph{Physical Review Letters}, 71\penalty0 (1):\penalty0 173--176, 1993.
\newblock \doi{10.1103/PhysRevLett.71.173}.

\bibitem[Fan and Wang(2026)]{fan2026highdimensional}
Z.~Fan and L.~Wang.
\newblock High-dimensional learning dynamics of multi-pass stochastic gradient descent in multi-index models.
\newblock arXiv:2601.21093, 2026.

\bibitem[Fan et~al.(2025{\natexlab{a}})Fan, Ko, Loureiro, Lu, and Shen]{fan2025dynamical}
Z.~Fan, J.~Ko, B.~Loureiro, Y.~M. Lu, and Y.~Shen.
\newblock Dynamical mean-field analysis of adaptive {{Langevin}} diffusions: propagation-of-chaos and convergence of the linear response.
\newblock arXiv:2504.15556, 2025{\natexlab{a}}.

\bibitem[Fan et~al.(2025{\natexlab{b}})Fan, Ko, Loureiro, Lu, and Shen]{fan2025dynamicala}
Z.~Fan, J.~Ko, B.~Loureiro, Y.~M. Lu, and Y.~Shen.
\newblock Dynamical mean-field analysis of adaptive {{Langevin}} diffusions: replica-symmetric fixed point and empirical {{Bayes}}.
\newblock arXiv:2504.15558, 2025{\natexlab{b}}.

\bibitem[Feng et~al.(2022)Feng, Venkataramanan, Rush, and Samworth]{feng2022unifying}
O.~Y. Feng, R.~Venkataramanan, C.~Rush, and R.~J. Samworth.
\newblock A unifying tutorial on approximate message passing.
\newblock \emph{Foundations and Trends in Machine Learning}, 15\penalty0 (4):\penalty0 335--536, 2022.
\newblock \doi{10.1561/2200000092}.

\bibitem[Gerbelot et~al.(2024)Gerbelot, Troiani, Mignacco, Krzakala, and Zdeborov{\'a}]{gerbelot2024rigorous}
C.~Gerbelot, E.~Troiani, F.~Mignacco, F.~Krzakala, and L.~Zdeborov{\'a}.
\newblock Rigorous dynamical mean-field theory for stochastic gradient descent methods.
\newblock \emph{SIAM Journal on Mathematics of Data Science}, 6\penalty0 (2):\penalty0 400--427, 2024.
\newblock \doi{10.1137/23M1594388}.

\bibitem[Glasgow(2024)]{glasgow2024sgd}
M.~Glasgow.
\newblock {{SGD}} finds then tunes features in two-layer neural networks with near-optimal sample complexity: a case study in the {{XOR}} problem.
\newblock In \emph{International {{Conference}} on {{Learning Representations}}}, 2024.

\bibitem[Goldt et~al.(2019)Goldt, Advani, Saxe, Krzakala, and Zdeborov{\'a}]{goldt2019dynamics}
S.~Goldt, M.~Advani, A.~M. Saxe, F.~Krzakala, and L.~Zdeborov{\'a}.
\newblock Dynamics of stochastic gradient descent for two-layer neural networks in the teacher-student setup.
\newblock In \emph{Advances in {{Neural Information Processing Systems}}}, volume~32, 2019.

\bibitem[Han(2025)]{han2025entrywise}
Q.~Han.
\newblock Entrywise dynamics and universality of general first order methods.
\newblock \emph{The Annals of Statistics}, 53\penalty0 (4), 2025.
\newblock \doi{10.1214/25-AOS2544}.

\bibitem[Han and Imaizumi(2025)]{han2025precise}
Q.~Han and M.~Imaizumi.
\newblock Precise gradient descent training dynamics for finite-width multi-layer neural networks.
\newblock arXiv:2505.04898, 2025.

\bibitem[Ichikawa and Hukushima(2024)]{ichikawa2024learning}
Y.~Ichikawa and K.~Hukushima.
\newblock Learning dynamics in linear {{VAE}}: posterior collapse threshold, superfluous latent space pitfalls, and speedup with {{KL}} annealing.
\newblock In \emph{Proceedings of the 27th {{International Conference}} on {{Artificial Intelligence}} and {{Statistics}}}, volume 238, pages 1936--1944, 2024.

\bibitem[Ichikawa et~al.(2025)Ichikawa, Kashiwamura, and Sakata]{ichikawa2025highdimensional}
Y.~Ichikawa, S.~Kashiwamura, and A.~Sakata.
\newblock High-dimensional learning dynamics of quantized models with straight-through estimator.
\newblock arXiv:2510.10693, 2025.

\bibitem[Jastrz{\k e}bski et~al.(2017)Jastrz{\k e}bski, Kenton, Arpit, Ballas, Fischer, Bengio, and Storkey]{jastrzebski2017three}
S.~Jastrz{\k e}bski, Z.~Kenton, D.~Arpit, N.~Ballas, A.~Fischer, Y.~Bengio, and A.~Storkey.
\newblock Three factors influencing minima in {{SGD}}.
\newblock arXiv:1711.04623, 2017.

\bibitem[Keskar et~al.(2017)Keskar, Mudigere, Nocedal, Smelyanskiy, and Tang]{keskar2017largebatch}
N.~S. Keskar, D.~Mudigere, J.~Nocedal, M.~Smelyanskiy, and P.~T.~P. Tang.
\newblock On large-batch training for deep learning: generalization gap and sharp minima.
\newblock In \emph{International {{Conference}} on {{Learning Representations}}}, 2017.

\bibitem[Kloeden and Platen(1992)]{kloeden1992numerical}
P.~E. Kloeden and E.~Platen.
\newblock \emph{Numerical {{Solution}} of {{Stochastic Differential Equations}}}.
\newblock Springer Berlin Heidelberg, 1992.
\newblock \doi{10.1007/978-3-662-12616-5}.

\bibitem[Li et~al.(2017)Li, Tai, and E]{li2017stochastic}
Q.~Li, C.~Tai, and W.~E.
\newblock Stochastic modified equations and adaptive stochastic gradient algorithms.
\newblock In \emph{Proceedings of the 34th {{International Conference}} on {{Machine Learning}}}, volume~70, pages 2101--2110, 2017.

\bibitem[Li et~al.(2019)Li, Tai, and E]{li2019stochastic}
Q.~Li, C.~Tai, and W.~E.
\newblock Stochastic modified equations and dynamics of stochastic gradient algorithms {{I}}: mathematical foundations.
\newblock \emph{Journal of Machine Learning Research}, 20\penalty0 (40):\penalty0 1--47, 2019.

\bibitem[Mandt et~al.(2017)Mandt, Hoffman, and Blei]{mandt2017stochastic}
S.~Mandt, M.~D. Hoffman, and D.~M. Blei.
\newblock Stochastic gradient descent as approximate {{Bayesian}} inference.
\newblock \emph{Journal of Machine Learning Research}, 18\penalty0 (134):\penalty0 1--35, 2017.

\bibitem[Mignacco et~al.(2020)Mignacco, Krzakala, Urbani, and Zdeborov{\'a}]{mignacco2020dynamical}
F.~Mignacco, F.~Krzakala, P.~Urbani, and L.~Zdeborov{\'a}.
\newblock Dynamical mean-field theory for stochastic gradient descent in {{Gaussian}} mixture classification.
\newblock In \emph{Advances in {{Neural Information Processing Systems}}}, volume~33, pages 9540--9550, 2020.

\bibitem[Montanari and Urbani(2025)]{montanari2025dynamical}
A.~Montanari and P.~Urbani.
\newblock Dynamical decoupling of generalization and overfitting in large two-layer networks.
\newblock In \emph{Advances in {{Neural Information Processing Systems}}}, 2025.

\bibitem[Nishiyama and Imaizumi(2026)]{nishiyama2026precise}
S.~Nishiyama and M.~Imaizumi.
\newblock Precise dynamics of diagonal linear networks: a unifying analysis by dynamical mean-field theory.
\newblock In \emph{The 29th {{International Conference}} on {{Artificial Intelligence}} and {{Statistics}}}, 2026.

\bibitem[Paquette et~al.(2021)Paquette, Lee, Pedregosa, and Paquette]{paquette2021sgd}
C.~Paquette, K.~Lee, F.~Pedregosa, and E.~Paquette.
\newblock {{SGD}} in the large: average-case analysis, asymptotics, and stepsize criticality.
\newblock In \emph{Proceedings of {{Thirty Fourth Conference}} on {{Learning Theory}}}, volume 134, pages 3548--3626, 2021.

\bibitem[Paquette et~al.(2025)Paquette, Paquette, Adlam, and Pennington]{paquette2025homogenization}
C.~Paquette, E.~Paquette, B.~Adlam, and J.~Pennington.
\newblock Homogenization of {{SGD}} in high-dimensions: exact dynamics and generalization properties.
\newblock \emph{Mathematical Programming}, 214\penalty0 (1-2):\penalty0 1--90, 2025.
\newblock \doi{10.1007/s10107-024-02171-3}.

\bibitem[Pesme et~al.(2021)Pesme, {Pillaud-Vivien}, and Flammarion]{pesme2021implicit}
S.~Pesme, L.~{Pillaud-Vivien}, and N.~Flammarion.
\newblock Implicit bias of {{SGD}} for diagonal linear networks: a provable benefit of stochasticity.
\newblock In \emph{Advances in {{Neural Information Processing Systems}}}, volume~34, pages 29218--29230, 2021.

\bibitem[Ren et~al.(2025)Ren, Nichani, Wu, and Lee]{ren2025emergence}
Y.~Ren, E.~Nichani, D.~Wu, and J.~D. Lee.
\newblock Emergence and scaling laws in {{SGD}} learning of shallow neural networks.
\newblock In \emph{Advances in {{Neural Information Processing Systems}}}, 2025.

\bibitem[Riegler and Biehl(1995)]{riegler1995online}
P.~Riegler and M.~Biehl.
\newblock On-line backpropagation in two-layered neural networks.
\newblock \emph{Journal of Physics A: Mathematical and General}, 28\penalty0 (20):\penalty0 L507--L513, 1995.
\newblock \doi{10.1088/0305-4470/28/20/002}.

\bibitem[Robbins and Monro(1951)]{robbins1951stochastic}
H.~Robbins and S.~Monro.
\newblock A stochastic approximation method.
\newblock \emph{The Annals of Mathematical Statistics}, 22\penalty0 (3):\penalty0 400--407, 1951.

\bibitem[Saad and Solla(1995{\natexlab{a}})]{saad1995exact}
D.~Saad and S.~A. Solla.
\newblock Exact solution for on-line learning in multilayer neural networks.
\newblock \emph{Physical Review Letters}, 74\penalty0 (21):\penalty0 4337--4340, 1995{\natexlab{a}}.
\newblock \doi{10.1103/PhysRevLett.74.4337}.

\bibitem[Saad and Solla(1995{\natexlab{b}})]{saad1995online}
D.~Saad and S.~A. Solla.
\newblock On-line learning in soft committee machines.
\newblock \emph{Physical Review E}, 52\penalty0 (4):\penalty0 4225--4243, 1995{\natexlab{b}}.
\newblock \doi{10.1103/PhysRevE.52.4225}.

\bibitem[Sarao~Mannelli et~al.(2020)Sarao~Mannelli, Biroli, Cammarota, Krzakala, Urbani, and Zdeborov{\'a}]{saraomannelli2020marvels}
S.~Sarao~Mannelli, G.~Biroli, C.~Cammarota, F.~Krzakala, P.~Urbani, and L.~Zdeborov{\'a}.
\newblock Marvels and pitfalls of the {{Langevin}} algorithm in noisy high-dimensional inference.
\newblock \emph{Physical Review X}, 10\penalty0 (1):\penalty0 011057, 2020.
\newblock \doi{10.1103/PhysRevX.10.011057}.

\bibitem[Sompolinsky and Zippelius(1981)]{sompolinsky1981dynamic}
H.~Sompolinsky and A.~Zippelius.
\newblock Dynamic theory of the spin-glass phase.
\newblock \emph{Physical Review Letters}, 47\penalty0 (5):\penalty0 359--362, 1981.
\newblock \doi{10.1103/PhysRevLett.47.359}.

\bibitem[Sompolinsky and Zippelius(1982)]{sompolinsky1982relaxational}
H.~Sompolinsky and A.~Zippelius.
\newblock Relaxational dynamics of the {{Edwards-Anderson}} model and the mean-field theory of spin-glasses.
\newblock \emph{Physical Review B}, 25\penalty0 (11):\penalty0 6860--6875, 1982.
\newblock \doi{10.1103/PhysRevB.25.6860}.

\bibitem[Veiga et~al.(2022)Veiga, Stephan, Loureiro, Krzakala, and Zdeborov{\'a}]{veiga2022phase}
R.~Veiga, L.~Stephan, B.~Loureiro, F.~Krzakala, and L.~Zdeborov{\'a}.
\newblock Phase diagram of stochastic gradient descent in high-dimensional two-layer neural networks.
\newblock In \emph{Advances in {{Neural Information Processing Systems}}}, volume~35, pages 23244--23255, 2022.

\bibitem[Villani(2009)]{villani2009optimal}
C.~Villani.
\newblock \emph{Optimal {{Transport}}}, volume 338 of \emph{Grundlehren der mathematischen {{Wissenschaften}}}.
\newblock Springer Berlin Heidelberg, 2009.
\newblock \doi{10.1007/978-3-540-71050-9}.

\bibitem[Wang et~al.(2017)Wang, Mattingly, and Lu]{wang2017scaling}
C.~Wang, J.~Mattingly, and Y.~M. Lu.
\newblock Scaling limit: exact and tractable analysis of online learning algorithms with applications to regularized regression and {{PCA}}.
\newblock arXiv:1712.04332, 2017.

\bibitem[Wang et~al.(2019)Wang, Hu, and Lu]{wang2019solvable}
C.~Wang, H.~Hu, and Y.~Lu.
\newblock A solvable high-dimensional model of {{GAN}}.
\newblock In \emph{Advances in {{Neural Information Processing Systems}}}, volume~32, 2019.

\bibitem[Wang et~al.(2024)Wang, Zhong, and Fan]{wang2024universality}
T.~Wang, X.~Zhong, and Z.~Fan.
\newblock Universality of approximate message passing algorithms and tensor networks.
\newblock \emph{The Annals of Applied Probability}, 34\penalty0 (4), 2024.
\newblock \doi{10.1214/24-AAP2056}.

\end{thebibliography}

\newpage
\appendix

\tableofcontents

\section{Related Works}
\label{app:related_work}

\paragraph{One-pass SGD.}

The study of online SGD using low-dimensional ODEs was pioneered in the statistical physics literature on learning in perceptrons and two-layer neural networks \citep{biehl1995learning,saad1995exact,saad1995online,riegler1995online}.
They derived a closed system of low-dimensional ODEs for macroscopic parameters such as the correlation between the student and teacher weights and analyzed their dynamics, which closely approximates the original online SGD dynamics in high dimensions.
The ODEs typically involve correction terms that account for the stochasticity of the dynamics, and analyzing these ODEs provides insights into how the noise affects the training dynamics.

Recently, these works have been put on a rigorous footing by \citet{goldt2019dynamics} using techniques developed in \citet{wang2017scaling,wang2019solvable}.
\citet{benarous2022highdimensional} extended these techniques to more general models and general scaling of learning rates.
Analysis of online SGD in high dimensions using similar techniques has been applied to a wide range of models due to its versatility and simplicity \citep{arous2021online,ichikawa2024learning,ren2025emergence,glasgow2024sgd,veiga2022phase,arnaboldi2023highdimensional,collins-woodfin2024hitting,ichikawa2025highdimensional}.

\paragraph{Multi-pass SGD.}

DMFT has recently gained attention as a powerful framework for analyzing high-dimensional random dynamics, including multi-pass GD and SGD, by reducing them to low-dimensional effective processes.
DMFT was originally developed in spin glass theory \citep{sompolinsky1981dynamic,sompolinsky1982relaxational,crisanti1993spherical,cugliandolo1993analytical} and has been applied to analyzing various high-dimensional optimization dynamics \citep{agoritsas2018outofequilibrium,saraomannelli2020marvels,cugliandolo2024recent}.

In the context of SGD dynamics, \citet{mignacco2020dynamical} derived DMFT equations for multi-pass gradient flow and SGD in shallow neural networks, heuristically using statistical physics techniques.
To avoid the problem of vanishing stochasticity in the continuous-time limit $\eta \to 0$, they considered a variant of SGD called \emph{persistent SGD} to retain nontrivial noise in the continuous-time limit.
Their analysis depends on the batch size growing proportionally to the number of samples.
In contrast, we work with a stochastic gradient flow which approximates mini-batch SGD with small (sublinear) batch sizes compared to the number of samples, which is a common setting in practice.

There are several rigorous works that derived DMFT equations for GD/SGD.
\citet{celentano2021highdimensional} rigorously derived DMFT equations for gradient flow dynamics in shallow neural networks by using time discretization and mapping to approximate message passing \citep{bayati2011dynamics,feng2022unifying}.
We build upon their proof technique to analyze SGD dynamics in this work.
\citet{gerbelot2024rigorous} derived DMFT equations for discrete-time GD and SGD for shallow neural networks with batch sizes proportional to the number of samples and a constant number of updates.
More recently, \citet{fan2025dynamical,fan2025dynamicala} derived DMFT equations for Langevin dynamics of Bayesian linear regression.
A closely related line of work is the study of \emph{general first order methods} (GFOMs), which provides a framework for analyzing a broad class of iterative algorithms, including GD, using a low-dimensional recursion similar to DMFT \citep{celentano2020estimation,han2025entrywise,han2025precise}.

DMFT equations have been used for analyzing long-time behavior of optimization dynamics and provided insights into deep learning phenomena such as scaling laws and timescale separation \citep{bordelon2024dynamical,bordelon2025how,montanari2025dynamical,nishiyama2026precise}.

For linear models, there is a framework that analyzes multi-pass SGD with small batch sizes and proportionally many updates using continuous-time equations.
\citet{paquette2021sgd} derived a low-dimensional and continuous-time Volterra equation characterizing the training loss dynamics of SGD in high-dimensional linear regression models.
\citet{paquette2025homogenization} extended this work and introduced an SDE called \emph{homogenized SGD} (HSGD) as a high-dimensional equivalent of SGD dynamics in linear regression.
The HSGD framework allows deriving equations for macroscopic quantities of SGD dynamics, such as training and test errors.
Our work provides a similar framework for broader settings, including generalized linear models and shallow neural networks.

\section{Definition of the DMFT Equation}
\label{app:rigorous_dmft}

In this section, we provide a rigorous definition of the DMFT system $\mathfrak{S}$ introduced informally in \cref{eq:dmft_informal}.
The key idea is to define all objects through well-defined auxiliary stochastic processes, avoiding functional derivatives and the formal derivative of the Brownian motion.
We also introduce a discretized DMFT equation for which the connection between the informal definition and the rigorous definition is more transparent.
The discretized DMFT equation is also used as an intermediate step in the proof of \cref{thm:dmft_sgf} and as a numerical method for solving the DMFT equation (see \cref{app:numerics_details}).

\subsection{Rigorous Definition of the DMFT System}

We rigorously define the DMFT system $\mathfrak{S}$ for functions $C_\theta,\Sigma_\ell,R_\theta,R_\ell \colon \reals_{\geq 0}^2 \to \reals^{m \times m}$ and $\Gamma \colon \reals_{\geq 0} \to \reals^{m \times m}$ self-consistently as follows.
First, given $\Sigma_\ell,R_\ell,\Gamma$, define stochastic processes $\{\theta^t \in \reals^m \}_{t \geq 0}$ and $\{\rho_\theta^{t,t'} \in \reals^{m \times m} \}_{t \geq t' \geq 0}$ by the following equations.
\begin{align}
    \theta^t           & = \theta^0 + U^t - \int_0^t \ab(h_s(\theta^s) + \Gamma(s) \theta^s + \int_0^s R_\ell(s,s') \theta^{s'} \de s') \de s \,, \quad U \sim \GP(0,\Sigma_\ell/\delta) \,, \label{eq:theta} \\
    \rho_\theta^{t,t'} & = I_m - \int_{t'}^t \ab((\nabla_\theta h_s(\theta^s) + \Gamma(s)) \rho_\theta^{s,t'} + \int_{t'}^s R_\ell(s,s') \rho_\theta^{s',t'} \de s') \de s \,. \label{eq:rho_theta}
\end{align}
Then, set $C_\theta,R_\theta$ as
\begin{align}
    C_\theta(t,t') & = \E[\theta^t \theta^{t'\transpose}] \,, \label{eq:C_theta}       \\
    R_\theta(t,t') & = \E[\rho_\theta^{t,t'}] \quad (t \geq t') \,, \label{eq:R_theta}
\end{align}
and $R_\theta(t,t') = 0$ for $t < t'$. Here, the expectation is with respect to the randomness of $\theta^0 \sim \sfP(\theta^0)$ and the Gaussian process $U$.

Next, given $C_\theta,R_\theta$, define stochastic processes $\{r^t \in \reals^m\}_{t \geq 0}$ and $\{\rho_\ell^{t,t'},D_\ell^{t,t'} \in \reals^{m \times m} \}_{t \geq t' \geq 0}$ by the following equations.
\begin{align}
    r^t              & = w^t -\frac{1}{\delta} \int_0^t R_\theta(t,s) \ell_s(r^s;z) (\de s + \sqrt{\tau\delta} \de B^s) \,, \quad w \sim \GP(0,C_\theta) \,, \label{eq:r}                                                           \\
    \rho_\ell^{t,t'} & = \nabla_r \ell_t(r^t;z) \rho_r^{t,t'} \,, \label{eq:rho_ell}                                                                                                                                                \\
    D_\ell^{t,t'}    & = \nabla_r \ell_t(r^t;z) \ab( -\frac{1}{\delta} \int_{t'}^t R_\theta(t,s) D_\ell^{s,t'} (\de s + \sqrt{\tau\delta} \de B^s)) + \nabla_r^2 \ell_t(r^t;z)[D_r^{t,t'}] \cdot \rho_r^{t,t'} \,, \label{eq:D_ell}
\end{align}
where $B^t$ is a Brownian motion in $\reals$, and we defined the auxiliary processes $\rho_r^{t,t'} \in \reals^{m \times m}$ and $D_r^{t,t'} \in \reals^m$ as
\begin{align}
    \rho_r^{t,t'} & = -\frac{1}{\delta} \int_{t'}^t R_\theta(t,s) \rho_\ell^{s,t'} (\de s + \sqrt{\tau\delta} \de B^s) - \frac{1}{\delta} R_\theta(t,t') \nabla_r \ell_{t'}(r^{t'};z) \,, \label{eq:rho_r}                 \\
    D_r^{t,t'}    & = -\frac{1}{\delta} \int_{t'}^t R_\theta(t,s) \nabla_r \ell_s(r^s;z) D_r^{s,t'} (\de s + \sqrt{\tau\delta} \de B^s) - \sqrt{\frac{\tau}{\delta}} R_\theta(t,t') \ell_{t'}(r^{t'};z) \,, \label{eq:D_r}
\end{align}
and the notation $\nabla_r^2 \ell_t(r^t;z)[v]$ for $v \in \reals^m$ in \cref{eq:D_ell} denotes the product
\begin{align}
    \sum_{i=1}^m \partial_{r_i} \nabla_r \ell_t(r^t;z) v_i \in \reals^{m \times m} \,.
\end{align}
Then, set $\Sigma_\ell,R_\ell,\Gamma$ as
\begin{align}
    \Sigma_\ell(t,t') & = \E[L^t L^{t'\transpose}] \,, \quad L^t \coloneqq \int_0^t \ell_s(r^s;z) (\de s + \sqrt{\tau\delta} \de B^s) \,, \label{eq:Sigma_ell} \\
    R_\ell(t,t')      & = \E[\rho_\ell^{t,t'}] + \sqrt{\tau\delta} \E[D_\ell^{t,t'}] \quad (t \geq t') \,, \label{eq:R_ell}                                    \\
    \Gamma(t)         & = \E[\nabla_r\ell_t(r^t;z)] \,, \label{eq:Gamma}
\end{align}
and $R_\ell(t,t') = 0$ for $t < t'$. Here, the expectation is with respect to the randomness of $z \sim \sfP(z)$, the Gaussian process $w$, and the Brownian motion $B$.

Then, the solution of the DMFT system $\mathfrak{S}$ is defined as a fixed point of the above two mappings.

Note that the above definition is written purely in terms of standard It\^o integrals and is thus amenable to rigorous analysis.

\subsection{Discretized DMFT System}
\label{app:dmft_disc}

We present a discretized version of the DMFT system defined above, in which the time variable $t$ is discretized with step size $\gamma > 0$.

Let $t_i \coloneqq i \gamma$ for $i = 0,1,2,\dots$.
We define the discretized DMFT system $\mathfrak{S}^\gamma$ by the following equations.
\begin{subequations}
    \begin{align}
        \theta_\gamma^{t_i}            & = \theta^0 + U_\gamma^{t_i} - \gamma \sum_{j=0}^{i-1} \ab(h_{t_j}(\theta_\gamma^{t_j}) + \Gamma^\gamma(t_j)\theta_\gamma^{t_j} + \gamma \sum_{k=0}^{j-1} R_\ell^\gamma(t_j,t_k) \theta_\gamma^{t_k}) \,, \label{eq:dmft_disc_theta}                               \\
        \rho_{\theta,\gamma}^{t_i,t_j} & = I_m - \gamma \sum_{k=j+1}^{i-1} \ab((\nabla_\theta h_{t_k}(\theta_\gamma^{t_k}) + \Gamma^\gamma(t_k)) \rho_{\theta,\gamma}^{t_k,t_j} + \gamma \sum_{l=j+1}^{k-1} R_\ell^\gamma(t_k,t_l) \rho_{\theta,\gamma}^{t_l,t_j})  \,, \label{eq:dmft_disc_rhotheta}      \\
        r_\gamma^{t_i}                 & = w_\gamma^{t_i} -\frac{1}{\delta} \sum_{j=0}^{i-1} R_\theta^\gamma(t_i,t_j) \ell_{t_j}(r_\gamma^{t_j};z) (\gamma + \sqrt{\tau \delta} (B^{t_{j+1}} - B^{t_j})) \,, \label{eq:dmft_disc_r}                                                                        \\
        \rho_{\ell,\gamma}^{t_i,t_j}   & = \nabla_r \ell_{t_i}(r_\gamma^{t_i};z) \rho_{r,\gamma}^{t_i,t_j} \,, \label{eq:dmft_disc_rhol}                                                                                                                                                                   \\
        \rho_{r,\gamma}^{t_i,t_j}      & = -\frac{1}{\delta} \sum_{k=j+1}^{i-1} R_\theta^\gamma(t_i,t_k) \rho_{\ell,\gamma}^{t_k,t_j} (\gamma + \sqrt{\tau\delta} (B^{t_{k+1}} - B^{t_k})) - \frac{1}{\delta} R_\theta^\gamma(t_i,t_j) \nabla_r \ell_{t_j}(r_\gamma^{t_j};z) \,, \label{eq:dmft_disc_rhor} \\
        D_{\ell,\gamma}^{t_i,t_j}      & = \nabla_r \ell_{t_i}(r_\gamma^{t_i};z) \ab( -\frac{1}{\delta} \sum_{k=j+1}^{i-1} R_\theta^\gamma(t_i,t_k) D_{\ell,\gamma}^{t_k,t_j} (\gamma + \sqrt{\tau\delta} (B^{t_{k+1}} - B^{t_k}))) \notag                                                                 \\
                                       & \qquad + \nabla_r^2 \ell_{t_i}(r_\gamma^{t_i};z)[D_{r,\gamma}^{t_i,t_j}] \cdot \rho_{r,\gamma}^{t_i,t_j} \,, \label{eq:dmft_disc_Drhol}                                                                                                                           \\
        D_{r,\gamma}^{t_i,t_j}         & = -\frac{1}{\delta} \sum_{k=j+1}^{i-1} R_\theta^\gamma(t_i,t_k) \nabla_r \ell_{t_k}(r_\gamma^{t_k};z) D_{r,\gamma}^{t_k,t_j} (\gamma + \sqrt{\tau\delta} (B^{t_{k+1}} - B^{t_k})) \notag                                                                          \\
                                       & \qquad - \sqrt{\frac{\tau}{\delta}} R_\theta^\gamma(t_i,t_j) \ell_{t_j}(r_\gamma^{t_j};z) \,, \label{eq:dmft_disc_Dr}
    \end{align} \label{eq:dmft_disc_process}
\end{subequations}
where $(U_\gamma^{t_i},w_\gamma^{t_i})_{i \geq 0}$ satisfies
\begin{align}
    \E[U_\gamma^{t_i} U_\gamma^{t_j\transpose}] & = \Sigma_\ell^\gamma(t_i,t_j) / \delta \,, \quad \E[w_\gamma^{t_i} w_\gamma^{t_j\transpose}] = C_\theta^\gamma(t_i,t_j) \,.
\end{align}
Then, set $C_\theta^\gamma,R_\theta^\gamma,\Sigma_\ell^\gamma,R_\ell^\gamma,\Gamma^\gamma$ as
\begin{align}
    \begin{gathered}
        C_\theta^\gamma(t_i,t_j) = \E[\theta_\gamma^{t_i} \theta_\gamma^{t_j\transpose}] \,, \quad R_\theta^\gamma(t_i,t_j) = \E[\rho_{\theta,\gamma}^{t_i,t_j}] \,, \\
        \Sigma_\ell^\gamma(t_i,t_j) = \E[L_\gamma^{t_i} L_\gamma^{t_j\transpose}] \,, \quad L_\gamma^{t_i} \coloneqq \sum_{k=0}^{i-1} \ell_{t_k}(r_\gamma^{t_k};z) (\gamma + \sqrt{\tau\delta} (B^{t_{k+1}} - B^{t_k})) \,, \\
        R_\ell^\gamma(t_i,t_j) = \E[\rho_{\ell,\gamma}^{t_i,t_j}] + \sqrt{\tau\delta} \E[D_{\ell,\gamma}^{t_i,t_j}] \,, \quad \Gamma^\gamma(t_i) = \E\ab[\nabla_r\ell_{t_i}(r_\gamma^{t_i};z)] \,,
    \end{gathered} \label{eq:dmft_disc_func}
\end{align}
where we set $R_\theta^\gamma(t_i,t_j) = R_\ell^\gamma(t_i,t_j) = 0$ for $i \leq j$.

We will show that the solution of the discretized DMFT equation $\mathfrak{S}^\gamma$ converges to the unique solution of the continuous-time DMFT equation $\mathfrak{S}$ as $\gamma \to 0$ in \cref{lem:dmft_discrete}.

\subsection{Correspondence to the Informal Definition}

Once discretized, it is easy to see the correspondence between the rigorous definition of the DMFT system given above and the informal definition given in \cref{eq:dmft_informal}.
We distinguish the variables in the two definitions by writing bars over the variables in the informal definition, e.g., $\bar \theta^t$, $\bar r^t$, etc.

We first discretize the informal definition in \cref{eq:dmft_informal} with step size $\gamma > 0$ in the same manner as in the previous section. We obtain
\begin{equation}
    \begin{aligned}
        \frac{\bar\theta^{t_{i+1}} - \bar\theta^{t_i}}{\gamma} & = \bar u^{t_i} - (h_{t_i}(\bar \theta^{t_i}) + \bar\Gamma(t_i)\bar\theta^{t_i}) - \gamma \sum_{j=0}^{i-1} \bar R_\ell(t_i,t_j) \bar\theta^{t_j} \,, \quad \bar u \sim \GP(0,\bar C_\ell/\delta) \,,                               \\
        \bar r^{t_i}                                           & = \bar w^{t_i} - \frac{1}{\delta} \sum_{j=1}^{i-1} \bar R_\theta(t_i,t_j) \ell_{t_j}(\bar r^{t_j};z) (\gamma + \sqrt{\tau\delta} (\bar B^{t_{j+1}} - \bar B^{t_j})) \,, \quad \bar w \sim \GP(0,\bar C_\theta) \,,                \\
        \bar C_\theta(t_i,t_j)                                 & = \E[\bar \theta^{t_i} \bar \theta^{t_j\transpose}] \,, \quad \bar R_\theta(t_i,t_j) = \frac{1}{\gamma} \E\ab[\diffp{\bar \theta^{t_i}}{\bar u^{t_j}}] \quad (i > j) \,,                                                          \\
        \bar C_\ell(t_i,t_j)                                   & = \E\ab[\ell_{t_i}(\bar r^{t_i};z) \ab(1 + \sqrt{\tau\delta} \frac{\bar B^{t_{i+1}} - \bar B^{t_i}}{\gamma}) \ell_{t_j}(\bar r^{t_j};z)^\transpose \ab(1 + \sqrt{\tau\delta} \frac{\bar B^{t_{j+1}} - \bar B^{t_j}}{\gamma})] \,, \\
        \bar R_\ell(t_i,t_j)                                   & = \frac{1}{\gamma} \E\ab[\diffp{\ell_{t_i}(\bar r^{t_i};z)}{\bar w^{t_j}}] \,, \quad \bar \Gamma(t_i) = \E[\nabla_r\ell_{t_i}(\bar r^{t_i};z)] \quad (i > j) \,.
    \end{aligned} \label{eq:dmft_informal_disc}
\end{equation}

Then, we transform the above equations to show their correspondence to the discretized DMFT system $\mathfrak{S}^\gamma$.
The equations for $\bar r^{t_i}$, $\bar C_\theta$, and $\bar \Gamma$ directly correspond to definitions of $r_\gamma^{t_i}$, $C_\theta^\gamma$, and $\Gamma^\gamma$ in $\mathfrak{S}^\gamma$.
Next, we show correspondence for $\bar\theta^{t_i}$ and $\bar C_\ell$.
Summing the equation for $\bar \theta^{t_i}$ in \eqref{eq:dmft_informal_disc} over $i$ and multiplying by $\gamma$, we obtain
\begin{align}
    \bar \theta^{t_i} - \bar\theta^0 = \gamma \sum_{j=0}^{i-1} \bar u^{t_j} - \gamma \sum_{j=0}^{i-1} \ab(h_{t_j}(\bar \theta^{t_j}) + \bar \Gamma(t_j)\bar \theta^{t_j} + \gamma \sum_{k=0}^{j-1} \bar R_\ell(t_j,t_k) \bar \theta^{t_k}) \,, \quad \bar u \sim \GP(0,\bar C_\ell/\delta) \,.
\end{align}
Let $\bar U^{t_i} \coloneqq \gamma \sum_{j=0}^{i-1} \bar u^{t_j}$. $\bar U^{t_i}$ is a Gaussian process with covariance given by
\begin{align}
    \frac{1}{\delta} \bar \Sigma_\ell(t_i,t_j) \coloneqq \E[\bar U^{t_i} \bar U^{t_j\transpose}] = \gamma^2 \sum_{k=0}^{i-1} \sum_{l=0}^{j-1} \E[\bar u^{t_k} \bar u^{t_l\transpose}] = \frac{\gamma^2}{\delta} \sum_{k=0}^{i-1} \sum_{l=0}^{j-1} \bar C_\ell(t_k,t_l) = \frac{1}{\delta} \E[\bar L^{t_i} \bar L^{t_j\transpose}] \,,
\end{align}
where we set
\begin{align}
    \bar L^{t_i} \coloneqq \sum_{j=0}^{i-1} \ell_{t_j}(\bar r^{t_j};z) (\gamma + \sqrt{\tau\delta} (\bar B^{t_{j+1}} - \bar B^{t_j})) \,.
\end{align}
Thus, $\bar \theta^{t_i}$ and $\bar \Sigma_\ell(t_i,t_j) = \gamma^2 \sum_{k=0}^{i-1} \sum_{l=0}^{j-1} \bar C_\ell(t_k,t_l)$ correspond to $\theta_\gamma^{t_i}$ and $\Sigma_\ell^\gamma$ in $\mathfrak{S}^\gamma$.
Furthermore, differentiating $\bar \theta^{t_i}$ with respect to $\bar u^{t_j}$ ($j < i$), we obtain
\begin{align}
    \diffp{\bar \theta^{t_i}}{\bar u^{t_j}} = \gamma I_m - \sum_{k=j+1}^{i-1} \ab((\nabla_\theta h_{t_k}(\bar \theta^{t_k}) + \bar \Gamma(t_k)) \diffp{\bar \theta^{t_k}}{\bar u^{t_j}} + \gamma \sum_{l=j+1}^{k-1} \bar R_\ell(t_k,t_l) \diffp{\bar \theta^{t_l}}{\bar u^{t_j}}) \,.
\end{align}
This shows correspondence for $\gamma^{-1} \difsp{\bar \theta^{t_i}}{\bar u^{t_j}}$ and $\rho_{\theta,\gamma}^{t_i,t_j}$ and thus $\bar R_\theta$ and $R_\theta^\gamma$.
Finally, we show correspondence for $\bar R_\ell$.
Differentiating $\ell_{t_i}(\bar r^{t_i};z)$ with respect to $\bar w^{t_j}$ ($j < i$), we obtain
\begin{align}
    \diffp{\ell_{t_i}(\bar r^{t_i};z)}{\bar w^{t_j}} & = \nabla_r \ell_{t_i}(\bar r^{t_i};z) \diffp{\bar r^{t_i}}{\bar w^{t_j}} \,,                                                                                                         \\
    \diffp{\bar r^{t_i}}{\bar w^{t_j}}               & = -\frac{1}{\delta} \sum_{k=j+1}^{i-1} \bar R_\theta(t_i,t_k) \diffp{\ell_{t_k}(\bar r^{t_k};z)}{\bar w^{t_j}} (\gamma + \sqrt{\tau\delta} (\bar B^{t_{k+1}} - \bar B^{t_k})) \notag \\
                                                     & \qquad - \frac{1}{\delta} \bar R_\theta(t_i,t_j) \nabla_r \ell_{t_j}(\bar r^{t_j};z) (\gamma + \sqrt{\tau\delta} (B^{t_{j+1}} - B^{t_j})) \,.
\end{align}
Let $\bar \rho_\ell^{t_i,t_j}$ and $\bar \rho_r^{t_i,t_j}$ be the solution of the following equation:
\begin{align}
    \bar\rho_\ell^{t_i,t_j} & = \nabla_r \ell_{t_i}(\bar r^{t_i};z) \bar\rho_r^{t_i,t_j} \,,                                                                                                                                                               \\
    \bar\rho_r^{t_i,t_j}    & = -\frac{1}{\delta} \sum_{k=j+1}^{i-1} \bar R_\theta(t_i,t_k) \bar\rho_\ell^{t_k,t_j} (\gamma + \sqrt{\tau\delta} (B^{t_{k+1}} - B^{t_k})) - \frac{1}{\delta} \bar R_\theta(t_i,t_j) \nabla_r \ell_{t_j}(\bar r^{t_j};z) \,.
\end{align}
These equations correspond to definitions of $\rho_{\ell,\gamma}^{t_i,t_j}$ and $\rho_{r,\gamma}^{t_i,t_j}$.
By the linearity of the above equations, we see that
\begin{align}
    \diffp{\ell_{t_i}(\bar r^{t_i};z)}{\bar w^{t_j}} = \bar\rho_\ell^{t_i,t_j} (\gamma + \sqrt{\tau\delta} (B^{t_{j+1}} - B^{t_j})) \,.
\end{align}
Let $\bar G^j \coloneqq (\bar B^{t_{j+1}} - \bar B^{t_j}) / \sqrt{\gamma}$. Then, we have $\bar G^j \sim \normal(0,1)$ i.i.d.\ for $j=0,1,2,\dots$.
Taking the expectation of the above equation, we obtain
\begin{align}
    \bar R_\ell(t_i,t_j) = \frac{1}{\gamma} \E\ab[\diffp{\ell_{t_i}(\bar r^{t_i};z)}{\bar w^{t_j}}] = \E[\bar\rho_\ell^{t_i,t_j}] + \sqrt{\frac{\tau\delta}{\gamma}} \E[\bar\rho_\ell^{t_i,t_j} \bar G^j] \,.
\end{align}
By Stein's lemma (Gaussian integration by parts), we obtain
\begin{align}
    \E[\bar\rho_\ell^{t_i,t_j} \bar G^j] & = \E\ab[\diffp{\bar\rho_\ell^{t_i,t_j}}{\bar G^j}] \,.
\end{align}
Differentiating $\bar\rho_\ell^{t_i,t_j}$ with respect to $\bar G^j$ and using independence of $\bar r^{t_k}$ and $\bar G^j$ for $k \leq j$, we obtain
\begin{align}
    \diffp{\bar\rho_\ell^{t_i,t_j}}{\bar G^j} & = \nabla_r \ell_{t_i}(\bar r^{t_i};z) \diffp{\bar\rho_r^{t_i,t_j}}{\bar G^j} + \nabla^2_r \ell_{t_i}(\bar r^{t_i};z)\ab[\diffp{\bar r^{t_i}}{\bar G^j}] \cdot \bar\rho_r^{t_i,t_j} \,,                                                                                \\
    \diffp{\bar\rho_r^{t_i,t_j}}{\bar G^j}    & = -\frac{1}{\delta} \sum_{k=j+1}^{i-1} \bar R_\theta(t_i,t_k) \diffp{\bar\rho_\ell^{t_k,t_j}}{\bar G^j} (\gamma + \sqrt{\tau\delta\gamma} \bar G^k) \,,                                                                                                               \\
    \diffp{\bar r^{t_i}}{\bar G^j}            & = -\frac{1}{\delta} \sum_{k=j+1}^{i-1} \bar R_\theta(t_i,t_k) \nabla_r \ell_{t_k}(\bar r^{t_k};z) \diffp{\bar r^{t_k}}{\bar G^j} (\gamma + \sqrt{\tau\delta\gamma} \bar G^k) - \sqrt{\frac{\tau\gamma}{\delta}} \bar R_\theta(t_i,t_j) \ell_{t_j}(\bar r^{t_j};z) \,.
\end{align}
This shows that $\gamma^{-1/2} \difsp{\bar\rho_\ell^{t_i,t_j}}{\bar G^j}$ and $\gamma^{-1/2} \difsp{\bar r^{t_i}}{\bar G^j}$ correspond to $D_{\ell,\gamma}^{t_i,t_j}$ and $D_{r,\gamma}^{t_i,t_j}$.
Thus, we have correspondence for $\bar R_\ell$ and $R_\ell^\gamma$.

Summarizing, the informal definition corresponds to the formal definition by the following correspondence (informal definition on the left, rigorous definition on the right):
\begin{equation}
    \begin{gathered}
        \bar \theta^{t_i} \Leftrightarrow \theta_\gamma^{t_i} \,, \quad \frac{1}{\gamma} \diffp{\bar\theta^{t_i}}{\bar u^{t_j}} \Leftrightarrow \rho_{\theta,\gamma}^{t_i,t_j} \,, \quad \bar C_\theta(t_i,t_j) \Leftrightarrow C_\theta^\gamma(t_i,t_j) \,, \quad \bar R_\theta(t_i,t_j) \Leftrightarrow R_\theta^\gamma(t_i,t_j) \,, \\
        \bar r^{t_i} \Leftrightarrow r_\gamma^{t_i} \,, \quad \bar \rho_\ell^{t_i,t_j} \Leftrightarrow \rho_{\ell,\gamma}^{t_i,t_j} \,, \quad \frac{1}{\sqrt{\gamma}} \diffp{\bar \rho_\ell^{t_i,t_j}}{\bar G^{t_j}} \Leftrightarrow D_{\ell,\gamma}^{t_i,t_j} \,, \quad \frac{1}{\gamma} \E\ab[\diffp{\ell_{t_i}(\bar r^{t_i};z)}{\bar w^{t_j}}] \Leftrightarrow \E[\rho_{\ell,\gamma}^{t_i,t_j}] + \sqrt{\tau\delta} \E[D_{\ell,\gamma}^{t_i,t_j}] \\
        \gamma^2 \sum_{k=0}^{i-1} \sum_{l=0}^{j-1} \bar C_\ell(t_k,t_l) \Leftrightarrow \Sigma_\ell^\gamma(t_i,t_j) \,, \quad \bar R_\ell(t_i,t_j) \Leftrightarrow R_\ell^\gamma(t_i,t_j) \,, \quad \bar \Gamma(t_i) \Leftrightarrow \Gamma^\gamma(t_i) \,.
    \end{gathered}
\end{equation}
Although these two definitions are equivalent in discrete time, the informal definition does not have a well-defined continuous-time limit as $\gamma \to 0$, while the rigorous definition does.
Thus, for theoretical purposes, we work with the rigorous definition. In numerics, however, we work with the informal definition after time discretization, as it leads to simpler numerical schemes (see \cref{app:numerics_details} for details).

\section{Proof of Theorem \ref{thm:dmft_sol}}
\label{app:proof_dmft_sol}

We prove \cref{thm:dmft_sol} using a contraction mapping argument similar to that of \citet[Theorem 1]{celentano2021highdimensional} and \citet[Theorem 2.4]{fan2025dynamical}. It proceeds as follows.
\begin{enumerate}
    \item For $T > 0$, we define \emph{admissible spaces} $\calS_\theta(T)$ and $\calS_\ell(T)$ for the DMFT objects $(C_\theta,R_\theta)$ and $(\Sigma_\ell,R_\ell,\Gamma)$, respectively.
    We define mappings $\calT_{\theta\to\ell}\colon \calS_\theta(T) \to \calS_\ell(T)$ and $\calT_{\ell\to\theta} \colon \calS_\ell(T) \to \calS_\theta(T)$ such that the fixed point of their composition $\calT \coloneqq \calT_{\ell\to\theta} \circ \calT_{\theta\to\ell}$ solves the DMFT system.
    We show that for sufficiently small $T > 0$, these mappings are well-defined.
    \item Next, we construct a metric on the function spaces $\calS_\theta(T)$ and $\calS_\ell(T)$ such that $\calT$ is a contraction.
    \item Finally, we apply Banach's fixed point theorem to show the uniqueness and existence of the fixed point of $\calT$.
\end{enumerate}

\subsection{Admissible Spaces $\calS_\theta(T)$ and $\calS_\ell(T)$}
\label{app:function_spaces}

\newcommand{\calScont}{\calS^\mathrm{cont}}

Since the following quantities are bounded by assumptions, we take $M > 0$ sufficiently large so that we have
\begin{align}
    \max\ab\{\E\norm{\theta^0}_2^2, \; \sup_{t \in [0,T]} h_t(0), \; \sup_{t \in [0,T]} \E \norm{\ell_t(0;z)}_2^{2p} \} \leq M \,,
\end{align}
for $p = 1,2$.

For $T > 0$, we define admissible spaces $\calS_\theta(T)$ and $\calS_\ell(T)$ as follows.

\begin{definition}[Admissible space $\calS_\theta(T)$] \label{def:admissible_theta}
    Let $D \subset (0,T)$ be a finite set.
    Let $\calS_\theta(T)$ be a set of function pairs $(C_\theta,R_\theta)$ defined on $[0,T]^2$.
    We say that $\calS_\theta(T)$ is \emph{admissible} if there exist constants $\Phi_\theta, M_\theta > 0$ such that every $(C_\theta,R_\theta) \in \calS_\theta(T)$ satisfies the following.
    \begin{itemize}
        \item $C_\theta$ is a covariance kernel (in particular, it satisfies $C_\theta(t,t') = C_\theta(t',t)^\transpose$) and satisfies $\norm{C_\theta(t,t)}_2 \leq \Phi_\theta$ for all $t \in [0,T]$ and $C_\theta(0,0) = \E[\theta^0 \theta^{0\transpose}]$.
              Furthermore, $C_\theta(t,t')$ is uniformly continuous over $t,t' \in I$ for each maximal interval $I$ of $[0,T]\setminus D$ and satisfies
              \begin{align}
                  \norm{C_\theta(t,t) - 2C_\theta(t,t') + C_\theta(t',t')}_2 \leq M_\theta \abs{t-t'} \,, \label{eq:bound_C_theta}
              \end{align}
              for any $t, t' \in I$.
        \item $R_\theta$ satisfies $R_\theta(t,t') = 0$ for $0 \leq t < t' \leq T$ and $\norm{R_\theta(t,t')}_2 \leq \Phi_\theta$ for $0 \leq t' \leq t \leq T$.
              Furthermore, $R_\theta(t,t')$ is uniformly continuous over $t \in I$ and $t' \in I'$ for any two maximal intervals $I,I'$ of $[0,T]\setminus D$.
    \end{itemize}

    We define $\calScont_\theta(T)$ as the subset of $\calS_\theta(T)$ with $D = \emptyset$ in the above definition.
\end{definition}

\begin{definition}[Admissible space $\calS_\ell(T)$] \label{def:admissible_ell}
    Let $D \subset (0,T)$ be a finite set.
    Let $\calS_\ell(T)$ be a set of function triples $(\Sigma_\ell,R_\ell,\Gamma)$ defined on $[0,T]^2$ and $[0,T]$.
    We say that $\calS_\ell(T)$ is \emph{admissible} if there exist constants $\Phi_\ell, M_\ell > 0$ such that every $(\Sigma_\ell,R_\ell,\Gamma) \in \calS_\ell(T)$ satisfies the following.
    \begin{itemize}
        \item $\Sigma_\ell(t,s)$ is a covariance kernel (in particular, it satisfies $\Sigma_\ell(t,s) = \Sigma_\ell(s,t)^\transpose$) and satisfies $\norm{\Sigma_\ell(t,t)}_2 \leq \Phi_\ell$ for $t \in [0,T]$ and $\Sigma_\ell(0,0) = \E[\ell_0(r^0;z) \ell_0(r^0;z)^\transpose]$ for $r^0 \sim \normal(0, \E[\theta^0 \theta^{0\transpose}])$.
              Furthermore, $\Sigma_\ell(t,t')$ is uniformly continuous over $t,t' \in I$ for each maximal interval $I$ of $[0,T]\setminus D$ and satisfies
              \begin{align}
                  \norm{\Sigma_\ell(t,t) - 2 \Sigma_\ell(t,t') + \Sigma_\ell(t',t')}_2 \leq M_\ell \abs{t-t'} \,, \label{eq:bound_Sigma_l}
              \end{align}
              for any $t, t' \in I$.
        \item $R_\ell(t,t')$ satisfies $R_\ell(t,t') = 0$ for $0 \leq t < t' \leq T$ and $\norm{R_\ell(t,t')}_2 \leq \Phi_\ell$ for $0 \leq t' \leq t \leq T$.
              Furthermore, $R_\ell(t,t')$ is uniformly continuous over $t \in I$ and $t' \in I'$ for any two maximal intervals $I,I'$ of $[0,T]\setminus D$.
        \item $\Gamma(t)$ satisfies $\norm{\Gamma(t)}_2 \leq M$ for $t \in [0,T]$ and $\Gamma(0) = \E[\nabla_r \ell_0(r^0;z)]$ for $r^0 \sim \normal(0, \E[\theta^0 \theta^{0\transpose}])$.
              Furthermore, $\Gamma(t)$ is uniformly continuous over $t \in I$ for each maximal interval $I$ of $[0,T]\setminus D$.
    \end{itemize}

    We define $\calScont_\ell(T)$ as the subset of $\calS_\ell(T)$ with $D = \emptyset$ in the above definition.
\end{definition}

In the above definitions, we allow for discontinuities at a finite set of time points $D$ to handle the discretized DMFT system later in the proof of \cref{thm:dmft_sgf} in \cref{app:proof_dmft_sgf}.

We now show that the stochastic processes are uniquely defined given functions in admissible spaces.

\begin{lemma} \label{lem:process}
    Given an admissible space $\calS_\theta(T)$ and any element $(C_\theta,R_\theta) \in \calS_\theta(T)$, there exists a unique tuple of stochastic processes $\{r^t,\rho_\ell^{t,t'},D_\ell^{t,t'}\}_{0\leq t' \leq t \leq T}$ satisfying \cref{eq:r,eq:rho_ell,eq:D_ell}.
    Furthermore, for any $(C_\theta,R_\theta) \in \calScont_\theta(T)$, the processes $\{r^t,\rho_\ell^{t,t'},D_\ell^{t,t'}\}_{0\leq t' \leq t \leq T}$ have continuous sample paths.

    Similarly, given an admissible space $\calS_\ell(T)$ and any element $(\Sigma_\ell,R_\ell,\Gamma) \in \calS_\ell(T)$, there exists a unique pair of stochastic processes $\{\theta^t,\rho_\theta^{t,t'}\}_{0\leq t' \leq t \leq T}$ satisfying \cref{eq:theta,eq:rho_theta}.
    Furthermore, for any $(\Sigma_\ell,R_\ell,\Gamma) \in \calScont_\ell(T)$, the processes $\{\theta^t,\rho_\theta^{t,t'}\}_{0\leq t' \leq t \leq T}$ have continuous sample paths.
\end{lemma}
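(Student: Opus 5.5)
The plan is to treat the two halves separately, each as a (stochastic) Volterra equation with bounded, piecewise-continuous kernels, and to solve it by Picard iteration. A weighted-in-time norm will give a contraction over all of $[0,T]$ at once.

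\emph{The $\theta$-part.} Given $(\Sigma_\ell,R_\ell,\Gamma)\in\calS_\ell(T)$, first construct $U$. The kernel $\Sigma_\ell/\delta$ is a covariance kernel, so a centered Gaussian process with this covariance exists. The increment bound \eqref{eq:bound_Sigma_l} gives $\E\norm{U^t-U^{t'}}_2^2\lesssim M_\ell\abs{t-t'}$ on each maximal interval $I$ of $[0,T]\setminus D$. Gaussian hypercontractivity then upgrades this to $\E\norm{U^t-U^{t'}}_2^{2k}\lesssim \abs{t-t'}^k$, and Kolmogorov's criterion yields a version that is continuous on each such $I$. With a fixed sample path of $(\theta^0,U)$, \eqref{eq:theta} is a deterministic Volterra integral equation. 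Its drift $\theta\mapsto h_s(\theta)+\Gamma(s)\theta+\int_0^sR_\ell(s,s')\theta^{s'}\de s'$ is Lipschitz with constant $M+M+T\Phi_\ell$. Using the norm $\sup_t e^{-\lambda t}\norm{\theta^t}_2$ with $\lambda$ large, the Picard map is a contraction on bounded measurable paths, which gives pathwise existence and uniqueness. Gronwall applied to the fixed point gives $\sup_t\norm{\theta^t}_2\le C(1+\norm{\theta^0}_2+\sup_t\norm{U^t}_2)$. Given $\theta$, equation \eqref{eq:rho_theta} is linear with bounded coefficients ($\nabla_\theta h$ is bounded by Lipschitzness of $h$), so the same argument solves it for each $t'$. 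Continuity of sample paths when $D=\emptyset$ then follows directly: the integral terms are continuous in $t$ whenever their integrands are bounded, and $U$ is continuous.

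\emph{The $r$-part.} Given $(C_\theta,R_\theta)\in\calS_\theta(T)$, construct $w$ in the same way, with piecewise-continuous paths, independent of $(z,B)$. Work on the filtration $\mathcal F_t=\sigma(z,w,B^s:s\le t)$. Because $w$ is independent of $B$, $B$ remains a Brownian motion for this filtration, and $w^t$ is $\mathcal F_0$-measurable, so the whole path of $w$ can be treated as an initial datum. Equation \eqref{eq:r} is then a stochastic Volterra equation whose kernel $R_\theta(t,s)$ depends on $t$, and it must be handled with care because the Itô integral $\int_0^tR_\theta(t,s)\ell_s(r^s;z)\de B^s$ has a $t$-dependent integrand. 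For each fixed $t$ it is a well-defined Itô integral of an adapted integrand. I would run Picard iteration in the space of adapted processes with norm $\sup_{t\le T}e^{-\lambda t}(\E\norm{r^t}_2^2)^{1/2}$. The Itô isometry gives
\begin{equation}
\E\norm{\Phi(r)^t-\Phi(\tilde r)^t}_2^2\le C\Phi_\theta^2M^2(T+\tau\delta)\int_0^t\E\norm{r^s-\tilde r^s}_2^2\de s ,
\end{equation}
so a large $\lambda$ yields a contraction, in the spirit of \citet{berger1980volterra}. Moment bounds follow from the linear growth of $\ell$ in $r$ together with the moment assumptions on $z$. The auxiliary equations \eqref{eq:rho_r} and \eqref{eq:rho_ell} are linear in $\rho$ with bounded coefficient $\nabla_r\ell$. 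The same holds for \eqref{eq:D_r}, and then for \eqref{eq:D_ell}. In the latter, the inhomogeneity $\nabla_r^2\ell[D_r]\rho_r$ is a product of processes already constructed; $\nabla^2_r\ell$ is bounded, and the factors have finite fourth moments by a BDG/Gronwall argument. For each fixed $t'$ these are Itô–Volterra equations started at $t'$, and they are solved in the same weighted $L^2$ space. Uniqueness in that space is inherited from the contraction.

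\emph{Main obstacle: continuity of sample paths in the $r$-part.} Because the kernel $R_\theta(t,s)$ depends on $t$, $t\mapsto\int_0^tR_\theta(t,s)\ell_s\de B^s$ is not a martingale, so the usual argument that Itô integrals have continuous versions does not apply directly. I would first write
\begin{equation}
\int_0^tR_\theta(t,s)f_s\de B^s-\int_0^{t'}R_\theta(t',s)f_s\de B^s=\int_{t'}^tR_\theta(t,s)f_s\de B^s+\int_0^{t'}(R_\theta(t,s)-R_\theta(t',s))f_s\de B^s
\end{equation}
and bound the $2k$-th moments by BDG. The first term is $O(\abs{t-t'}^k)$. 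The second is controlled by the modulus of continuity of $R_\theta$, which, however, is only \emph{uniform} continuity, not Hölder continuity, so Kolmogorov's criterion may fail. If the exponents do not close, my fallback is a density argument. For Lipschitz (or smooth) approximations $R^{(n)}_\theta\to R_\theta$ uniformly, a kernel of bounded variation in $t$ lets one integrate by parts in $t$, which gives continuous versions. The stochastic integrals then converge uniformly in $t$ in $L^2$, via a maximal inequality applied after representing the kernel difference through a sup bound, so their limit is continuous. Once $r$ is continuous, the same continuity argument applies to $\rho_r,D_r$, and hence to $\rho_\ell,D_\ell$.
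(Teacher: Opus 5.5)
Your existence and uniqueness arguments are sound. They are also more self-contained than the paper's proof, which builds H\"older versions of $w$ and $U$ via Kolmogorov and then cites \citet[Theorem 3.A]{berger1980volterra} on each maximal interval of $[0,T]\setminus D$; your global weighted-norm Picard iteration needs only boundedness and measurability of the kernels, so it handles $D$ without induction.

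The gap is the continuity of $r^t$ (and of $\rho_r,D_r,\rho_\ell,D_\ell$) when $\tau>0$, and your fallback does not repair it. No maximal inequality controls $\sup_t\abs{\int_0^tK(t,s)f_s\de B^s}$ by $\sup_{t,s}\abs{K(t,s)}$ once $K$ depends on $t$. Doob's inequality needs a $t$-independent integrand, and integrating by parts in $t$ brings in the variation of $K$ in $t$, which uniform approximation does not control. In fact no argument can work from uniform continuity alone. Take $T=2$, $m=1$, $\ell\equiv 1$, and $C_\theta\equiv\E[(\theta^0)^2]$, so that $w^t\equiv w^0$. Set
\[
R_\theta(t,s)=\sum_{k\ge 2}\frac{\phi_k(t)}{\sqrt{\log k}}\,\sqrt{2}\sin(k\pi s)\,\mathbf{1}_{[0,1]}(s),
\]
where the $\phi_k$ are continuous bumps with $0\le\phi_k\le 1=\phi_k(t_k)$, supported on disjoint subintervals of $[5/4,7/4]$ that accumulate only at $3/2$. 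This kernel is bounded by $2$, continuous on $[0,T]^2$, and vanishes for $s\ge t$, so $(C_\theta,R_\theta)$ lies in an admissible $\calScont_\theta(T)$. Let $\xi_k=\int_0^1\sqrt{2}\sin(k\pi s)\de B^s$, which are i.i.d.\ standard Gaussians. Then $\int_0^tR_\theta(t,s)\de B^s$ equals $\xi_k/\sqrt{\log k}$ at $t=t_k$ and equals $0$ at $t=3/2$. Since $\limsup_k\abs{\xi_k}/\sqrt{\log k}=\sqrt{2}$ almost surely, no modification is continuous at $3/2$. Hence $r^t=w^0-\frac{1}{\delta}\int_0^tR_\theta(t,s)\de s-\sqrt{\tau/\delta}\int_0^tR_\theta(t,s)\de B^s$ has no continuous modification either. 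The partial sums of this series converge to $R_\theta$ uniformly, yet the integrals do not converge uniformly in $t$; this is exactly where your density argument breaks. So the continuity claim, as stated for arbitrary elements of $\calScont_\theta(T)$, is false.

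What is needed is a quantitative modulus of $R_\theta$ in its first argument, and this is available wherever the lemma is used. By \eqref{eq:rho_theta} and \eqref{eq:bound_rho_theta}, $t\mapsto\rho_\theta^{t,t'}$ has a bounded derivative. Hence every $R_\theta$ in the range of $\calT_{\ell\to\theta}$ satisfies $\norm{R_\theta(t,s)-R_\theta(t'',s)}_2\le C\abs{t-t''}$ for $s\le t''\le t$, with $C$ depending only on the constants of $\calS_\ell(T)$. The discretized $R_\theta^\gamma$ is piecewise constant in $t$, so it satisfies this on each piece. Build this Lipschitz condition into $\calS_\theta(T)$; it is preserved by $\calT_{\ell\to\theta}$ and stable under the uniform limits used in \cref{lem:complete}. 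Then your first estimate closes. By BDG, the second term of your decomposition has fourth moment $O(\abs{t-t'}^4)$ and the first has $O(\abs{t-t'}^2)$. Kolmogorov then gives a H\"older version of the right-hand side of \eqref{eq:r}, and this version is again a solution. The same estimate, together with the moment bounds you already have, gives continuity in $t$ of $\rho_r,D_r$ and then of $\rho_\ell,D_\ell$. Note that the paper does not supply this step either: it invokes Berger--Mizel for ``a continuous kernel'', which by the example above cannot by itself give continuous paths.
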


\begin{proof}
    First, we show that $r^t$ is uniquely defined.
    Let $\{w^t\}_{t \in [0,T]}$ be a centered Gaussian process with covariance kernel $C_\theta$.
    Then, for any maximal interval $I$ of $[0,T]\setminus D$ and any $t, t' \in I$, we have
    \begin{align}
        \E \norm{w^t - w^{t'}}_2^4 & \leq 3m^2 \norm{C_\theta(t,t) - 2 C_\theta(t,t') + C_\theta(t',t')}_2^2 \leq 3m^2 M_\theta^2 (t-t')^2 \,.
    \end{align}
    By the Kolmogorov continuity theorem, there exists a modification of $w^t$ that is locally H\"older continuous on $I$.
    Then, for each maximal interval $I$, $r^t$ follows a nonlinear Volterra stochastic integral equation of the second kind with a Lipschitz nonlinearity, a continuous kernel, and a continuous forcing term.
    By \citet[Theorem 3.A]{berger1980volterra}, it has a unique continuous solution adapted to the filtration $\calF_t^\ell$ generated by $(\bB^s)_{s \leq t}$.
    Applying this argument inductively over the maximal intervals of $[0,T]\setminus D$, we conclude that $r^t$ is uniquely defined over $[0,T]$.
    The well-posedness of $\rho_\ell^{t,t'}$ and $D_\ell^{t,t'}$ can be shown similarly using the continuity of $r^t$.

    Next, we show that $\theta^t$ is uniquely defined.
    Let $\{U^t\}_{t \in [0,T]}$ be a centered Gaussian process with covariance kernel $\Sigma_\ell/\delta$.
    Again, by the Kolmogorov continuity theorem, there exists a modification of $U^t$ that is locally H\"older continuous on each $I$.
    From \cref{eq:theta}, $\theta^t$ satisfies the following equation:
    \begin{align}
        \theta^t & = \theta^0 + U^t - \int_0^t \ab(h_s(\theta^s) + \Gamma(s)\theta^s + \ab(\int_{s}^t R_\ell(t',s) \de t') \theta^{s}) \de s \,. \label{eq:theta_alt}
    \end{align}
    For each $I$, this is a nonlinear Volterra integral equation of the second kind with a continuous kernel and a continuous forcing term.
    Again, by \citet[Theorem 3.A]{berger1980volterra}, it has a unique continuous solution.
    Applying this argument inductively over the maximal intervals of $[0,T]\setminus D$, we conclude that $\theta^t$ is uniquely defined over $[0,T]$.
    The well-posedness of $\rho_\theta^{t,t'}$ can be shown similarly using the continuity of $\theta^t$.
\end{proof}

Next, we define mappings between the admissible spaces.

First, we define the map $\calT_{\theta \to \ell} \colon (C_\theta,R_\theta) \mapsto (\Sigma_\ell,R_\ell,\Gamma)$.
Given $(C_\theta,R_\theta) \in \calS_\theta$, take the unique processes $r^t,\rho_\ell^{t,t'},D_\ell^{t,t'}$ satisfying \cref{eq:r,eq:rho_ell,eq:D_ell} whose existence is guaranteed by \cref{lem:process}.
Then, we define $(\Sigma_\ell,R_\ell,\Gamma)$ by \cref{eq:Sigma_ell,eq:R_ell,eq:Gamma}.

Next, we define the map $\calT_{\ell \to \theta} \colon (\Sigma_\ell,R_\ell,\Gamma) \mapsto (C_\theta,R_\theta)$.
Given $(\Sigma_\ell,R_\ell,\Gamma) \in \calS_\ell$, take the unique processes $\theta^t,\rho_\theta^{t,t'}$ satisfying \cref{eq:theta,eq:rho_theta} whose existence is guaranteed by \cref{lem:process}.
Then, we define $(C_\theta,R_\theta)$ by \cref{eq:C_theta,eq:R_theta}.

Finally, we define the composite map $\calT = \calT_{\ell \to \theta} \circ \calT_{\theta \to \ell}$.

In the following lemma, we show that for sufficiently small $T > 0$, $\calT_{\theta \to \ell}$ and $\calT_{\ell \to \theta}$ map $\calS_\theta(T)$ into $\calS_\ell(T)$ and $\calS_\ell(T)$ into $\calS_\theta(T)$, respectively.
We defer the proof to \cref{app:proof_mapping}.

\begin{lemma} \label{lem:mapping}
    \begin{enumerate}
        \item There exists some $T_* > 0$ such that, for any $0 < T \leq T_*$, there exist admissible spaces $\calS_\theta(T)$ and $\calS_\ell(T)$ such that $\calT_{\theta \to \ell}$ maps $\calS_\theta(T)$ into $\calScont_\ell(T)$ and $\calT_{\ell \to \theta}$ maps $\calS_\ell(T)$ to $\calScont_\theta(T)$.
        \item If either $\tau = 0$ or $\nabla^2 \ell_t(r;z) = 0$, $T$ can be taken arbitrarily large (thus $T_* = \infty$).
    \end{enumerate}
\end{lemma}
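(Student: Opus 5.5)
The plan is to fix the admissibility constants first and then verify each bullet of Definitions~\ref{def:admissible_theta} and~\ref{def:admissible_ell} for the images by Gr\"onwall-type moment estimates on the processes of \cref{lem:process}. The constants are chosen in the order: first $\Phi_\ell, M_\ell$, then $\Phi_\theta, M_\theta$, and finally $T$ small enough to close the loop.

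\textbf{The map $\calT_{\ell\to\theta}$.} Starting from $(\Sigma_\ell,R_\ell,\Gamma)\in\calS_\ell(T)$, I would bound $\sup_{t\le T}\E\norm{\theta^t}_2^2$ from \cref{eq:theta_alt}. The Lipschitz property of $h$ and the bounds $\norm{\Gamma}\le M$, $\norm{R_\ell}\le\Phi_\ell$ give
\[
\E\norm{\theta^t}_2^2\le 4\bigl(M+m\Phi_\ell/\delta+M^2T^2\bigr)+C(M,\Phi_\ell)\,T\int_0^t\E\norm{\theta^s}_2^2\de s ,
\]
and Gr\"onwall yields $\Phi_\theta$. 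For the increment I would write
\[
\theta^t-\theta^{t'}=U^t-U^{t'}-\int_{t'}^t(\cdots)\de s .
\]
This gives
\[
\norm{C_\theta(t,t)-2C_\theta(t,t')+C_\theta(t',t')}\le \E\norm{\theta^t-\theta^{t'}}_2^2\le 2mM_\ell\abs{t-t'}/\delta+C T\abs{t-t'},
\]
which fixes $M_\theta$. The process $\rho_\theta$ solves a linear deterministic-coefficient Volterra equation, so $\norm{\rho_\theta^{t,t'}}\le \exp\bigl((2M+\Phi_\ell T)T\bigr)$ by Gr\"onwall. Continuity of $R_\theta$ and $C_\theta$ in both arguments, with $D=\emptyset$ in the image, follows from sample-path continuity in \cref{lem:process} together with dominated convergence using these moment bounds. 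Since $U$ integrates out the jumps of $\Sigma_\ell$ and the drift integrates out those of $R_\ell,\Gamma$, the image lands in $\calScont_\theta(T)$. For $T\le1$, all of these bounds depend on $(\Phi_\ell,M_\ell)$ but not on $T$ in a blowing-up way.

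\textbf{The map $\calT_{\theta\to\ell}$.} Here the It\^o isometry is the main tool. From \cref{eq:r} and the polynomial growth of $\ell$,
\[
\E\norm{r^t}_2^2\le 2m\Phi_\theta+\tfrac{C}{\delta^2}\Phi_\theta^2(T+\tau\delta)\int_0^t\E\norm{\ell_s(r^s;z)}^2\de s .
\]
Since $\ell$ is Lipschitz, $\norm{\ell_s(r)}\le\norm{\ell_s(0)}+M\norm r$, and Gr\"onwall gives a bound on $\E\norm{r^t}^4$ that I would keep in order to control fourth moments. Then $\Sigma_\ell(t,t)=\E\norm{L^t}^2\le C(T+\tau\delta)T$ and $\E\norm{L^t-L^{t'}}^2\le C\abs{t-t'}$, which gives $M_\ell$ with $\Sigma_\ell(0,0)$ handled by convention. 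We also have $\norm{\Gamma}\le M$ because $\nabla\ell$ is bounded by $M$. For $\rho_\ell,\rho_r$ the equations are linear in these unknowns with coefficients bounded by $M\Phi_\theta$ times $(\de s+\sqrt{\tau\delta}\de B)$, so an It\^o-isometry Gr\"onwall bound on $\E\norm{\rho_r^{t,t'}}^2$ holds for all $T$.

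\textbf{Main obstacle: the term $D_\ell$.} The difficulty is the Hessian term $\nabla_r^2\ell[D_r]\rho_r$, which is a product of two random quantities. Bounding $\E\norm{D_\ell}$ requires $\E\norm{D_r}^2$ and $\E\norm{\rho_r}^2$ via Cauchy--Schwarz. Both are controlled, but $D_r$ carries the factor $\sqrt{\tau/\delta}\,\Phi_\theta\norm{\ell}$, so its size depends on the second moment of $\ell$, which itself depends on $\Phi_\theta$. The resulting requirement is
\[
\Phi_\ell\ge \E\norm{\rho_\ell}+\sqrt{\tau\delta}\,\E\norm{D_\ell}\approx C(M\Phi_\theta)+\tau M^3\Phi_\theta^2\,(\cdots),
\]
while $\Phi_\theta$ grows with $\Phi_\ell$ through the $\Phi_\ell T$ terms above. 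This makes the circular dependence nonlinear (roughly quadratic), so the loop cannot be closed for large $T$. I would close it by first setting $\Phi_\theta:=2\max(1,M)\cdot C_0$, where $C_0$ is the $T$-free part. Then I would define $\Phi_\ell$ as the right-hand side above, and finally choose $T_*$ so small that every $T$-dependent correction in the $\theta$-bounds is at most $C_0$. Tracking exponents in this choice should give $T_*\gtrsim\delta^2/(\tau^2M^7m^2)$.

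For part~2: when $\tau=0$, $D_\ell$ is multiplied by $\sqrt{\tau\delta}=0$. When $\nabla^2\ell=0$, we have $D_\ell\equiv0$ from \cref{eq:D_ell}, because the $D_\ell$-equation is linear and homogeneous in $D_\ell$ once the Hessian term vanishes. In either case $R_\ell=\E\rho_\ell$ and $\norm{\Gamma}\le M$ hold uniformly. Moreover $\rho_r$ satisfies a linear equation whose coefficients are bounded by $M\Phi_\theta$, so Gr\"onwall gives $\Phi_\ell$ with exponential growth in $T$. To handle arbitrary $T$, I would then follow \citet[Theorem 1]{celentano2021highdimensional} and use weighted norms of the form $e^{-\lambda t}$. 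In this regime the bounds are linear in the unknowns, so the fixed-point loop closes for any $T$ by taking $\Phi_\theta,\Phi_\ell$ as explicit exponentials in $T$, satisfying the same inequalities.
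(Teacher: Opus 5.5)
Your Part~1 scheme (lag-independent constant bounds, then $T$ small) is simpler than the paper's, which already uses exponentially weighted spaces there. It does close, but only after you split $\Phi_\ell$.

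As written, one constant bounds both $\Sigma_\ell(t,t)$ and $R_\ell$. So the ``$T$-free part'' $C_0$ of your $\theta$-bound contains $4m\Phi_\ell/\delta$, while $\Phi_\ell$ is afterwards defined to be at least $C(M\Phi_\theta)$. No smallness of $T$ breaks that loop. Give $\Sigma_\ell$ its own constant: your estimate $\E\norm{L^t}_2^2\lesssim (T+\tau\delta)T$ makes $\E\norm{U^t}_2^2$ an $O(T)$ correction. Then fix $\Phi_{C_\theta}$ as a multiple of $M$ and $\Phi_{R_\theta}=2$, compute the $\ell$-side constants for $T\le 1$, and only then shrink $T$. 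That works.

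Separately, $U$ enters $\theta^t$ additively, so a jump of $\Sigma_\ell$ is not ``integrated out''. Continuity of the image holds because every $\Sigma_\ell$ produced by $\calT_{\theta\to\ell}$ is continuous, $L^t$ being an integral.

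The genuine gap is Part~2. The inequalities you propose to reuse admit no solution once $T$ is large, even when $D_\ell\equiv 0$.
Your $\rho_\theta$-bound needs $\Phi_{R_\theta}\ge \exp((2M+\Phi_{R_\ell}T)T)$. Meanwhile $\ell_t(r;z)=Mr$ together with $R_\theta(t,t)=\Phi_{R_\theta}I_m$ (admissible) already forces $\Phi_{R_\ell}\ge M^2\Phi_{R_\theta}/\delta$. Combined, this is $x\ge e^{cT^2x}$ with $c=M^2/\delta$, which has no solution for large $T$.

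This is not lossy Gr\"onwall. Take $h=0$ and the admissible choice $R_\ell\equiv -\Phi_{R_\ell}I_m$, with $\Gamma$ switched off after an initial layer. Then $\rho_\theta^{t,t'}\approx\cosh(\sqrt{\Phi_{R_\ell}}(t-t'))I_m$, so no lag-independent constants make the spaces invariant once $T\gtrsim\sqrt{\delta}/M$.

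Consequently, ``taking $\Phi_\theta,\Phi_\ell$ as explicit exponentials in $T$'' fails. The quadratic dependence of $D_\ell$ on $\Phi_\theta$ is also not what separates the two cases.

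Your appeal to weighted norms points the right way, but the weights must be built into the admissible spaces, with the rate chosen last:
$\norm{C_\theta(t,t)}_2\le\Phi_{C_\theta}e^{2\bar\lambda t}$, $\norm{R_\theta(t,t')}_2\le 2e^{\bar\lambda(t-t')}$, $\norm{R_\ell(t,t')}_2\le\Phi_{R_\ell}e^{\bar\lambda(t-t')}$, $\norm{\Sigma_\ell(t,t)}_2\le\Phi_{\Sigma_\ell}e^{2\bar\lambda t}/\bar\lambda$.

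Multiplying each Volterra inequality by $e^{-2\bar\lambda t}$ (resp.\ $e^{-2\bar\lambda(t-t')}$) turns the kernel $e^{2\bar\lambda(t-s)}$ into $e^{-2\bar\lambda s}$. The $\ell$-side Gr\"onwall bounds then give $\Phi_{C_\ell}$, $\Phi_{\Sigma_\ell}$ and the $\rho_\ell$-part of $\Phi_{R_\ell}$ independently of $\bar\lambda$.
The $\theta$-side closes with $\Phi_{C_\theta}\ge 7(M+4T^2M^2)$ and $\Phi_{R_\theta}=2$ as soon as $\bar\lambda\ge\max\{2T(3M^2+T\Phi_{R_\ell}^2),\,6m\Phi_{\Sigma_\ell}/(\delta M),\,2(2M+\Phi_{R_\ell}T)\}$. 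The factor $1/\bar\lambda$ on $\Sigma_\ell$ is what controls $U$.

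In this scheme small $T$ is used only for $D_\ell$. The source term of $D_r^{t,t'}$ in \cref{eq:D_r} contains $\ell_{t'}(r^{t'};z)$, whose moments grow like $e^{\bar\lambda t'}$ in absolute time rather than in the lag $t-t'$. So $\Phi_{R_\ell}$ would carry a factor $e^{c\bar\lambda T}$, and the mutual dependence of $\bar\lambda$ and $\Phi_{R_\ell}$ closes only under $\bar\lambda T\le 1$. When $D_\ell\equiv 0$ that factor disappears and every $T$ works.
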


\subsection{Equipping Metrics on $\calS_\theta(T)$ and $\calS_\ell(T)$}

In the following, we fix $T > 0$ such that \cref{lem:mapping} holds and fix admissible spaces $\calS_\theta \coloneqq \calS_\theta(T)$ and $\calS_\ell \coloneqq \calS_\ell(T)$.

We equip the spaces $\calS_\theta$ and $\calS_\ell$ with metrics. For a constant $\lambda > 0$, we define
\begin{subequations}
    \begin{align}
        \dist_\lambda(C_\theta^1,C_\theta^2)       & \coloneqq \inf_{w_1 \sim \GP(0,C_\theta^1), w_2 \sim \GP(0,C_\theta^2)} \sup_{0 \leq t \leq T} \napier^{-\lambda t} \sqrt{\E\norm{w_1^t - w_2^t}_2^2} \,,                     \\
        \dist_\lambda(\Sigma_\ell^1,\Sigma_\ell^2) & \coloneqq \inf_{U_1 \sim \GP(0,\Sigma_\ell^1/\delta), U_2 \sim \GP(0,\Sigma_\ell^2/\delta)} \sup_{0 \leq t \leq T} \napier^{-\lambda t} \sqrt{\E\norm{U_1^t - U_2^t}_2^2} \,, \\
        \dist_\lambda(R_\theta^1,R_\theta^2)       & \coloneqq \sup_{0 \leq s \leq t \leq T} \napier^{-\lambda t} \norm{R_\theta^1(t,s) - R_\theta^2(t,s)} \,,                                                                     \\
        \dist_\lambda(R_\ell^1,R_\ell^2)           & \coloneqq \sup_{0 \leq s \leq t \leq T} \napier^{-\lambda t} \norm{R_\ell^1(t,s) - R_\ell^2(t,s)} \,,                                                                         \\
        \dist_\lambda(\Gamma^1,\Gamma^2)           & \coloneqq \sup_{0 \leq t \leq T} \napier^{-\lambda t} \norm{\Gamma^1(t) - \Gamma^2(t)} \,.
    \end{align} \label{eq:metrics}
\end{subequations}
In the first two definitions, the infima are taken over all couplings of the Gaussian processes with given marginal covariances.
Finally, for $X^i = (C_\theta^i,R_\theta^i) \in \calS_\theta$ and $Y^i = (\Sigma_\ell^i,R_\ell^i,\Gamma^i) \in \calS_\ell$, we define the distances
\begin{align}
    \dist_\lambda(X^1,X^2) & \coloneqq \dist_\lambda(C_\theta^1,C_\theta^2) + \dist_\lambda(R_\theta^1,R_\theta^2) \,,                                                     \\
    \dist_\lambda(Y^1,Y^2) & \coloneqq \sqrt{\lambda} \dist_\lambda(\Sigma_\ell^1,\Sigma_\ell^2) + \dist_\lambda(R_\ell^1,R_\ell^2) + \dist_\lambda(\Gamma^1,\Gamma^2) \,.
\end{align}
Notice the $\sqrt{\lambda}$ factor in front of $\dist_\lambda(\Sigma_\ell^1,\Sigma_\ell^2)$.

We show several properties of the metric spaces $(\calS_\theta,\dist_\lambda)$ and $(\calS_\ell,\dist_\lambda)$.

\begin{lemma} \label{lem:complete}
    The metric spaces $(\calS_\theta,\dist_\lambda)$ and $(\calS_\ell,\dist_\lambda)$ are complete.
\end{lemma}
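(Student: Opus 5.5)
We will prove completeness componentwise: $\dist_\lambda$ on $\calS_\theta$ (resp.\ $\calS_\ell$) is a sum of distances on the individual components, with positive weights. So a sequence is Cauchy in $\dist_\lambda$ exactly when each component sequence is Cauchy in its own distance. The weight $\napier^{-\lambda t}\in[\napier^{-\lambda T},1]$ on the compact interval $[0,T]$ makes each $\dist_\lambda$ equivalent to its unweighted version ($\lambda=0$), so we may drop the weight throughout.

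For the response components $R_\theta$, $R_\ell$ and for $\Gamma$, the distance is a sup-norm on bounded functions on $\{0\le s\le t\le T\}$ (resp.\ $[0,T]$). A Cauchy sequence therefore converges uniformly to a bounded limit. We then check that the limit still satisfies the defining constraints of \cref{def:admissible_theta,def:admissible_ell}:
\begin{itemize}
\item the bounds $\norm{\cdot}_2\le\Phi_\theta,\Phi_\ell,M$ are closed conditions under uniform convergence;
\item the vanishing for $t<t'$ and the initial values such as $\Gamma(0)$ pass to the limit pointwise;
\item uniform continuity on each maximal interval $I$ (or $I\times I'$) is preserved, since a uniform limit of uniformly continuous functions is uniformly continuous.
\end{itemize}
Here we use that the exceptional set $D$ is fixed for the given admissible space.

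The covariance components are the main point, since $\dist_\lambda(C^1_\theta,C^2_\theta)$ is a Wasserstein-type coupling distance between Gaussian processes. Suppose $(C^k_\theta)$ is Cauchy. The first step is to show that $C^k_\theta(t,t')$ converges uniformly. For a near-optimal coupling $(w_1,w_2)$ we bound
\[
\norm{C^1(t,t')-C^2(t,t')}_2\le \E\norm{w_1^t-w_2^t}\norm{w_1^{t'}}+\E\norm{w_2^t}\norm{w_1^{t'}-w_2^{t'}},
\]
and by Cauchy--Schwarz and $\norm{C(t,t)}_2\le\Phi_\theta$ this is at most $2\sqrt{m\Phi_\theta}\,\dist_0(C^1,C^2)$ up to an arbitrarily small error. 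So $C^k_\theta$ is uniformly Cauchy and converges uniformly to some $C_\theta$. The limit $C_\theta$ is positive semidefinite as a kernel (a closed condition), hence a covariance kernel. The bound $\norm{C_\theta(t,t)}\le\Phi_\theta$, the increment bound \eqref{eq:bound_C_theta}, continuity on each $I$, and the initial condition all pass to the uniform limit.

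It remains to show $\dist_0(C^k_\theta,C_\theta)\to0$, and this is the step we expect to be the main obstacle. The coupling infimum is not obviously controlled by uniform convergence of kernels. We plan two routes.
\begin{enumerate}
\item \emph{Gluing.} Extract a subsequence with $\dist_0(C^{k_j},C^{k_{j+1}})<2^{-j}$. By the gluing lemma, realize all the processes $w_{k_j}$ on one probability space. Then $(w_{k_j}^t)$ is Cauchy in $L^2$ uniformly in $t$, so it converges uniformly in $t$ to a process $w^t$. As an $L^2$-limit of jointly Gaussian vectors, $w$ is Gaussian with covariance $\lim C^{k_j}_\theta=C_\theta$. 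This gives $\dist_0(C^{k_j}_\theta,C_\theta)\le\sup_t\norm{w_{k_j}^t-w^t}_{L^2}\to0$, and the Cauchy property upgrades convergence from the subsequence to the full sequence.
\item \emph{Direct comparison (fallback).} Alternatively, compare $\dist_0$ with a sup-based quantity directly on a countable dense time set, using the modulus from \eqref{eq:bound_C_theta}.
\end{enumerate}
The technical care in the gluing route is to apply the gluing lemma at the level of processes indexed by $[0,T]$, or on a countable dense set plus continuity within each interval $I$, and to ensure the glued sequence is well defined via Kolmogorov extension.

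The same argument applies verbatim to $\Sigma_\ell$ with $U\sim\GP(0,\Sigma_\ell/\delta)$; the extra factor $\sqrt\lambda$ in the $\calS_\ell$ metric is a harmless constant.
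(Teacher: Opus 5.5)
Your proposal is correct, and it shares the paper's componentwise structure. The paper also treats $R_\theta$, $R_\ell$, $\Gamma$ as sup-norm distances on a finite horizon. For $C_\theta$ and $\Sigma_\ell$, however, it only cites the proof of \citet[Theorem 2.4(b)]{fan2025dynamical}. You prove that part directly: uniform convergence of the kernels from the coupling bound, then a Cauchy-subsequence/gluing construction of a Gaussian limit process, which is a standard completeness argument for coupling distances. You also check explicitly that limits satisfy the defining constraints: the bounds, initial values, uniform continuity on the maximal intervals of $[0,T]\setminus D$ with $D$ fixed, and \eqref{eq:bound_C_theta}. The paper's phrase ``equivalent to $L^\infty$ distance and hence complete'' leaves this closedness step implicit. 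So your route buys self-containedness, and the paper's buys brevity.

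Three remarks that streamline your main route:
\begin{enumerate}
\item The cost $\sup_t\E\|w_1^t-w_2^t\|_2^2$ depends only on second moments, so you may restrict to jointly Gaussian couplings without changing the infimum. Gluing then reduces to an orthogonal decomposition in a Gaussian Hilbert space (conditional independence given the middle process), and no Polish-space issue arises.
\item If you glue on a countable index set instead, include $D\cup\{0,T\}$ in it. Extend to the remaining times using the $L^2$-continuity within each maximal interval that \eqref{eq:bound_C_theta} provides.
\item The glued family need not be jointly Gaussian across different $j$. Your conclusion that the limit is Gaussian only uses that each $w_{k_j}$ is Gaussian, so it stands.
\end{enumerate}
The fallback route is not needed.
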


\begin{proof}
    For finite $T$, the distance $\dist_\lambda$ for $R_\theta$, $R_\ell$, and $\Gamma$ are equivalent to $L^\infty$ distance and hence complete.
    Completeness for $C_\theta$ and $\Sigma_\ell$ are shown in the proof of \citet[Theorem 2.4(b)]{fan2025dynamical}.
\end{proof}

\begin{lemma} \label{lem:modulus_theta_ell}
    Let $X^i=(C_\theta^i,R_\theta^i) \in \calS_\theta$ and $Y^i=\calT_{\theta \to \ell}(X^i) = (\Sigma_\ell^i,R_\ell^i,\Gamma^i) \in \calS_\ell$ for $i=1,2$.
    Then, there exists a constant $K > 0$ such that for any sufficiently large $\lambda > 0$ in the definitions of the metrics \eqref{eq:metrics}, we have
    \begin{align}
        \dist_\lambda(Y^1,Y^2) \leq K \cdot \dist_\lambda(X^1,X^2) \,.
    \end{align}
\end{lemma}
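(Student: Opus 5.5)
We will prove the bound by a synchronous coupling: the two systems generated by $X^1$ and $X^2$ are driven by the same $z$ and the same Brownian motion $B$. We then propagate differences through the Volterra-type equations \eqref{eq:r}, \eqref{eq:rho_ell}--\eqref{eq:D_r} using the exponentially weighted norms. The factor $\napier^{-\lambda t}$ turns each Volterra integral $\int_0^t(\cdot)\,\de s$ into a small factor of order $1/\lambda$ or $1/\sqrt{\lambda}$. This lets us absorb the self-referential terms and obtain a Lipschitz constant $K$ that does not depend on $\lambda$ once $\lambda$ is large.

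\textbf{Step 1 (coupling and the $r$-difference).} Fix $\epsilon>0$ and choose a coupling $(w_1,w_2)$ of $\GP(0,C_\theta^1)$ and $\GP(0,C_\theta^2)$ such that $\sup_t \napier^{-\lambda t}(\E\norm{w_1^t-w_2^t}_2^2)^{1/2}\le \dist_\lambda(C_\theta^1,C_\theta^2)+\epsilon$. Make $(w_1,w_2)$ independent of $(z,B)$. Set $\Delta^t \coloneqq r_1^t-r_2^t$. Subtracting the two copies of \eqref{eq:r} gives three contributions:
\begin{itemize}
\item $w_1-w_2$;
\item $(R_\theta^1-R_\theta^2)(t,s)\,\ell_s(r_1^s)$ integrated against $\de s+\sqrt{\tau\delta}\,\de B^s$;
\item $R_\theta^2(t,s)\bigl(\ell_s(r_1^s)-\ell_s(r_2^s)\bigr)$ integrated against the same measure.
\end{itemize}
We bound the It\^o parts by the It\^o isometry. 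This is legitimate because the integrand is adapted for fixed $t$: $R_\theta(t,\cdot)$ is deterministic. The drift parts are bounded by Cauchy--Schwarz. Using the Lipschitz property of $\ell$ and the uniform second-moment bounds on $r_i^t$ from the admissibility construction in \cref{lem:mapping}, we get
\[
\E\norm{\Delta^t}^2\lesssim \E\norm{w_1^t-w_2^t}^2+\napier^{2\lambda t}\dist_\lambda(R_\theta^1,R_\theta^2)^2+\Phi_\theta^2\int_0^t \E\norm{\Delta^s}^2\de s .
\]
Multiplying by $\napier^{-2\lambda t}$ and taking the supremum, the last term contributes at most $(\Phi_\theta^2/2\lambda)\sup_s \napier^{-2\lambda s}\E\norm{\Delta^s}^2$. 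For large $\lambda$ this can be absorbed, which yields $\sup_t \napier^{-\lambda t}\norm{\Delta^t}_{L^2}\lesssim \dist_\lambda(X^1,X^2)$.

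\textbf{Step 2 (outputs $\Gamma$ and $\Sigma_\ell$).} For $\Gamma$, $\norm{\Gamma^1(t)-\Gamma^2(t)}\le M\,\E\norm{\Delta^t}$ follows directly from the Lipschitz property of $\nabla_r\ell$. For $\Sigma_\ell$, the coupling $U_i\coloneqq L_i/\sqrt\delta$ is admissible in the infimum defining $\dist_\lambda(\Sigma_\ell^1,\Sigma_\ell^2)$. The difference $L_1^t-L_2^t$ is an integral of $\ell(r_1)-\ell(r_2)$ against $\de s+\sqrt{\tau\delta}\,\de B^s$. The It\^o isometry and Cauchy--Schwarz then give
\[
\napier^{-2\lambda t}\E\norm{L_1^t-L_2^t}^2\lesssim \int_0^t \napier^{-2\lambda(t-s)}\,\napier^{-2\lambda s}\E\norm{\Delta^s}^2\,\de s\lesssim \lambda^{-1}\sup_s \napier^{-2\lambda s}\E\norm{\Delta^s}^2 .
\]
So $\sqrt\lambda\,\dist_\lambda(\Sigma_\ell^1,\Sigma_\ell^2)\lesssim \dist_\lambda(X^1,X^2)$, which is exactly why the metric on $\calS_\ell$ carries the $\sqrt\lambda$ weight.

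\textbf{Step 3 (response $R_\ell$, the main obstacle).} Here we must control $\E[\rho_{\ell,1}^{t,t'}-\rho_{\ell,2}^{t,t'}]$ and $\E[D_{\ell,1}^{t,t'}-D_{\ell,2}^{t,t'}]$ uniformly in $t'\le t$. Each solves a linear Volterra--It\^o system with random coefficients $\nabla_r\ell(r^s)$ and $\nabla_r^2\ell(r^s)$. The plan has three parts.
\begin{enumerate}
\item Establish uniform bounds on $\E\norm{\rho_r^{t,t'}}^4$ and $\E\norm{D_r^{t,t'}}^4$. These should come from Gronwall applied to \eqref{eq:rho_r} and \eqref{eq:D_r}, using boundedness of $\nabla_r\ell$ and $\nabla_r^2\ell$ and, for $D_r$, the moment bounds on $\ell(r)$.
\item Write the difference equations. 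Every difference is a sum of:
\begin{itemize}
\item terms driven by $R_\theta^1-R_\theta^2$;
\item coefficient differences $\nabla\ell(r_1)-\nabla\ell(r_2)$ and $\nabla^2\ell(r_1)-\nabla^2\ell(r_2)$, which are bounded by $M\norm{\Delta}$ through the Lipschitz assumptions on $\De\ell$ and $\De^2\ell$;
\item the unknown differences themselves, integrated in Volterra form.
\end{itemize}
\item Apply H\"older to the products, e.g.\ $\norm{\Delta^t}_{L^2}\norm{\rho_r}_{L^4}\cdots$, and run the same weighted-supremum absorption as in Step 1, now over pairs $(t,t')$ with weight $\napier^{-\lambda t}$.
\end{enumerate}
The delicate term is $\nabla_r^2\ell[D_r]\rho_r$. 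It is a product of two random processes, so its difference requires fourth-moment control. This is the point where $T\le T_*$ from \cref{lem:mapping} is needed in the general case, since the moment bounds may grow with $T$ when $\tau>0$.

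Combining Steps 1--3 and letting $\epsilon\to0$ yields $\dist_\lambda(Y^1,Y^2)\le K\dist_\lambda(X^1,X^2)$. The constant $K$ depends on $M$, $\Phi_\theta$, $\tau$, $\delta$, $m$ and $T$, but not on $\lambda$, for all sufficiently large $\lambda$.
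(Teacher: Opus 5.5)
Your overall scheme is the paper's: a synchronous coupling through the same $z$ and $B$, \cref{lem:bound_int}-type estimates for the differences, the $\sqrt{\lambda}$ gain for $\Sigma_\ell$ coming from $\int_0^t \napier^{-2\lambda(t-s)}\de s \le 1/(2\lambda)$, and H\"older plus uniform moment bounds for $R_\ell$. Absorbing via the weight in Step 1 instead of the paper's Gr\"onwall makes no difference.

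The gap is in Step 3. The pairing you propose, $\norm{\Delta^t}_{L^2}\norm{\rho_r}_{L^4}$, does not bound $\E\norm{(\nabla_r\ell_t(r_1^t;z)-\nabla_r\ell_t(r_2^t;z))\rho_{r,1}^{t,t'}}_2^2$. Cauchy--Schwarz gives $M^2\sqrt{\E\norm{\Delta^t}_2^4}\sqrt{\E\norm{\rho_{r,1}^{t,t'}}_2^4}$, as in \eqref{eq:dist_ell}. So you need $\napier^{-4\lambda t}\E\norm{\Delta^t}_2^4\le K\dist_\lambda(X^1,X^2)^4$. For the $D_\ell$ difference you need more still: eighth moments of $\Delta$ (through fourth-moment bounds on $\rho_{r,1}-\rho_{r,2}$ and $D_{r,1}-D_{r,2}$), together with eighth-moment uniform bounds on $\rho_r$ and $D_r$, not just fourth. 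An $L^2$ bound on $\Delta$ cannot be compensated: falling back on $\norm{\nabla_r\ell}_2\le M$ yields only a square-root dependence on $\dist_\lambda(X^1,X^2)$, which is useless for a contraction.

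The higher-moment version of Step 1 (\cref{lem:bound_int} with $p\ge 2$) must be seeded with $\E\norm{w_1^t-w_2^t}_2^{2p}\le C_p(\E\norm{w_1^t-w_2^t}_2^2)^p$. This is the step $\E\norm{w_1^t-w_2^t}_2^4\le 3(\E\norm{w_1^t-w_2^t}_2^2)^2$ preceding \eqref{eq:dist_r4}, and it holds only if $w_1-w_2$ is Gaussian. For the arbitrary near-optimal coupling you fix in Step 1, the ratio of these moments is uncontrolled, so no $\lambda$-independent $K$ follows. The missing observation is that any coupling may be replaced by the jointly Gaussian process with the same joint covariance kernel. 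This keeps the marginals $\GP(0,C_\theta^i)$ and the value of $\sup_t\napier^{-\lambda t}(\E\norm{w_1^t-w_2^t}_2^2)^{1/2}$. With that choice every moment of $w_1-w_2$ is controlled by $\dist_\lambda(X^1,X^2)$, and then, by BDG and Gr\"onwall, so is every moment of $\Delta$ and of the response differences.

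The same device repairs a slip in Step 2. $L_i/\sqrt\delta$ is neither Gaussian nor centered, so it is not an admissible coupling in the infimum defining $\dist_\lambda(\Sigma_\ell^1,\Sigma_\ell^2)$. Use instead the centered Gaussian process $(U_1,U_2)$ whose covariance kernel is the joint second-moment kernel of $(L_1,L_2)/\sqrt\delta$. It has marginals $\GP(0,\Sigma_\ell^i/\delta)$ and satisfies $\E\norm{U_1^t-U_2^t}_2^2=\E\norm{L_1^t-L_2^t}_2^2/\delta$.

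Finally, your remark that $T\le T_*$ is needed for the moment bounds in this lemma is not accurate. Given $X^i\in\calS_\theta$, those bounds hold on any finite $[0,T]$ with $T$-dependent constants. The restriction $T\le T_*$ enters only through \cref{lem:mapping}, to guarantee $Y^i\in\calS_\ell$.
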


\begin{lemma} \label{lem:modulus_ell_theta}
    Let $Y^i=(C_\ell^i,R_\ell^i,\Gamma^i) \in \calS_\ell$ and $X^i=\calT_{\ell \to \theta}(Y^i) = (C_\theta^i,R_\theta^i) \in \calS_\theta$ for $i=1,2$.
    Then, for any $\eps > 0$, for sufficiently large $\lambda > 0$, we have
    \begin{align}
        \dist_\lambda(X^1,X^2) \leq \eps \cdot \dist_\lambda(Y^1,Y^2) \,.
    \end{align}
\end{lemma}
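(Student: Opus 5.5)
We couple the two $\theta$-processes synchronously and control their differences in the weighted sup-norm $\sup_t \napier^{-\lambda t}(\cdot)$. Given $Y^1,Y^2\in\calS_\ell$, pick a coupling $(U_1,U_2)$ of $\GP(0,\Sigma_\ell^1/\delta)$ and $\GP(0,\Sigma_\ell^2/\delta)$ that (nearly) attains the infimum in $\dist_\lambda(\Sigma_\ell^1,\Sigma_\ell^2)$. Use the same $\theta^0$ for both systems, and let $\theta_i^t,\rho_{\theta,i}^{t,t'}$ be the solutions of \cref{eq:theta,eq:rho_theta} given by \cref{lem:process}. The outputs then satisfy
\begin{align}
  \dist_\lambda(C_\theta^1,C_\theta^2) &\le \sup_t \napier^{-\lambda t}\sqrt{\E\norm{\theta_1^t-\theta_2^t}_2^2}, \\
  \dist_\lambda(R_\theta^1,R_\theta^2) &\le \sup_{s\le t}\napier^{-\lambda t}\,\E\norm{\rho_{\theta,1}^{t,s}-\rho_{\theta,2}^{t,s}}_2 .
\end{align}
The first bound holds because $(\theta_1,\theta_2)$ is a coupling of the two processes, and since $C_\theta^i$ is the covariance of $\theta_i$, a Gaussian process with covariance $C_\theta^i$ can be coupled through it with matching second moments. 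If this step is delicate, we can fall back on the Gaussian coupling construction of \citet{fan2025dynamical}.

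\textbf{The $\theta$-difference.} Set $\Delta^t=\theta_1^t-\theta_2^t$. Subtracting the two copies of \cref{eq:theta} gives
\[
\Delta^t=(U_1^t-U_2^t)-\int_0^t\Big(h_s(\theta_1^s)-h_s(\theta_2^s)+\Gamma^1\Delta^s+(\Gamma^1-\Gamma^2)\theta_2^s+\int_0^s\big(R^1_\ell\Delta^{s'}+(R^1_\ell-R^2_\ell)\theta_2^{s'}\big)\de s'\Big)\de s .
\]
Using the Lipschitz bound on $h$, the bounds $\Phi_\ell, M$ from the admissible spaces, and a uniform bound $\sup_t\E\norm{\theta_2^t}^2\le K_0$ (from Gr\"onwall and the constants of \cref{lem:mapping}), we get $e(t)\coloneqq\sqrt{\E\norm{\Delta^t}^2}$ satisfying
\[
e(t)\le \napier^{\lambda t}\dist_\lambda(\Sigma^1_\ell,\Sigma^2_\ell)+K\int_0^t e(s)\de s + K\int_0^t\napier^{\lambda s}\de s\,\big(\dist_\lambda(\Gamma^1,\Gamma^2)+\dist_\lambda(R^1_\ell,R^2_\ell)\big).
\]
Multiply by $\napier^{-\lambda t}$ and use $\int_0^t \napier^{-\lambda(t-s)}\de s\le 1/\lambda$. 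With $E=\sup_t\napier^{-\lambda t}e(t)$ this yields
\[
E\le \dist_\lambda(\Sigma)+\tfrac{K}{\lambda}E+\tfrac{K}{\lambda}\big(\dist_\lambda(\Gamma)+\dist_\lambda(R_\ell)\big).
\]
For $\lambda>2K$, we get $E\le 2\dist_\lambda(\Sigma)+\tfrac{2K}{\lambda}(\cdots)$. The $\sqrt\lambda$ weight on $\dist_\lambda(\Sigma_\ell)$ in $\dist_\lambda(Y)$ is what makes this work: $2\dist_\lambda(\Sigma)\le \tfrac{2}{\sqrt\lambda}\,\dist_\lambda(Y)$, so every term carries a factor $\lambda^{-1/2}$ or $\lambda^{-1}$.

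\textbf{The response difference.} The equation for $\rho_\theta^{t,t'}$ is linear with coefficients $\nabla_\theta h_s(\theta^s)+\Gamma(s)$ and $R_\ell$. This gives a deterministic Gr\"onwall bound $\norm{\rho_{\theta,i}^{t,t'}}\le K$ uniformly over $t'\le t\le T$. The difference $\rho_{\theta,1}-\rho_{\theta,2}$ solves the same linear equation, forced by three terms:
\[
(\nabla h_s(\theta_1^s)-\nabla h_s(\theta_2^s))\rho_{\theta,2},\qquad (\Gamma^1-\Gamma^2)\rho_{\theta,2},\qquad (R^1_\ell-R^2_\ell)\rho_{\theta,2}.
\]
By the Lipschitz property of $\nabla h$, the first term is at most $M\norm{\Delta^s}K$. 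Taking expectations, weighting by $\napier^{-\lambda t}$, and repeating the same convolution estimate gives
\[
\dist_\lambda(R_\theta)\le \tfrac{K}{\lambda}\big(E+\dist_\lambda(\Gamma)+\dist_\lambda(R_\ell)\big).
\]
Summing the two estimates gives $\dist_\lambda(X^1,X^2)\le K'\lambda^{-1/2}\dist_\lambda(Y^1,Y^2)$, and the claim follows by choosing $\lambda$ large.

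\textbf{Main obstacle.} The delicate step is the coupling inequality for $C_\theta$: relating $\dist_\lambda(C_\theta^1,C_\theta^2)$, an infimum over Gaussian couplings, to $\E\norm{\theta_1-\theta_2}^2$ when the $\theta_i$ themselves are not Gaussian. We would try first to invoke the argument of \citet[Theorem 2.4]{fan2025dynamical}, which handles exactly this issue. The remaining work is routine bookkeeping with the piecewise continuity allowed by the finite set $D$, handled interval by interval.
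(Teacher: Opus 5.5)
Your proposal is correct and follows essentially the same route as the paper. The shared steps are: a synchronous coupling of $U_1,U_2$ with a common $\theta^0$; the $\sqrt{\lambda}$ weight on $\dist_\lambda(\Sigma_\ell^1,\Sigma_\ell^2)$ to handle the unintegrated noise term; absorption of the self-referential terms once $\lambda$ is large; and the same linear estimate for $\rho_{\theta,1}-\rho_{\theta,2}$ using the deterministic bound on $\rho_{\theta,2}$. You work with $L^2$ norms via Minkowski where the paper squares first, which makes no difference.

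The step you flag as the main obstacle is not delicate, and the paper handles it in one line. Take the centered Gaussian process $(w_1,w_2)$ whose kernel is the joint second-moment kernel $\E[\theta_i^t\theta_j^{t'\transpose}]$ of $(\theta_1,\theta_2)$; this kernel is positive semidefinite, so the process exists. Then $w_i\sim\GP(0,C_\theta^i)$ and $\E\norm{w_1^t-w_2^t}_2^2=\E\norm{\theta_1^t-\theta_2^t}_2^2$ holds exactly, so no appeal to \citet{fan2025dynamical} is needed.
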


We defer the proof of the last two lemmas to \cref{app:proof_modulus_theta_ell,app:proof_modulus_ell_theta}.

Finally, we show that $\calT$ is a contraction mapping under the above metrics, finishing the proof of \cref{thm:dmft_sol}.
Take $T \in [0,T_*]$ where $T_*$ is as in \cref{lem:mapping}, and take admissible spaces $\calS_\theta \coloneqq \calS_\theta(T)$ and $\calS_\ell \coloneqq \calS_\ell(T)$.
By \cref{lem:modulus_theta_ell,lem:modulus_ell_theta}, we can choose $\eps < 1/K$ and $\lambda$ sufficiently large such that $\calT = \calT_{\ell \to \theta} \circ \calT_{\theta \to \ell}$ is a contraction mapping on the metric space $(\calS_\theta,\dist_\lambda)$ which is complete by \cref{lem:complete}.
By the Banach fixed-point theorem, there exists a unique fixed point $(C_\theta,R_\theta) \in \calScont_\theta$ such that $\calT(C_\theta,R_\theta) = (C_\theta,R_\theta)$.
Thus, this $(C_\theta,R_\theta)$ and $(\Sigma_\ell,R_\ell,\Gamma) = \calT_{\theta \to \ell}(C_\theta,R_\theta) \in \calScont_\ell$ together form a unique pair of fixed points satisfying the DMFT equation $\mathfrak{S}$.

The continuity of the sample paths follows from \cref{lem:process}.

\subsection{Proof of Lemma \ref{lem:mapping}} \label{app:proof_mapping}

We will use the following bounds repeatedly in the proof.

\begin{lemma} \label{lem:bound_int}
    Let $X_1$ be a random variable and $X_2^s,X_3^s$ be stochastic processes in $\reals^m$ adapted to the Brownian motion $(B^{s'})_{0 \leq s' < s}$.
    Then, for any $0 \leq t' \leq t$ and any integer $p \geq 1$, there exists a constant $C_p > 0$ such that we have
    \begin{align}
        & \E\norm*{X_1 + \int_{t'}^{t} X_2^s \de s + \int_{t'}^{t} X_3^s \de B^s}_2^{2p} \notag \\
        & \qquad \leq 3^{2p-1} \ab(\E\norm{X_1}_2^{2p} + (t - t')^{2p-1} \int_{t'}^t \E\norm{X_2^s}_2^{2p} \de s + C_p (t - t')^{p-1} \int_{t'}^t \E\norm{X_3^s}_2^{2p} \de s) \,. \label{eq:bound_int}
    \end{align}
\end{lemma}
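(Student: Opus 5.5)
The bound \eqref{eq:bound_int} follows from splitting the sum into its three terms, bounding each term separately, and recombining. First, convexity of $x \mapsto x^{2p}$ (equivalently, Hölder's inequality for a three-term sum) gives
\begin{align}
    \norm{a+b+c}_2^{2p} \leq 3^{2p-1}\ab(\norm{a}_2^{2p}+\norm{b}_2^{2p}+\norm{c}_2^{2p}) \,,
\end{align}
applied pointwise with $a = X_1$, $b = \int_{t'}^t X_2^s \de s$, $c = \int_{t'}^t X_3^s \de B^s$. Taking expectations reduces the claim to bounding $\E\norm{b}_2^{2p}$ and $\E\norm{c}_2^{2p}$. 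The $X_1$ term already has the required form.

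For the Lebesgue integral I will use Jensen's inequality, i.e.\ Hölder with exponents $2p$ and $2p/(2p-1)$ on the interval $[t',t]$:
\begin{align}
    \norm*{\int_{t'}^t X_2^s \de s}_2^{2p} \leq (t-t')^{2p-1}\int_{t'}^t \norm{X_2^s}_2^{2p} \de s \,.
\end{align}
Taking expectations and applying Fubini gives the second term.

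The stochastic integral is the only step that needs a real tool. I will apply the Burkholder--Davis--Gundy inequality to the $\reals^m$-valued martingale $M_u = \int_{t'}^u X_3^s \de B^s$. Its quadratic variation is $\int_{t'}^t \norm{X_3^s}_2^2 \de s$, so BDG gives
\begin{align}
    \E\norm{M_t}_2^{2p} \leq C'_p \, \E\ab(\int_{t'}^t \norm{X_3^s}_2^2 \de s)^p \,.
\end{align}
Applying Hölder again, with exponents $p$ and $p/(p-1)$, bounds the right-hand side by $C'_p (t-t')^{p-1}\int_{t'}^t \E\norm{X_3^s}_2^{2p}\de s$. Setting $C_p = C'_p$ gives the third term. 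For $p=1$ this step is simply the Itô isometry with $C_1 = 1$.

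The BDG constant does not depend on $m$. If one prefers to apply BDG coordinatewise, the resulting factor of $m$ can be absorbed into $C_p$. Adaptedness of $X_3$ is exactly what is needed for the Itô integral and for BDG to apply. If $\int_{t'}^t \E\norm{X_3^s}_2^{2p}\de s = \infty$, the inequality holds trivially, so no integrability issue arises.
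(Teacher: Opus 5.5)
Your proposal is correct and matches the paper's proof: the three-term convexity split with factor $3^{2p-1}$, Jensen/H\"older for the drift integral, and Burkholder--Davis--Gundy followed by H\"older for the stochastic integral. Like the paper, you also note that $C_1 = 1$ by the It\^o isometry.
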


\begin{proof}
    By Jensen's inequality, we have
    \begin{align}
        & \E\norm*{X_1 + \int_{t'}^{t} X_2^s \de s + \int_{t'}^{t} X_3^s \de B^s}_2^{2p} \notag \\
        & \leq 3^{2p-1} \ab(\E\norm{X_1}_2^{2p} + (t - t')^{2p-1} \int_{t'}^t \E\norm{X_2^s}_2^{2p} \de s + \E\norm*{\int_{t'}^t X_3^s \de B^s}_2^{2p}) \,.
    \end{align}
    By the Burkholder--Davis--Gundy inequality and Jensen's inequality again, we have
    \begin{align}
        \E\norm*{\int_{t'}^t X_3^s \de B^s}_2^{2p} & \leq C_p \E \ab(\int_{t'}^t \norm{X_3^s}_2^{2} \de s)^p \leq C_p (t - t')^{p-1} \int_{t'}^t \E \norm{X_3^s}_2^{2p} \de s \,.
    \end{align}
    This proves the claim. Note that we can set $C_1=1$ by the It\^o isometry.
\end{proof}

\subsubsection{Construction of the Admissible Spaces}

Take constants $\Phi_{C_\theta} \geq 7(M+4T^2M^2)$ and $\Phi_{R_\theta} \geq 2$. Define constants $\Phi_{C_\ell},\Phi_{\Sigma_\ell},\Phi_{R_\ell},\Phi_i \, (i=1,\dots,5), M_\ell,M_\theta$, and $\bar\lambda$ as follows:
\begin{align}
    \Phi_{C_\ell}      & \coloneqq (2M + 6M^2 m \Phi_{C_\theta}) \exp\ab(6M^2\ab(\frac{T}{\delta^2} + \frac{\tau}{\delta}) \Phi_{R_\theta}^2 T) \,, \label{eq:Phi_Cl}                \\
    \Phi_{\Sigma_\ell} & \coloneqq (T + \tau\delta)\Phi_{C_\ell} \,, \label{eq:Phi_Sigmal}                                                                                           \\
    \Phi_{R_\ell}      & \coloneqq \sqrt{2\Phi_1 + 2\tau\delta\Phi_5} \,, \label{eq:Phi_Rl} \\
    \Phi_1      & \coloneqq \frac{3M^4}{\delta^2} \Phi_{R_\theta}^2 \exp\ab(3M^2\ab(\frac{T}{\delta^2} + \frac{\tau}{\delta}) \Phi_{R_\theta}^2 T) \,, \\
    \Phi_2 & \coloneqq \frac{27M^4  \Phi_{R_\theta}^4}{\delta^4} \exp\ab(27 M^4 \ab(\frac{T^3}{\delta^4} + \frac{C_2\tau^2 T}{\delta^2}) \Phi_{R_\theta}^4 T) \,, \\
    \Phi_3 & \coloneqq \ab(8M + 648 M^4 m^2 \Phi_{C_\theta}^2) \exp\ab(216 M^4 \ab(\frac{T^3}{\delta^4} + \frac{C_2\tau^2 T}{\delta^2}) \Phi_{R_\theta}^4 T) \,, \\
    \Phi_4 & \coloneqq \frac{27\tau^2 \Phi_{R_\theta}^4 \Phi_3 \napier^4}{\delta^2} \exp\ab(27 M^4 \ab(\frac{T^3}{\delta^4} + \frac{C_2\tau^2 T}{\delta^2}) \Phi_{R_\theta}^4 T) \,, \\
    \Phi_5 & \coloneqq 2 M^2 \sqrt{\Phi_2 \Phi_4} \napier^{2} \exp\ab(4M^2 \ab(\frac{T}{\delta^2} + \frac{\tau}{\delta}) \Phi_{R_\theta}^2 T) \,, \\
    M_\ell & \coloneqq 2(T+ \tau\delta) \Phi_{C_\ell} \napier^{2\bar\lambda T} \,, \\
    M_\theta & \coloneqq 2\ab(\frac{m M_\ell}{\delta} + 2 T \ab(2M^2 + 3M^2 \Phi_{C_\theta} \napier^{2\bar\lambda T} + T^2\Phi_{R_\ell}^2 \Phi_{C_\theta} \napier^{4\bar\lambda T})) \,, \\
    \bar \lambda & \coloneqq \max\ab\{2T (3M^2 + T \Phi_{R_\ell}^2),\frac{6m\Phi_{\Sigma_\ell}}{\delta M},2(2M+\Phi_{R_\ell} T)\} \,,
\end{align}
where $C_2 > 0$ is the constant in \cref{lem:bound_int} for $p=2$.

For $T > 0$, we define the function spaces $\calS_\theta \coloneqq \calS_\theta(T)$ and $\calS_\ell \coloneqq \calS_\ell(T)$ as follows.
We define $\calS_\theta$ as the space of pairs of functions $(C_\theta,R_\theta)$ satisfying the continuity conditions and initial conditions in \cref{def:admissible_theta} and the following bounds for $0 \leq t' \leq t \leq T$:
\begin{align}
    \norm{C_\theta(t,t)}_2 \leq \Phi_{C_\theta} \napier^{2\bar\lambda t} \,, \quad \norm{R_\theta(t,t')}_2 \leq \Phi_{R_\theta} \napier^{\bar\lambda (t - t')} \,.
\end{align}
Then, $\calS_\theta$ is admissible with parameters $\Phi_\theta = \max\{\Phi_{C_\theta} \napier^{2\bar\lambda T},\Phi_{R_\theta} \napier^{\bar \lambda T}\}$ and $M_\theta$.

We define $\calS_\ell$ as the space of triples of functions $(\Sigma_\ell,R_\ell,\Gamma)$ satisfying the continuity conditions and initial conditions in \cref{def:admissible_ell} and the following bounds for $0 \leq t' \leq t \leq T$:
\begin{align}
    \norm{\Sigma_\ell(t,t)}_2 \leq \frac{\Phi_{\Sigma_\ell} \napier^{2\bar\lambda t}}{\bar\lambda} \,, \quad \norm{R_\ell(t,t')}_2 \leq \Phi_{R_\ell} \napier^{\bar\lambda (t - t')} \,, \quad \norm{\Gamma(t)}_2 \leq M \,.
\end{align}
Then, $\calS_\ell$ is admissible with parameters $\Phi_\ell = \max\{\Phi_{\Sigma_\ell} \napier^{2\bar\lambda T}/\bar\lambda,\Phi_{R_\ell} \napier^{\bar \lambda T}\}$ and $M_\ell$.

In the following, we show that
\begin{enumerate}
    \item For sufficiently small $T$ with $\bar\lambda T \leq 1$, the mappings $\calT_{\theta \to \ell}$ and $\calT_{\ell \to \theta}$ map $\calS_\theta$ and $\calS_\ell$ into each other, respectively.
    \item If either $\tau = 0$ or $\nabla_r^2 \ell_t(r;z) = 0$, the above holds for any $T > 0$.
\end{enumerate}
Note that it is possible to take $\bar \lambda T \leq 1$ since $\bar \lambda$ is monotonically increasing in $T$. Then, we can take $T_*$ as the supremum of such $T$. We provide a rough estimate of $T_* \gtrsim \delta^2/(\tau^2 M^7 m^2)$ where $\gtrsim$ hides subleading terms in $M,m,\tau,1/\delta$ and the constant factor in \cref{app:estimate_T}.

The proof is almost identical for both cases; the only difference lies in bounding $R_\ell$.

\subsubsection{$\calT_{\theta \to \ell}$ maps $\calS_\theta$ into $\calS_\ell$.}

\paragraph{Condition for $\Sigma_\ell$.}
We have by the assumptions and the triangle inequality that
\begin{align}
    \E \norm{\ell_t(r^t;z)}_2^2 \leq \E \ab(\norm{\ell_t(0;z)}_2 + M \norm{r^t}_2)^2 \leq 2 \E \norm{\ell_t(0;z)}_2^2 + 2 M^2 \E\norm{r^t}_2^2 \leq 2M + 2M^2 \E \norm{r^t}_2^2 \,.
\end{align}
We apply \cref{lem:bound_int} to $r^t$ in \cref{eq:r} with
\begin{align}
    \E\norm{X_1}_2^2   & \coloneqq \E \norm{w^t}_2^2 = \tr(C_\theta(t,t)) \leq m \norm{C_\theta(t,t)}_2 \leq m \Phi_{C_\theta}\napier^{2\bar \lambda t} \,,                                                      \\
    \E\norm{X_2^s}_2^2 & \coloneqq \E \norm*{\frac{1}{\delta} R_\theta(t,s) \ell_s(r^s;z)}_2^2 \leq \frac{1}{\delta^2} \Phi_{R_\theta}^2 \napier^{2\bar \lambda (t-s)} \E\norm{\ell_s(r^s;z)}_2^2 \,,            \\
    \E\norm{X_3^s}_2^2 & \coloneqq \E \norm*{\sqrt{\frac{\tau}{\delta}} R_\theta(t,s) \ell_s(r^s;z)}_2^2 \leq \frac{\tau}{\delta} \Phi_{R_\theta}^2 \napier^{2\bar \lambda (t-s)} \E\norm{\ell_s(r^s;z)}_2^2 \,,
\end{align}
to obtain
\begin{align}
    \E\norm{r^t}_2^2 & \leq 3 \ab(m \Phi_{C_\theta}\napier^{2\bar \lambda t} + \ab(\frac{T}{\delta^2} + \frac{\tau}{\delta}) \Phi_{R_\theta}^2 \int_0^t \napier^{2\bar \lambda (t-s)} \E\norm{\ell_s(r^s;z)}_2^2 \de s ) \,.
\end{align}
Thus, we have
\begin{align}
    \napier^{-2\bar\lambda t}\E \norm{\ell_t(r^t;z)}_2^2 \leq 2M + 6M^2 m \Phi_{C_\theta} + 6M^2\ab(\frac{T}{\delta^2} + \frac{\tau}{\delta}) \Phi_{R_\theta}^2 \int_0^t \napier^{-2\bar \lambda s} \E\norm{\ell_s(r^s;z)}_2^2 \de s \,.
\end{align}
By Gr\"onwall's inequality, we have
\begin{align}
    \E\norm{\ell_t(r^t;z)}_2^2 \leq (2M + 6M^2 m \Phi_{C_\theta}) \exp\ab(6M^2\ab(\frac{T}{\delta^2} + \frac{\tau}{\delta}) \Phi_{R_\theta}^2 T) \napier^{2\bar \lambda t} = \Phi_{C_\ell} \napier^{2\bar \lambda t} \,, \label{eq:ell_bound}
\end{align}
where we used the definition of $\Phi_{C_\ell}$ in \cref{eq:Phi_Cl}.

Now, we check the condition $\norm{\Sigma_\ell(t,t)}_2 \leq \Phi_{\Sigma_\ell}\napier^{2\bar\lambda t}/\bar\lambda$.
Using \cref{lem:bound_int}, we have
\begin{align}
    \norm{\Sigma_\ell(t,t)}_2 & = \norm{\E[L^t L^{t\transpose}]}_2 \leq \E\norm{L^t}_2^2 \leq 2 (T + \tau\delta) \int_0^t \E \norm{\ell_s(r^s;z)}_2^2 \de s \notag                                                                                                                                    \\
                              & \leq 2 (T + \tau \delta) \int_0^t \Phi_{C_\ell}\napier^{2\bar \lambda s} \de s = \frac{(T + \tau\delta)\Phi_{C_\ell}}{\bar \lambda} (\napier^{2\bar \lambda t}-1) \leq \frac{\Phi_{\Sigma_\ell}}{\bar\lambda} \napier^{2\bar\lambda t} \,, \label{eq:Sigma_ell_bound}
\end{align}
where we used the definition of $\Phi_{\Sigma_\ell}$ in \cref{eq:Phi_Sigmal}.

Next, we check the condition \eqref{eq:bound_Sigma_l}. We have
\begin{align}
     & \norm{\Sigma_\ell(t,t) - 2\Sigma_\ell(t,t') + \Sigma_\ell(t',t')}_2 = \norm{\E[(L^t - L^{t'}) (L^t - L^{t'})^\transpose]}_2 \leq \E \norm{L^t - L^{t'}}_2^2 \notag                                                                     \\
     & = \E\norm*{\int_{t'}^t \ell_s(r^s;z) (\de s + \sqrt{\tau\delta} \de B^s)}_2^2 \leq 2 (T + \tau \delta) \int_{t'}^t \E \norm{\ell_s(r^s;z)}_2^2 \de s \notag \\
     & \leq 2 (T + \tau \delta) \Phi_{C_\ell} \napier^{2\bar \lambda T} \abs{t - t'} \leq M_\ell \abs{t-t'} \,.
\end{align}

\paragraph{Condition for $R_\ell$: case (1).}
We have
\begin{align}
    \norm{R_\ell(t,t')}_2^2 & \leq 2 \E\norm{\rho_\ell^{t,t'}}_2^2 + 2 \tau\delta \E\norm{D_\ell^{t,t'}}_2^2 \,.
\end{align}

First, we bound $\E\norm{\rho_\ell^{t,t'}}_2^2$.
By the Lipschitz continuity of $\ell_t(r;z)$ in $r$, we have $\norm{\nabla_r \ell_t(r^t;z)}_2 \leq M$ for all $t$. Therefore, we have
\begin{align}
    \E\norm{\rho_\ell^{t,t'}}_2^2 & = \E\norm{\nabla_r \ell_t(r^t;z) \rho_r^{t,t'}}_2^2 \leq M^2 \E\norm{\rho_r^{t,t'}}_2^2 \,.
\end{align}
Applying \cref{lem:bound_int} to $\rho_r^{t,t'}$ in \cref{eq:rho_ell}, we have
\begin{align}
    \E\norm{\rho_r^{t,t'}}_2^2 & \leq 3 \ab(\frac{M^2}{\delta^2} \Phi_{R_\theta}^2 \napier^{2\bar\lambda(t-t')} + \ab(\frac{T}{\delta^2} + \frac{\tau}{\delta}) \Phi_{R_\theta}^2 \int_{t'}^t \napier^{2\bar\lambda(t-s)} \E \norm{\rho_\ell^{s,t'}}_2^2 \de s ) \,.
\end{align}
Thus, we have
\begin{align}
    \napier^{-2\bar\lambda(t-t')}\E\norm{\rho_\ell^{t,t'}}_2^2 & \leq \frac{3 M^4}{\delta^2} \Phi_{R_\theta}^2 + 3 M^2 \ab(\frac{T}{\delta^2} + \frac{\tau}{\delta}) \Phi_{R_\theta}^2 \int_{t'}^t \napier^{-2\bar\lambda(s-t')} \E \norm{\rho_\ell^{s,t'}}_2^2 \de s \,.
\end{align}
By Gr\"onwall's inequality, we have
\begin{align}
    \napier^{-2\bar\lambda(t-t')}\E\norm{\rho_\ell^{t,t'}}_2^2 \leq \frac{3M^4}{\delta^2} \Phi_{R_\theta}^2 \exp\ab(3M^2\ab(\frac{T}{\delta^2} + \frac{\tau}{\delta}) \Phi_{R_\theta}^2 T) \leq \Phi_1 \,.
\end{align}

Next, we bound $\E\norm{D_\ell^{t,t'}}_2^2$. We have
\begin{align}
    \E\norm{D_\ell^{t,t'}}_2^2 & = 4M^2 \ab(\frac{T}{\delta^2} + \frac{\tau}{\delta}) \int_{t'}^t \norm{R_\theta(t,s)}_2^2 \E \norm{D_\ell^{s,t'}}_2^2 \de s + 2 \E[\norm{\nabla_r^2 \ell_t(r^t;z)}_2^2 \norm{D_r^{t,t'}}_2^2 \norm{\rho_r^{t,t'}}_2^2] \notag \\
    & \leq 4M^2 \ab(\frac{T}{\delta^2} + \frac{\tau}{\delta}) \Phi_{R_\theta}^2 \int_{t'}^t \napier^{2\bar\lambda (t-s)} \E \norm{D_\ell^{s,t'}}_2^2 \de s + 2 M^2 \sqrt{\E\norm{D_r^{t,t'}}_2^4} \sqrt{\E\norm{\rho_r^{t,t'}}_2^4} \,,
\end{align}
where we used that $\norm{\nabla_r^2 \ell_t(r^t;z)}_2 \leq M$ by assumption.
We first bound $\E\norm{\rho_r^{t,t'}}_2^4$. Applying \cref{lem:bound_int}, we obtain
\begin{align}
    \E\norm{\rho_r^{t,t'}}_2^4 & \leq 27 \Bigg(\ab(\frac{T^3}{\delta^4} + \frac{C_2\tau^2 T}{\delta^2}) \int_{t'}^t \norm{R_\theta(t,s)}_2^4 \E[\norm{\nabla_r\ell_s(r^s;z)}_2^4 \norm{\rho_r^{s,t'}}_2^4] \de s \notag \\
    & \qquad + \frac{1}{\delta^4} \norm{R_\theta(t,t')}_2^4 \E \norm{\nabla_r \ell_{t'}(r^{t'};z)}_2^4 \Bigg) \notag \\
    & \leq 27 \ab(\ab(\frac{T^3}{\delta^4} + \frac{C_2\tau^2 T}{\delta^2}) \Phi_{R_\theta}^4 M^4 \int_{t'}^t \napier^{4\bar\lambda(t-s)} \E\norm{\rho_r^{s,t'}}_2^4 \de s + \frac{M^4}{\delta^4} \Phi_{R_\theta}^4 \napier^{4\bar\lambda (t-t')} ) \,.
\end{align}
By Gr\"onwall's inequality, we have
\begin{align}
    \E\norm{\rho_r^{t,t'}}_2^4 & \leq \frac{27M^4  \Phi_{R_\theta}^4}{\delta^4} \exp\ab(27 M^4 \ab(\frac{T^3}{\delta^4} + \frac{C_2\tau^2 T}{\delta^2}) \Phi_{R_\theta}^4 T) \napier^{4\bar\lambda (t-t')} \leq \Phi_2 \napier^{4\bar\lambda(t-t')} \,.
\end{align}
We next bound $\E\norm{D_r^{t,t'}}_2^4$. Applying \cref{lem:bound_int}, we have
\begin{align}
    \E\norm{D_r^{t,t'}}_2^4 & \leq 27 \Bigg(\ab(\frac{T^3}{\delta^4} + \frac{C_2\tau^2 T}{\delta^2}) \int_{t'}^t \norm{R_\theta(t,s)}_2^4 \E[\norm{\nabla_r\ell_s(r^s;z)}_2^4 \norm{D_r^{s,t'}}_2^4] \de s \notag \\
    & \qquad + \frac{\tau^2}{\delta^2} \norm{R_\theta(t,t')}_2^4 \E \norm{\ell_{t'}(r^{t'};z)}_2^4 \Bigg) \notag \\
    & \leq 27 \ab(\ab(\frac{T^3}{\delta^4} + \frac{C_2\tau^2 T}{\delta^2}) \Phi_{R_\theta}^4 M^4 \int_{t'}^t \napier^{4\bar\lambda(t-s)} \E\norm{D_r^{s,t'}}_2^4 \de s + \frac{\tau^2}{\delta^2} \Phi_{R_\theta}^4 \napier^{4\bar\lambda (t-t')} \E\norm{\ell_{t'}(r^{t'};z)}_2^4 ) \,.
\end{align}
$\E\norm{\ell_{t'}(r^{t'};z)}_2^4$ can be bounded as
\begin{align}
    & \E\norm{\ell_{t}(r^{t};z)}_2^4 \leq 8 \E \norm{\ell_{t}(0;z)}_2^4 + 8 M^4 \E\norm{r^{t}}_2^4 \notag \\
    & \leq 8M + 8M^4 \cdot 27 \ab(\E\norm{w^t}_2^4 + \ab(\frac{T^3}{\delta^4} + \frac{C_2\tau^2 T}{\delta^2}) \int_0^t \norm{R_\theta(t,s)}_2^4 \E\norm{\ell_s(r^s;z)}_2^4 \de s ) \notag \\
    & \leq 8M + 216M^4 \ab(3m^2 \Phi_{C_\theta}^2 \napier^{4\bar\lambda t} + \ab(\frac{T^3}{\delta^4} + \frac{C_2\tau^2 T}{\delta^2}) \Phi_{R_\theta}^4 \int_0^t \napier^{4\bar\lambda(t-s)} \E\norm{\ell_s(r^s;z)}_2^4 \de s ) \,.
\end{align}
By Gr\"onwall's inequality, we have
\begin{align}
    \E\norm{\ell_{t}(r^{t};z)}_2^4 & \leq \ab(8M + 648 M^4 m^2 \Phi_{C_\theta}^2) \exp\ab(216 M^4 \ab(\frac{T^3}{\delta^4} + \frac{C_2\tau^2 T}{\delta^2}) \Phi_{R_\theta}^4 T) \napier^{4\bar\lambda t} \leq \Phi_3 \napier^{4\bar\lambda t} \,.
\end{align}
Taking $T$ small enough such that $\bar\lambda T \leq 1$, we have $\napier^{4\bar\lambda t} \leq \napier^4$.
Thus, we have
\begin{align}
    \E\norm{D_r^{t,t'}}_2^4 & \leq 27 \ab(\ab(\frac{T^3}{\delta^4} + \frac{C_2\tau^2 T}{\delta^2}) \Phi_{R_\theta}^4 M^4 \int_{t'}^t \napier^{4\bar\lambda(t-s)} \E\norm{D_r^{s,t'}}_2^4 \de s + \frac{\tau^2}{\delta^2} \Phi_{R_\theta}^4 \Phi_3 \napier^4 \napier^{4\bar\lambda (t-t')} ) \,.
\end{align}
By Gr\"onwall's inequality, we have
\begin{align}
    \E\norm{D_r^{t,t'}}_2^4 & \leq \frac{27\tau^2 \Phi_{R_\theta}^4 \Phi_3 \napier^4}{\delta^2} \exp\ab(27 M^4 \ab(\frac{T^3}{\delta^4} + \frac{C_2\tau^2 T}{\delta^2}) \Phi_{R_\theta}^4 T) \napier^{4\bar\lambda (t-t')} \leq \Phi_4 \napier^{4} \,.
\end{align}
Combining the above bounds, we have
\begin{align}
    \E\norm{D_\ell^{t,t'}}_2^2 & \leq 4M^2 \ab(\frac{T}{\delta^2} + \frac{\tau}{\delta}) \Phi_{R_\theta}^2 \int_{t'}^t \napier^{2\bar\lambda (t-s)} \E \norm{D_\ell^{s,t'}}_2^2 \de s + 2 M^2 \sqrt{\Phi_4 \napier^{4}} \sqrt{\Phi_2 \napier^{4\bar\lambda(t-t')}} \,.
\end{align}
By Gr\"onwall's inequality, we have
\begin{align}
    \E\norm{D_\ell^{t,t'}}_2^2 & \leq 2 M^2 \sqrt{\Phi_2 \Phi_4} \napier^{2} \exp\ab(4M^2 \ab(\frac{T}{\delta^2} + \frac{\tau}{\delta}) \Phi_{R_\theta}^2 T) \napier^{2\bar\lambda (t-t')} \leq \Phi_5 \napier^{2\bar\lambda(t-t')}\,.
\end{align}
Thus, we have
\begin{align}
    \norm{R_\ell(t,t')}_2^2 & \leq (2 \Phi_1 + 2\tau\delta \Phi_5) \napier^{2\bar\lambda(t-t')} = \Phi_{R_\ell}^2 \napier^{2\bar\lambda(t-t')} \,.
\end{align}

\paragraph{Condition for $R_\ell$: case (2).}
When $\nabla_r^2 \ell_t(r;z) = 0$, we have $D_\ell^{t,t'} = 0$.
Thus, when $\tau = 0$ or $\nabla^2 \ell_t(r;z) = 0$, we have
\begin{align}
    \norm{R_\ell(t,t')}_2^2 \leq \E\norm{\rho_\ell^{t,t'}}_2^2 \leq \Phi_1 \napier^{-2\bar\lambda (t-t')} \leq \Phi_{R_\ell}^2 \napier^{2 \bar \lambda (t-t')} \,.
\end{align}
This holds without taking $T$ small.

\paragraph{Condition for $\Gamma$.}

We have
\begin{align}
    \norm{\Gamma(t)}_2 \leq \E\norm{\nabla_r \ell_t(r^t;z)}_2 \leq M \,.
\end{align}

\subsubsection{$\calT_{\ell \to \theta}$ maps $\calS_\ell$ into $\calS_\theta$.}

\paragraph{Condition for $C_\theta$.}
We have
\begin{align}
    \E\norm{U^t}_2^2 = \frac{1}{\delta} \tr(\Sigma_\ell(t,t)) \leq \frac{m}{\delta} \norm{\Sigma_\ell(t,t)}_2 \leq \frac{m \Phi_{\Sigma_\ell}}{\delta \bar \lambda} \napier^{2\bar \lambda t} \,.
\end{align}
By the Lipschitz continuity of $h$, we have $\E \norm{h_t(\theta^t)}_2^2 \leq 2\norm{h_t(0)}_2^2 + 2\E\norm{h_t(\theta^t) - h_t(0)}_2^2 \leq 2M^2 + 2M^2 \E\norm{\theta^t}_2^2$.
Thus, we have
\begin{align}
     & \napier^{-2\bar \lambda t}\E \norm{\theta^t}_2^2 \notag                                                                                                                                                                                                                             \\
     & \leq 3 \napier^{-2\bar \lambda t} \Bigg(\E\norm{\theta^0}_2^2 + \E \norm{U^t}_2^2 \notag \\
     & \qquad + 2 T \int_0^t \ab( \E \norm{h_s(\theta^s)}_2^2 + \norm{\Gamma(s)}_2^2 \E \norm{\theta^{s}}_2^2 + T \int_0^{s} \norm{R_\ell(s,s')}_2^2 \E \norm{\theta^{s'}}_2^2 \de s') \de s\Bigg) \notag                   \\
     & \leq 3 \Bigg(\napier^{-2\bar\lambda t} (M + 4T^2M^2) + \frac{m \Phi_{\Sigma_\ell}}{\delta \bar\lambda} \notag \\
     & \qquad + 2 T \int_0^t \napier^{-2\bar\lambda (t-s)} \ab( 3M^2 \napier^{-2 \bar\lambda s} \E \norm{\theta^{s}}_2^2 + T \Phi_{R_\ell}^2 \int_0^{s} \napier^{-2\bar\lambda s'} \E \norm{\theta^{s'}}_2^2 \de s') \de s\Bigg) \notag \\
     & \leq 3 \ab(M + 4T^2M^2 + \frac{m \Phi_{\Sigma_\ell}}{\delta \bar\lambda} + 2 T (3M^2 + T^2 \Phi_{R_\ell}^2) \ab(\int_0^t \napier^{-2\bar\lambda (t-s)} \de s) \sup_{s \in [0,T]} \napier^{-2 \bar\lambda s} \E \norm{\theta^{s}}_2^2 ) \notag                                                 \\
     & \leq 3 \ab(M + 4T^2M^2 + \frac{m \Phi_{\Sigma_\ell}}{\delta \bar\lambda} + \frac{T (3M^2 + T^2 \Phi_{R_\ell}^2)}{\bar\lambda} \sup_{s \in [0,T]} \napier^{-2 \bar\lambda s} \E \norm{\theta^{s}}_2^2) \notag                                                                                  \\
\end{align}
By $\bar\lambda \geq 2T (3M^2 + T^2 \Phi_{R_\ell}^2)$, we have
\begin{align}
    \sup_{t \in [0,T]} \napier^{-2\bar\lambda t}\E \norm{\theta^t}_2^2 \leq 6 \ab(M + 4T^2M^2 + \frac{m \Phi_{\Sigma_\ell}}{\delta \bar\lambda}) \,.
\end{align}
By $\bar\lambda \geq 6m\Phi_{\Sigma_\ell}/(\delta (M + 4T^2M^2))$, the right-hand side is bounded by $7 (M + 4T^2M^2) \leq \Phi_{C_\theta}$.
Therefore, we have
\begin{align}
    \norm{C_\theta(t,t)}_2 \leq \E \norm{\theta^t}_2^2 \leq \Phi_{C_\theta} \napier^{2\bar\lambda t} \,.
\end{align}

Next, we check the condition \eqref{eq:bound_C_theta}. We have
\begin{align}
     & \norm{C_\theta(t,t) - 2C_\theta(t,t') + C_\theta(t',t')}_2 = \norm{\E[(\theta^t - \theta^{t'}) (\theta^t - \theta^{t'})^\transpose]}_2 \leq \E \norm{\theta^t - \theta^{t'}}_2^2 \notag   \\
     & \leq \E\norm*{U^t - U^{t'} - \int_{t'}^t \ab(h_s(\theta^s) + \Gamma(s) \theta^s + \int_0^s R_\ell(s,s')\theta^{s'} \de s') \de s}_2^2 \notag                                               \\
     & \leq 2\ab(\E\norm{U^t - U^{t'}}_2^2 + 2 (t - t')^2 \sup_{s \in [0,T]} \ab\{ 2M^2 + 3M^2 \E\norm{\theta^s}_2^2 + T \int_0^s \norm{R_\ell(s,s')}_2^2 \E\norm{\theta^{s'}}_2^2 \de s' \}) \notag    \\
     & \leq 2\ab(\frac{m M_\ell}{\delta} \abs{t - t'} + 2 (t - t')^2 \ab(2M^2 + 3M^2 \Phi_{C_\theta} \napier^{2 \bar\lambda T} + T^2 \Phi_{R_\ell}^2 \Phi_{C_\theta} \napier^{4 \bar\lambda T})) \notag \\
     & \leq M_\theta \abs{t - t'} \,,
\end{align}
where we used
\begin{align}
    \E\norm{U^t - U^{t'}}_2^2 \leq \frac{m}{\delta} \norm{\Sigma_\ell(t,t) - 2\Sigma_\ell(t,t') + \Sigma_\ell(t',t')}_2 \leq \frac{m M_\ell}{\delta} \abs{t - t'} \,.
\end{align}

\paragraph{Condition for $R_\theta$.}
By the Lipschitz continuity of $h$, we have $\norm{\nabla_\theta h_t(\theta)}_2 \leq M$ for all $t$ and $\theta$.
Thus, we have
\begin{align}
    \norm{\rho_\theta^{t,t'}}_2 & \leq 1 + \int_{t'}^t \ab( (\norm{\nabla_\theta h_s(\theta^s)}_2 + \norm{\Gamma(s)}_2) \norm{\rho_\theta^{s,t'}}_2 + \int_{t'}^s \norm{R_\ell(s,s')}_2 \norm{\rho_\theta^{s',t'}}_2 \de s' ) \de s \notag \\
    & \leq 1 + \int_{t'}^t \ab( 2M \norm{\rho_\theta^{s,t'}}_2 + \Phi_{R_\ell} \int_{t'}^s \napier^{\bar\lambda(s-s')} \norm{\rho_\theta^{s',t'}}_2 \de s' ) \de s \,,
\end{align}
and thus
\begin{align}
    & \napier^{-\bar\lambda (t-t')} \norm{\rho_\theta^{t,t'}}_2 \notag \\
    & \leq 1 + \int_{t'}^t \napier^{-\bar\lambda(t-s)} \ab(2M \napier^{-\bar\lambda(s-t')} \norm{\rho_\theta^{s,t'}}_2 + \Phi_{R_\ell} \int_{t'}^s \napier^{\bar\lambda(s'-t')} \norm{\rho_\theta^{s',t'}}_2 \de s') \de s \notag \\
                                                          & \leq 1 + (2M + \Phi_{R_\ell} T) \ab(\int_{t'}^t \napier^{-\bar\lambda(t-s)} \de s) \sup_{s \in [t',t]} \napier^{-\bar\lambda(s-t')} \norm{\rho_\theta^{s,t'}}_2 \notag                                                  \\
                                                          & \leq 1 + \frac{2M + \Phi_{R_\ell} T}{\bar \lambda} \sup_{s \in [t',t]} \napier^{-\bar\lambda(s-t')} \norm{\rho_\theta^{s,t'}}_2 \,.
\end{align}
By $\bar\lambda \geq 2(2M + \Phi_{R_\ell} T)$, we have
\begin{align}
    \norm{\rho_\theta^{t,t'}}_2 \leq 2 \napier^{\bar \lambda (t-t')} \,, \label{eq:bound_rho_theta}
\end{align}
and thus
\begin{align}
    \norm{R_\theta(t,t')}_2 \leq \E \norm{\rho_\theta^{t,t'}}_2 \leq 2 \napier^{\bar \lambda (t-t')} \leq \Phi_{R_\theta} \napier^{\bar\lambda(t-t')} \,.
\end{align}

\subsubsection{A Rough Estimate of $T_*$}
\label{app:estimate_T}

We derive a rough lower bound on $T_*$ up to the leading dependencies on $M,m,\tau,1/\delta$ and ignoring constant factors.
Take $T \leq \min\{1,(M^2(1/\delta^2 + \tau/\delta) \Phi_{R_\theta}^2)^{-1}\}$. Then, the exponents in the definitions of $\Phi_{C_\ell}$ and $\Phi_i\;(i=1,\dots,5)$ are all bounded by constants since
\begin{align}
    M^2\ab(\frac{T}{\delta^2} + \frac{\tau}{\delta}) \Phi_{R_\theta}^2 T \leq 1 \,, \quad M^4 \ab(\frac{T^2}{\delta^4} + \frac{C_2 \tau^2}{\delta^2}) \Phi_{R_\theta}^4 T^2 \lesssim 1 \,.
\end{align}
Then, the $\Phi$ quantities can be bounded as
\begin{equation}
    \begin{gathered}
        \Phi_{C_\theta} \lesssim M \,, \quad \Phi_{R_\theta} \lesssim  1 \,, \quad \Phi_{C_\ell} \lesssim M^3 m \,, \quad \Phi_{\Sigma_\ell} \lesssim \tau M^3 m \,, \quad \Phi_1 \lesssim \frac{M^4}{\delta^2} \,, \quad \Phi_2 \lesssim \frac{M^4}{\delta^4} \,, \\
        \Phi_3 \lesssim M^6 m^2 \,, \quad \Phi_4 \lesssim \frac{\tau^2 M^6 m^2}{\delta^2} \,, \quad \Phi_5 \lesssim \frac{\tau M^7 m}{\delta^3} \,, \quad \Phi_{R_\ell} \leq \frac{\tau M^{7/2} m^{1/2}}{\delta}
    \end{gathered}
\end{equation}
Then, $\bar\lambda$ is bounded as
\begin{align}
    \bar\lambda \lesssim (M^2 + \Phi_{R_\ell}^2) + \frac{m \Phi_{\Sigma_\ell}}{\delta M} + (M + \Phi_{R_\ell}) \lesssim \frac{\tau^2 M^7 m^2}{\delta^2} \,.
\end{align}
Thus, further taking $T \lesssim \delta^2/(\tau^2 M^7 m^2)$, we have $\bar\lambda T \lesssim 1$.

\subsection{Proof of Lemma \ref{lem:modulus_theta_ell}} \label{app:proof_modulus_theta_ell}

Set $\Phi \coloneqq \max\{\Phi_\theta,\Phi_\ell\}$.
In the following, $K$ denotes a positive constant that may depend on $M$, $m$, $T$, $\delta$, $\tau$, and $\Phi$, but not on other variables, and may change from line to line.

\paragraph{Bound of $\dist_\lambda(\Sigma_\ell^1,\Sigma_\ell^2)$.}
Let $w_1 \sim \GP(0,C_\theta^1)$ and $w_2 \sim \GP(0,C_\theta^2)$ be Gaussian processes coupled such that
\begin{align}
    \sup_{t \in [0,T]} \napier^{-\lambda t} \sqrt{\E \norm{w_1^t - w_2^t}_2^2} \leq 2 \cdot \dist_\lambda(C_\theta^1,C_\theta^2) \,.
\end{align}
For $i=1,2$, let $r_i$ be the solution of
\begin{align}
    r_i^t = w_i^t - \frac{1}{\delta} \int_0^t R_\theta^i(t,s) \ell_s(r_i^s;z) (\de s + \sqrt{\tau\delta} \de B^s) \,.
\end{align}
Note that we use the same Brownian motion $B^t$ for $i=1,2$.
Applying \cref{lem:bound_int} to $r_1^t - r_2^t$ with
\begin{align}
     & \E\norm{w_1^t - w_2^t}_2^2 \leq 4\napier^{2\lambda t} \cdot \dist_\lambda(C_\theta^1,C_\theta^2)^2 \,,                                                                     \\
     & \E\norm{R_\theta^1(t,s) \ell_s(r_1^s;z) - R_\theta^2(t,s) \ell_s(r_2^s;z)}_2^2 \notag                                                                                      \\
     & \qquad \leq 2 \ab( \norm{R_\theta^1(t,s) - R_\theta^2(t,s)}_2^2 \E\norm{\ell_s(r_1^s;z)}_2^2 + \norm{R_\theta^2(t,s)}_2^2 \E\norm{\ell_s(r_1^s;z) - \ell_s(r_2^s;z)}_2^2 ) \notag \\
     & \qquad \leq 2 \ab( \Phi \napier^{2\lambda t} \cdot \dist_\lambda(R_\theta^1,R_\theta^2)^2 + \Phi^2 M^2 \E\norm{r_1^s - r_2^s}_2^2 ) \,,
\end{align}
we have
\begin{align}
     \E\norm{r_1^t - r_2^t}_2^2 & \leq 3 \Bigg( 4\napier^{2\lambda t} \cdot \dist_\lambda(C_\theta^1,C_\theta^2)^2 + 2 \ab(\frac{T}{\delta^2} + \frac{\tau}{\delta}) T \Phi \napier^{2\lambda t} \cdot \dist_\lambda(R_\theta^1,R_\theta^2)^2 \notag \\
     & \qquad + 2 \ab(\frac{T}{\delta^2} + \frac{\tau}{\delta}) \Phi^2 M^2 \int_0^t \E\norm{r_1^s - r_2^s}_2^2 \de s\Bigg) \,,
\end{align}
and thus
\begin{align}
    \napier^{-2\lambda t} \E\norm{r_1^t - r_2^t}_2^2 \leq K \cdot \dist_\lambda(X^1,X^2)^2 + K \int_0^t \napier^{-2\lambda s} \E\norm{r_1^s - r_2^s}_2^2 \de s \,.
\end{align}
By Gr\"onwall's inequality, we have
\begin{align}
    \napier^{-2 \lambda t} \E \norm{r_1^t - r_2^t}_2^2 & \leq K \napier^{K T} \cdot \dist_\lambda(X^1,X^2)^2 \leq K \cdot \dist_\lambda(X^1,X^2)^2  \,. \label{eq:dist_r}
\end{align}

For $i=1,2$, let
\begin{align}
    L^t_i \coloneqq \int_0^t \ell_s(r_i^s;z) (\de s + \sqrt{\tau \delta} \de B^s) \,.
\end{align}
Then, we have
\begin{align}
    \napier^{-2\lambda t} \E\norm{L^t_1 - L^t_2}_2^2 & \leq 2 (T + \tau\delta) \int_0^t \napier^{-2 \lambda (t - s)} \cdot \napier^{-2 \lambda s} \E \norm{\ell_s(r_1^s;z) - \ell_s(r_2^s;z)}_2^2 \de s \notag                    \\
                                                     & \leq \frac{(T + \tau \delta) M^2}{\lambda} \sup_{s \in [0,T]} \napier^{-2 \lambda s} \E \norm{r_1^s - r_2^s}_2^2 \leq \frac{K}{\lambda} \cdot \dist_\lambda(X^1,X^2)^2 \,.
\end{align}

Let $\{(U_1^t,U_2^t) \}_{t \in [0,T]}$ be a centered Gaussian process with covariance $\E\ab[\pmat{L^t_1 \\ L^t_2} \pmat{L^t_1 \\ L^t_2}^\transpose] / \delta$.
Since $U_1$ and $U_2$ have covariance kernels $\Sigma_\ell^1/\delta$ and $\Sigma_\ell^2/\delta$ respectively, we have
\begin{align}
    \dist_\lambda(\Sigma_\ell^1,\Sigma_\ell^2) \leq \sup_{t \in [0,T]} \napier^{-\lambda t} \sqrt{\E \norm{U_1^t - U_2^t}_2^2} = \sup_{t \in [0,T]} \napier^{-\lambda t} \sqrt{\E\norm{L^t_1 - L^t_2}_2^2 / \delta} \leq \sqrt{\frac{K}{\lambda}} \cdot \dist_\lambda(X^1,X^2) \,.
\end{align}

\paragraph{Bound of $\dist_\lambda(\Gamma^1,\Gamma^2)$.}

By \cref{eq:dist_r}, we have
\begin{align}
    \dist_\lambda(\Gamma^1,\Gamma^2) & = \sup_{t \in [0,T]} \napier^{-\lambda t} \norm{\Gamma^1(t) - \Gamma^2(t)}_2 \leq \sup_{t \in [0,T]} \napier^{-\lambda t} \sqrt{\E \norm{\nabla_r \ell_t(r_1^t;z) - \nabla_r \ell_t(r_2^t;z)}_2^2} \notag \\
                                     & \leq M \sup_{t \in [0,T]} \napier^{-\lambda t} \sqrt{\E \norm{r_1^t - r_2^t}_2^2} \leq K \cdot \dist_\lambda(X^1,X^2) \,. \label{eq:dist_Gamma}
\end{align}

\paragraph{Bound of $\dist_\lambda(R_\ell^1,R_\ell^2)$.}

For $i=1,2$, let
\begin{align}
    \rho_{\ell,i}^{t,t'} = \nabla_r \ell_t(r_i^t;z) \rho_{r,i}^{t,t'} \,, \quad \rho_{r,i}^{t,t'} = -\frac{1}{\delta} \int_{t'}^t R_\theta^i(t,s) \rho_{\ell,i}^{s,t'} (\de s + \sqrt{\tau\delta} \de B^s) - \frac{1}{\delta} R_\theta^i(t,t') \nabla_r \ell_{t'}(r_i^{t'};z) \,.
\end{align}
Then, we have
\begin{align}
    \norm{R_\ell^1(t,t') - R_\ell^2(t,t')}_2^2 & \leq 2 \E \norm{\rho_{\ell,1}^{t,t'} - \rho_{\ell,2}^{t,t'}}_2^2 + 2\tau\delta \E\norm{D_{\ell,1}^{t,t'} - D_{\ell,2}^{t,t'}}_2^2 \,.
\end{align}

We first bound $\E\norm{\rho_{\ell,1}^{t,t'} - \rho_{\ell,2}^{t,t'}}_2^2$. We have
\begin{align}
    \E \norm{\rho_{\ell,1}^{t,t'} - \rho_{\ell,2}^{t,t'}}_2^2 & = \E \norm{\nabla_r\ell_t(r_1^t;z) \rho_{r,1}^{t,t'} - \nabla_r\ell_t(r_2^t;z) \rho_{r,2}^{t,t'}}_2^2 \notag            \\
                                               & \leq 2 \E \norm{(\nabla_r \ell_t(r_1^t;z) - \nabla_r \ell_t(r_2^t;z)) \rho_{r,1}^{t,t'}}_2^2 + 2 \E \norm{\nabla_r \ell_t(r_2^t;z) (\rho_{r,1}^{t,t'} - \rho_{r,2}^{t,t'})}_2^2 \notag \\
                                               & \leq 2 M^2 \sqrt{\E\norm{\rho_{r,1}^{t,t'}}_2^4} \sqrt{\E\norm{r_1^t - r_2^t}_2^4} + 2 M^2 \E\norm{\rho_{r,1}^{t,t'} - \rho_{r,2}^{t,t'}}_2^2 \,. \label{eq:dist_ell}
\end{align}
First, we bound $\E \norm{\rho_{r,1}^{t,t'}}_2^4$. Applying \cref{lem:bound_int} to $\rho_{r,1}^{t,t'}$, we have
\begin{align}
    \E \norm{\rho_{r,1}^{t,t'}}_2^4 & \leq 27 \ab(\frac{\Phi^4 M^4}{\delta^4} + \ab(\frac{T^3}{\delta^4} + \frac{3 T \tau^2}{\delta^2}) \Phi^4 M^4 \int_{t'}^t \E \norm{\rho_{r,1}^{s,t'}}_2^4 \de s) \leq K + K \int_{t'}^t \E\norm{\rho_{r,1}^{s,t'}}_2^4 \,,
\end{align}
By Gr\"onwall's inequality, we have
\begin{align}
    \E \norm{\rho_{r,1}^{t,t'}}_2^4 \leq K \napier^{KT} \leq K \,. \label{eq:bound_rhor4}
\end{align}
Next, we bound $\E \norm{r_1^t - r_2^t}_2^4$ by applying \cref{lem:bound_int} with
\begin{align}
     & \E\norm{w_1^t - w_2^t}_2^4 \leq 3 (\E\norm{w_1^t - w_2^t}_2^2)^2 \leq 12 \napier^{4\lambda t} \cdot \dist_\lambda(C_\theta^1,C_\theta^2)^4 \,,                             \\
     & \E\norm{R_\theta^1(t,s) \ell_s(r_1^s;z) - R_\theta^2(t,s) \ell_s(r_2^s;z)}_2^4 \notag                                                                                      \\
     & \leq 8 \ab( \norm{R_\theta^1(t,s) - R_\theta^2(t,s)}_2^4 \E\norm{\ell_s(r_1^s;z)}_2^4 + \norm{R_\theta^2(t,s)}_2^4 \E\norm{\ell_s(r_1^s;z) - \ell_s(r_2^s;z)}_2^4 ) \notag \\
     & \leq 8 \ab( K \napier^{4\lambda t} \cdot \dist_\lambda(R_\theta^1,R_\theta^2)^4 + \Phi^4 M^4 \E\norm{r_1^s - r_2^s}_2^4 ) \,,
\end{align}
Here, we used that $\E\norm{\ell_t(r^t;z)}_2^4$ is uniformly bounded by some constant $K > 0$ by a similar argument as the bound on $\E\norm{\rho_r^{t,t'}}_2^4$.
Then, we have
\begin{align}
    \E \norm{r_1^t - r_2^t}_2^4 \leq K\napier^{4\lambda t} \cdot \dist_\lambda(X^1,X^2)^4 + K \int_0^t \E \norm{r_1^s - r_2^s}_2^4 \de s \,.
\end{align}
By Gr\"onwall's inequality, we have
\begin{align}
    \napier^{-4 \lambda t} \E \norm{r_1^t - r_2^t}_2^4 \leq K \napier^{KT} \cdot \dist_\lambda(X^1,X^2)^4 \leq K \cdot \dist_\lambda(X^1,X^2)^4 \,. \label{eq:dist_r4}
\end{align}
Finally, we bound $\E\norm{\rho_{r,1}^{t,t'} - \rho_{r,2}^{t,t'}}_2^2$. We apply \cref{lem:bound_int} to $\rho_{r,1}^{t,t'} - \rho_{r,2}^{t,t'}$ with
\begin{align}
     & \E\norm{R_\theta^1(t,t') \nabla_r \ell_{t'}(r_1^{t'};z) - R_\theta^2(t,t') \nabla_r \ell_{t'}(r_2^{t'};z)}_2^2 \notag                                                                                                      \\
     & \leq 2 \ab( \norm{R_\theta^1(t,t') - R_\theta^2(t,t')}_2^2 \E\norm{\nabla_r \ell_{t'}(r_1^{t'};z)}_2^2 + \norm{R_\theta^2(t,t')}_2^2 \E\norm{\nabla_r \ell_{t'}(r_1^{t'};z) - \nabla_r \ell_{t'}(r_2^{t'};z)}_2^2 ) \notag \\
     & \leq 2 \ab( M^2 \napier^{2\lambda t} \cdot \dist_\lambda(R_\theta^1,R_\theta^2)^2 + \Phi^2 M^2 \E\norm{r_1^{t'} - r_2^{t'}}_2^2 ) \leq K \napier^{2\lambda t} \cdot \dist_\lambda(X^1,X^2)^2 \,,                           \\
     & \E\norm{R_\theta^1(t,s) \rho_{\ell,1}^{s,t'} - R_\theta^2(t,s) \rho_{\ell,2}^{s,t'}}_2^2 \notag                                                                                                                            \\
     & \leq 2 \ab( \norm{R_\theta^1(t,s) - R_\theta^2(t,s)}_2^2 \E\norm{\rho_{\ell,1}^{s,t'}}_2^2 + \norm{R_\theta^2(t,s)}_2^2 \E\norm{\rho_{\ell,1}^{s,t'} - \rho_{\ell,2}^{s,t'}}_2^2 ) \notag                                  \\
     & \leq 2 \ab( K \napier^{2\lambda t} \cdot \dist_\lambda(R_\theta^1,R_\theta^2)^2 + \Phi^2 \E\norm{\rho_{\ell,1}^{s,t'} - \rho_{\ell,2}^{s,t'}}_2^2 ) \,.
\end{align}
Here, we used that $\E\norm{\rho_{\ell,i}^{s,t'}}_2^2$ is uniformly bounded by some constant $K > 0$ by a similar argument as the bound on $\E\norm{\rho_r^{t,t'}}_2^2$.
Then, we have
\begin{align}
    \E\norm{\rho_{r,1}^{t,t'} - \rho_{r,2}^{t,t'}}_2^2 \leq K \napier^{2\lambda t} \cdot \dist_\lambda(X^1,X^2)^2 + K \int_{t'}^t \E\norm{\rho_{\ell,1}^{s,t'} - \rho_{\ell,2}^{s,t'}}_2^2 \de s \,. \label{eq:dist_rhor}
\end{align}
Plugging \cref{eq:bound_rhor4,eq:dist_r4,eq:dist_rhor} into \cref{eq:dist_ell}, we have
\begin{align}
    \E \norm{\rho_{\ell,1}^{t,t'} - \rho_{\ell,2}^{t,t'}}_2^2 & \leq K\napier^{2\lambda t} \cdot \dist_\lambda(X^1,X^2)^2 + K \int_{t'}^t \E \norm{\rho_{\ell,1}^{s,t'} - \rho_{\ell,2}^{s,t'}}_2^2 \de s \,.
\end{align}
Applying Gr\"onwall's inequality, we have
\begin{align}
    \napier^{-2 \lambda t} \E \norm{\rho_{\ell,1}^{t,t'} - \rho_{\ell,2}^{t,t'}}_2^2 \leq K \cdot \dist_\lambda(X^1,X^2)^2 \,.
\end{align}

Next, we bound $\E\norm{D_{\ell,1}^{t,t'} - D_{\ell,2}^{t,t'}}_2^2$. We have
\begin{align}
    & \norm{D_{\ell,1}^{t,t'} - D_{\ell,2}^{t,t'}}_2 \notag \\
    & \leq \frac{1}{\delta} \norm{\nabla_r \ell_t(r_1^t;z) - \nabla_r \ell_t(r_2^t;z)}_2 \norm*{\int_{t'}^t R_\theta^1(t,s) D_{\ell,1}^{s,t'} (\de s + \sqrt{\tau\delta} \de B^s)}_2 \notag \\
    & \qquad + \frac{1}{\delta} \norm{\nabla_r \ell_t(r_2^t;z)}_2 \norm*{\int_{t'}^t (R_\theta^1(t,s) D_{\ell,1}^{s,t'} - R_\theta^2(t,s) D_{\ell,2}^{s,t'}) (\de s + \sqrt{\tau\delta} \de B^s)}_2 \notag \\
    & \qquad + \norm{\nabla_r^2 \ell_t(r_1^t;z) - \nabla_r^2 \ell_t(r_2^t;z)}_2 \norm{D_{r,1}^{t,t'} \rho_{r,1}^{t,t'}}_2 + \norm{\nabla_r^2 \ell_t(r_2^t;z)}_2 \norm{D_{r,1}^{t,t'} \rho_{r,1}^{t,t'} - D_{r,2}^{t,t'} \rho_{r,2}^{t,t'}}_2 \notag \\
    & \leq \frac{M}{\delta} \norm{r_1^t - r_2^t}_2 \norm*{\int_{t'}^t R_\theta^1(t,s) D_{\ell,1}^{s,t'} (\de s + \sqrt{\tau\delta} \de B^s)}_2 \notag \\
    & \qquad + \frac{M}{\delta} \norm*{\int_{t'}^t (R_\theta^1(t,s) D_{\ell,1}^{s,t'} - R_\theta^2(t,s) D_{\ell,2}^{s,t'}) (\de s + \sqrt{\tau\delta} \de B^s)}_2 \notag \\
    & \qquad + M \norm{r_1^t - r_2^t}_2 \norm{D_{r,1}^{t,t'}}_2 \norm{\rho_{r,1}^{t,t'}}_2 + M \norm{D_{r,1}^{t,t'} - D_{r,2}^{t,t'}}_2 \norm{\rho_{r,1}^{t,t'}}_2 + M \norm{D_{r,2}^{t,t'}}_2 \norm{\rho_{r,1}^{t,t'} - \rho_{r,2}^{t,t'}}_2 \,.
\end{align}
Squaring and taking the expectation, we have
\begin{align}
    & \E \norm{D_{\ell,1}^{t,t'} - D_{\ell,2}^{t,t'}}_2^2 \notag \\
    & \leq K \E \norm{r_1^t - r_2^t}_2^2 \norm*{\int_{t'}^t R_\theta^1(t,s) D_{\ell,1}^{s,t'} (\de s + \sqrt{\tau\delta} \de B^s)}_2^2 \notag \\
    & \qquad + K \E \norm*{\int_{t'}^t (R_\theta^1(t,s) D_{\ell,1}^{s,t'} - R_\theta^2(t,s) D_{\ell,2}^{s,t'}) (\de s + \sqrt{\tau\delta} \de B^s)}_2^2 \notag \\
    & \qquad + K \E \norm{r_1^t - r_2^t}_2^2 \norm{D_{r,1}^{t,t'}}_2^2 \norm{\rho_{r,1}^{t,t'}}_2^2 + K \E \norm{D_{r,1}^{t,t'} - D_{r,2}^{t,t'}}_2^2 \norm{\rho_{r,1}^{t,t'}}_2^2 + K \E \norm{D_{r,2}^{t,t'}}_2^2 \norm{\rho_{r,1}^{t,t'} - \rho_{r,2}^{t,t'}}_2^2 \notag \\
    & \leq K \sqrt{\E \norm{r_1^t - r_2^t}_2^4} \sqrt{\E\norm*{\int_{t'}^t R_\theta^1(t,s) D_{\ell,1}^{s,t'} (\de s + \sqrt{\tau\delta} \de B^s)}_2^4} \notag \\
    & \qquad + K \int_{t'}^t \E \norm{R_\theta^1(t,s) D_{\ell,1}^{s,t'} - R_\theta^2(t,s) D_{\ell,2}^{s,t'}}_2^2 \de s \notag \\
    & \qquad + K \sqrt{\E \norm{r_1^t - r_2^t}_2^4} \sqrt{\E \norm{D_{r,1}^{t,t'}}_2^8} \sqrt{\E\norm{\rho_{r,1}^{t,t'}}_2^8} + K \sqrt{\E \norm{D_{r,1}^{t,t'} - D_{r,2}^{t,t'}}_2^4} \sqrt{\E \norm{\rho_{r,1}^{t,t'}}_2^4} \notag \\
    & \qquad + K \sqrt{\E \norm{D_{r,2}^{t,t'}}_2^4} \sqrt{\E \norm{\rho_{r,1}^{t,t'} - \rho_{r,2}^{t,t'}}_2^4} \notag \\
    & \leq K \napier^{2\lambda t} \cdot \dist_\lambda(X^1,X^2)^2 \cdot \ab(\sqrt{\sup_{s \in [t',t]}\E \norm{D_{\ell,1}^{s,t'}}_2^2} + \sqrt{\E \norm{D_{r,1}^{t,t'}}_2^8} \sqrt{\E\norm{\rho_{r,1}^{t,t'}}_2^8}) \notag \\
    & \qquad + K \int_{t'}^t \ab(\norm{R_\theta^1(t,s) - R_\theta^2(t,s)}_2^2 \E \norm{D_{\ell,1}^{s,t'}}_2^2 + \norm{R_\theta^2(t,s)}_2^2 \E \norm{D_{\ell,1}^{s,t'} - D_{\ell,2}^{s,t'}}_2^2) \de s \notag \\
    & \qquad + K \sqrt{\E \norm{D_{r,1}^{t,t'} - D_{r,2}^{t,t'}}_2^4} \sqrt{\E \norm{\rho_{r,1}^{t,t'}}_2^4} + K \sqrt{\E \norm{D_{r,2}^{t,t'}}_2^4} \sqrt{\E \norm{\rho_{r,1}^{t,t'} - \rho_{r,2}^{t,t'}}_2^4} \,.
\end{align}
We do not repeat all the details, but by a similar argument as the bounds on $\E\norm{\rho_{r,1}^{t,t'}}_2^4$, we can show that $\E\norm{\rho_{r,i}^{t,t'}}_2^p$, $\E \norm{D_{\ell,i}^{t,t'}}_2^p$, $\E \norm{D_{r,i}^{t,t'}}_2^p$, and $\E \norm{D_{\ell,i}^{t,t'}}_2^4$ are uniformly bounded by some constant $K > 0$ for $p \geq 2$ and $i=1,2$.
Moreover, by a similar argument as the bound on $\E\norm{\rho_{r,1}^{t,t'} - \rho_{r,2}^{t,t'}}_2^2$, we can show that $\E\norm{D_{r,1}^{t,t'} - D_{r,2}^{t,t'}}_2^4$ and $\E\norm{\rho_{r,1}^{t,t'} - \rho_{r,2}^{t,t'}}_2^4$ are bounded by $K \napier^{4\lambda t} \cdot \dist_\lambda(X^1,X^2)^4$.
Thus, using these bounds and Gr\"onwall's inequality, we have
\begin{align}
    \napier^{-2\lambda t}\E \norm{D_{\ell,1}^{t,t'} - D_{\ell,2}^{t,t'}}_2^2 \leq K \cdot \dist_\lambda(X^1,X^2)^2 \,.
\end{align}

Therefore, we have
\begin{align}
    \dist_\lambda(R_\ell^1,R_\ell^2) \leq K \cdot \dist_\lambda(X^1,X^2) \,.
\end{align}
Collecting the above bounds, we have
\begin{align}
    \dist_\lambda(Y^1,Y^2) = \sqrt{\lambda} \dist_\lambda(\Sigma_\ell^1,\Sigma_\ell^2) + \dist_\lambda(R_\ell^1,R_\ell^2) + \dist_\lambda(\Gamma^1,\Gamma^2) \leq K \cdot \dist_\lambda(X^1,X^2) \,.
\end{align}

\subsection{Proof of Lemma \ref{lem:modulus_ell_theta}} \label{app:proof_modulus_ell_theta}

Again, $K$ denotes a positive constant that may depend on $M$, $m$, $T$, $\delta$, $\tau$, and $\Phi$, but not on other variables, and may change from line to line.

\paragraph{Bound of $\dist_\lambda(C_\theta^1,C_\theta^2)$.}
Let $U_1 \sim \GP(0,\Sigma_\ell^1/\delta)$ and $U_2 \sim \GP(0,\Sigma_\ell^2/\delta)$ be Gaussian processes coupled such that
\begin{align}
    \sup_{t \in [0,T]} \napier^{-\lambda t} \sqrt{\E \norm{U_1^t - U_2^t}_2^2} \leq 2 \cdot \dist_\lambda(\Sigma_\ell^1,\Sigma_\ell^2) \,.
\end{align}

For $i=1,2$, let $\theta_i$ be the solution of
\begin{align}
    \theta_i^t = \theta^0 + U_i^t - \int_0^t \ab(h_s(\theta_i^s) + \Gamma^i(s) \theta_i^{s} + \int_0^s R_\ell^i(s,s') \theta_i^{s'} \de s') \de s \,.
\end{align}
Then, we have
\begin{align}
     & \napier^{-2\lambda t} \E \norm{\theta_1^t - \theta_2^t}_2^2 \notag                                                                                                                                                                                                                \\
     & \leq 6 \napier^{-2\lambda t} \Bigg(T \int_0^t \E \norm{h_s(\theta_1^s) - h_s(\theta_2^s)}_2^2 \de s + T \int_0^t \norm{\Gamma^1(s)}_2^2 \E \norm{\theta_1^s - \theta_2^s}_2^2 \de s \notag \\
     & \qquad + T^2 \int_0^t \int_0^s \norm{R_\ell^1(s,s')}_2^2 \E \norm{\theta_1^{s'} - \theta_2^{s'}}_2^2 \de s' \de s + \E \norm{U_1^t - U_2^t}_2^2 \notag \\
     & \qquad + T \int_0^t \norm{\Gamma^1(s) - \Gamma^2(s)}_2^2 \E \norm{\theta_2^s}_2^2 \de s + T^2 \int_0^t \int_0^s \norm{R_\ell^1(s,s') - R_\ell^2(s,s')}_2^2 \E \norm{\theta_2^{s'}}_2^2 \de s' \de s \Bigg) \notag                                   \\
     & \leq 6 \Bigg(2 M^2 T \int_0^t \napier^{-2 \lambda (t - s)} \cdot \napier^{-2 \lambda s} \E \norm{\theta_1^s - \theta_2^s}_2^2 \de s \notag \\
     & \qquad + T^2 \Phi^2 \int_0^t \napier^{-2\lambda (t-s)}\int_0^s \napier^{-2\lambda s} \E \norm{\theta_1^{s'} - \theta_2^{s'}}_2^2 \de s' \de s + 4 \cdot \dist_\lambda(\Sigma_\ell^1,\Sigma_\ell^2)^2 \notag \\
     & \qquad + T m \Phi \int_0^t \napier^{-2\lambda (t-s)} \cdot \napier^{-2 \lambda s}\norm{\Gamma^1(s) - \Gamma^2(s)}_2^2 \notag \\
     & \qquad + T^2 m \Phi \int_0^t \napier^{-2\lambda (t-s)}\int_0^s \napier^{-2 \lambda s} \norm{R_\ell^1(s,s') - R_\ell^2(s,s')}_2^2 \de s' \de s \Bigg) \notag                                                                                                                       \\
     & \leq 6 \Bigg(\frac{M^2 T}{\lambda} \sup_{s \in [0,t]} \napier^{-2 \lambda s} \E \norm{\theta_1^s - \theta_2^s}_2^2 + \frac{T^3 \Phi^2}{2\lambda} \sup_{s \in [0,t]} \napier^{-2 \lambda s} \E \norm{\theta_1^s - \theta_2^s}_2^2 \notag                                         \\
     & \qquad + \frac{4}{\lambda} \cdot \lambda \dist_\lambda(\Sigma_\ell^1,\Sigma_\ell^2)^2 + \frac{T m \Phi}{2\lambda} \cdot \dist_\lambda(\Gamma^1,\Gamma^2)^2 + \frac{T^3 m \Phi}{2 \lambda} \cdot \dist_\lambda(R_\ell^1,R_\ell^2)^2 \Bigg) \notag                                  \\
     & \leq \frac{K}{\lambda} \sup_{s \in [0,t]} \napier^{-2 \lambda s} \E \norm{\theta_1^s - \theta_2^s}_2^2 + \frac{K}{\lambda} \cdot \dist_\lambda(Y^1,Y^2)^2 \,.
\end{align}
Since $K$ is independent of $\lambda$, we can take $\lambda > 2 K$ so that
\begin{align}
    \sup_{t \in [0,T]} \napier^{-2 \lambda t} \E \norm{\theta_1^t - \theta_2^t}_2^2 & \leq \frac{2 K}{\lambda} \cdot \dist_\lambda(Y^1,Y^2)^2 \,.
\end{align}
Thus, for any $\eps > 0$, we can take $\lambda$ large enough such that
\begin{align}
    \sup_{t \in [0,T]} \napier^{-2 \lambda t} \E \norm{\theta_1^t - \theta_2^t}_2^2 \leq \eps^2 \cdot \dist_\lambda(Y^1,Y^2)^2 \,. \label{eq:bound_theta_diff}
\end{align}

Let $\{(w_1^t,w_2^t) \}_{t \in [0,T]}$ be a centered Gaussian process with covariance $\E\ab[\pmat{\theta^t_1 \\ \theta^t_2} \pmat{\theta^t_1 \\ \theta^t_2}^\transpose]$.
Since $w_1$ and $w_2$ have covariance kernels $C_\theta^1$ and $C_\theta^2$ respectively, we have
\begin{align}
    \dist_\lambda(C_\theta^1,C_\theta^2) \leq \sup_{t \in [0,T]} \napier^{-\lambda t} \sqrt{\E \norm{w_1^t - w_2^t}_2^2} = \sup_{t \in [0,T]} \napier^{-\lambda t} \sqrt{\E\norm{\theta^t_1 - \theta^t_2}_2^2} \leq \eps \cdot \dist_\lambda(Y^1,Y^2) \,.
\end{align}

\paragraph{Bound of $\dist_\lambda(R_\theta^1,R_\theta^2)$.}
For $i=1,2$, let $\rho_{\theta,i}$ be the solution of
\begin{align}
    \rho_{\theta,i}^{t,t'} = R_\theta^i(t,t') - \int_{t'}^t \ab(\nabla_\theta h_s(\theta_i^s) + \Gamma^i(s)) \rho_{\theta,i}^{s,t'} + \int_{t'}^s R_\ell^i(s,s') \rho_{\theta,i}^{s',t'} \de s') \de s \,.
\end{align}
Thus, we have
\begin{align}
     & \napier^{-\lambda t} \norm{\rho_{\theta,1}^{t,t'} - \rho_{\theta,2}^{t,t'}}_2 \notag                                                                                                                                                                                            \\
     & \leq \napier^{-\lambda t} \int_0^t (\norm{\nabla_\theta h_s(\theta_1^s)}_2 + \norm{\Gamma^1(s)}_2) \norm{\rho_{\theta,1}^{s,t'} - \rho_{\theta,2}^{s,t'}}_2 \de s \notag \\
     & \qquad + \napier^{-\lambda t} \int_0^t \ab(\norm{\nabla_\theta h_s(\theta_1^s) - \nabla_\theta h_s(\theta_2^2)}_2 + \norm{\Gamma^1(s) - \Gamma^2(s)}_2) \norm{\rho_{\theta,2}^{s,t'}} \de s \notag                           \\
     & \qquad + \napier^{-\lambda t} \int_0^t \int_{t'}^s \ab(\norm{R_\ell^1(s,s')}_2 \norm{\rho_{\theta,1}^{s',t'} - \rho_{\theta,2}^{s',t'}}_2 + \norm{R_\ell^1(s,s') - R_\ell^2(s,s')}_2 \norm{\rho_{\theta,2}^{s',t'}}_2) \de s' \de s \notag                           \\
     & \leq \int_0^t \napier^{-\lambda (t-s)}\ab(2M \napier^{-\lambda s}\norm{\rho_{\theta,1}^{s,t'} - \rho_{\theta,2}^{s,t'}}_2 + \Phi \int_{t'}^s \napier^{-\lambda s'}\norm{\rho_{\theta,1}^{s',t'} - \rho_{\theta,2}^{s',t'}}_2 \de s') \de s \notag                                           \\
     & \qquad + \int_0^t \napier^{-\lambda (t-s)}\ab(M \napier^{-\lambda s} \norm{\theta_1^s - \theta_2^2}_2 \norm{\rho_{\theta,2}^{s,t'}}_2 + \napier^{-\lambda s}\norm{\Gamma^1(s) - \Gamma^2(s)}_2 \norm{\rho_{\theta,2}^{s',t'}}_2) \de s \notag \\
     & \qquad + \int_0^t \napier^{-\lambda (t-s)} \int_{t'}^s \napier^{-\lambda s} \norm{R_\ell^1(s,s') - R_\ell^2(s,s')}_2 \norm{\rho_{\theta,2}^{s',t'}}_2  \de s' \de s \,.
\end{align}
By \cref{eq:bound_rho_theta}, we have $\norm{\rho_{\theta,2}^{s',t'}}_2 \leq \Phi$. Thus, we have
\begin{align}
     & \napier^{-\lambda t} \E \norm{\rho_{\theta,1}^{t,t'} - \rho_{\theta,2}^{t,t'}}_2  \notag \\
     & \leq \int_0^t \napier^{-\lambda (t-s)}\ab(2M \napier^{-\lambda s}\norm{\rho_{\theta,1}^{s,t'} - \rho_{\theta,2}^{s,t'}}_2 + \Phi \int_{t'}^s \napier^{-\lambda s'}\norm{\rho_{\theta,1}^{s',t'} - \rho_{\theta,2}^{s',t'}}_2 \de s') \de s \notag                                           \\
     & \qquad + \Phi \int_0^t \napier^{-\lambda (t-s)}\ab(M \napier^{-\lambda s} \E \norm{\theta_1^s - \theta_2^2}_2 + \napier^{-\lambda s}\norm{\Gamma^1(s) - \Gamma^2(s)}_2 ) \de s \notag \\
     & \qquad + \Phi \int_0^t \napier^{-\lambda (t-s)} \int_{t'}^s \napier^{-\lambda s} \norm{R_\ell^1(s,s') - R_\ell^2(s,s')}_2 \de s' \de s \notag \\
     & \leq \ab(\int_0^t \napier^{-\lambda (t-s)} \de s) \Bigg((2M + \Phi T) \sup_{0 \leq t' \leq s \leq t} \napier^{-\lambda s} \norm{\rho_{\theta,1}^{s,t'} - \rho_{\theta,2}^{s,t'}}_2 + \Phi M \sup_{0 \leq s \leq t} \napier^{-\lambda s} \sqrt{\E\norm{\theta_1^s - \theta_2^s}_2} \notag \\
     & \qquad + \Phi \sup_{0 \leq s \leq t} \napier^{-\lambda s} \norm{\Gamma^1(s) - \Gamma^2(s)}_2  + \Phi T \sup_{0 \leq t' \leq s' \leq s \leq t} \napier^{-\lambda s} \norm{R_\ell^1(s,s') - R_\ell^2(s,s')}_2 \Bigg) \notag                                                                                                                                  \\
     & \leq \frac{K}{\lambda} \sup_{0 \leq t' \leq s \leq t} \napier^{-\lambda s} \norm{\rho_{\theta,1}^{s,t'} - \rho_{\theta,2}^{s,t'}}_2 + \frac{K}{\lambda} \cdot \dist_\lambda(Y^1,Y^2) \,.
\end{align}
Taking $\lambda$ large enough such that $\lambda > 2K$, we have
\begin{align}
    \sup_{0 \leq s \leq t \leq T} \napier^{-\lambda t} \norm{R_\theta^1(t,s) - R_\theta^2(t,s)}_2 \leq \sup_{0 \leq s \leq t \leq T} \napier^{-\lambda t} \E\norm{\rho_{\theta,1}^{t,s} - \rho_{\theta,2}^{t,s}}_2 \leq \frac{2K}{\lambda} \cdot \dist_\lambda(Y^1,Y^2) \,.
\end{align}
Thus, for any $\eps > 0$, we can take $\lambda$ large enough such that
\begin{align}
    \dist_\lambda(R_\theta^1,R_\theta^2) \leq \eps \cdot \dist_\lambda(Y^1,Y^2) \,.
\end{align}

Collecting the above bounds, we have that for any $\eps > 0$, we can take $\lambda$ large enough such that
\begin{align}
    \dist_\lambda(X^1,X^2) = \dist_\lambda(C_\theta^1,C_\theta^2) + \dist_\lambda(R_\theta^1,R_\theta^2) \leq \eps \cdot \dist_\lambda(Y^1,Y^2) \,.
\end{align}

\section{Proof of Theorem \ref{thm:dmft_sgf}}
\label{app:proof_dmft_sgf}

We follow the three-step strategy outlined in the main text, which we repeat here for clarity.
\begin{enumerate}
    \item We discretize the SGF \eqref{eq:sgf} with time step $\gamma > 0$ and analyze the discretization error (\cref{lem:sgf_discrete}).
    \item We apply the AMP theory to characterize the asymptotic behavior of the discretized SGF using a low-dimensional state evolution recursion (\cref{lem:dmft_sgf_discrete}).
    \item We take the continuous-time limit $\gamma \to 0$ and show that the state evolution converges to the unique solution of the DMFT equation $\mathfrak{S}$ (\cref{lem:dmft_discrete}).
\end{enumerate}

We consider the discretization of the SGF \eqref{eq:sgf} with time step $\gamma > 0$.
Let $\floor{t} \coloneqq \max\{k\gamma : k\gamma \leq t, k \in \naturals\}$.
We define $\btheta_\gamma^t$ and $\br_\gamma^t$ as the solution of the following equations:
\begin{align}
    \diff{}{t} \btheta_\gamma^{t} = - \ab(h_{\floor{t}}(\btheta_\gamma^{\floor{t}})  + \frac{1}{\delta} \bX^\transpose \ell_{\floor{t}}(\br_\gamma^{\floor{t}};\bz)) \de t + \sqrt{\frac{\tau}{\delta}} \sum_{i=1}^n \bx_i \ell_{\floor{t}}(r_{\gamma,i}^{\floor{t}};z_i)^\transpose \de B_i^t \,, \label{eq:sgf_disc_cont}
\end{align}
and $\br_\gamma^{t} = \bX \btheta_\gamma^{t}$ with initial condition $\btheta_\gamma^0 = \btheta^0$.
At discrete time points $t_1,t_2,\dots$ where we define $t_k = k\gamma$ for $k \in \naturals$, $\btheta_\gamma^{t_k}$ satisfies the following recursion:
\begin{align}
    \btheta_\gamma^{t_{k+1}} & = \btheta_\gamma^{t_k} - \gamma \ab(h_{t_k}(\btheta_\gamma^{t_k})  + \frac{1}{\delta} \bX^\transpose \ell_{t_k}(\br_\gamma^{t_k};\bz)) + \sqrt{\frac{\tau}{\delta}} \sum_{i=1}^n \bx_i \ell_{t_k}(r_{\gamma,i}^{t_k};z_i)^\transpose (B_i^{t_{k+1}} - B_i^{t_k}) \,.  \label{eq:sgf_disc}
\end{align}

First, we control the discretization error between the SGF \eqref{eq:sgf} and its time-discretized version \eqref{eq:sgf_disc}.
We prove it in \cref{app:proof_sgf_discrete}.

\begin{lemma} \label{lem:sgf_discrete}
    Under the assumptions of \cref{thm:dmft_sgf}, for any $T > 0$, there exists a constant $C > 0$ such that we have, almost surely over the randomness of $\bX,\bz,\btheta^0$,
    \begin{align}
        \limsup_{n,d\to\infty} \frac{1}{d} \E_{\bB} \ab[\sup_{0\leq t \leq T} \frobnorm{\btheta^t - \btheta_\gamma^t}^2] \leq C \gamma \,, \quad \limsup_{n,d\to\infty} \frac{1}{n} \E_{\bB} \ab[\sup_{0\leq t \leq T} \frobnorm{\br^t - \br_\gamma^t}^2] \leq C \gamma \,.
    \end{align}
    Furthermore, for any $L \in \naturals$ and $0 \leq t_1,\dots,t_L \leq T$, we have, almost surely,
    \begin{align}
        \lim_{\gamma \to 0} \limsup_{n,d\to\infty} \E_{\bB} W_2\ab(\hat\sfP(\btheta^{t_1},\dots,\btheta^{t_L}),\hat\sfP(\btheta_\gamma^{t_1},\dots,\btheta_\gamma^{t_L}))^2 & = 0 \,, \\
        \lim_{\gamma \to 0} \limsup_{n,d\to\infty} \E_{\bB} W_2\ab(\hat\sfP(\br^{t_1},\dots,\br^{t_L},\bz),\hat\sfP(\br_\gamma^{t_1},\dots,\br_\gamma^{t_L},\bz))^2             & = 0 \,.
    \end{align}
\end{lemma}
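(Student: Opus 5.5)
We follow the standard strong-error analysis of the Euler--Maruyama scheme \citep[Section 10]{kloeden1992numerical}, but keep every constant dimension-free by working with the normalized Frobenius norms $\frac1d\frobnorm{\cdot}^2$ and $\frac1n\frobnorm{\cdot}^2$. The only random-matrix input we use is that, almost surely, $\limsup_{n,d\to\infty}\norm{\bX}_2 \le C_X$ for a constant $C_X$. This follows from the sub-Gaussian entries and $n/d\to\delta$ in \cref{ass:data}. We also use that $\frac1d\frobnorm{\btheta^0}^2$ and $\frac1n\norm{\bz}_2^{2p}$-type moments stay bounded, by the Wasserstein convergence in \cref{ass:data}. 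Throughout we condition on $\bX,\bz,\btheta^0$ on this full-measure event and take expectations only over $\bB$.

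\textbf{Step 1 (a priori moment bounds).} Set $\be^t \coloneqq \btheta^t - \btheta^t_\gamma$. First we show that $\frac1d\E_\bB\sup_{t\le T}\frobnorm{\btheta^t}^2$ and the analogous quantity for $\btheta_\gamma$ are bounded by a constant independent of $n,d,\gamma$.

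\begin{itemize}
\item The drift satisfies $\frobnorm{h_t(\btheta)}^2\le 2M^2(d+\frobnorm{\btheta}^2)$ and $\frobnorm{\bX^\transpose\ell_t(\br;\bz)}^2\le C_X^2\frobnorm{\ell_t(\br;\bz)}^2 \le C(n+\norm{\bz}^{2p}_{2p}+\frobnorm{\br}^2)$, using the growth and Lipschitz assumptions of \cref{ass:func}. For Lipschitz $\ell$ it suffices to use $\norm{\ell_t(r;z)}\le \norm{\ell_t(0;z)}+M\norm{r}$.
\item The diffusion has quadratic variation $\frac{\tau}{\delta}\sum_i\norm{\bx_i}_2^2\norm{\ell_t(r_i;z_i)}_2^2$. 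Since $\max_i\norm{\bx_i}_2^2\le 2$ eventually almost surely (sub-Gaussian concentration plus a union bound), this is at most $C(n+\frobnorm{\br}^2)$ up to the $\bz$ terms.
\item Combining Burkholder--Davis--Gundy with $\frobnorm{\br}\le C_X\frobnorm{\btheta}$, and then Gr\"onwall, gives the bound.
\end{itemize}

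The same estimate yields the one-step increment bound
\[
\frac1d\E_\bB\frobnorm{\btheta_\gamma^t-\btheta_\gamma^{\floor t}}^2\le C\gamma .
\]
The $\gamma$ (rather than $\gamma^2$) order comes from the stochastic term.

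\textbf{Step 2 (error recursion).} Subtract \eqref{eq:sgf_disc_cont} from \eqref{eq:sgf}. Decompose each coefficient difference, for example $h_t(\btheta^t)-h_{\floor t}(\btheta_\gamma^{\floor t})$, into three pieces:
\begin{itemize}
\item $h_t(\btheta^t)-h_t(\btheta^t_\gamma)$, which is Lipschitz in $\be^t$;
\item $h_t(\btheta_\gamma^t)-h_t(\btheta_\gamma^{\floor t})$, controlled by the increment bound of Step 1;
\item $h_t-h_{\floor t}$, which is $O(\gamma)$ per row by time-Lipschitzness.
\end{itemize}
Treat $\ell$ identically, with $\br$-differences controlled by $C_X$ times $\btheta$-differences. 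The martingale term is handled by BDG together with the bound $\max_i\norm{\bx_i}^2\le 2$. Altogether we obtain
\[
\frac1d\E_\bB\sup_{s\le t}\frobnorm{\be^s}^2\le C\gamma + C\int_0^t\frac1d\E_\bB\sup_{u\le s}\frobnorm{\be^u}^2\de s ,
\]
and Gr\"onwall gives the first claim. The claim for $\br$ follows from $\frobnorm{\br^t-\br^t_\gamma}\le\norm{\bX}_2\frobnorm{\be^t}$ and $n/d\to\delta$.

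\textbf{Step 3 (Wasserstein statement).} Couple the two empirical measures through the identity pairing of rows. This gives
\[
W_2\bigl(\hat\sfP(\btheta^{t_1},\dots,\btheta^{t_L}),\hat\sfP(\btheta_\gamma^{t_1},\dots,\btheta_\gamma^{t_L})\bigr)^2\le\frac1d\sum_l\frobnorm{\btheta^{t_l}-\btheta_\gamma^{t_l}}^2 ,
\]
whose $\E_\bB$ is at most $LC\gamma$ by Step 2. Letting $\gamma\to0$ proves the claim. The $(\br,\bz)$ version is identical, since $\bz$ is shared by both measures.

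\textbf{Main obstacle.} The main obstacle is keeping all constants uniform in $n,d$. The delicate point is the martingale term, because its quadratic variation involves the row norms $\norm{\bx_i}_2$ individually rather than only through $\norm{\bX}_2$. I would handle this with the almost-sure bound on $\max_i\norm{\bx_i}^2$. If that bound is too crude, the fallback is to write the quadratic variation as $\frac{\tau}{\delta}\sum_i\norm{\bx_i}^2\norm{\Delta\ell_i}^2$ and bound it by $\max_i\norm{\bx_i}^2\frobnorm{\Delta\ell}^2$, which only requires $\max_i\norm{\bx_i}^2=O(1)$. A second point of care is the polynomial growth in $z$: it only enters through bounded empirical moments of $\bz$, which \cref{ass:data} guarantees.
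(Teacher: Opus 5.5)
Your proposal is correct and follows essentially the same route as the paper. The paper checks dimension-free Lipschitz and linear-growth bounds (\cref{ass:sde_discretize}) using $\norm{\bX}_2\le C$ and the empirical moments of $\bz$, applies a dimension-explicit Euler--Maruyama strong-error estimate (\cref{lem:sde_discretize_error}, proved via It\^o's formula, Burkholder--Davis--Gundy, Young and Gr\"onwall), and concludes with the identity coupling of rows. One simplification: the row-norm control you flag as the main obstacle comes for free, since $\norm{\bx_i}_2\le\norm{\bX}_2$ for every row, so you need no separate concentration or union bound.
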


Next, we relate the discretized SGF to the discretized DMFT equation $\mathfrak{S}^\gamma$ defined in \cref{app:dmft_disc}.
The following lemma shows that the unique solution of $\mathfrak{S}^\gamma$ characterizes the asymptotic behavior of the discretized SGF \eqref{eq:sgf_disc}.
We prove it in \cref{app:proof_dmft_sgf_discrete}.

\begin{lemma} \label{lem:dmft_sgf_discrete}
    Under the assumptions of \cref{thm:dmft_sgf}, for any $T > 0,L\in\naturals$ and $0\leq t_1<\dots<t_L\leq T$, we have
    \begin{align}
        \plim_{n,d\to\infty} W_2\ab(\hat\sfP(\btheta_{\gamma}^{t_1},\dots,\btheta_{\gamma}^{t_L}),\sfP(\theta_{\gamma}^{t_1},\dots,\theta_{\gamma}^{t_L})) & = 0 \,, \\
        \plim_{n,d\to\infty} W_2\ab(\hat\sfP(\br_{\gamma}^{t_1},\dots,\br_{\gamma}^{t_L},\bz),\sfP(r_{\gamma}^{t_1},\dots,r_{\gamma}^{t_L},z))             & = 0 \,.
    \end{align}
\end{lemma}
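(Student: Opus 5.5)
The plan is to write the discretized SGF \eqref{eq:sgf_disc} as an AMP iteration with non-separable but ``row-wise'' nonlinearities, in the framework of \citet{wang2024universality} (as in \citet{celentano2021highdimensional}). Set $G_i^k \coloneqq (B_i^{t_{k+1}}-B_i^{t_k})/\sqrt{\gamma}$, which are i.i.d.\ $\normal(0,1)$ and independent of $(\bX,\bz,\btheta^0)$. Each step then reads
\begin{align}
    \btheta_\gamma^{t_{k+1}} = \btheta_\gamma^{t_k} - \gamma h_{t_k}(\btheta_\gamma^{t_k}) - \frac{1}{\delta}\bX^\transpose \bm{\ell}^k ,
\end{align}
where $\bm{\ell}^k$ has rows $\ell_{t_k}(r_{\gamma,i}^{t_k};z_i)(\gamma + \sqrt{\tau\delta\gamma}\,G_i^k)$. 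Thus the iteration alternates multiplication by $\bX$ and by $\bX^\transpose$. The side information on the $n$-side is $(z_i, G_i^0,\dots,G_i^{K})$ and on the $d$-side is $\theta_i^0$. I would introduce the standard lifted iterates: keep the whole history $(\btheta^{t_0},\dots,\btheta^{t_k})$ and $(\br^{t_0},\dots,\br^{t_k})$, and subtract the Onsager memory terms. This makes the recursion a genuine AMP whose state evolution is a Gaussian recursion.

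Step 1 is the truncation. Since the $G_i^k$ are unbounded, the Lipschitz and moment conditions of the AMP universality theorem do not apply directly. I would replace $G_i^k$ by $G_i^{k,A}\coloneqq \max(-A,\min(A,G_i^k))$, or better by a smooth bounded truncation. The resulting $n$-side nonlinearities are then Lipschitz in $r$ uniformly, and they are polynomially bounded in $z$. The effect of truncation is controlled by a Gronwall-type estimate over the finitely many steps $k\le T/\gamma$, using $\norm{\bX}_2=O(1)$ almost surely:
\begin{align}
    \frac1d\frobnorm{\btheta_\gamma^{t_k}-\btheta_{\gamma,A}^{t_k}}^2 \lesssim C(\gamma,T)\,\frac1n\sum_{i,j<k}\ab(1+\norm{r_i}^{2}+|z_i|^{2p})\,|G_i^j-G_i^{j,A}|^2 .
\end{align}
Its expectation over $G$ vanishes as $A\to\infty$, uniformly in $n$, by Cauchy--Schwarz and moment bounds. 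The same truncation is applied on the DMFT side, where $\mathfrak{S}^\gamma$ with truncated increments converges to $\mathfrak{S}^\gamma$ as $A\to\infty$ by dominated convergence. Since $W_2$ is continuous under these $L^2$ perturbations, it suffices to prove the lemma for fixed $A$ and then send $A\to\infty$.

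Step 2 applies AMP universality and identifies the limit. For fixed $A$, the universality and state evolution result of \citet{wang2024universality} gives convergence of the joint empirical distributions to the state evolution law in $W_2$, in probability. The state evolution produces Gaussian vectors $(w^{t_i})$ with covariance $C_\theta^\gamma$ and $(U^{t_i})$ with covariance $\Sigma_\ell^\gamma/\delta$, together with Onsager coefficients. These coefficients are expectations of Jacobians: of the $d$-side map with respect to the previous $U$-inputs, which gives $R_\theta^\gamma$ via $\rho_{\theta,\gamma}$, and of the $n$-side map with respect to $w$-inputs, which gives $\E[\partial_{w}\ell\cdot(\gamma+\sqrt{\tau\delta\gamma}G)]$. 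Unfolding the lifted iterates should reproduce \eqref{eq:dmft_disc_theta} and \eqref{eq:dmft_disc_r}. The $n$-side Onsager term is exactly $\gamma^{-1}\E[\partial \ell_{t_i}/\partial w^{t_j}]$ from the correspondence subsection. Applying Stein's lemma in $G^j$, with the computation carried out there, converts it into $\E[\rho_{\ell,\gamma}]+\sqrt{\tau\delta}\E[D_{\ell,\gamma}]$, that is, $R_\ell^\gamma$. The $\Gamma^\gamma$ term arises from the diagonal (same-time) part of the Onsager correction. Because truncation breaks exact Gaussianity, I would either use a smooth truncation and carry the Stein identity to the limit $A\to\infty$, or identify the limit only after $A\to\infty$.

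I expect the main obstacle to be the bookkeeping in Step 2: matching the AMP state evolution, with its lifted memory variables, exactly to $\mathfrak{S}^\gamma$, including the $\Gamma^\gamma\theta$ term and the $D_\ell$ correction. A second difficulty is making the truncation limit uniform, so that the limits $n\to\infty$ and $A\to\infty$ can be interchanged. Finally, the existence of a unique solution of $\mathfrak{S}^\gamma$ is immediate because the recursion is explicit in $i$.
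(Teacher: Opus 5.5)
Your proposal takes essentially the same route as the paper. The paper clips the increments to $[\bG^k]_M$ and controls the clipping error as $M\to\infty$ uniformly in $n,d$. It then maps the clipped recursion to an AMP with side information $(\bz,[\bG^0]_M,\dots)$ and $\btheta^0$, and applies \citet[Theorem 2.21]{wang2024universality}. Finally, it identifies the state evolution with $\mathfrak{S}^\gamma$, after sending the clipping level to infinity, via $\zeta_{i,j}/\gamma=R_\theta^\gamma$, $\xi_{i,j}/\gamma=R_\ell^\gamma$, $\xi_{i,i}=\Gamma^\gamma$ and Stein's lemma in $\bar G^j$.

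One technical point. The truncation error has to be propagated in $\E_{\bB}$, using that $\bG^k$ is independent of $(\br^{t_k},\check\br^{t_k})$, as the paper does in \cref{lem:sgf_discrete_clip}. A pathwise Gr\"onwall bound like the one you display would carry constants depending on the clipping level or on $\max_i|G_i^k|$. These come from the term $[\bG^k]_M\odot(\ell_{t_k}(\br^{t_k};\bz)-\ell_{t_k}(\check\br^{t_k};\bz))$.
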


Finally, we establish the convergence of the discretized DMFT equation $\mathfrak{S}^\gamma$ to the original DMFT equation $\mathfrak{S}$ as $\gamma \to 0$.
We prove it in \cref{app:proof_dmft_discrete}.

\begin{lemma} \label{lem:dmft_discrete}
    Under the assumptions of \cref{thm:dmft_sgf}, for any $T > 0,L\in\naturals$ and $t_1<\dots<t_L\in[0,T]$, we have
    \begin{align}
        \lim_{\gamma \to 0} W_2\ab(\sfP(\theta_{\gamma}^{t_1},\dots,\theta_{\gamma}^{t_L}),\sfP(\theta^{t_1},\dots,\theta^{t_L})) & = 0 \,, \\
        \lim_{\gamma \to 0} W_2\ab(\sfP(r_{\gamma}^{t_1},\dots,r_{\gamma}^{t_L},z),\sfP(r^{t_1},\dots,r^{t_L},z))                 & = 0 \,.
    \end{align}
\end{lemma}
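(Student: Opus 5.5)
The plan is to prove \cref{lem:dmft_discrete} by a stability argument built on the same contraction framework used for \cref{thm:dmft_sol}. The discretized system $\mathfrak{S}^\gamma$ is not itself of the continuous form. So the first step is to embed it into continuous time via piecewise-constant interpolation. Concretely, I set $C_\theta^\gamma(t,t') \coloneqq C_\theta^\gamma(\floor{t},\floor{t'})$ and do the same for $R_\theta^\gamma,\Sigma_\ell^\gamma,R_\ell^\gamma,\Gamma^\gamma$. I then verify that the discrete recursions \eqref{eq:dmft_disc_process} coincide with the continuous-time equations \eqref{eq:theta}--\eqref{eq:D_r}, evaluated at grid points, but with all integrands frozen at $\floor{s}$. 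This is exactly why \cref{def:admissible_theta,def:admissible_ell} allow a finite discontinuity set $D$: the interpolated kernels lie in $\calS_\theta(T)$ and $\calS_\ell(T)$ with $D = \gamma\naturals \cap (0,T)$. The a priori bounds of \cref{app:proof_mapping} go through for the frozen system with constants independent of $\gamma$, since every estimate used (\cref{lem:bound_int}, Gr\"onwall) is insensitive to freezing.

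Next I introduce the frozen map $\calT^\gamma = \calT^\gamma_{\ell\to\theta}\circ\calT^\gamma_{\theta\to\ell}$. In this map the processes are defined by the continuous equations but with $h,\ell,\nabla\ell,\nabla^2\ell$ and the stochastic integrands evaluated at $\floor{s}$, and with the Gaussian inputs $U,w$ sampled on the grid and interpolated. Its fixed point is the (interpolated) solution of $\mathfrak{S}^\gamma$. Repeating the proofs of \cref{lem:modulus_theta_ell,lem:modulus_ell_theta} verbatim shows that $\calT^\gamma$ is a contraction in $\dist_\lambda$ with a constant $\kappa<1$ that is uniform in $\gamma$. The standard fixed-point perturbation estimate then gives
\begin{align}
    \dist_\lambda(X^\gamma, X) \leq \frac{1}{1-\kappa}\,\dist_\lambda(\calT^\gamma(X),\calT(X)) \,,
\end{align}
where $X$ is the fixed point of $\mathfrak{S}$ from \cref{thm:dmft_sol} and $X^\gamma$ is that of $\calT^\gamma$. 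It therefore suffices to show $\dist_\lambda(\calT^\gamma(X),\calT(X))\to 0$, that is, consistency of one application of the map at the true solution.

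The consistency step is where the real work lies, and I expect it to be the main obstacle. Feeding the same continuous kernels $X$ into both maps, I couple the two systems using the same $w$, $z$ and Brownian motion $B$, with the discrete increments $B^{t_{k+1}}-B^{t_k}$ taken from the same path. I then bound $\E\norm{r^t - r_\gamma^t}_2^2$, $\E\norm{\rho_\ell - \rho_{\ell,\gamma}}^2$, $\E\norm{D_\ell-D_{\ell,\gamma}}^2$ and $\E\norm{L^t-L^t_\gamma}^2$ by Euler--Maruyama-type estimates. Each error is a sum of freezing terms such as $\ell_s(r^s)-\ell_{\floor{s}}(r^{\floor{s}})$, which are controlled by Lipschitzness in $(t,r)$ together with the mean-square H\"older-$1/2$ continuity of $r^t$, and of kernel terms such as $R_\theta(t,s)-R_\theta(t,\floor{s})$, which vanish by uniform continuity of the kernels in $\calScont$. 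After Gr\"onwall these give $o(1)$ as $\gamma\to0$. The delicate terms are those in $D_\ell$ and $D_r$, which involve products of $\nabla_r^2\ell$ with two processes. I would handle them by Cauchy--Schwarz with the uniform fourth/eighth moment bounds from \cref{app:proof_mapping}, plus continuity of $\nabla_r^2\ell$. The $\theta$-side consistency is analogous but deterministic given $U$. For $U$, I couple the grid-sampled interpolated process with $U$ itself, using the modulus bound \eqref{eq:bound_Sigma_l}, which gives an $O(\sqrt\gamma)$ error.

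Finally, I transfer the kernel convergence to laws. Once $\dist_\lambda(X^\gamma,X)\to0$ and the corresponding $Y$-distance also tends to zero, I run the coupling once more, this time between the processes driven by $(X^\gamma,Y^\gamma)$ and by $(X,Y)$. This yields $\E\norm{\theta_\gamma^{t_l}-\theta^{t_l}}^2\to0$ and $\E\norm{r_\gamma^{t_l}-r^{t_l}}^2\to0$ for each fixed $t_l$. Since $W_2$ between joint laws is bounded by the $L^2$ norm of any coupling, the stated convergence follows for every finite family $t_1<\dots<t_L$ (with $z$ shared, for the second claim).
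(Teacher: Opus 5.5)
Your proposal is correct, but it runs the fixed-point perturbation in the opposite direction from the paper.

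\textbf{Your route.} You use a $\gamma$-uniform contraction of the frozen map $\calT^\gamma$ and prove consistency at the continuous fixed point, $\dist_\lambda(X^\gamma,X)\le(1-\kappa)^{-1}\dist_\lambda(\calT^\gamma(X),\calT(X))$.

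\textbf{The paper's route.} It writes $\dist_\lambda(X,X^\gamma)\le\dist_\lambda(\calT(X),\calT(X^\gamma))+\dist_\lambda(\calT(X^\gamma),\calT^\gamma(X^\gamma))$. It then uses only the contraction of the \emph{continuous} map $\calT$ already established for \cref{thm:dmft_sol}, and proves consistency at the \emph{discrete} fixed point (\cref{lem:dmft_disc_dist}). This is the real reason the admissible spaces allow a finite discontinuity set $D$: $\calT$ has to be applied to the piecewise-constant $X^\gamma$.

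\textbf{What the paper's ordering buys.} It never re-derives the modulus estimates for frozen maps. Also, both maps receive the same piecewise-constant kernels, so the consistency error consists only of Euler--Maruyama freezing terms: $\ell_s(\bar r^s)$ versus $\ell_{\floor{s}}(r^s)$, and integrals truncated at $\floor{t}$. This gives an explicit $O(\sqrt\gamma)$ rate without invoking any modulus of continuity of the limiting kernels.

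\textbf{What your ordering costs.} You need the extra check that the proofs of \cref{lem:modulus_theta_ell,lem:modulus_ell_theta} survive freezing. They do, because $\napier^{-\lambda t}\le\napier^{-\lambda\floor{t}}$ keeps every $\int_0^t\napier^{-2\lambda(t-s)}\de s\le 1/(2\lambda)$ estimate intact. In addition, your consistency step picks up kernel-sampling errors such as $R_\theta(t,s)-R_\theta(t,\floor{s})$ and $\Gamma(t)-\Gamma(\floor{t})$. These are controlled only by uniform continuity, so you get $o(1)$ rather than a rate unless you separately establish H\"older or Lipschitz regularity of the fixed-point kernels. That is enough for the lemma as stated.

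\textbf{What your ordering buys.} $\calT$ is only ever evaluated at continuous kernels. Your explicit demand for $\gamma$-uniform a priori bounds on the frozen system also makes visible something the paper uses only tacitly. The contraction constant of $\calT$ on a space containing $X^\gamma$ depends on the bounds $\Phi$, and these must be uniform in $\gamma$; the paper only says the spaces can be taken large enough.

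Your final step, coupling the processes once the kernels are close and bounding $W_2$ by the $L^2$ coupling error, is the same as the paper's \cref{lem:dmft_process_coupling}.
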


We are now ready to prove \cref{thm:dmft_sgf}.

\begin{proof}[Proof of \cref{thm:dmft_sgf}]
    We prove for $\btheta^t$; the proof for $\br^t$ is similar.

    For any $0 \leq t_1 < \dots < t_L \leq T$ and $\gamma > 0$, by the triangle inequality, we have
    \begin{align}
        E_{d} \coloneqq W_2\ab(\hat\sfP(\btheta^{t_1},\dots,\btheta^{t_L}),\sfP(\theta^{t_1},\dots,\theta^{t_L})) \leq E_{d,\gamma}^{(1)} + E_{d,\gamma}^{(2)} + E_\gamma^{(3)} \,,
    \end{align}
    where we defined
    \begin{align}
        E_{d,\gamma}^{(1)} & \coloneqq W_2\ab(\hat\sfP(\btheta^{t_1},\dots,\btheta^{t_L}),\hat\sfP(\btheta_{\gamma}^{t_1},\dots,\btheta_{\gamma}^{t_L})) \,, \\
        E_{d,\gamma}^{(2)} & \coloneqq W_2\ab(\hat\sfP(\btheta_{\gamma}^{t_1},\dots,\btheta_{\gamma}^{t_L}),\sfP(\theta_{\gamma}^{t_1},\dots,\theta_{\gamma}^{t_L})) \,,               \\
        E_\gamma^{(3)}     & \coloneqq W_2\ab(\sfP(\theta_{\gamma}^{t_1},\dots,\theta_{\gamma}^{t_L}),\sfP(\theta^{t_1},\dots,\theta^{t_L})) \,.
    \end{align}
    By the union bound, we have, for any $\eps > 0$,
    \begin{align}
        \prob\{E_d \geq \eps\} \leq \prob\{E_{d,\gamma}^{(1)} \geq \eps/3\} + \prob\{E_{d,\gamma}^{(2)} \geq \eps/3\} + \prob\{E_\gamma^{(3)} \geq \eps/3\} \,.
    \end{align}
    Taking the limit $n,d \to \infty$ and applying \cref{lem:dmft_sgf_discrete}, the second term vanishes. Furthermore, by Markov's inequality, we have
    \begin{align}
        \limsup_{n,d \to \infty} \prob\{E_d \geq \eps\} \leq \frac{9}{\eps^2} \limsup_{n,d\to\infty} \E [(E_{d,\gamma}^{(1)})^2] + \prob\{E_\gamma^{(3)} \geq \eps/3\} \,.
    \end{align}
    Since the left-hand side does not depend on $\gamma$, we can take the limit $\gamma \to 0$ and apply \cref{lem:sgf_discrete,lem:dmft_discrete} to obtain
    \begin{align}
        \lim_{n,d \to \infty} \prob\{E_{d} \geq \eps \} = 0 \,,
    \end{align}
    and thus $E_d \to 0$ in probability as $n,d \to \infty$.
\end{proof}

\subsection{Proof of Lemma \ref{lem:sgf_discrete}} \label{app:proof_sgf_discrete}

In the following, $C$ denotes a constant independent of $n,d,\gamma$, which may change from line to line.

We utilize the general results in \cref{app:sde_discretize}.
We check that the SDE \eqref{eq:sgf} satisfies \cref{ass:sde_discretize}.

\paragraph{Drift term.}
For the drift coefficient $\bb(t,\btheta) = h_t(\btheta) + \frac{1}{\delta} \bX^\transpose \ell_{t}(\bX \btheta;\bz)$, we use the Lipschitz continuity of $h$ and $\ell$ and that $\norm{\bX}_2 \leq C$ and $\frobnorm{\ell_t(0;\bz)} \leq C n$ almost surely for sufficiently large $n,d$ by assumption to obtain
\begin{align}
    \frobnorm{\bb(t,\theta)}^2 & \leq 2\frobnorm{h_t(\btheta)}^2 + \frac{2}{\delta^2} \norm{\bX}_2^2 \frobnorm{\ell_t(\bX\btheta;\bz)}^2 \notag \\
    & \leq C (\abs{h_t(0)} d + \frobnorm{\btheta}^2) + C (\frobnorm{\ell_t(0;\bz)}^2 + \frobnorm{\bX\btheta}^2) \notag \\
    & \leq C (d + \frobnorm{\btheta}^2) \,, \\
    \frobnorm{\bb(t_1,\btheta_1)-\bb(t_2,\btheta_2)}^2 & \leq 2\frobnorm{h_{t_1}(\btheta_1)-h_{t_2}(\btheta_2)}^2 + \frac{2}{\delta^2} \norm{\bX}^2 \frobnorm{\ell_{t_1}(\bX \btheta_1; \bz) - \ell_{t_2}(\bX \btheta_2; \bz)}^2 \notag \\
                                                        & \leq C (d \abs{t_1 - t_2}^2 + \frobnorm{\btheta_1-\btheta_2}^2) + C (n \abs{t_1-t_2}^2 + \frobnorm{\bX (\btheta_1 - \btheta_2)}^2) \notag                                        \\
                                                        & \leq C (d\abs{t_1-t_2}^2 + \frobnorm{\btheta_1 - \btheta_2}^2) \,.
\end{align}

\paragraph{Diffusion term.}
For the diffusion coefficient $\bsigma_i(t,\btheta) = \sqrt{\tau/\delta} \bx_i \ell_t(\btheta^\transpose \bx_i;z_i)^\transpose$, we proceed similarly as above to obtain
\begin{align}
    \sum_{i=1}^n \frobnorm{\bsigma_i(t,\btheta)}^2 & = \frac{\tau}{\delta} \sum_{i=1}^n \norm{\bx_i}_2^2 \norm{\ell_t(\btheta^\transpose \bx_i;z_i)}_2^2 \leq C (n + \frobnorm{\bX \btheta}^2) \notag \\
    & \leq C (d + \frobnorm{\btheta}^2) \,, \\
    \sum_{i=1}^n \frobnorm{\bsigma_i(t_1,\btheta_1) - \bsigma_i(t_2,\btheta_2)}^2 & = \frac{\tau}{\delta} \sum_{i=1}^n \norm{\bx_i}_2^2 \norm{\ell_{t_1}(\btheta_1^\transpose \bx_i;z_i) - \ell_{t_2}(\btheta_2^\transpose \bx_i;z_i)}_2^2 \notag \\
                                                                                    & \leq C (d\abs{t_1-t_2}^2 + \frobnorm{\btheta_1 - \btheta_2}^2) \,.
\end{align}

Therefore, by \cref{lem:sde_discretize_error}, we have that, for sufficiently large $d$,
\begin{align}
    \E\ab[\sup_{0\leq t \leq T} \frobnorm{\btheta^t - \btheta_\gamma^{t}}^2] \leq C \gamma (d + \frobnorm{\btheta^0}^2) \,.
\end{align}
By \cref{ass:data}, $\frobnorm{\btheta^0}^2 < Cd$ holds almost surely for sufficiently large $d$.
Therefore, there exists a constant $C$ independent of $n,d,\gamma$ such that
\begin{align}
    \limsup_{n,d\to\infty} \frac{1}{d} \E_{\bB} \ab[\sup_{0\leq t \leq T} \frobnorm{\btheta^t - \btheta_\gamma^{t}}^2] \leq C \gamma \,.
\end{align}
This shows the first claim. Furthermore, we have
\begin{align}
    \E_{\bB} W_2\ab(\hat\sfP(\btheta^{t_1},\dots,\btheta^{t_L}),\hat\sfP(\btheta_{\gamma}^{t_1},\dots,\btheta_{\gamma}^{t_L}))^2 & \leq \E_{\bB} \ab[\frac{1}{d} \sum_{l=1}^L \frobnorm{\btheta^{t_l} - \btheta^{t_l}_\gamma}^2] \notag \\
    & \leq \frac{L}{d} \E_{\bB} \ab[\sup_{0 \leq t \leq T} \frobnorm{\btheta^t - \btheta_\gamma^t}^2] \,.
\end{align}
Taking the limit $n,d \to \infty$ followed by $\gamma \to 0$ shows the second claim.
The claim for $\br^t$ follows from
\begin{align}
    \frobnorm{\br^t - \br_\gamma^t} = \frobnorm{\bX(\btheta^t - \btheta^t_\gamma)} \leq \norm{\bX}_2 \frobnorm{\btheta^t - \btheta^t_\gamma} \,.
\end{align}

\subsection{Proof of Lemma \ref{lem:dmft_sgf_discrete}} \label{app:proof_dmft_sgf_discrete}

\subsubsection{Reduction to AMP}

For notational simplicity, we omit the subscript $\gamma$ and denote $\btheta_\gamma^t,\br_\gamma^t$ by $\btheta^t,\br^t$. As we only work in discrete time, there is no risk of confusion.

Let $\bG^k = (\bB^{t_{k+1}} - \bB^{t_k})/\sqrt{\gamma} \sim \normal(0,\bI_n)$. Then the recursion \eqref{eq:sgf_disc} can be rewritten as
\begin{align}
    \btheta^{t_{k+1}} = \btheta^{t_k} - \gamma h_{t_k}(\btheta^{t_k}) - \frac{\gamma}{\delta} \bX^\transpose ((\bone_n + \sqrt{\tau\delta/\gamma}\bG^k) \odot \ell_{t_k}(\br^{t_k};\bz)) \,, \quad \br^{t_k} = \bX \btheta^{t_k} \,. \label{eq:sgf_disc_rewrite}
\end{align}
Let $M > 0$ be a constant and let $[\cdot]_M \colon x \mapsto \max\{-M,\min\{x,M\}\}$ be the clipping function.
We clip the Gaussian vector in \eqref{eq:sgf_disc_rewrite} entry-wise as
\begin{align}
    \check \btheta^{t_{k+1}} = \check \btheta^{t_k} - \gamma h_{t_k}(\check \btheta^{t_k}) - \frac{\gamma}{\delta} \bX^\transpose ((\bone_n + \sqrt{\tau\delta/\gamma}[\bG^k]_M) \odot \ell_{t_k}(\check \br^{t_k};\bz)) \,, \quad \check \br^{t_k} = \bX \check \btheta^{t_k} \,. \label{eq:sgf_disc_rewrite_clip}
\end{align}

We first control the difference between \eqref{eq:sgf_disc_rewrite} and \eqref{eq:sgf_disc_rewrite_clip}.
\begin{lemma} \label{lem:sgf_discrete_clip}
    We have, almost surely over the randomness of $\bX,\bz,\btheta^0$,
    \begin{align}
        \lim_{M \to \infty} \limsup_{n,d\to\infty} \frac{1}{d} \E_{\bB}\ab[\max_{0 \leq k \leq T/\gamma} \frobnorm{\btheta^{t_k} - \check\btheta^{t_k}}^2] = 0 \,,
    \end{align}
    and thus
    \begin{align}
        \lim_{M \to \infty} \limsup_{n,d\to\infty} \E_{\bB} W_2\ab(\hat\sfP(\btheta^{t_1},\dots,\btheta^{t_L}),\hat\sfP(\check\btheta^{t_1},\dots,\check\btheta^{t_L}))^2 & = 0 \,, \\
        \lim_{M \to \infty} \limsup_{n,d\to\infty} \E_{\bB} W_2\ab(\hat\sfP(\br^{t_1},\dots,\br^{t_L},\bz),\hat\sfP(\check\br^{t_1},\dots,\check\br^{t_L},\bz))^2           & = 0 \,,
    \end{align}
\end{lemma}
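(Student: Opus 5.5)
We will prove \cref{lem:sgf_discrete_clip} with a discrete Gr\"onwall argument on the difference $\bDelta^k \coloneqq \btheta^{t_k} - \check\btheta^{t_k}$, working conditionally on $\bX,\bz,\btheta^0$. We place ourselves on the almost-sure event where, for large $n,d$, $\norm{\bX}_2 \leq C$, $\max_i \norm{\bx_i}_2 \leq C$, $\frobnorm{\btheta^0}^2 \leq Cd$, and $\frobnorm{\ell_t(0;\bz)}^2 \leq Cn$ together with its higher empirical moments. These bounds follow from \cref{ass:data} and standard sub-Gaussian matrix bounds.

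Write $\bxi^k \coloneqq \bG^k - [\bG^k]_M$, which has i.i.d.\ entries independent of $\calF_{t_k}$. Subtracting \eqref{eq:sgf_disc_rewrite_clip} from \eqref{eq:sgf_disc_rewrite} gives
\begin{align}
    \bDelta^{k+1} = \bDelta^k - \gamma\ab(h_{t_k}(\btheta^{t_k}) - h_{t_k}(\check\btheta^{t_k})) - \frac{\gamma}{\delta}\bX^\transpose\ab((\bone_n + \sqrt{\tau\delta/\gamma}[\bG^k]_M)\odot(\ell_{t_k}(\br^{t_k};\bz) - \ell_{t_k}(\check\br^{t_k};\bz))) - \frac{\sqrt{\tau\gamma}}{\sqrt{\delta}}\bX^\transpose\ab(\bxi^k \odot \ell_{t_k}(\br^{t_k};\bz)).
\end{align}
We square and take $\E_{\bB}$. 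The first three terms are handled by Lipschitz continuity and $\norm{\bX}_2 \leq C$. For the random factor $[\bG^k]_M$, we use that it is independent of $\calF_{t_k}$ with bounded second moment, so the cross terms with zero-mean parts vanish. This yields $\E\frobnorm{\cdot}^2 \leq (1 + C\gamma + C\gamma)\,\E\frobnorm{\bDelta^k}^2$, with $C$ independent of $M$ and $\gamma$.

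The last term is the source of the error. Its conditional second moment is bounded by $C\gamma\,\norm{\bX}_2^2\, \E[(\xi^k_1)^2]\, \E\frobnorm{\ell_{t_k}(\br^{t_k};\bz)}^2$, since $\bxi^k$ has independent mean-zero entries independent of $\br^{t_k}$, and symmetric clipping preserves zero mean. Its cross term with the other pieces is handled by Young's inequality. The same independence and zero mean are what make the cross term $\E\langle \bDelta^k, \cdot\rangle$ vanish, so only its square contributes. Let $\epsilon_M \coloneqq \E[(g - [g]_M)^2] \to 0$ for $g \sim \normal(0,1)$, and use the a priori bound $\max_k \E\frobnorm{\ell_{t_k}(\br^{t_k};\bz)}^2 \leq C(n + \max_k\E\frobnorm{\btheta^{t_k}}^2) \leq Cd$. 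This last bound follows from a similar Gr\"onwall estimate on $\E\frobnorm{\btheta^{t_k}}^2$ (uniform over $n,d$ for fixed $\gamma,T$), or equivalently from \cref{lem:sgf_discrete}. We then get
\begin{align}
    \E\frobnorm{\bDelta^{k+1}}^2 \leq (1 + C\gamma)\,\E\frobnorm{\bDelta^k}^2 + C\epsilon_M\, d,
\end{align}
and iterating over $k \leq T/\gamma$ gives $\max_k \E\frobnorm{\bDelta^k}^2 \leq C_\gamma\, \epsilon_M\, d$.

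Two points need care. First, the constant $C_\gamma$ may blow up as $\gamma \to 0$ because of the $1/\gamma$ multiplier on the noise variance, but $\gamma$ is fixed in this lemma, so this is harmless.

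Second, and this is the main obstacle, the statement has the maximum over $k$ inside the expectation. Since there are only finitely many ($T/\gamma$) indices, we can simply bound $\E\max_k \leq \sum_k \E$, which costs a factor $T/\gamma$. Dividing by $d$ and letting $n,d \to \infty$ and then $M \to \infty$ gives the first claim.

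The Wasserstein claims then follow by the coupling that pairs rows with the same index: $W_2^2 \leq \frac{1}{d}\sum_l \frobnorm{\bDelta^{k_l}}^2$. For $\br$ we additionally use $\frobnorm{\br^{t_k} - \check\br^{t_k}} \leq \norm{\bX}_2\frobnorm{\bDelta^k}$ and $n/d \to \delta$.
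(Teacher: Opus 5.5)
Your proposal is correct and takes essentially the paper's route. Both run a one-step recursion for $\E\frobnorm{\btheta^{t_k}-\check\btheta^{t_k}}^2$, driven by the truncation error $\E[(G-[G]_M)^2]\to 0$ times the a priori norm bound of \cref{lem:sde_disc_norm_bound}, iterated over the finitely many steps $k\le T/\gamma$. The only difference is cosmetic: you square before expanding and use the zero-mean independent increments to get a $(1+C\gamma)$ growth factor, whereas the paper applies the triangle inequality with an $O(1)$ constant per step, which suffices since $\gamma$ is fixed. Also, that norm bound comes from \cref{lem:sde_disc_norm_bound}, not from \cref{lem:sgf_discrete}, which only controls a difference.
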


\begin{proof}
    We have
    \begin{align}
        & \frobnorm{\btheta^{t_{k+1}} - \check\btheta^{t_{k+1}}} \notag \\
        & \leq \frobnorm{\btheta^{t_k} - \check\btheta^{t_k}} + \gamma \frobnorm{h_{t_k}(\btheta^{t_k}) - h_{t_k}(\check\btheta^{t_k})} \notag \\
        & \qquad + \frac{\gamma}{\delta} \frobnorm*{\bX^\transpose \ab((\bone_n + \sqrt{\tau\delta/\gamma} \bG^k) \odot \ell_{t_k}(\br^{t_k};\bz) - (\bone_n + \sqrt{\tau\delta/\gamma} [\bG^k]_M) \odot \ell_{t_k}(\check\br^{t_k};\bz))} \notag \\
        & \leq C \frobnorm{\btheta^{t_k} - \check\btheta^{t_k}} + C \frobnorm{(\bG^k - [\bG^k]_M) \odot \ell_{t_k}(\br^{t_k};z)} + C \frobnorm{[\bG^k]_M \odot (\ell_{t_k}(\br^{t_k};\bz) - \ell_{t_k}(\check\br^{t_k};\bz))} \,.
    \end{align}
    Since $\bG^k$ is independent of $\br^{t_k}$ and $\check\br^{t_k}$, we have
    \begin{align}
        & \E \frobnorm{(\bG^k - [\bG^k]_M) \odot \ell_{t_k}(\br^{t_k};z)}^2 = \sum_{i=1}^n \E[(G_i^k - [G_i^k]_M)^2 \norm{\ell_{t_k}(r_i^{t_k};z_i)}_2^2] \notag \\
        & = f(M) \sum_{i=1}^n \E[\norm{\ell_{t_k}(\br_i^{t_k};z_i)}_2^2] = f(M) \E\frobnorm{\ell_{t_k}(\br_i^{t_k};\bz)}^2 \leq C f(M) \E\frobnorm{\btheta^{t_k}}^2 \,, \\
        & \E \frobnorm{[\bG^k]_M \odot (\ell_{t_k}(\br^{t_k};\bz) - \ell_{t_k}(\check\br^{t_k};\bz))}^2 = \sum_{i=1}^n \E[[G_i^k]_M^2 \norm{\ell_{t_k}(\br_i^{t_k};z_i) - \ell_{t_k}(\check\br_i^{t_k};z_i)}_2^2] \notag \\
        & = \E[[G]_M^2] \sum_{i=1}^n \E[\norm{\ell_{t_k}(\br_i^{t_k};z_i) - \ell_{t_k}(\check\br_i^{t_k};z_i)}_2^2] \leq C \E[G^2] \E\frobnorm{\br^{t_k} - \check\br^{t_k}}^2 \leq C \E\frobnorm{\btheta^{t_k} - \check\btheta^{t_k}}^2 \,,
    \end{align}
    where $f(M) = \E[(G - [G]_M)^2]$ for $G \sim \normal(0,1)$.
    Therefore, we have
    \begin{align}
        \frac{1}{d} \E\frobnorm{\btheta^{t_{k+1}} - \check\btheta^{t_{k+1}}}^2 \leq \frac{C}{d} \E \frobnorm{\btheta^{t_k} - \check\btheta^{t_k}}^2 + C f(M) \cdot \frac{1}{d} \E \frobnorm{\btheta^{t_k}}^2 \,.
    \end{align}
    Iterating this inequality yields
    \begin{align}
        \frac{1}{d} \E\frobnorm{\btheta^{t_k} - \check\btheta^{t_k}}^2 \leq C f(M) \cdot \frac{1}{d} \sup_{k \leq T/\gamma} \E \frobnorm{\btheta^{t_k}}^2 \,.
    \end{align}
    As $n,d\to\infty$, $\E\frobnorm{\btheta^{t_k}}^2/d$ is uniformly bounded in $t_k$ almost surely for large $d$ by \cref{lem:sde_disc_norm_bound}.
    As $M \to \infty$, we have $f(M) \to 0$ by the dominated convergence theorem, and the first claim follows.
    Convergence of the 2-Wasserstein distances follows from the first claim by the same argument as in the proof of \cref{lem:sgf_discrete}.
\end{proof}

Consider the following AMP iteration. Given sequences of functions $f_i \colon \reals^{(i+1)m+i+2}\to\reals^m$ and $g_i \colon \reals^{im+m}\to\reals^m$ ($i \geq 0$) that are Lipschitz in the first $(i+1)m$ and $im$ arguments, respectively, we generate sequences of matrices $\ba^{i+1}\in\reals^{d \times m}$ and $\bb^i \in\reals^{n \times m}$ ($i\geq 0$) as follows.
\begin{align}
    \ba^{i+1} & = -\frac{1}{\delta} \bX^\transpose f_i(\bb^0,\dots,\bb^i;\bz,[\bG^0]_M,\dots,[\bG^i]_M) + \sum_{j=0}^i g_j(\ba^1,\dots,\ba^j;\btheta^0) \xi_{i,j}^\transpose \,, \\
    \bb^i     & = \bX g_i(\ba^1,\dots,\ba^i;\btheta^0) + \frac{1}{\delta} \sum_{j=0}^{i-1} f_j(\bb^0,\dots,\bb^j;\bz,[\bG^0]_M,\dots,[\bG^j]_M) \zeta_{i,j}^\transpose \,,
\end{align}
with initial values $g_0(\btheta^0)=\btheta^0,\bb^0=\bX\btheta^0$. Here, $f_i$ and $g_i$ are applied row-wise, and $\{\xi_{i,j}\}_{0\leq j\leq i},\allowbreak\{\zeta_{i,j}\}_{0\leq j\leq i-1}\subset\reals^{m \times m}$ are defined as follows.
We define a sequence of centered Gaussian random variables $\{\bar u^{i+1},\bar w^i\}_{i\geq 0}$ recursively as
\begin{align}
    \E[\bar w^i(\bar w^j)^\transpose]         & = \E[g_i(\bar u^1,\dots,\bar u^i;\theta^0)g_j(\bar u^1,\dots,\bar u^j;\theta^0)^\transpose]\,,                                                                     \label{eq:se_w} \\
    \E[\bar u^{i+1}(\bar u^{j+1})^\transpose] & = \frac{1}{\delta} \E[f_i(\bar w^0,\dots,\bar w^i;z,[\bar G^0]_M,\dots,[\bar G^i]_M)f_j(\bar w^0,\dots,\bar w^j;z,[\bar G^0]_M,\dots,[\bar G^j]_M)^\transpose] \,, \label{eq:se_u}
\end{align}
for $0 \leq j \leq i$, and set $\zeta_{i,j},\xi_{i,j}$ as
\begin{align}
    \zeta_{i,j} & = \E\ab[\diffp{}{\bar u^{j+1}} g_i(\bar u^1,\dots,\bar u^i;\theta^0)] \,,                      &  & 0 \leq j \leq i-1 \,, \label{eq:se_zeta} \\
    \xi_{i,j}   & = \E\ab[\diffp{}{\bar w^j} f_i(\bar w^0,\dots,\bar w^i;z,[\bar G^0]_M,\dots,[\bar G^i]_M)] \,, &  & 0 \leq j \leq i \,, \label{eq:se_xi}
\end{align}
where the expectations are taken over $\bar u^i$, $\bar w^i$, $\theta^0 \sim \sfP(\theta^0)$, $z \sim \sfP(z)$, and $\bar G^i \sim \normal(0,1)$.

This AMP iteration can be mapped to the recursion \eqref{eq:sgf_disc_rewrite} by considering the specific choice of $f_i$ and $g_i$ as follows.
\begin{align}
    g_i(\ba^1,\dots,\ba^i;\btheta^0)                     & = \check \btheta^{t_i} \,,                                                           \\
    f_i(\bb^0,\dots,\bb^i;\bz,[\bG^0]_M,\dots,[\bG^i]_M) & = (\bone_n + \sqrt{\tau\delta/\gamma} [\bG^i]_M) \odot \ell_{t_i}(\br^{t_i};\bz) \,.
\end{align}
We show that $g_i$ is indeed a function of $\ba^1,\dots,\ba^i,\btheta^0$ and Lipschitz in $\ba^j$, and that $f_i$ is a function of $\bb^0,\dots,\bb^i,\bz,[\bG^0]_M,\dots,[\bG^i]_M$ and Lipschitz in $\bb^j$.
It can be shown by the Lipschitz continuity of $\ell$, boundedness of $[G]_M$, and induction over $i$ as follows.
\begin{align}
    \check \btheta^{t_{i+1}} & = \check \btheta^{t_i} - \gamma h_{t_i}(\check \btheta^{t_i}) - \frac{\gamma}{\delta} \bX^\transpose f_i(\bb^0,\dots,\bb^i;\bz,[\bG^0]_M,\dots,[\bG^i]_M) \notag                           \\
                             & = \check \btheta^{t_i} - \gamma h_{t_i}(\check \btheta^{t_i}) + \gamma \ab(\ba^{i+1} - \sum_{j=0}^i g_j(\ba^1,\dots,\ba^j;\btheta^0) \xi_{i,j}^\transpose) \,,  \label{eq:recursion_theta} \\
    \br^{t_i}                & = \bX \btheta^{t_i} = \bX g_i(\ba^1,\dots,\ba^i;\btheta^0) \notag                                                                                                                          \\
                             & = \bb^i - \frac{1}{\delta} \sum_{j=0}^{i-1} f_j(\bb^0,\dots,\bb^j;\bz,[\bG^0]_M,\dots,[\bG^j]_M) \zeta_{i,j}^\transpose \,. \label{eq:recursion_r}
\end{align}

By \citet[Theorem 2.21]{wang2024universality}, for any second order pseudo-Lipschitz functions $\psi \colon \reals^{im+m} \to \reals$ and $\tilde\psi \colon \reals^{(i+1)m+i+2} \to \reals$, we have almost surely
\begin{align}
    \lim_{n,d\to\infty} \frac{1}{d}\sum_{j=1}^d \psi(a^1_j,\dots,a^i_j;\theta^0_j)                  & = \E[\psi(\bar u^1,\dots,\bar u^i;\theta^0)]\,,                        \\
    \lim_{n,d\to\infty} \frac{1}{n}\sum_{j=1}^n \tilde\psi(b^0_j,\dots,b^i_j;z_j,G^0_j,\dots,G^i_j) & = \E[\tilde\psi(\bar w^0,\dots,\bar w^i;z,\bar G^0,\dots,\bar G^i)]\,.
\end{align}
Since $\check \btheta^{t_i}$ is a Lipschitz function of $\ba^1,\dots,\ba^i,\btheta^0$, we can take a Lipschitz function $h_\theta$ such that $\check \btheta^{t_i} = h_\theta(\ba^1,\dots,\ba^i;\btheta^0)$ and define $\bar \theta^i \coloneqq h_\theta(\bar u^1,\dots,\bar u^i;\theta^0)$.
Similarly, we can take a Lipschitz function $h_r$ such that $\br^{t_i} = h_r(\bb^0,\dots,\bb^i;\bz,[\bG^0]_M,\dots,[\bG^i]_M)$ and define $\bar r^i \coloneqq h_r(\bar w^0,\dots,\bar w^i;z,\allowbreak [\bar G^0]_M,\dots,[\bar G^i]_M)$.
Considering the composition of $\psi,\tilde\psi$ with $h_\theta,h_r$, we have almost surely
\begin{align}
    \lim_{n,d\to\infty} \frac{1}{d}\sum_{j=1}^d \psi(\check \theta^{t_1}_j,\dots,\check \theta^{t_i}_j;\theta^0_j) & = \E[\psi(\bar \theta^1,\dots,\bar \theta^i;\theta^0)]\,,                       \\
    \lim_{n,d\to\infty} \frac{1}{n}\sum_{j=1}^n \tilde\psi(r^0_j,\dots,r^{t_i}_j;z_j,[G^0_j]_M,\dots,[G^i_j]_M)    & = \E[\tilde\psi(\bar r^0,\dots,\bar r^i;z,[\bar G^0]_M,\dots,[\bar G^i]_M)] \,.
\end{align}
As $M \to \infty$, by the dominated convergence theorem, we have $W_2([G]_M,G) \to 0$ for $G \sim \normal(0,1)$. Combining this with the above and \cref{lem:sgf_discrete_clip}, we have
\begin{align}
    \plim_{n,d\to\infty} W_2\ab(\hat\sfP(\btheta^{t_0},\dots,\btheta^{t_i}),\sfP(\theta^0,\bar \theta^1,\dots,\bar \theta^i)) = 0 \,, \\
    \plim_{n,d\to\infty} W_2\ab(\hat\sfP(\br^{t_0},\dots,\br^{t_i},\bz),\sfP(\bar r^0,\dots,\bar r^i;z,\bar G^0,\dots,\bar G^i)) = 0 \,.
\end{align}

\subsubsection{Mapping the state evolution to DMFT}

It remains to show that the state evolution process $(\bar \theta^i,\bar r^i)_{i \geq 0}$ defined above satisfies the discretized DMFT equations \eqref{eq:dmft_disc_process}--\eqref{eq:dmft_disc_func}.

By \cref{eq:se_w,eq:se_u}, we have
\begin{align}
    \E[\bar w^{i}(\bar w^j)^\transpose] & = \E[\bar \theta^i (\bar \theta^j)^\transpose] \,, \notag \\
    \E[\bar u^{i+1}(\bar u^{j+1})^\transpose] & = \frac{1}{\delta} \E[(1 + \sqrt{\tau\delta/\gamma} \bar G^i) \ell_{t_i}(\bar r^i;z) (1 + \sqrt{\tau\delta/\gamma} \bar G^j) \ell_{t_j}(\bar r^j;z)^\transpose] \,.
\end{align}
Define $\bar U^i$ and $\bar L^i$ as
\begin{align}
    \bar U^i \coloneqq \gamma \sum_{j=1}^{i} \bar u^i \,, \quad \bar L^i \coloneqq \gamma \sum_{j=0}^{i-1} (1 + \sqrt{\tau\delta/\gamma} \bar G^j) \ell_{t_j}(\bar r^j;z) \,.
\end{align}
Then, we have
\begin{align}
    \E[\bar U^i(\bar U^j)^\transpose] & = \frac{1}{\delta} \E[\bar L^i (\bar L^j)^\transpose] \,.
\end{align}

By \cref{eq:recursion_theta}, $\bar \theta^i$ follows the following recursion.
\begin{align}
    \bar \theta^{i+1} & = \bar\theta^i - \gamma h_{t_i}(\bar \theta^i) + \gamma\ab(\bar u^{i+1} - \sum_{j=0}^i \xi_{i,j} \bar \theta^j) \notag \\
    & = \bar\theta^i + \gamma \ab( \bar u^{i+1} - h_{t_i}(\bar\theta^i) - \xi_{i,i}\bar\theta^i - \sum_{j=0}^{i-1} \xi_{i,j} \bar \theta^j ) \,.
\end{align}
Thus,
\begin{align}
    \bar \theta^i = \bar \theta^0 + \sum_{j=0}^{i-1} (\bar \theta^{j+1} - \bar \theta^j) & = \bar \theta^0 + \gamma \sum_{j=0}^{i-1} \bar u^{j+1} - \gamma \sum_{j=0}^{i-1} \ab( h_{t_j}(\bar\theta^j) + \xi_{j,j}\bar\theta^j + \sum_{k=0}^{j-1} \xi_{j,k} \bar \theta^k ) \notag \\
                                                                                         & = \bar \theta^0 + \bar U^i - \gamma \sum_{j=0}^{i-1} \ab( h_{t_j}(\bar\theta^j) + \xi_{j,j}\bar\theta^j + \sum_{k=0}^{j-1} \xi_{j,k} \bar \theta^k ) \,. \label{eq:se_theta}
\end{align}
Furthermore, $\difsp{\bar \theta^i}{\bar u^{j+1}}$ satisfies
\begin{align}
    \diffp{\bar \theta^i}{\bar u^{j+1}} & = \gamma I_m - \gamma \sum_{k=j+1}^{i-1} \ab( (\nabla_\theta h_{t_k}(\bar\theta^k) + \xi_{k,k})\diffp{\bar\theta^k}{\bar u^{j+1}} + \sum_{l=j+1}^{k-1} \xi_{k,l} \diffp{\bar \theta^l}{\bar u^{j+1}} ) \,. \label{eq:se_rhotheta}
\end{align}

By \cref{eq:recursion_r}, $\bar r^i$ follows the following recursions.
\begin{align}
    \bar r^i = \bar w^i - \frac{1}{\delta} \sum_{j=0}^{i-1} \zeta_{i,j} \ell_{t_j}(\bar r^j;z) (1 + \sqrt{\tau\delta/\gamma} \bar G^j) \,. \label{eq:se_r}
\end{align}
Furthermore, $\difsp{\ell_{t_i}(\bar r^i;z)}{\bar w^j}$ satisfies
\begin{align}
    \diffp{\ell_{t_i}(\bar r^i;z)}{\bar w^j} & = \nabla_r \ell_{t_i}(\bar r^i;z) \diffp{\bar r^i}{\bar w^j} \,,                                                                                                                                          \label{eq:dell_dw_disc}               \\
    \diffp{\bar r^i}{\bar w^j}               & = - \frac{1}{\delta} \sum_{k=j+1}^{i-1} \zeta_{i,k} \diffp{\ell_{t_k}(\bar r^k;z)}{\bar w^j} (1 + \sqrt{\tau\delta/\gamma} \bar G^k) - \frac{1}{\delta} \zeta_{i,j} \nabla_r \ell_{t_j}(\bar r^j;z) (1 + \sqrt{\tau\delta/\gamma} \bar G^j) \,.
\end{align}
Let $\{\bar \rho_\ell^{i,j}\}_{0 \leq j < i}$ be the stochastic process satisfying
\begin{align}
    \bar \rho_\ell^{i,j} = \nabla_r \ell_{t_i}(\bar r^i;z) \bar \rho_r^{i,j} \,, \quad \bar \rho_r^{i,j} = - \frac{1}{\delta} \sum_{k=j+1}^{i-1} \zeta_{i,k} \bar \rho_\ell^{k,j} (1 + \sqrt{\tau\delta/\gamma} \bar G^k) - \frac{1}{\delta} \zeta_{i,j} \nabla_r \ell_{t_j}(\bar r^j;z) \,. \label{eq:se_rhol}
\end{align}
Using the linearity of \cref{eq:dell_dw_disc}, we have
\begin{align}
    \diffp{\ell_{t_i}(\bar r^i;z)}{\bar w^j} = \bar \rho_\ell^{i,j} (1 + \sqrt{\tau\delta/\gamma} \bar G^j) \,.
\end{align}
Then, $\xi_{i,j}$ satisfies
\begin{align}
    \xi_{i,j} = \E\ab[\diffp{\ell_{t_i}(\bar r^i;z)}{\bar w^j}] = \E\ab[\bar \rho_\ell^{i,j} (1 + \sqrt{\tau\delta/\gamma} \bar G^j)] = \E[\bar \rho_\ell^{i,j}] + \sqrt{\frac{\tau\delta}{\gamma}} \E\ab[\diffp{\bar \rho_\ell^{i,j}}{\bar G^j}] \,,
\end{align}
where we used Stein's lemma (Gaussian integration by parts) in the last equality.
By \cref{eq:se_rhol}, $\difsp{\bar \rho_\ell^{i,j}}{\bar G^j}$ satisfies
\begin{align}
    \diffp{\bar \rho_\ell^{i,j}}{\bar G^j} & = \nabla_r \ell_{t_i}(\bar r^i;z) \diffp{\bar \rho_r^{i,j}}{\bar G^j} + \diffp{\nabla_r \ell_{t_i}(\bar r^i;z)}{\bar G^j} \bar\rho_r^{i,j} \notag                                                                                                                                \\
                                           & = \nabla_r \ell_{t_i}(\bar r^i;z) \ab(- \frac{1}{\delta} \sum_{k=j+1}^{i-1} \zeta_{i,k} \diffp{\bar \rho_\ell^{k,j}}{\bar G^j} (1 + \sqrt{\tau\delta/\gamma} \bar G^k)) + \nabla_r^2 \ell_{t_i}(\bar r^i;z)\ab[\diffp{\bar r^i}{\bar G^j}] \bar \rho_r^{i,j} \,, \label{eq:se_Drhol}
\end{align}
where $\difsp{\bar r^i}{\bar G^j}$ satisfies
\begin{align}
    \diffp{\bar r^i}{\bar G^j} = - \frac{1}{\delta} \sum_{k=j+1}^{i-1} \zeta_{i,k} \nabla_r \ell_{t_k}(\bar r^k;z) \diffp{\bar r^k}{\bar G^j} (1 + \sqrt{\tau\delta/\gamma} \bar G^k) - \sqrt{\frac{\tau}{\delta\gamma}} \zeta_{i,j} \ell_{t_j}(\bar r^j;z) \,. \label{eq:se_Dr}
\end{align}

These state evolution recursions exactly correspond to the discrete DMFT equations $\mathfrak{S}^\gamma$ shown in \cref{eq:dmft_disc_process,eq:dmft_disc_func} by the following mappings:
\begin{align}
    \theta_\gamma^{t_i}            & \stackrel{\mathrm{d}}{=} \bar \theta^i \,,                                                   &  & i \geq 0 \,, \tag{Compare \eqref{eq:dmft_disc_theta} with \eqref{eq:se_theta}}                                                \\
    \rho_{\theta,\gamma}^{t_i,t_j} & \stackrel{\mathrm{d}}{=} \frac{1}{\gamma} \diffp{\bar\theta^i}{\bar u^{j+1}} \,,             &  & 0 \leq j < i \,, \tag{Compare \eqref{eq:dmft_disc_rhotheta} with \eqref{eq:se_rhotheta}}                                      \\
    r_\gamma^{t_i}                 & \stackrel{\mathrm{d}}{=} \bar r^i \,,                                                        &  & i \geq 0 \,, \tag{Compare \eqref{eq:dmft_disc_r} with \eqref{eq:se_r}}                                                        \\
    \rho_{\ell,\gamma}^{t_i,t_j}   & \stackrel{\mathrm{d}}{=} \frac{1}{\gamma} \bar\rho_{\ell}^{i,j} \,,                          &  & 0 \leq j < i \,, \tag{Compare \eqref{eq:dmft_disc_rhol}, \eqref{eq:dmft_disc_rhor} with \eqref{eq:se_rhol}}                   \\
    D_{\ell,\gamma}^{t_i,t_j}      & \stackrel{\mathrm{d}}{=} \frac{1}{\sqrt{\gamma}} \diffp{\bar\rho_{\ell}^{i,j}}{\bar G^j} \,, &  & 0 \leq j < i \,, \tag{Compare \eqref{eq:dmft_disc_Drhol}, \eqref{eq:dmft_disc_Dr} with \eqref{eq:se_Drhol}, \eqref{eq:se_Dr}} \\
    U_\gamma^{t_i}                 & \stackrel{\mathrm{d}}{=} \bar U^i \,,                                                        &  & i \geq 0 \,,                                                                                                                  \\
    w_\gamma^{t_i}                 & \stackrel{\mathrm{d}}{=} \bar w^i \,,                                                        &  & i \geq 0 \,,                                                                                                                  \\
    B^{t_{i+1}} - B^{t_i}          & \stackrel{\mathrm{d}}{=} \sqrt{\gamma} \bar G^i \,,                                          &  & i \geq 0 \,,
\end{align}
and
\begin{align}
    C_\theta^\gamma(t_i,t_j)    & = \E[\bar \theta^i (\bar \theta^j)^\transpose] \,, &  & i,j \geq 0 \,,   \\
    \Sigma_\ell^\gamma(t_i,t_j) & = \E[\bar L^i (\bar L^j)^\transpose] \,,           &  & i,j \geq 0 \,,   \\
    R_\theta^\gamma(t_i,t_j)    & = \zeta_{i,j}/\gamma \,,                           &  & 0 \leq j < i \,, \\
    R_\ell^\gamma(t_i,t_j)      & = \xi_{i,j}/\gamma \,,                             &  & 0 \leq j < i \,, \\
    \Gamma^\gamma(t_i)          & = \xi_{i,i}\,,                                     &  & 0 \leq i \,.
\end{align}

\subsection{Proof of Lemma \ref{lem:dmft_discrete}} \label{app:proof_dmft_discrete}

We first embed the discretized DMFT equation $\mathfrak{S}^\gamma$ defined for discrete time knots $t_k = k\gamma$ ($k \geq 0$) into continuous time $t \in [0,T]$ in a piecewise constant manner.
We define the stochastic processes $\{\theta_\gamma^t,r_\gamma^t\}_{t \in [0,T]}$ and $\{\rho_{\theta,\gamma}^{t,t'},\rho_{\ell,\gamma}^{t,t'},D_{\ell,\gamma}^{t,t'} \}_{t \geq t' \geq 0}$ by the following equations.
\begin{align}
    \theta_\gamma^t             & = \theta^0 + U_\gamma^t - \int_0^{\floor{t}} \ab(h_{\floor{s}}(\theta_\gamma^s) + \Gamma^\gamma(s)\theta_\gamma^s) + \int_0^{\floor{s}} R_\ell^\gamma(s,s') \theta_\gamma^{s'} \de s') \de s \,,                                                                                                                   \\
    \rho_{\theta,\gamma}^{t,t'} & = I_m - \ind(\ceil{t'} \leq \floor{t}) \int_{\ceil{t'}}^{\floor{t}} \ab((\nabla_\theta h_{\floor{s}}(\theta_\gamma^s) + \Gamma^\gamma(s)) \rho_{\theta,\gamma}^{s,t'} + \int_{\ceil{t'}}^{\floor{s}} R_\ell^\gamma(s,s') \rho_{\theta,\gamma}^{s',t'} \de s') \de s \,,                                                           \\
    r_\gamma^t                  & = w_\gamma^t -\frac{1}{\delta} \int_0^{\floor{t}} R_\theta^\gamma(t,s) \ell_{\floor{s}}(r_\gamma^s;z) (\de s + \sqrt{\tau \delta} \de B^s) \,,                                                                                                                                                           \\
    \rho_{\ell,\gamma}^{t,t'}   & = \nabla_r \ell_{\floor{t}}(r_\gamma^t;z) \rho_{r,\gamma}^{t,t'} \,,                                                                                                                                                                                                                                     \\
    \rho_{r,\gamma}^{t,t'}      & = -\frac{1}{\delta} \ind(\ceil{t'} \leq \floor{t}) \int_{\ceil{t'}}^{\floor{t}} R_\theta^\gamma(t,s) \rho_{\ell,\gamma}^{s,t'} (\de s + \sqrt{\tau\delta} \de B^s) - \frac{1}{\delta} R_\theta^\gamma(t,t') \nabla_r \ell_{\floor{t'}}(r_\gamma^{t'};z) \,,                                              \\
    D_{\ell,\gamma}^{t,t'}      & = \nabla_r \ell_{\floor{t}}(r_\gamma^t;z) \ab( -\frac{1}{\delta} \ind(\ceil{t'} \leq \floor{t}) \int_{\ceil{t'}}^{\floor{t}} R_\theta^\gamma(t,s) D_{\ell,\gamma}^{s,t'} (\de s + \sqrt{\tau\delta} \de B^s)) \notag \\
    & \qquad + \nabla_r^2 \ell_{\floor{t}}(r_\gamma^t;z)[D_{r,\gamma}^{t,t'}] \rho_{r,\gamma}^{t,t'} \,, \\
    D_{r,\gamma}^{t,t'}         & = -\frac{1}{\delta} \ind(\ceil{t'} \leq \floor{t}) \int_{\ceil{t'}}^{\floor{t}} R_\theta^\gamma(t,s) \nabla_r \ell_{\floor{s}}(r_\gamma^s;z) D_{r,\gamma}^{s,t'} (\de s + \sqrt{\tau\delta} \de B^s) \notag \\
    & \qquad - \sqrt{\frac{\tau}{\delta}} R_\theta^\gamma(t,t') \ell_{\floor{t'}}(r_\gamma^{t'};z) \,,
\end{align}
where $U_\gamma^t,w_\gamma^t$ are centered Gaussian processes with covariance kernels $\Sigma_\ell^\gamma / \delta$ and $C_\theta^\gamma$ respectively.
Then, set $C_\theta^\gamma,R_\theta^\gamma,\Sigma_\ell^\gamma,R_\ell^\gamma,\Gamma^\gamma$ as
\begin{align}
    \begin{gathered}
        C_\theta^\gamma(t,t') = \E[\theta_\gamma^t \theta_\gamma^{t'\transpose}] \,, \quad R_\theta^\gamma(t,t') = \E[\rho_{\theta,\gamma}^{t,t'}] \,, \\
        \Sigma_\ell^\gamma(t,t') = \E[L_\gamma^t L_\gamma^{t'\transpose}] \,, \quad L_\gamma^t \coloneqq \int_0^{\floor{t}} \ell_{\floor{s}}(r_\gamma^s;z) (\de s + \sqrt{\tau\delta} \de B^s) \,, \\
        R_\ell^\gamma(t,t') = \E[\rho_{\ell,\gamma}^{t,t'}] + \sqrt{\tau\delta} \E[D_{\ell,\gamma}^{t,t'}] \,, \quad \Gamma^\gamma(t) = \E\ab[\nabla_r\ell_{\floor{t}}(r_\gamma^t;z)] \,,
    \end{gathered}
\end{align}
where we set $R_\theta^\gamma(t,t') = R_\ell^\gamma(t,t') = 0$ for $\floor{t} < \ceil{t'}$.

The above equation agrees with the discretized DMFT equation $\mathfrak{S}^\gamma$ at discrete time points $t = t_i = i\gamma$ ($i \geq 0$), and has a unique solution since it is piecewise constant in each interval $[t_i,t_{i+1})$.

We then define mappings $\calT^\gamma_{\theta \to \ell} \colon (C_\theta,R_\theta) \mapsto (\Sigma_\ell,R_\ell,\Gamma)$ and $\calT^\gamma_{\ell \to \theta} \colon (\Sigma_\ell,R_\ell,\Gamma) \mapsto (C_\theta,R_\theta)$ similarly to $\calT_{\theta \to \ell}$ and $\calT_{\ell \to \theta}$ in \cref{app:proof_dmft_sol} but with the DMFT equation $\mathfrak{S}$ replaced by its discretized version $\mathfrak{S}^\gamma$.
We also define their composition $\calT^\gamma \coloneqq \calT^\gamma_{\ell \to \theta} \circ \calT^\gamma_{\theta \to \ell}$.

Since the solution to the discretized DMFT equation $\mathfrak{S}^\gamma$ is determined by the values at discrete time points $\{t_i = i\gamma : i \geq 0\}$, the solution exists uniquely by induction.
Let $X^\gamma = (C_\theta^\gamma,R_\theta^\gamma)$ and $Y^\gamma = (\Sigma_\ell^\gamma,R_\ell^\gamma,\Gamma^\gamma)$ be the solution to $\mathfrak{S}^\gamma$.
Then, $X^\gamma$ is the unique fixed point of $\calT^\gamma$, i.e., $\calT^\gamma(X^\gamma) = X^\gamma$.

We show that the unique solutions $X^\gamma$ and $Y^\gamma$ belong to some admissible spaces defined in \cref{app:function_spaces}.

\begin{lemma}
    There exist admissible spaces $\calS_\theta$ and $\calS_\ell$ such that $X^\gamma \in \calS_\theta$ and $Y^\gamma \in \calS_\ell$.
\end{lemma}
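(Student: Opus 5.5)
The plan is to redo the a priori bounds of the proof of \cref{lem:mapping}, now applied to the piecewise-constant embedding of $\mathfrak{S}^\gamma$. I will take $D=\{t_i=i\gamma : 0<t_i<T\}$ as the finite exceptional set of discontinuities. I will use the same constants $\Phi_{C_\theta},\Phi_{R_\theta},\Phi_{C_\ell},\Phi_{\Sigma_\ell},\Phi_{R_\ell},\Phi_1,\dots,\Phi_5,M_\theta,M_\ell,\bar\lambda$ from \cref{app:proof_mapping}, which depend only on $M,m,\delta,\tau,T$ and not on $\gamma$. I will define $\calS_\theta,\calS_\ell$ exactly as there, except with continuity required only on the maximal intervals of $[0,T]\setminus D$.

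Since the discrete solution is determined inductively on the grid, I will prove the bounds by induction over $k$. The claim is that the restriction of $(X^\gamma,Y^\gamma)$ to $[0,t_k]$ satisfies the defining bounds of $\calS_\theta(t_k)$ and $\calS_\ell(t_k)$. The base case is the prescribed initial conditions: $C^\gamma_\theta(0,0)=\E[\theta^0\theta^{0\transpose}]$, $\Sigma_\ell$ and $\Gamma$ at time $0$ are given by the law of $r^0\sim\normal(0,C_\theta(0,0))$, and $R^\gamma_\theta=R^\gamma_\ell=0$ for $\floor{t}<\ceil{t'}$.

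For the inductive step, note that the values on $[t_k,t_{k+1})$ of $(\Sigma^\gamma_\ell,R^\gamma_\ell,\Gamma^\gamma)$ depend only on $(C^\gamma_\theta,R^\gamma_\theta)$ restricted to $[0,t_k]$, because of the floors in the integrals. Similarly, the new values of $(C^\gamma_\theta,R^\gamma_\theta)$ depend only on earlier $(\Sigma^\gamma_\ell,R^\gamma_\ell,\Gamma^\gamma)$. So each of the two chains of estimates from \cref{app:proof_mapping} can be applied with the induction hypothesis as input. These are the Gr\"onwall bounds on $\E\norm{\ell(r)}^2$, $\norm{\Sigma_\ell}$, $\E\norm{\rho_\ell}^2$, $\E\norm{D_\ell}^2$ via \cref{lem:bound_int}, and on $\E\norm{\theta}^2$ and $\norm{\rho_\theta}$. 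Every step there used only the sup bounds $\norm{\nabla h},\norm{\nabla\ell},\norm{\nabla^2\ell}\le M$, \cref{lem:bound_int}, and integrals over sets contained in $[0,t]$. Replacing $\int_0^t$ by $\int_0^{\floor t}$ and evaluating integrands at $\floor{s}$ only shrinks the domain, and Gr\"onwall's inequality still applies to these piecewise-constant functions. The estimates therefore carry over with the same constants, provided $\bar\lambda T\le 1$, i.e.\ $T\le T_*$ (or any $T$ in the cases $\tau=0$ or $\nabla^2_r\ell=0$). The bound $\norm{\Gamma^\gamma}\le M$ is immediate.

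The increment bounds \eqref{eq:bound_C_theta} and \eqref{eq:bound_Sigma_l} are required only for $t,t'$ in the same maximal interval $[t_i,t_{i+1})$. There the processes are constant, so the left-hand sides vanish. Uniform continuity on each maximal interval, and for $R$ on each pair of intervals, is trivial for piecewise-constant functions. Symmetry and positive semidefiniteness hold because each kernel is defined as a covariance.

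The main obstacle is making the induction airtight. I have to check that the quantities entering the $D_\ell$ estimate, which needs the fourth-moment bounds $\Phi_2,\Phi_3,\Phi_4$, are also controlled using only earlier grid values. I also need the bounds to be uniform in $\gamma$, since that uniformity is what \cref{lem:dmft_discrete} will need. Should any constant pick up a $\gamma$-dependence, for instance from the step $\E\norm{\ell_{t'}}^4\le\Phi_3 e^{4\bar\lambda t'}$ evaluated at $\floor{t'}$, I would instead enlarge the constants by a fixed factor such as $e^{\bar\lambda\gamma}\le e$. Such a factor is harmless because the bounds only need to hold for $\gamma\le1$.
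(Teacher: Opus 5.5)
Your argument is correct, but it proves more than the paper does and takes a different route on the boundedness part.

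\textbf{The paper's proof.} It makes two observations.
\begin{enumerate}
\item The embedded solution is piecewise constant, so the continuity and increment conditions hold trivially on each maximal interval of $[0,T]\setminus D$.
\item For fixed $\gamma$, all of $C_\theta^\gamma,\Sigma_\ell^\gamma,R_\theta^\gamma,R_\ell^\gamma,\Gamma^\gamma$ are bounded, so one simply enlarges $\Phi_\theta,\Phi_\ell$ until the boundedness conditions hold.
\end{enumerate}
That suffices for the statement as written. However, the constants it produces may depend on $\gamma$. The later argument in \cref{lem:dmft_discrete} tacitly needs them uniform: it uses a single $\lambda$ making $\calT$ contract between $X$ and $X^\gamma$, and a $\gamma$-independent $K$ in \cref{lem:dmft_disc_dist}.

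\textbf{What your route buys.} Your induction along the grid uses causality to re-run the a priori estimates of \cref{lem:mapping}. This supplies exactly the missing uniformity: admissible spaces with the $\gamma$-free constants of \cref{app:proof_mapping}. Only the exceptional set $D$ changes with $\gamma$, and the estimates of \cref{lem:modulus_theta_ell,lem:modulus_ell_theta} do not depend on $D$. These spaces also contain the continuous fixed point $X$. The price is the restriction $T\le T_*$ outside the two global cases. The fixed-$\gamma$ argument does not need this restriction, but it is harmless because $X$ is only guaranteed to exist on $[0,T_*]$ anyway.

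\textbf{Your fallback is unnecessary.} You do not need to inflate constants by $\napier^{\bar\lambda\gamma}$. Since $\floor{s}\le s$ and $\ceil{t'}\ge t'$, every exponential weight produced by the embedding moves in the favorable direction. For example, a grid-level bound $\Phi_{R_\theta}\napier^{\bar\lambda(\floor{t}-\ceil{t'})}$ on $\norm{R_\theta^\gamma(t,t')}_2$ is already at most $\Phi_{R_\theta}\napier^{\bar\lambda(t-t')}$.

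\textbf{A small point in your base case.} You have $\Sigma_\ell^\gamma(0,0)=\E[L_\gamma^0L_\gamma^{0\transpose}]=0$, not $\E[\ell_0(r^0;z)\ell_0(r^0;z)^\transpose]$. The initial condition in \cref{def:admissible_ell} is a slip in the paper's definition: the continuous fixed point also has $\Sigma_\ell(0,0)=0$. It should read $\Sigma_\ell(0,0)=0$, and with that correction both your argument and the paper's satisfy it.
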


\begin{proof}
    Since $C_\theta^\gamma,\Sigma_\ell^\gamma,R_\theta^\gamma,R_\ell^\gamma$, and $\Gamma^\gamma$ are piecewise constant, the continuity conditions are automatically satisfied.
    Since the solutions are bounded, we can take the spaces $\calS_\theta$ and $\calS_\ell$ large enough so that the boundedness conditions are also satisfied.
\end{proof}

Let $X = (C_\theta,R_\theta)$ be the unique fixed point of $\calT$ shown in \cref{thm:dmft_sol}.
We control their distance as follows.
\begin{align}
    \dist_\lambda(X,X^\gamma) = \dist_\lambda(\calT(X),\calT^\gamma(X^\gamma)) \leq \dist_\lambda(\calT(X),\calT(X^\gamma)) + \dist_\lambda(\calT(X^\gamma),\calT^\gamma(X^\gamma)) \,.
\end{align}
As in the proof of \cref{thm:dmft_sol}, we can take $\lambda$ large enough so that $\calT$ is a contraction and the first term is bounded by $(1/2)\dist_\lambda(X,X^\gamma)$.
Then, we have
\begin{align}
    \dist_\lambda(X,X^\gamma) \leq 2 \cdot \dist_\lambda(\calT(X^\gamma),\calT^\gamma(X^\gamma)) \,.
\end{align}
The next lemma bounds the right-hand side. We prove it in \cref{app:proof_dmft_disc_dist}.

\begin{lemma} \label{lem:dmft_disc_dist}
    For any $\lambda > 0$, there exists a constant $K > 0$ independent of $\gamma$ such that
    \begin{align}
        \dist_\lambda(\calT(X^\gamma),\calT^\gamma(X^\gamma)) \leq K \sqrt{\gamma} \,.
    \end{align}
\end{lemma}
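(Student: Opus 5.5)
Both $\calT(X^\gamma)$ and $\calT^\gamma(X^\gamma)$ are produced from the same input $X^\gamma=(C_\theta^\gamma,R_\theta^\gamma)$: the first uses the continuous-time equations of $\mathfrak{S}$, the second their piecewise-constant embedding with floors $\floor{\cdot}$, $\ceil{\cdot}$. The plan is therefore a synchronous coupling. We drive both with the same Gaussian processes ($w$ with covariance $C_\theta^\gamma$; later $U$ and $U^\gamma$ coupled through a joint Gaussian process), the same Brownian motion $B$ and the same $(\theta^0,z)$. We then bound the pathwise $L^2$ (and $L^4$) differences by Gr\"onwall arguments, exactly as in the proofs of \cref{lem:modulus_theta_ell,lem:modulus_ell_theta}. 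The only new ingredient is a local-increment estimate of the form
\[
\E\norm{r^t-r^{\floor{t}}}_2^{2p}\le K\gamma^{p},
\]
and its analogues for the other processes. For the driving field this follows from \eqref{eq:bound_C_theta}: $\E\norm{w^t-w^{\floor t}}^2\le mM_\theta\gamma$. For the stochastic integrals it follows from \cref{lem:bound_int}, which gives an $O(\gamma^{p})$ contribution over an interval of length $\gamma$.

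\textbf{Step 1 ($\theta\to\ell$ side).} Write $r^t-r_\gamma^t$ as the sum of three pieces: the integral over $[\floor t,t]$, which is $O(\sqrt\gamma)$ in $L^2$ by \cref{lem:bound_int}; the replacement $\ell_s(r^s)\to\ell_{\floor s}(r_\gamma^s)$, which is bounded by $M(\gamma+\norm{r^s-r^{\floor s}}+\norm{r^s-r_\gamma^s})$; and a term controlled by Gr\"onwall. This gives
\[
\sup_t\E\norm{r^t-r_\gamma^t}^2\le K\gamma .
\]
The same scheme applied to $L^t$ yields a $\sqrt\gamma$ bound for the $\Sigma_\ell$ component, using the coupling-based definition of $\dist_\lambda$ as in \cref{lem:modulus_theta_ell}. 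Lipschitzness of $\nabla_r\ell$ gives the bound for $\Gamma$. For $R_\ell$, we compare $\rho_\ell,D_\ell,\rho_r,D_r$ with their discretized versions using the $L^4$ and $L^8$ moment bounds already established in the proof of \cref{lem:modulus_theta_ell}. Here the hypothesis that $\nabla^2_r\ell$ is Lipschitz is what controls the difference $\nabla^2\ell(r)-\nabla^2\ell(r_\gamma)$.

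\textbf{Step 2 ($\ell\to\theta$ side).} We repeat the argument for $\theta$ and $\rho_\theta$ in \cref{eq:theta,eq:rho_theta}, now comparing against the embedded versions with input $\calT_{\theta\to\ell}(X^\gamma)$ versus $\calT^\gamma_{\theta\to\ell}(X^\gamma)$. The difference splits into two parts. The first is the input difference from Step 1, which is controlled by the Lipschitz estimate of \cref{lem:modulus_ell_theta}. The second is the intrinsic discretization error for fixed input, which is $O(\gamma)+O(\sqrt\gamma)$ by the increment bound on $U$ coming from \eqref{eq:bound_Sigma_l} together with $\norm{h_t-h_{\floor t}}\le M\gamma$.

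\textbf{Main obstacle.} The hardest part is the piecewise continuity of $X^\gamma$. Since $R_\theta^\gamma$ jumps at grid points, the term $R_\theta^\gamma(t,s)-R_\theta^\gamma(t,\floor s)$ does not appear. However, the continuous map $\calT$ evaluates $R^\gamma_\theta(t,s)$ at non-grid $s$, whereas $\calT^\gamma$ effectively uses $R^\gamma_\theta(\floor t,\floor s)$ through the floors. We would handle this by noting that, since $R_\theta^\gamma$ is piecewise constant, $R_\theta^\gamma(t,s)=R_\theta^\gamma(\floor t,\floor s)$ except on diagonal strips $\{\floor s=\floor t\}$ or $\{s<\ceil{t'}\}$ of width $\gamma$. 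There the integrands are bounded in $L^{2p}$, so these strips contribute $O(\sqrt\gamma)$ in $L^2$, the stochastic integral being the worst case. The boundary terms $R_\theta^\gamma(t,t')\nabla\ell_{t'}$ versus the same with $\floor{t'}$ in $\rho_r$ and $D_r$ are handled similarly, using Lipschitz continuity in time and the increment bound. Collecting all terms, using $e^{-\lambda t}\le1$ and absorbing $T,\lambda$ into $K$, gives the claimed bound $K\sqrt\gamma$.
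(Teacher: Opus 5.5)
Your proposal is correct and follows essentially the same route as the paper. Both use a synchronous coupling: a shared $w_\gamma$, $B$ and $(\theta^0,z)$, plus a jointly Gaussian pair for the $U$'s. Both then run Gr\"onwall bounds whose only new contributions come from integrals over intervals of length at most $\gamma$ (giving $K\gamma$ in squared $L^2$ via \cref{lem:bound_int}) and from the floored time arguments, first for $\calT_{\theta\to\ell}(X^\gamma)$ versus $\calT^\gamma_{\theta\to\ell}(X^\gamma)$ and then on the $\ell\to\theta$ side.

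The obstacle you flag is milder than you describe. Both maps are fed the same piecewise-constant function $R_\theta^\gamma$, evaluated at the same $(t,s)$, so no mismatch in $R_\theta^\gamma$ arises beyond the integration-range strips $[\floor{t},t]$ and $[t',\ceil{t'}]$. Likewise, the paper compares the embedded $r_\gamma^s$ with the continuous process at the same time $s$, so it needs no separate increment estimate for $r$. Your splitting of Step~2 through \cref{lem:modulus_ell_theta} is just a repackaging of the paper's single Gr\"onwall argument.
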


Therefore, for sufficiently large $\lambda$, we have
\begin{align}
    \dist_\lambda(X,X^\gamma) \leq K \sqrt{\gamma} \,, \quad \dist_\lambda(Y,Y^\gamma) \leq K \sqrt{\gamma} \,.
\end{align}

Finally, we couple $\theta^t$ and $\theta_\gamma^t$ so that they are close. We prove it in \cref{app:proof_dmft_process_coupling}.

\begin{lemma} \label{lem:dmft_process_coupling}
    There exist a constant $K > 0$ independent of $\gamma$ and a coupling of the processes $\theta^t$ and $\theta_\gamma^t$ such that
    \begin{align}
        \sup_{0 \leq t \leq T} \sqrt{\E\norm{\theta^t - \theta_\gamma^t}_2^2} & \leq K (\sqrt{\gamma} + \dist_\lambda(Y,Y^\gamma)) \,, \\
        \sup_{0 \leq t \leq T} \sqrt{\E\norm{r^t - r_\gamma^t}_2^2} & \leq K (\sqrt{\gamma} + \dist_\lambda(X,X^\gamma)) \,.
    \end{align}
\end{lemma}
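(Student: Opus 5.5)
We will construct the coupling explicitly and then run Grönwall-type arguments parallel to those in the proofs of \cref{lem:modulus_theta_ell,lem:modulus_ell_theta}.

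\textbf{Coupling.} For the $\theta$ part, let $(U,U_\gamma)$ be jointly Gaussian with marginals $\GP(0,\Sigma_\ell/\delta)$ and $\GP(0,\Sigma_\ell^\gamma/\delta)$, coupled so that
\begin{align}
    \sup_t \napier^{-\lambda t}\sqrt{\E\norm{U^t-U_\gamma^t}_2^2}\le 2\,\dist_\lambda(\Sigma_\ell,\Sigma_\ell^\gamma).
\end{align}
Use the same $\theta^0$ in both processes. For the $r$ part, couple $(w,w_\gamma)$ in the same way through $\dist_\lambda(C_\theta,C_\theta^\gamma)$, use the same $z$, and drive both equations with the same Brownian motion $B$. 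Since $\napier^{\lambda T}$ is a constant independent of $\gamma$, the weighted distances control the unweighted $\sup_t$ moments up to a factor $K$.

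\textbf{The $\theta$ bound.} Subtract the equation for $\theta_\gamma^t$ (the piecewise-constant embedding of $\mathfrak{S}^\gamma$) from \cref{eq:theta}. The difference splits into four kinds of terms:
\begin{enumerate}
    \item the forcing difference $U^t-U^t_\gamma$;
    \item coefficient differences $\Gamma-\Gamma^\gamma$ and $R_\ell-R_\ell^\gamma$ multiplied by $\theta_\gamma$, which are bounded by $K\dist_\lambda(Y,Y^\gamma)$ using the uniform second-moment bound on $\theta_\gamma$ from admissibility;
    \item discretization errors from replacing $\int_0^t$ by $\int_0^{\floor{t}}$, $h_s$ by $h_{\floor{s}}$, and similar substitutions, which are bounded by $K\gamma$ using Lipschitzness in time and uniform moment bounds;
    \item Lipschitz terms $h_s(\theta^s)-h_s(\theta_\gamma^s)$, $\Gamma(\theta-\theta_\gamma)$, and $\int R_\ell(\theta-\theta_\gamma)$, which feed into Grönwall's inequality.
\end{enumerate}
The one term of order $\sqrt\gamma$ rather than $\gamma$ is the mismatch between $\theta_\gamma^s$ and $\theta_\gamma^{\floor{s}}$ inside the integrands. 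It is controlled by the Hölder-type increment bound \eqref{eq:bound_Sigma_l} on $U_\gamma$, which gives $\E\norm{\theta_\gamma^s-\theta_\gamma^{\floor s}}_2^2\le K\gamma$. Grönwall's inequality then yields $\E\norm{\theta^t-\theta^t_\gamma}_2^2\le K(\gamma+\dist_\lambda(Y,Y^\gamma)^2)$.

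\textbf{The $r$ bound.} Treat $r$ the same way using \cref{lem:bound_int} with $p=1$. The Itô integral differences $\int (R_\theta(t,s)\ell_s(r^s)-R_\theta^\gamma(t,s)\ell_{\floor s}(r^s_\gamma))\,\de B^s$ are controlled by the Itô isometry, so they contribute the same kind of terms as the drift.
\begin{enumerate}
    \item The difference $R_\theta-R_\theta^\gamma$ is bounded by $\dist_\lambda(X,X^\gamma)$.
    \item The truncation of the integral at $\floor{t}$ contributes $K\gamma$ from the $\de s$ part and $K\gamma$ in mean square from the $\de B$ part (an interval of length at most $\gamma$).
    \item The time-shift in $\ell$ and the increment $r_\gamma^s-r_\gamma^{\floor s}$ contribute $O(\gamma)$ in mean square.
\end{enumerate}

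\textbf{Expected main obstacle.} The hard step is the $r_\gamma$ increment: $r_\gamma$ is evaluated at the grid and carries both the Brownian increments and $w_\gamma$, whose piecewise-constant covariance $C_\theta^\gamma$ may jump at grid points. I will bound it by splitting off $w_\gamma^s-w_\gamma^{\floor s}$, which vanishes for the piecewise-constant embedding, and comparing $w^s$ with $w^{\floor s}$ via \eqref{eq:bound_C_theta}, which gives $O(\gamma)$ in mean square. Taking square roots gives the stated $K(\sqrt\gamma+\dist_\lambda(\cdot))$ bounds.
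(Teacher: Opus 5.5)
Your proposal is correct and follows the paper's argument. The paper reruns the discretization estimates behind \eqref{eq:dist_r_gamma} and \eqref{eq:dist_theta_gamma}, using optimally coupled Gaussian pairs $(w,w_\gamma)$ and $(U,U_\gamma)$, shared $B$, $z$, $\theta^0$, and controlling coefficient differences by $\dist_\lambda(X,X^\gamma)$ and $\dist_\lambda(Y,Y^\gamma)$ as in \cref{lem:modulus_theta_ell,lem:modulus_ell_theta}. One correction: in the piecewise-constant embedding, $\theta_\gamma^s=\theta_\gamma^{\floor{s}}$ and $r_\gamma^s=r_\gamma^{\floor{s}}$ hold identically, because the covariances of $U_\gamma,w_\gamma$, the kernels $R_\theta^\gamma(s,\cdot)$, and the upper integration limits depend on $s$ only through $\floor{s}$; so the increment you flag as the main obstacle is zero, needs no appeal to \eqref{eq:bound_Sigma_l} (which does not apply across grid points anyway), and apart from the $\dist_\lambda$ terms the $\sqrt{\gamma}$ comes only from the It\^o integral over $[\floor{t},t]$ in the $r$-equation.
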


By the above Lemmas, we have
\begin{align}
    W_2\ab(\sfP(\theta^{t_1},\dots,\theta^{t_L}), \sfP(\theta_\gamma^{t_1},\dots,\theta_\gamma^{t_L})) & \leq \sqrt{\sum_{i=1}^L \E\norm{\theta^{t_i} - \theta_\gamma^{t_i}}_2^2} \leq \sqrt{KL} \cdot \sqrt{\gamma} \,, \\
    W_2\ab(\sfP(r^{t_1},\dots,r^{t_L},z), \sfP(r_\gamma^{t_1},\dots,r_\gamma^{t_L},z))                 & \leq \sqrt{\sum_{i=1}^L \E\norm{r^{t_i} - r_\gamma^{t_i}}_2^2} \leq \sqrt{KL} \cdot \sqrt{\gamma} \,.
\end{align}
Sending $\gamma \to 0$ completes the proof of \cref{lem:dmft_discrete}.

\subsubsection{Proof of Lemma \ref{lem:dmft_disc_dist}} \label{app:proof_dmft_disc_dist}

Let $\bar Y^\gamma = (\bar \Sigma_\ell^\gamma,\bar R_\ell^\gamma,\bar \Gamma^\gamma) = \calT_{\theta \to \ell}(X^\gamma)$ and $Y^\gamma = (\Sigma_\ell^\gamma,R_\ell^\gamma,\Gamma^\gamma) = \calT_{\theta \to \ell}^\gamma(X^\gamma)$.
Also, let $\bar X^\gamma = (\bar C_\theta^\gamma,\bar R_\theta^\gamma) = \calT_{\ell \to \theta}(\bar Y^\gamma)$.

In the following, $K$ denotes a positive constant independent of $\gamma$ whose value may change from line to line.
Note that $K$ can depend on $\lambda$ since we fix $\lambda$ and send $\gamma \to 0$.

The proof proceeds as follows.
\begin{enumerate}
    \item We show that $\dist_\lambda(\bar Y^\gamma,Y^\gamma) \leq K \sqrt{\gamma}$ for some constant $K > 0$.
    \item We show that $\dist_\lambda(\bar X^\gamma,X^\gamma) \leq K (\sqrt{\gamma} + \dist_\lambda(\bar Y^\gamma,Y^\gamma))$ for some constant $K > 0$.
\end{enumerate}
Together they prove the \cref{lem:dmft_disc_dist}.

\paragraph{Bound of $\dist_\lambda(\bar \Sigma_\ell^\gamma,\Sigma_\ell^\gamma)$}
Let $w_\gamma \sim \GP(0,C_\theta^\gamma)$. Let $\bar r_\gamma^t$ and $r_\gamma^t$ be the solutions to the following equations:
\begin{align}
    \bar r_\gamma^t & = w_\gamma^t - \frac{1}{\delta} \int_0^t R_\theta^\gamma(t,s) \ell_s(\bar r_\gamma^s;z) (\de s + \sqrt{\tau\delta} \de B^s) \,,                \\
    r_\gamma^t      & = w_\gamma^t -\frac{1}{\delta} \int_0^{\floor{t}} R_\theta^\gamma(t,s) \ell_{\floor{s}}(r_\gamma^s;z) (\de s + \sqrt{\tau \delta} \de B^s) \,.
\end{align}
We have
\begin{align}
    \bar r_\gamma^t - r_\gamma^t & = -\frac{1}{\delta} \int_0^{\floor{t}} R_\theta^\gamma(t,s) (\ell_s(\bar r_\gamma^s;z) - \ell_{\floor{s}}(r_\gamma^s;z)) (\de s + \sqrt{\tau \delta} \de B^s) \notag \\
                                 & \qquad - \frac{1}{\delta} \int_{\floor{t}}^t R_\theta^\gamma(t,s) \ell_s(\bar r_\gamma^s;z) (\de s + \sqrt{\tau\delta} \de B^s) \,,
\end{align}
and thus
\begin{align}
    \E\norm{\bar r_\gamma^t - r_\gamma^t}_2^2 & \leq 2 \ab(\frac{T}{\delta^2} + \frac{\tau}{\delta}) \int_0^{\floor{t}} \norm{R_\theta^\gamma(t,s)}_2^2 \E\norm{\ell_s(\bar r_\gamma^s;z) - \ell_{\floor{s}}(r_\gamma^s;z)}_2^2 \de s \notag                                                    \\
                                              & \qquad + 2 \ab(\frac{T}{\delta^2} + \frac{\tau}{\delta}) \int_{\floor{t}}^t \norm{R_\theta^\gamma(t,s)}_2^2 \E\norm{\ell_s(\bar r_\gamma^s;z)}_2^2 \de s \notag                                                                                 \\
                                              & \leq 2 \ab(\frac{T}{\delta^2} + \frac{\tau}{\delta}) \Phi^2 M^2 \int_0^{\floor{t}} \E\ab[(\abs{s - \floor{s}} + \norm{\bar r_\gamma^s - r_\gamma^s}_2)^2] \de s + 2 \ab(\frac{T}{\delta^2} + \frac{\tau}{\delta}) \Phi^3 (t - \floor{t}) \notag \\
                                              & \leq 2 \ab(\frac{T}{\delta^2} + \frac{\tau}{\delta}) \Phi^2 M^2 \int_0^{\floor{t}} (2 \gamma^2 + 2 \E\norm{\bar r_\gamma^s - r_\gamma^s}_2^2) \de s + 2 \ab(\frac{T}{\delta^2} + \frac{\tau}{\delta}) \Phi^3 \gamma \notag                      \\
                                              & \leq K \gamma + K \int_0^t \E\norm{\bar r_\gamma^s - r_\gamma^s}_2^2 \de s \,.
\end{align}
By Gr\"onwall's inequality, we have
\begin{align}
    \sup_{t \in [0,T]} \E\norm{\bar r_\gamma^t - r_\gamma^t}_2^2 & \leq K \gamma \napier^{K T} \leq K \gamma \,. \label{eq:dist_r_gamma}
\end{align}

Let
\begin{align}
    \bar L_\gamma^t \coloneqq \int_0^t \ell_s(\bar r_\gamma^s;z) (\de s + \sqrt{\tau \delta} \de B^s) \,, \quad L_\gamma^t \coloneqq \int_0^{\floor{t}} \ell_{\floor{s}}(r_\gamma^s;z) (\de s + \sqrt{\tau\delta} \de B^s) \,.
\end{align}
Then, we have
\begin{align}
    \E\norm{\bar L_\gamma^t - L_\gamma^t}_2^2 & \leq 2 \ab(T + \tau\delta) \int_0^{\floor{t}} \E\norm{\ell_s(\bar r_\gamma^s;z) - \ell_{\floor{s}}(r_\gamma^s;z)}_2^2 \de s + 2 \ab(T + \tau\delta) \int_{\floor{t}}^t \E\norm{\ell_s(\bar r_\gamma^s;z)}_2^2 \de s \notag \\
                                              & \leq 2 \ab(T + \tau\delta) M^2 \int_0^{\floor{t}} \E\ab[(\abs{s - \floor{s}} + \norm{\bar r_\gamma^s - r_\gamma^s}_2)^2] \de s + 2 \ab(T + \tau\delta) \Phi (t - \floor{t}) \notag                                         \\
                                              & \leq K \gamma \,.
\end{align}
Let $\{(\bar U_\gamma^t,U_\gamma^t) \}_{t \in [0,T]}$ be a centered Gaussian process with covariance $\E\ab[\pmat{\bar L_\gamma^t \\ L_\gamma^t} \pmat{L_\gamma^t \\ L_\gamma^t}^\transpose] / \delta$.
Since $\bar U_\gamma$ and $U_\gamma$ have covariance kernels $\bar \Sigma_\ell^\gamma/\delta$ and $\Sigma_\ell^\gamma/\delta$ respectively, we have
\begin{align}
    \dist_\lambda(\bar \Sigma_\ell^\gamma,\Sigma_\ell^\gamma) \leq \sup_{t \in [0,T]} \napier^{-\lambda t} \sqrt{\E \norm{\bar U_\gamma^t - U_\gamma^t}_2^2} = \sup_{t \in [0,T]} \napier^{-\lambda t} \sqrt{\E\norm{\bar L_\gamma^t - L_\gamma^t}_2^2 / \delta} \leq K \sqrt{\gamma} \,. \label{eq:dist_Sigmal_gamma}
\end{align}

\paragraph{Bound of $\dist_\lambda(\bar\Gamma^\gamma,\Gamma^\gamma)$.}

By \cref{eq:dist_r_gamma}, we have
\begin{align}
    \dist_\lambda(\bar\Gamma^\gamma,\Gamma^\gamma) & = \sup_{t \in [0,T]} \napier^{-\lambda t} \norm{\bar\Gamma^\gamma(t) - \Gamma^\gamma(t)}_2 \leq \sup_{t \in [0,T]} \sqrt{\E \norm{\nabla_r \ell_t(\bar r_\gamma^t;z) - \nabla_r \ell_{\floor{t}}(r_\gamma^t;z)}_2^2} \notag \\
                                                   & \leq M \sup_{t \in [0,T]} \sqrt{2(t-\floor{t})^2 + 2\E \norm{\bar r_\gamma^t - r_\gamma^t}_2^2} \leq K \sqrt{\gamma} \,. \label{eq:dist_Gamma_gamma}
\end{align}

\paragraph{Bound of $\dist_\lambda(R_\ell^1,R_\ell^2)$.}

Let $\bar \rho_{r,\gamma}^{t,t'}$ and $\rho_{r,\gamma}^{t,t'}$ be the stochastic processes satisfying
\begin{align}
    \bar \rho_{r,\gamma}^{t,t'} & = -\frac{1}{\delta} \int_{t'}^{t} R_\theta^\gamma(t,s) \bar \rho_{\ell,\gamma}^{s,t'} (\de s + \sqrt{\tau\delta} \de B^s) - \frac{1}{\delta} R_\theta^\gamma(t,t') \nabla_r \ell_{t'}(\bar r_\gamma^{t'};z) \,,                                             \\
    \rho_{r,\gamma}^{t,t'}      & = -\frac{1}{\delta} \ind(\ceil{t'} \leq \floor{t}) \int_{\ceil{t'}}^{\floor{t}} R_\theta^\gamma(t,s) \rho_{\ell,\gamma}^{s,t'} (\de s + \sqrt{\tau\delta} \de B^s) - \frac{1}{\delta} R_\theta^\gamma(t,t') \nabla_r \ell_{\floor{t'}}(r_\gamma^{t'};z) \,.
\end{align}
Let $\bar \rho_{\ell,\gamma}^{t,t'} = \nabla_r \ell_t(\bar r_\gamma^t;z) \bar \rho_{r,\gamma}^{t,t'}$ and $\rho_{\ell,\gamma}^{t,t'} = \nabla_r \ell_{\floor{t}}(r_\gamma^t;z) \rho_{r,\gamma}^{t,t'}$.
Similarly, define $\bar D_{\ell,\gamma}^{t,t'}$ and $D_{\ell,\gamma}^{t,t'}$ for processes defined by $\calT_{\theta\to\ell}$ and $\calT_{\theta\to\ell}^\gamma$, respectively.
Then, we have
\begin{align}
    \norm{\bar R_\ell^\gamma(t,t') - R_\ell^\gamma(t,t')}_2^2 & \leq \E \norm{\bar\rho_{\ell,\gamma}^{t,t'} - \rho_{\ell,\gamma}^{t,t'}}_2^2 + \tau\delta \E \norm{\bar D_{\ell,\gamma}^{t,t'} - D_{\ell,\gamma}^{t,t'}}_2^2 \,.
\end{align}
We first bound $\E \norm{\bar\rho_{\ell,\gamma}^{t,t'} - \rho_{\ell,\gamma}^{t,t'}}_2^2$. We have
\begin{align}
    \E \norm{\bar\rho_{\ell,\gamma}^{t,t'} - \rho_{\ell,\gamma}^{t,t'}}_2^2 & = \E \norm{\nabla_r \ell_t(\bar r_\gamma^t;z) \bar \rho_{r,\gamma}^{t,t'} - \nabla_r \ell_{\floor{t}}(r_\gamma^t;z) \rho_{r,\gamma}^{t,t'}}_2^2 \notag      \\
                                                              & \leq 2 \sqrt{\E\norm{\bar \rho_{r,\gamma}^{t,t'}}_2^4} \sqrt{\E\norm{\nabla_r \ell_t(\bar r_\gamma^t;z) - \nabla_r \ell_{\floor{t}}(r_\gamma^t;z)}_2^4} + 2 M^2 \E\norm{\bar \rho_{r,\gamma}^{t,t'} - \rho_{r,\gamma}^{t,t'}}_2^2 \notag \\
                                                              & \leq K \sqrt{\gamma^4 + \E\norm{\bar r_\gamma^t - r_\gamma^t}_2^4} + K \E\norm{\bar \rho_{r,\gamma}^{t,t'} - \rho_{r,\gamma}^{t,t'}}_2^2 \,.
\end{align}

First, we bound $\E \norm{\bar r_\gamma^t - r_\gamma^t}_2^4$. We have
\begin{align}
    & \E\norm{\bar r_\gamma^t - r_\gamma^t}_2^4 \notag \\
    & \leq K \int_0^{\floor{t}} \norm{R_\theta^\gamma(t,s)}_2^4 \E\norm{\ell_s(\bar r_\gamma^s;z) - \ell_{\floor{s}}(r_\gamma^s;z)}_2^4 \de s + K (t - \floor{t}) \int_{\floor{t}}^t \norm{R_\theta^\gamma(t,s)}_2^4 \E\norm{\ell_s(\bar r_\gamma^s;z)}_2^4 \de s \notag \\
                                              & \leq K \int_0^{\floor{t}} (\gamma^4 + \E\norm{\bar r_\gamma^s - r_\gamma^s}_2^4) \de s + K \gamma^2 \leq K \gamma^2 + K \int_0^t \E\norm{\bar r_\gamma^s - r_\gamma^s}_2^4 \de s \,.
\end{align}
By Gr\"onwall's inequality, we have
\begin{align}
    \sup_{t \in [0,T]} \E \norm{r_1^t - r_2^t}_2^4 \leq K \gamma^2 \,.
\end{align}
Next, we bound $\E\norm{\bar \rho_{r,\gamma}^{t,t'} - \rho_{r,\gamma}^{t,t'}}_2^2$. We have
\begin{align}
    \norm{\rho_{r,\gamma}^{t,t'} - \bar \rho_{r,\gamma}^{t,t'}}_2 & \leq \frac{1}{\delta} \norm*{\int_{t'}^{\ceil{t'}} R_\theta^\gamma(t,s) \bar \rho_{\ell,\gamma}^{s,t'} (\de s + \sqrt{\tau\delta} \de B^s)}_2 + \frac{1}{\delta} \norm*{\int_{\floor{t}}^{t} R_\theta^\gamma(t,s) \bar \rho_{\ell,\gamma}^{s,t'} (\de s + \sqrt{\tau\delta} \de B^s)}_2 \notag \\
                                                                  & \qquad + \frac{1}{\delta} \norm*{\int_{\ceil{t'}}^{\floor{t}} R_\theta^\gamma(t,s) (\rho_{\ell,\gamma}^{s,t'} - \bar \rho_{\ell,\gamma}^{s,t'}) (\de s + \sqrt{\tau\delta} \de B^s)}_2 \notag                                                                                                  \\
                                                                  & \qquad + \frac{1}{\delta} \norm{R_\theta^\gamma(t,t') (\nabla_r \ell_t(\bar r_\gamma^{t'};z) - \nabla_r \ell_{\floor{t'}}(r_\gamma^{t'};z))}_2 \,,
\end{align}
Thus, we have
\begin{align}
    \E\norm{\rho_{r,\gamma}^{t,t'} - \bar \rho_{r,\gamma}^{t,t'}}_2^2 & \leq K \int_{t'}^{\ceil{t'}} \norm{R_\theta^\gamma(t,s)}_2^2 \E \norm{\bar \rho_{\ell,\gamma}^{s,t'}}_2^2 \de s + K \int_{\floor{t}}^{t} \norm{R_\theta^\gamma(t,s)}_2^2 \E \norm{\bar \rho_{\ell,\gamma}^{s,t'}}_2^2 \de s \notag \\
                                                                      & \qquad + K \int_{\ceil{t'}}^{\floor{t}} \norm{R_\theta^\gamma(t,s)}_2^2 \E \norm{\rho_{\ell,\gamma}^{s,t'} - \bar \rho_{\ell,\gamma}^{s,t'}}_2^2 \de s \notag                                                                      \\
                                                                      & \qquad + K \norm{R_\theta^\gamma(t,t')}_2^2 \E \norm{\nabla_r \ell_t(\bar r_\gamma^{t'};z) - \nabla_r \ell_{\floor{t'}}(r_\gamma^{t'};z)}_2^2 \notag                                                                               \\
                                                                      & \leq K \gamma + K \int_{t'}^{t} \E \norm{\rho_{\ell,\gamma}^{s,t'} - \bar \rho_{\ell,\gamma}^{s,t'}}_2^2 \de s \,.
\end{align}
By Gr\"onwall's inequality, we have
\begin{align}
    \sup_{0 \leq t' \leq t \leq T} \E \norm{\rho_{r,\gamma}^{t,t'} - \bar \rho_{r,\gamma}^{t,t'}}_2^2 \leq K \gamma \,.
\end{align}

Next, we bound $\E \norm{\bar D_{\ell,\gamma}^{t,t'} - D_{\ell,\gamma}^{t,t'}}_2^2$ similarly. We do not repeat the details, but following the same argument, we have
\begin{align}
    \E \norm{\bar D_{\ell,\gamma}^{t,t'} - D_{\ell,\gamma}^{t,t'}}_2^2 \leq K \gamma \,.
\end{align}
Therefore, we have
\begin{align}
    \norm{\bar R_\ell^\gamma(t,t') - R_\ell^\gamma(t,t')}_2^2 & \leq K \gamma \,. \label{eq:dist_Rl_gamma}
\end{align}

By \cref{eq:dist_Sigmal_gamma,eq:dist_Gamma_gamma,eq:dist_Rl_gamma}, we have
\begin{align}
    \dist_\lambda(\bar Y^\gamma,Y^\gamma) \leq K \sqrt{\gamma} \,.
\end{align}

\paragraph{Bound of $\dist_\lambda(\bar C_\theta^\gamma,C_\theta^\gamma)$.}
Let $\bar U_\gamma \sim \GP(0,\bar \Sigma_\ell^\gamma/\delta)$ and $U_\gamma \sim \GP(0,\Sigma_\ell^\gamma/\delta)$ be Gaussian processes coupled such that
\begin{align}
    \sup_{t \in [0,T]} \napier^{-\lambda t} \sqrt{\E \norm{\bar U_\gamma^t - U_\gamma^t}_2^2} \leq 2 \cdot \dist_\lambda(\bar \Sigma_\ell^\gamma,\Sigma_\ell^\gamma) \,.
\end{align}
Let $\bar \theta_\gamma^t$ and $\theta_\gamma^t$ be the solution of
\begin{align}
    \bar \theta_\gamma^t & = \theta^0 + \bar U_\gamma^t - \int_0^t \ab(h_s(\bar \theta_\gamma^s) + \bar \Gamma^\gamma(s)\bar \theta_\gamma^s + \int_0^s \bar R_\ell^\gamma(s,s') \bar \theta_\gamma^{s'} \de s') \de s \,,      \\
    \theta_\gamma^t      & = \theta^0 + U_\gamma^t - \int_0^{\floor{t}} \ab(h_{\floor{s}}(\theta_\gamma^s) + \Gamma^\gamma(s)\theta_\gamma^s + \int_0^{\floor{s}} R_\ell^\gamma(s,s') \theta_\gamma^{s'} \de s') \de s \,.
\end{align}
Then, we have
\begin{align}
    \E \norm{\bar \theta_\gamma^t - \theta_\gamma^t}_2^2 & \leq K \ab(\gamma + \E\norm{\bar U_\gamma^t - U_\gamma^t}_2^2 + \sup_{t \in [0,T]} \norm{\bar \Gamma^\gamma(t) - \Gamma^\gamma(t)}_2^2 + \sup_{0 \leq t' \leq t \leq T} \norm{\bar R_\ell^\gamma(t,t') - R_\ell^\gamma(t,t')}_2^2) \notag \\
                                                         & \qquad + K \int_0^t \E \norm{\bar \theta^s_\gamma - \theta^s_\gamma}_2^2 \de s \notag                                                                                                                                                     \\
                                                         & \leq K \ab(\gamma + \dist_\lambda(\bar Y^\gamma,Y^\gamma)^2) + K \int_0^t \E \norm{\bar \theta^s_\gamma - \theta^s_\gamma}_2^2 \de s \,.
\end{align}
By Gr\"onwall's inequality, we have
\begin{align}
    \sup_{t \in [0,T]} \E \norm{\bar \theta_\gamma^t - \theta_\gamma^t}_2^2 \leq K \ab(\gamma + \dist_\lambda(\bar Y^\gamma,Y^\gamma)^2) \,. \label{eq:dist_theta_gamma}
\end{align}

Let $\{(\bar w_\gamma^t,w_\gamma^t) \}_{t \in [0,T]}$ be a centered Gaussian process with covariance $\E\ab[\pmat{\bar \theta^t_\gamma \\ \theta_\gamma^t} \pmat{\bar \theta^t_\gamma \\ \theta_\gamma^t}^\transpose]$.
Since $\bar w_\gamma$ and $w_\gamma$ have covariance kernels $\bar C_\theta^\gamma$ and $C_\theta^\gamma$ respectively, we have
\begin{align}
    \dist_\lambda(\bar C_\theta^\gamma,C_\theta^\gamma) \leq \sup_{t \in [0,T]} \napier^{-\lambda t} \sqrt{\E \norm{\bar w_\gamma^t - w_\gamma^t}_2^2} = \sup_{t \in [0,T]} \napier^{-\lambda t} \sqrt{\E\norm{\bar \theta_\gamma^t - \theta_\gamma^t}_2^2} \leq K (\sqrt{\gamma} + \dist_\lambda(\bar Y^\gamma,Y^\gamma)) \,. \label{eq:dist_Ctheta_gamma}
\end{align}

\paragraph{Bound of $\dist_\lambda(R_\theta^1,R_\theta^2)$.}
Let $\bar \rho_{\theta,\gamma}^{t,t'}$ and $\rho_{\theta,\gamma}^{t,t'}$ be the stochastic processes satisfying
\begin{align}
    \bar \rho_{\theta,\gamma}^{t,t'} & = I_m - \int_{t'}^{t} \ab((\nabla_\theta h_s(\bar\theta_\gamma^s) + \bar \Gamma^\gamma(s)) \bar \rho_{\theta,\gamma}^{s,t'} + \int_{t'}^{s} \bar R_\ell^\gamma(s,s') \bar \rho_{\theta,\gamma}^{s',t'} \de s') \de s \,,                                                  \\
    \rho_{\theta,\gamma}^{t,t'}      & = I_m - \ind(\ceil{t'} \leq \floor{t}) \int_{\ceil{t'}}^{\floor{t}} \ab((\nabla_\theta h_{\floor{s}}(\theta_\gamma^s) + \Gamma^\gamma(s)) \rho_{\theta,\gamma}^{s,t'} + \int_{\ceil{t'}}^{\floor{s}} R_\ell^\gamma(s,s') \rho_{\theta,\gamma}^{s',t'} \de s') \de s \,,
\end{align}
and set $\bar R_\theta^\gamma(t,t') = \E[\bar \rho_{\theta,\gamma}^{t,t'}]$ and $R_\theta^\gamma(t,t') = \E[\rho_{\theta,\gamma}^{t,t'}]$.
We have
\begin{align}
    \norm{\bar \rho_{\theta,\gamma}^{t,t'} - \rho_{\theta,\gamma}^{t,t'}}_2 & \leq K \ab(\gamma + \sup_{t \in [0,T]} \norm{\bar \Gamma^\gamma(t) - \Gamma^\gamma(t)}_2 + \sup_{0 \leq t' \leq t \leq T} \norm{\bar R_\ell^\gamma(t,t') - R_\ell^\gamma(t,t')}_2) \notag \\
                                                                & \qquad + K \int_{t'}^{t} \norm{\bar \rho_{\theta,\gamma}^{s,t'} - \rho_{\theta,\gamma}^{s,t'}}_2 \de s \notag                                                                                         \\
                                                                & \leq K (\gamma + \dist_\lambda(\bar Y^\gamma,Y^\gamma)) + K \int_{t'}^{t} \norm{\bar \rho_{\theta,\gamma}^{s,t'} - \rho_{\theta,\gamma}^{s,t'}}_2 \de s \,.
\end{align}
By Gr\"onwall's inequality, we have
\begin{align}
    \sup_{0 \leq t' \leq t \leq T} \norm{\bar \rho_{\theta,\gamma}^{t,t'} - \rho_{\theta,\gamma}^{t,t'}}_2 \leq K (\gamma + \dist_\lambda(\bar Y^\gamma,Y^\gamma)) \,.
\end{align}
Thus, we have
\begin{align}
    \dist_\lambda(\bar R_\theta^\gamma, R_\theta^\gamma) \leq \sup_{0 \leq t' \leq t \leq T} \norm{\bar R_\theta^\gamma(t,t') - R_\theta^\gamma(t,t')}_2 & \leq \sup_{0 \leq t' \leq t \leq T} \E \norm{\bar \rho_{\theta,\gamma}^{t,t'} - \rho_{\theta,\gamma}^{t,t'}}_2 \notag \\
    & \leq K (\gamma + \dist_\lambda(\bar Y^\gamma,Y^\gamma)) \,. \label{eq:dist_Rtheta_gamma}
\end{align}

By \cref{eq:dist_Ctheta_gamma,eq:dist_Rtheta_gamma}, we have
\begin{align}
    \dist_\lambda(\bar X^\gamma,X^\gamma) \leq K (\sqrt{\gamma} + \dist_\lambda(\bar Y^\gamma,Y^\gamma)) \,.
\end{align}

\subsubsection{Proof of Lemma \ref{lem:dmft_process_coupling}} \label{app:proof_dmft_process_coupling}

Following the same calculation that led to \cref{eq:dist_r_gamma}, we can show that there exists a coupling of the processes $r^t$ and $r_\gamma^t$ such that
\begin{align}
    \sup_{t \in [0,T]} \sqrt{\E \norm{r^t - r_\gamma^t}_2^2} \leq K \ab(\sqrt{\gamma} + \dist_\lambda(X,X^\gamma)) \,.
\end{align}
Following the same calculation that led to \cref{eq:dist_theta_gamma}, we can show that there exists a coupling of the processes $\theta^t$ and $\theta_\gamma^t$ such that
\begin{align}
    \sup_{t \in [0,T]} \sqrt{\E \norm{\theta^t - \theta_\gamma^t}_2^2} \leq K \ab(\sqrt{\gamma} + \dist_\lambda(Y,Y^\gamma)) \,.
\end{align}

\section{Details of Applications and Special Cases}

\subsection{Infinite Data Limit}
\label{app:infinite_data_limit}

\subsubsection{Derivation of the Reduced DMFT Equation}

We show that in the infinite data limit $\delta \to \infty$, the DMFT equations reduce to the simple form in \cref{eq:dmft_infinite_data}.
It is easy to see that as $\delta \to \infty$, $R_\ell(t,t') \to 0$ and hence $R_\theta(t,t') \to I_m$. Then, the equations for $\theta^t$ and $r^t$ reduce to
\begin{align}
    \theta^t = \theta^0 + U^t - \int_0^t (h_s(\theta^s) + \Gamma(s) \theta^s) \de s \,, \quad U \sim \GP(0,\tilde \Sigma_\ell) \,, \quad r^t = w^t \,, \quad w \sim \GP(0,C_\theta) \,,
\end{align}
where we introduced the rescaled variable $\tilde \Sigma_\ell \coloneqq \Sigma_\ell/\delta$. In addition, the equation for $\tilde\Sigma_\ell$ reduces to
\begin{align}
    \tilde \Sigma_\ell(t,t') = \tau \int_0^{t \land t'} \E[\ell_s(w^s;z) \ell_s(w^s;z)^\transpose] \de s \,.
\end{align}
Defining $C_\ell(t) = \E[\ell_t(w^t;z) \ell_t(w^t;z)^\transpose]$, $\tilde \Sigma_\ell$ is a covariance kernel of an integrated Brownian motion:
\begin{align}
    U^t = \int_0^t \sqrt{\tau C_\ell(s)} \de W^s \,,
\end{align}
where $W^t$ is a standard Brownian motion in $\reals^m$. Thus, $\theta^t$ follows the following SDE:
\begin{align}
    \de \theta^t = -(h_t(\theta^t) + \Gamma(t) \theta^t) \de t + \sqrt{\tau C_\ell(t)} \de W^t \,.
\end{align}
These equations recover \cref{eq:dmft_infinite_data}.

For planted models of the form in \cref{eq:sgf_planted}, the DMFT equation in the infinite data limit \eqref{eq:dmft_infinite_data} becomes
\begin{equation}
    \begin{gathered}
        \de \theta^t  = -\ab(h_t(\theta^t) + \Gamma(t)\theta^t + \Gamma^*(t) \theta^*) \de t + \sqrt{\tau C_\ell(t)}\de W^t \,, \\
        C_\theta(t,t') = \E[\theta^t \theta^{t'\transpose}] \quad (t,t' \in [0,T]\cup\{*\}) \,, \quad C_\ell(t) = \E[\ell_t(w^t,w^*;z) \ell_t(w^t,w^*;z)^\transpose] \,, \\
        \Gamma(t) = \E[\nabla_w\ell_t(w^t,w^*;z)] \,, \quad \Gamma^*(t) = \E[\nabla_{w^*}\ell_t(w^t,w^*;z)] \,, \quad w \sim \GP(0,C_\theta) \,.
    \end{gathered} \label{eq:dmft_infinite_data_planted}
\end{equation}

\subsubsection{Example: Linear Regression}

Consider the linear regression setting described in \cref{sec:linear_regression}. This corresponds to the choice $m=1$, $h_t = 0$, and $\ell_t(r,r^*;z) = r - r^* - z$. Then, \cref{eq:dmft_infinite_data_planted} reduces to
\begin{equation}
    \begin{gathered}
        \de \theta^t  = -(\theta^t - \theta^*) \de t + \sqrt{\tau C_\ell(t)}\de W^t \,, \\
        C_\theta(t,t') = \E[\theta^t \theta^{t'}] \quad (t,t' \in [0,T]\cup\{*\}) \,, \quad C_\ell(t) = \E[(w^t - w^* - z)^2] \,, \quad w \sim \GP(0,C_\theta) \,.
    \end{gathered}
\end{equation}
Using the Fokker--Planck equation, the density of $\theta^t$ given by the above SDE is given by the following partial differential equation (PDE):
\begin{align}
    \partial_t \mu(t,\theta) = \partial_\theta \ab((\theta - \theta^*) \mu(t,\theta)) + \frac{\tau C_\ell(t)}{2} \partial_\theta^2 \mu(t,\theta) \,.
\end{align}
This equation coincides with the PDE derived in \citet{wang2017scaling}.

Furthermore, the training dynamics for test errors $\Etrain(t) = \E[(\theta - \theta^*)^2] + \sigma^2$ can be obtained in closed form, where $\sigma^2 \coloneqq \E[z^2]$.
Let $\rho^2 \coloneqq \E[(\theta^*)^2]$. Then, we have $C_\ell(t) = C_\theta(t,t) - 2 C_\theta(t,*) + \rho^2 + \sigma^2$. By using It\^o's lemma on $C_\theta(t,*) = \E[\theta^t \theta^*]$ and $C_\theta(t,t) = \E[(\theta^t)^2]$, we obtain the following system of ODEs for $C_\theta(t,*)$ and $C_\theta(t,t)$:
\begin{align}
    \diff{}{t} C_\theta(t,*) & = - C_\theta(t,*) + \rho^2 \,,                                                                        \\
    \diff{}{t} C_\theta(t,t) & = -2 C_\theta(t,t) + 2 C_\theta(t,*) + \tau (C_\theta(t,t) - 2 C_\theta(t,*) + \rho^2 + \sigma^2) \,.
\end{align}
These are linear ODEs and can be solved in closed form as follows (assuming zero initialization, i.e., $\theta^0 = C_\theta(0,0) = C_\theta(0,*) = 0$):
\begin{align}
    C_\theta(t,*) = \rho^2 (1 - \napier^{-t}) \,, \quad C_\theta(t,t) = \rho^2 (1 - 2 \napier^{-t} + \napier^{-(2 - \tau) t}) + \frac{\tau\sigma^2}{2 - \tau} (1 - \napier^{-(2 - \tau) t}) \,.
\end{align}

\subsection{Planted Models}
\label{app:dmft_planted}

\subsubsection{The DMFT Equation}
\label{app:dmft_eq_planted}

We state the DMFT equation $\mathfrak{S}^*$ for planted model \eqref{eq:sgf_planted} for functions $R_\ell,C_\theta \colon (\reals_{\geq 0} \cup \{*\})^2 \to \reals^{m \times m}$, $R_\theta,\Sigma_\ell\colon \reals_{\geq 0}^2 \to \reals^{m \times m}$, and $\Gamma \colon \reals_{\geq 0} \to \reals^{m \times m}$ self-consistently as follows.
First, given $\Sigma_\ell,R_\ell,\Gamma$, define stochastic processes $\{\theta^t \in \reals^m \}_{t \geq 0}$ and $\{\rho_\theta^{t,t'} \in \reals^{m \times m} \}_{t \geq t' \geq 0}$ by the following equations.
\begin{align}
    \theta^t           & = \theta^0 + U^t - \int_0^t \ab((h_s(\theta^s) + \Gamma(s)\theta^s) + \int_0^s R_\ell(s,s') \theta^{s'} \de s' + R_\ell(t,*) \theta^*) \de s \,, \label{eq:theta_planted}          \\
    \rho_\theta^{t,t'} & = I_m - \int_{t'}^t \ab((\nabla_\theta h_s(\theta^s) + \Gamma(s)) \rho_\theta^{s,t'} + \int_{t'}^s R_\ell(s,s') \rho_\theta^{s',t'} \de s') \de s \,, \label{eq:rho_theta_planted}
\end{align}
where $U \sim \GP(0,\Sigma_\ell/\delta)$.
Then, set $C_\theta,R_\theta$ as
\begin{align}
    C_\theta(t,t') = \E[\theta^t \theta^{t'\transpose}] \quad (s,t \in [0,T] \cup \{*\}) \,, \quad R_\theta(t,t') = \E[\rho_\theta^{t,t'}] \quad (t \geq t') \,,
\end{align}
and $R_\theta(t,t') = 0$ for $t < t'$.

Next, given $C_\theta,R_\theta$, define stochastic processes $\{r^t \in \reals^m \}_{t \geq 0}$, $r^* \in \reals^m$, and $\{\rho_\ell^{t,t'},\rho_{\ell*}^{t,t'},D_\ell^{t,t'},\allowbreak D_{\ell*}^{t,t'} \in \reals^{m \times m}\}_{t \geq t' \geq 0}$ by the following equations.
\begin{align}
    r^t                 & = w^t -\frac{1}{\delta} \int_0^t R_\theta(t,s) \ell_s(r^s,r^*;z) (\de s + \sqrt{\tau\delta} \de B^s) \,, \quad r^* = w^* \,, \quad \quad w \sim \GP(0,C_\theta) \,, \label{eq:r_planted}                                           \\
    \rho_\ell^{t,t'}    & = \nabla_r \ell_t(r^t,r^*;z) \rho_r^{t,t'} \,, \label{eq:rho_ell_planted}                                                                                                                                                          \\
    \rho_{\ell*}^{t,t'} & = \nabla_r \ell_t(r^t,r^*;z) \rho_{r*}^{t,t'} \,, \label{eq:rho_ell_star_planted}                                                                                                                                                  \\
    D_\ell^{t,t'}       & = \nabla_r \ell_t(r^t,r^*;z) \ab( -\frac{1}{\delta} \int_{t'}^t R_\theta(t,s) D_\ell^{s,t'} (\de s + \sqrt{\tau\delta} \de B^s)) + \nabla_r^2 \ell_t(r^t,r^*;z) [D_r^{t,t'}]  \rho_r^{t,t'} \,, \label{eq:D_ell_planted}           \\
    D_{\ell*}^{t,t'}    & = \nabla_r \ell_t(r^t,r^*;z) \ab( -\frac{1}{\delta} \int_{t'}^t R_\theta(t,s) D_{\ell*}^{s,t'} (\de s + \sqrt{\tau\delta} \de B^s)) + \nabla_r^2 \ell_t(r^t,r^*;z) [D_r^{t,t'}] \rho_{r*}^{t,t'} \,, \label{eq:D_ell_star_planted}
\end{align}
where $B^t$ is a Brownian motion in $\reals$, and we defined the auxiliary processes $\rho_r^{t,t'}$, $\rho_{r*}^{t,t'} \in \reals^{m \times m}$, and $D_r^{t,t'} \in \reals^m$ as
\begin{align}
    \rho_r^{t,t'}    & = -\frac{1}{\delta} \int_{t'}^t R_\theta(t,s) \rho_\ell^{s,t'} (\de s + \sqrt{\tau\delta} \de B^s) - \frac{1}{\delta} R_\theta(t,t') \nabla_r \ell_{t'}(r^{t'},r^*;z) \,, \label{eq:rho_r_planted}                     \\
    \rho_{r*}^{t,t'} & = -\frac{1}{\delta} \int_{t'}^t R_\theta(t,s) \rho_{\ell*}^{s,t'} (\de s + \sqrt{\tau\delta} \de B^s) - \frac{1}{\delta} R_\theta(t,t') \nabla_{r^*} \ell_{t'}(r^{t'},r^*;z) \,, \label{eq:rho_r_star_planted}         \\
    D_r^{t,t'}       & = -\frac{1}{\delta} \int_{t'}^t R_\theta(t,s) \nabla_r \ell_s(r^s,r^*;z) D_r^{s,t'} (\de s + \sqrt{\tau\delta} \de B^s) - \sqrt{\frac{\tau}{\delta}} R_\theta(t,t') \ell_{t'}(r^{t'},r^*;z) \,. \label{eq:D_r_planted}
\end{align}
Then, set $\Sigma_\ell,R_\ell,\Gamma$ as
\begin{align}
    \Sigma_\ell(t,t') & = \E[L^t L^{t'\transpose}] \,, \quad L^t \coloneqq \int_0^t \ell_s(r^s,r^*;z) (\de s + \sqrt{\tau\delta} \de B^s) \,, \label{eq:Sigma_ell_planted}                      \\
    R_\ell(t,t')      & = \E[\rho_\ell^{t,t'}] + \sqrt{\tau\delta} \E[D_\ell^{t,t'}] \quad (t \geq t') \,, \label{eq:R_ell_planted}                                                             \\
    R_\ell(t,*)       & = \E[\nabla_{r^*}\ell_t(r^t,r^*;z)] + \int_{0}^t \E[\rho_{\ell*}^{t,t'} + \sqrt{\tau\delta} D_{\ell*}^{t,t'}] \de t' \quad (t \geq 0) \,, \label{eq:R_ell_star_planted} \\
    \Gamma(t)         & = \E[\nabla_r\ell_t(r^t,r^*;z)] \,, \label{eq:Gamma_planted}
\end{align}
and $R_\ell(t,t') = 0$ for $t < t'$.

Then, the solution of the DMFT system $\mathfrak{S}^*$ is defined as a fixed point of the above two mappings.

In the informal notation, the DMFT equation for the planted model can be written as
\begin{equation}
    \begin{aligned}
        \diff{}{t} \theta^t & = u^t - (h_t(\theta^t) + \Gamma(t)\theta^t) - \int_0^t R_\ell(t,s) \theta^{s} \de s - R_\ell(t,*) \theta^* \,, \quad u \sim \GP(0,C_\ell/\delta) \,,             \\
        r^t                 & = w^t -\frac{1}{\delta} \int_0^t R_\theta(t,s) \ell_s(r^s,r^*;z) (\de s + \sqrt{\tau\delta} \de B^s) \,, \quad r^* = w^* \,, \quad w \sim \GP(0,C_\theta) \,,    \\
        C_\theta(t,t')      & = \E[\theta^t \theta^{t'\transpose}] \quad (t,t' \in [0,T] \cup \{*\}) \,, \quad R_\theta(t,t') = \E\ab[\diffp{\theta^t}{u^{t'}}] \,,                            \\
        C_\ell(t,t')        & = \E[\ell_t(r^t,r^*;z) (1 + \sqrt{\tau\delta} \dot B^t) \ell_{t'}(r^{t'},r^*;z)^\transpose (1 + \sqrt{\tau\delta} \dot B^{t'})]\,,                               \\
        R_\ell(t,t')        & = \E\ab[\diffp{\ell_t(r^t,r^*;z)}{w^{t'}}] \,, \quad R_\ell(t,*) = \E\ab[\diffp{\ell_t(r^t,r^*;z)}{w^*}] \,, \quad \Gamma(t) = \E[\nabla_r\ell_t(r^t,r^*;z)] \,.
    \end{aligned}
\end{equation}

\subsubsection{Proof of Corollary \ref{cor:sgf_dmft_planted}}
\label{app:proof_sgf_dmft_planted}

We transform the DMFT equation $\mathfrak{S}$ applied to the planted model \eqref{eq:sgf_planted} to the DMFT equation $\mathfrak{S}^*$ defined above.
We distinguish the variables in $\mathfrak{S}$ by adding a bar over them, e.g., $\bar\theta^t$.
The variables in $\mathfrak{S}$ have dimensions $2m$. Identifying components that are trivially zero, we see that the solution to $\mathfrak{S}$ is of the form
\begin{equation}
    \begin{gathered}
        \bar\theta^t = \pmat{\bar\theta_1^t \\ \theta^*} \,, \quad \bar U^t = \pmat{\bar U_1^t \\ 0} \,, \quad \bar\rho_\theta^{t,t'} = \pmat{\bar\rho_{\theta,1}^{t,t'} & {-} \\ 0 & I_m} \,, \\
        \bar C_\theta(t,t') = \pmat{\bar C_\theta^{11}(t,t') & \bar C_\theta^{12}(t,t') \\ \bar C_\theta^{12}(t,t')^\transpose & \bar C_\theta^{22}(t,t')} \,, \quad \bar R_\theta(t,t') = \pmat{\bar R_\theta^{1}(t,t') & {-} \\ 0 & I_m} \,, \\
        \bar r^t = \pmat{\bar r_1^t \\ \bar r_2^t} \,, \quad \bar w^t = \pmat{\bar w_1^t \\ \bar w_2^t} \,, \quad \bar\rho_\ell^{t,t'} = \pmat{\bar\rho_{\ell,1}^{t,t'} & \bar \rho_{\ell,2}^{t,t'} \\ 0 & 0} \,, \quad \bar D_\ell^{t,t'} = \pmat{\bar D_{\ell,1}^{t,t'} & \bar D_{\ell,2}^{t,t'} \\ 0 & 0} \,, \\
        \bar \rho_r^{t,t'} = \pmat{\bar \rho_{r,1}^{t,t'} & \bar \rho_{r,2}^{t,t'} \\ 0 & 0} \,, \quad \bar D_r^{t,t'} = \pmat{\bar D_{r,1}^{t,t'} \\ 0} \,, \\
        \bar\Sigma_\ell(t,t') = \pmat{\bar\Sigma_\ell^{1}(t,t') & 0 \\ 0 & 0} \,, \quad \bar R_\ell(t,t') = \pmat{\bar R_\ell^{1}(t,t') & \bar R_\ell^{2}(t,t') \\ 0 & 0} \,, \quad \bar\Gamma(t) = \pmat{\bar\Gamma_{1}(t) & \bar\Gamma_{2}(t) \\ 0 & 0} \,.
    \end{gathered}
\end{equation}
Here, we indicated by ${-}$ the irrelevant variables. These variables satisfy
\begin{align}
    \bar\theta_1^t             & = \theta^0 + \bar U_1^t - \int_0^t \ab(h_s(\bar\theta_1^s) + \bar\Gamma_{1}(s) \bar\theta_1^s + \int_0^s \bar R_\ell^{1}(s,s') \bar\theta_1^{s'} \de s') \de s \notag                                                                                                       \\
                               & \qquad - \int_0^t \ab(\bar\Gamma_2(s) + \int_0^s \bar R_\ell^{2}(s,s') \de s') \theta^* \de s \,, \quad \bar U_1 \sim \GP(0,\bar\Sigma_\ell^{11}/\delta) \,,                                                                                                                \\
    \bar\rho_{\theta,1}^{t,t'} & = I_m - \int_{t'}^t \ab((\nabla_\theta h_s(\bar\theta_1^s) + \bar\Gamma_{1}(s)) \bar\rho_{\theta,1}^{s,t'} + \int_{t'}^s \bar R_\ell^{1}(s,s') \bar\rho_{\theta,1}^{s',t'} \de s') \de s \,,                                                                                \\
    \bar C_\theta^{11}(t,t')   & = \E[\bar\theta_1^t \bar\theta_1^{t'\transpose}] \,, \quad \bar C_\theta^{12}(t,t') = \E[\bar\theta_1^t \theta^{*\transpose}] \,, \quad \bar C_\theta^{22}(t,t') = \E[\theta^* \theta^{*\transpose}] \,, \quad \bar R_\theta^{1}(t,t') = \E[\bar\rho_{\theta,1}^{t,t'}] \,, \\
    \bar r_1^t                 & = \bar w_1^t -\frac{1}{\delta} \int_0^t \bar R_\theta^{1}(t,s) \ell_s(\bar r_1^s,\bar r_2^s;z) (\de s + \sqrt{\tau\delta} \de B^s) \,, \quad \bar r_2^t = \bar w_2^t \,, \quad \pmat{\bar w_1                                                                               \\ \bar w_2} \sim \GP(0,\bar C_\theta) \,, \\
    \bar\rho_{\ell,1}^{t,t'}   & = \nabla_r \ell_t(\bar r_1^t,\bar r_2^t;z) \bar\rho_{r,1}^{t,t'} \,, \quad \bar\rho_{\ell,2}^{t,t'} = \nabla_r \ell_t(\bar r_1^t,\bar r_2^t;z) \bar\rho_{r,2}^{t,t'} \,,                                                                                                    \\
    \bar D_{\ell,1}^{t,t'}     & = \nabla_r \ell_t(\bar r_1^t,\bar r_2^t;z) \ab( -\frac{1}{\delta} \int_{t'}^t \bar R_\theta^{1}(t,s) \bar D_{\ell,1}^{s,t'} (\de s + \sqrt{\tau\delta} \de B^s)) + \nabla_{rr}^2 \ell_t(\bar r_1^t,\bar r_2^t;z) [\bar D_{r,1}^{t,t'}] \bar\rho_{r,1}^{t,t'} \,,            \\
    \bar D_{\ell,2}^{t,t'}     & = \nabla_r \ell_t(\bar r_1^t,\bar r_2^t;z) \ab( -\frac{1}{\delta} \int_{t'}^t \bar R_\theta^{1}(t,s) \bar D_{\ell,2}^{s,t'} (\de s + \sqrt{\tau\delta} \de B^s)) + \nabla_{rr}^2 \ell_t(\bar r_1^t,\bar r_2^t;z) [\bar D_{r,1}^{t,t'}] \bar\rho_{r,2}^{t,t'} \,,            \\
    \bar\rho_{r,1}^{t,t'}      & = -\frac{1}{\delta} \int_{t'}^t \bar R_\theta^{1}(t,s) \bar\rho_{\ell,1}^{s,t'} (\de s + \sqrt{\tau\delta} \de B^s) - \frac{1}{\delta} \bar R_\theta^{1}(t,t') \nabla_r \ell_{t'}(\bar r_1^{t'},\bar r_2^{t'};z) \,,                                                        \\
    \bar\rho_{r,2}^{t,t'}      & = -\frac{1}{\delta} \int_{t'}^t \bar R_\theta^{1}(t,s) \bar\rho_{\ell,2}^{s,t'} (\de s + \sqrt{\tau\delta} \de B^s) - \frac{1}{\delta} \bar R_\theta^{1}(t,t') \nabla_w \ell_{t'}(\bar r_1^{t'},\bar r_2^{t'};z) \,,                                                        \\
    \bar D_{r,1}^{t,t'}        & = -\frac{1}{\delta} \int_{t'}^t \bar R_\theta^{1}(t,s) \nabla_r \ell_s(\bar r_1^s,\bar r_2^s;z) \bar D_{r,1}^{s,t'} (\de s + \sqrt{\tau\delta} \de B^s) - \sqrt{\frac{\tau}{\delta}} \bar R_\theta^{1}(t,t') \ell_{t'}(\bar r_1^{t'},\bar r_2^{t'};z) \,,                   \\
    \bar\Sigma_\ell^{1}(t,t')  & = \E[\bar L_1^t \bar L_1^{t'\transpose}] \,, \quad \bar L_1^t \coloneqq \int_0^t \ell_s(\bar r_1^s,\bar r_2^s;z) (\de s + \sqrt{\tau\delta} \de B^s) \,,                                                                                                                    \\
    \bar R_\ell^{1}(t,t')      & = \E[\bar\rho_{\ell,1}^{t,t'}] + \sqrt{\tau\delta} \E[\bar D_{\ell,1}^{t,t'}] \,, \quad \bar R_\ell^{2}(t,t') = \E[\bar\rho_{\ell,2}^{t,t'}] + \sqrt{\tau\delta} \E[\bar D_{\ell,2}^{t,t'}] \,,                                                                             \\
    \bar\Gamma_{1}(t)          & = \E[\nabla_r\ell_t(\bar r_1^t,\bar r_2^t;z)] \,, \quad \bar\Gamma_{2}(t) = \E[\nabla_{r^*}\ell_t(\bar r_1^t,\bar r_2^t;z)] \,.
\end{align}
Since $\bar C_\theta^{22}(t,t')$ is constant, $\bar r_2^t = \bar w_2^t$ is constant.
Then, $\mathfrak{S}$ reduces to $\mathfrak{S}^*$ by identifying
\begin{align}
    \begin{gathered}
        \bar\theta^t = \theta^t \,, \quad \bar U_1^t = U^t \,, \quad \bar\rho_{\theta,1}^{t,t'} = \rho_\theta^{t,t'} \,, \\
        \bar C_\theta^{11}(t,t') = C_\theta(t,t') \,, \quad \bar C_\theta^{12}(t,t') = C_\theta(t,*) \,, \quad \bar C_\theta^{22}(t,t') = C_\theta(*,*) \,, \quad \bar R_\theta^{1}(t,t') = R_\theta(t,t') \,, \\
        \bar r_1^t = r^t \,, \quad \bar w_1^t = w^t \,, \quad \bar r_2^t = \bar w_2^t = w^* \,, \quad \bar \rho_{\ell,1}^{t,t'} = \rho_{\ell}^{t,t'} \,, \bar \rho_{\ell,2}^{t,t'} = \rho_{\ell*}^{t,t'} \,, \quad \bar D_{\ell,1}^{t,t'} = D_{\ell}^{t,t'} \,, \bar D_{\ell,2}^{t,t'} = D_{\ell*}^{t,t'} \,, \\
        \bar \rho_{r,1}^{t,t'} = \rho_r^{t,t'} \,, \quad \bar \rho_{r,2}^{t,t'} = \rho_{r*}^{t,t'} \,, \quad \bar D_{r,1}^{t,t'} = D_r^{t,t'} \,, \\
        \bar \Sigma_\ell^{1}(t,t') = \Sigma_\ell(t,t') \,, \quad \bar R_\ell^{1}(t,t') = R_\ell(t,t') \,, \quad \bar\Gamma_{1}(t) = \Gamma(t) \,, \quad \bar \Gamma_2(t) + \int_0^t \bar R_\ell^2(t,t') \de t' = R_\ell(t,*) \,.
    \end{gathered}
\end{align}

\subsection{Linear Regression}
\label{app:proof_dmft_ridge}

In this section, we derive the DMFT equation for the SGF for the ridge regression (regularized linear regression) problem.
The loss for the ridge regression problem is defined as
\begin{align}
    \calL(\btheta) = \frac{1}{2n} \norm{\bX \btheta - \by}_2^2 + \frac{\lambda}{2d} \norm{\btheta}_2^2 \,, \quad \by = \bX \btheta^* + \bz \,,
\end{align}
with regularization parameter $\lambda \geq 0$. The SGF for this problem is given by
\begin{align}
    \de \btheta^t = -\ab(\lambda \btheta^t + \frac{1}{\delta} \bX^\transpose (\br^t - \br^* - \bz)) \de t + \sqrt{\frac{\tau}{\delta}} \sum_{i=1}^n \bx_i (r_i^t - r_i^* - z_i) \de B^t_i \,. \label{eq:sgf_ridge}
\end{align}
where $\br^t = \bX\btheta^t$ and $\br^* = \bX \btheta^*$.
For simplicity, we consider the case of zero initialization $\btheta^0 = 0$.
This is a special case of the SGF for planted models \eqref{eq:sgf_planted} with $m=1$, $h_t(\theta) = \lambda\theta$, and $\ell_t(r,r^*;z) = r - r^* - z$.

In this section, we prove the following proposition, which generalizes \cref{prop:dmft_linear} to the case of ridge regression with $\lambda \geq 0$.
\begin{proposition}[DMFT characterization of SGF for ridge regression] \label{prop:dmft_ridge}
    Assume that the noise $\bz \in \reals^n$ and the target parameter $\btheta^* \in \reals^d$ satisfy the same assumptions as in \cref{cor:sgf_dmft_planted}.
    Let $\rho^2 \coloneqq \E[(\theta^*)^2]$ and $\sigma^2 \coloneqq \E[z^2]$.
    Define the training error $\Etrain$ and the test error $\Etest$ for the parameter $\btheta$ as
    \begin{align}
        \Etrain(\btheta) = \frac{1}{n} \sum_{i=1}^n (\bx_i^\transpose \btheta - y_i)^2 \,, \quad \Etest(\btheta) = \E_{(\bx,y)}[(\bx^\transpose \btheta - y)^2] = \frac{1}{d} \norm{\btheta - \btheta^*}_2^2 + \sigma^2 \,.
    \end{align}

    Then, for any $0 \leq t_1,\dots,t_L < \infty$, we have
    \begin{align}
        \plim_{n,d\to\infty} \max_{l=1,\dots,L} \abs{\Etrain(\btheta^{t_l}) - \Etrain(t_l)} & = 0 \,, \quad \plim_{n,d\to\infty} \max_{l=1,\dots,L} \abs{\Etest(\btheta^{t_l}) - \Etest(t_l)} = 0 \,,
    \end{align}
    where $\Etrain(t)$ and $\Etest(t)$ solve the following system of linear Volterra equations:
    \begin{equation}
        \Etrain(t) = \Etrain_0(t) + \tau \int_0^t H_2(t-s) \Etrain(s) \de s \,, \quad \Etest(t) = \Etest_0(t) + \tau \int_0^t H_1(t-s) \Etrain(s) \de s \,, \label{eq:dmft_ridge_errors}
    \end{equation}
    where $H_i(t) \coloneqq \int x^i \napier^{-2(x + \lambda)t} \de\mu_\MP(x)$, and $\Etrain_0(t)$ and $\Etest_0(t)$ are the asymptotic train and test errors for the noiseless case $\tau = 0$, which are given by
    \begin{align}
        \Etrain_0(t) & = \rho^2 \int \frac{x (\lambda + x\napier^{-(x+\lambda)t})^2}{(x+\lambda)^2} \de\mu_\MP(x) + \frac{\sigma^2}{\delta} \int \frac{(\lambda + x\napier^{-(x+\lambda)t})^2}{(x+\lambda)^2} \de\mu_\MP(x) + \frac{\delta-1}{\delta}\sigma^2 \,, \label{eq:dmft_ridge_train0} \\
        \Etest_0(t)  & = \rho^2 \int \frac{(\lambda + x\napier^{-(x+\lambda)t})^2}{(x+\lambda)^2} \de\mu_\MP(x) + \frac{\sigma^2}{\delta} \int \frac{x}{(x+\lambda)^2} (1 - \napier^{-(x+\lambda)t})^2 \de\mu_\MP(x) + \sigma^2 \,. \label{eq:dmft_ridge_test0}
    \end{align}
\end{proposition}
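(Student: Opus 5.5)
We will deduce \cref{prop:dmft_ridge} from \cref{cor:sgf_dmft_planted} and then solve the resulting DMFT system explicitly, using that it is linear. Since $\ell_t(r,r^*;z)=r-r^*-z$ has $\nabla_r^2\ell=0$, \cref{thm:dmft_sol} gives a global solution, so the corollary applies on every finite horizon. Both error functionals are quadratic in the quantities controlled by the corollary: $\Etest(\btheta)=\frac1d\norm{\btheta-\btheta^*}_2^2+\sigma^2$, and $\Etrain(\btheta)=\frac1n\sum_i(r_i-r_i^*-z_i)^2$. Hence the $W_2$ convergence (equivalently, convergence of test functions with quadratic growth) yields
\[
\Etest(\btheta^{t})\to\E[(\theta^t-\theta^*)^2]+\sigma^2,\qquad \Etrain(\btheta^{t})\to\E[(r^t-r^*-z)^2].
\]
Both convergences hold in probability, jointly for finitely many times. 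It remains to show that these two deterministic functions solve \eqref{eq:dmft_ridge_errors}.

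\emph{Step 1: the response functions are the noiseless ones.} For linear $\ell$ we have $\nabla_r\ell=1$, $\nabla_{r^*}\ell=-1$ and $D_\ell=D_{\ell*}=0$. The equations for $\rho_r,\rho_\ell$ are linear with deterministic coefficients plus It\^o integrals. The It\^o integrals have mean zero, so taking expectations gives closed deterministic Volterra equations for $R_\ell(t,t')$ and $R_\ell(t,*)$. These equations do not involve $\tau$. Likewise $\rho_\theta$ in \eqref{eq:rho_theta_planted} is deterministic, since $\nabla_\theta h=\lambda$ and $\Gamma=1$, so $R_\theta$ is also independent of $\tau$. Therefore all response functions coincide with those of the $\tau=0$ gradient flow, whose explicit form in terms of the Marchenko--Pastur law is available from the noiseless analysis. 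One can obtain it by comparing with the $\tau=0$ flow $\dot\btheta=-(\bA)(\btheta-\btheta^*)+\dots$, where $\bA=\bX^\transpose\bX/\delta+\lambda$.

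\emph{Step 2: linear decomposition.} Because the system is linear, $\theta^t$ is an affine functional of $(\theta^*,U)$ and $r^t$ is an affine functional of $(w,w^*,z,B)$. The kernels of these functionals are built from $R_\theta$ and $R_\ell$, which are $\tau$-free by Step 1. The covariance $\Sigma_\ell$ splits as a drift part plus, by It\^o isometry, $\tau\delta\int_0^{t\wedge t'}\E[\ell_s^2]\,ds=\tau\delta\int_0^{t\wedge t'}\Etrain(s)\,ds$; the cross terms between the $ds$ and $dB$ integrals vanish. Similarly, the $dB$ integral in $r^t$ contributes, again by It\^o isometry, a term $\frac{\tau}{\delta}\int_0^t R_\theta(t,s)^2\,\Etrain(s)\,ds$ to the variance of $r^t$. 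Collecting terms, each error equals its noiseless value (obtained by setting all $\tau$-terms to zero, which gives $\Etrain_0,\Etest_0$ of \eqref{eq:dmft_ridge_train0}--\eqref{eq:dmft_ridge_test0}) plus $\tau\int_0^t K(t,s)\Etrain(s)\,ds$. Here $K$ is quadratic in the $\tau$-free responses, and the resulting equation for $\Etrain$ closes.

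\emph{Step 3: identifying the kernels (main obstacle).} Here we must show that $K(t,s)$ equals $H_2(t-s)$ for the train error and $H_1(t-s)$ for the test error. The cleanest route avoids inverting the Volterra resolvents directly. Instead, note that the $\tau$-contribution is exactly the response of the linear flow to an independent white-noise forcing $\sqrt{\tau\,\Etrain(s)}\,\bX^\transpose\bxi^s/\sqrt{\delta}$, with $\bxi^s$ white noise in $\reals^n$. Finite-$d$ It\^o isometry then gives the kernels as normalized traces:
\[
\frac{\tau}{\delta d}\operatorname{tr}\!\big(e^{-2\bA u}\bX^\transpose\bX\big)
\quad\text{(test)},\qquad
\frac{\tau}{\delta n}\operatorname{tr}\!\big(\bX e^{-2\bA u}\bX^\transpose\bX\bX^\transpose\big)
\quad\text{(train)},\qquad u=t-s.
\]
These converge to $\tau H_1(u)$ and $\tau H_2(u)$ by the Marchenko--Pastur law, after accounting for the $1/\delta$ normalizations. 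This matching between the DMFT kernel and the spectral formula is the delicate point. If the direct spectral comparison is awkward, we would first try verifying the identity in Laplace transform, where $R_\theta$ and $R_\ell$ are rational in the Stieltjes transform of $\mu_\MP$. Finally, the linear Volterra equation \eqref{eq:dmft_ridge_errors} with continuous kernel has a unique solution, so $\Etrain$ and $\Etest$ are uniquely determined and the proposition follows.
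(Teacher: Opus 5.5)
Your reduction to \cref{cor:sgf_dmft_planted} and Step~1 are fine: $\rho_\theta$ is deterministic and the It\^o integrals in $\rho_\ell$ have mean zero, so $R_\theta,R_\ell$ are $\tau$-free. Identifying $\Etrain_0,\Etest_0$ by comparison with the explicitly solvable $\tau=0$ flow is a legitimate alternative to the paper's Laplace computation. Your trace formulas also do evaluate to $\tau H_1$ and $\tau H_2$.

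\textbf{Step~2 rests on a false claim.} The cross terms between the $\de s$ and $\de B$ integrals do not vanish, and $r^t$ is not an affine functional of $B$.
\begin{itemize}
\item The noise in the $r$-equation is multiplicative, $(r^s-r^*-z)\,\de B^s$. So $\ell_s$ depends on the increments $(\de B^u)_{u<s}$ through the It\^o integral inside $r^s$.
\item By Gaussian integration by parts, $\E\bigl[\ell_s\int_0^{t'}\ell_u\,\de B^u\bigr]$ picks up the term $-\sqrt{\tau/\delta}\,R_\theta(s,u)\ell_u$ in the sensitivity of $r^s$ to the increment at $u<s$. It is therefore nonzero, and of the same order $\tau$ as the It\^o-isometry terms you keep.
\item The paper computes these terms via Stein's lemma as $\bM=\sqrt{\tau\delta}\,\bR_\ell\operatorname{diag}(\bL)$; see \eqref{eq:M_ridge_matrix}. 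In Laplace space they route the noise through the factors $(1+\bar R_\ell(p))(1+\bar R_\ell(q))$. This is what collapses the kernels to $K_1(t+t'-2s)$ and $K_2(t+t'-2s)$; omitting them gives a different kernel.
\end{itemize}
Your conclusion that the $\Etrain$-equation closes is true. But it holds only because these cross terms also reduce to $L(s,s)$, which your argument does not show.

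\textbf{Step~3 asserts rather than proves the identification.} The claim that the $\tau$-contribution is exactly the response to independent white noise $\sqrt{\tau\Etrain(s)}\,\bX^\transpose\bxi^s/\sqrt\delta$ is the homogenization principle of \citet{paquette2025homogenization}. It does not follow from anything available here.
\begin{itemize}
\item At finite $d$ it is not exact. The SGF noise covariance is $\frac{\tau}{\delta}\bX^\transpose\operatorname{diag}\bigl((r^s_i-r^*_i-z_i)^2\bigr)\bX$, and replacing the diagonal by $\Etrain(s)\bI_n$ requires its own concentration argument.
\item At the DMFT level it does hold for second moments. But establishing that requires exactly the cross-term computation above.
\item Transferring it to your finite-$d$ trace formulas would also need a DMFT theorem for an additive-noise flow. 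The paper does not supply one, since its drift and diffusion share the same $\ell$.
\end{itemize}
The Laplace-transform fallback you mention is the paper's route, but it needs three ingredients:
\begin{itemize}
\item the correct closed system \eqref{eq:R_theta_closed}--\eqref{eq:L_closed}, including the $\bM$ terms;
\item the identity $1-\delta\bar R_\ell(p)\bar R_\ell(q)=(q-p)\bar R_\theta(p)\bar R_\theta(q)/(\bar R_\theta(p)-\bar R_\theta(q))$;
\item inversion of the resulting divided differences through the Marchenko--Pastur Stieltjes transform.
\end{itemize}
None of this is carried out, so the kernels $H_1$ and $H_2$ remain unjustified.
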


\subsubsection{Simplifying the DMFT Equations}

We can derive the DMFT equation for ridge regression by specializing the DMFT equation $\mathfrak{S}^*$ for planted models in \cref{app:dmft_eq_planted}.
Since the loss is quadratic, i.e., $\partial_r \ell_t(r,r^*;z) = 1$, $\partial_{r^*} \ell_t(r,r^*;z) = -1$, and $\partial^2 \ell_t(r,r^*;z) = 0$, the DMFT equation simplifies significantly.
We have $\Gamma(t) = 1$, $D_\ell^{t,t'} = D_{\ell*}^{t,t'} = 0$, $\rho_r^{t,t'} = -\rho_{r*}^{t,t'}$, and $\rho_\ell^{t,t'} = -\rho_{\ell*}^{t,t'}$.
The DMFT equation for ridge regression reduces to
\begin{subequations}
    \begin{align}
        \theta^t           & = U^t - \int_0^t \ab( (1 + \lambda) \theta^s - \theta^* + \int_0^s R_\ell(s,s') (\theta^{s'} - \theta^*) \de s') \de s \,, \quad U \sim \GP(0,\Sigma_\ell/\delta) \,,     \\
        \rho_\theta^{t,t'} & = 1 - \int_{t'}^t \ab((1 + \lambda) \rho_\theta^{s,t'} + \int_{t'}^s R_\ell(s,s') \rho_\theta^{s',t'} \de s') \de s \,,                                                   \\
        C_\theta(t,t')     & = \E[\theta^t \theta^{t'}] \quad (s,t \in [0,T] \cup \{*\}) \,, \quad R_\theta(t,t') = \E[\rho_\theta^{t,t'}] \quad (t \geq t') \,,                                       \\
        r^t                & = w^t -\frac{1}{\delta} \int_0^t R_\theta(t,s) (r^s - r^* - z) (\de s + \sqrt{\tau\delta} \de B^s) \,, \quad w \sim \GP(0,C_\theta) \,,                                   \\
        \rho_\ell^{t,t'}   & =  -\frac{1}{\delta} \int_{t'}^t R_\theta(t,s) \rho_\ell^{s,t'} (\de s + \sqrt{\tau\delta} \de B^s) - \frac{1}{\delta} R_\theta(t,t') \,,                                 \\
        \Sigma_\ell(t,t')  & = \E[L^t L^{t'}] \,, \quad L^t \coloneqq \int_0^t (r^s - r^* - z) (\de s + \sqrt{\tau\delta} \de B^s) \,, \quad R_\ell(t,t') = \E[\rho_\ell^{t,t'}] \quad (t \geq t') \,.
    \end{align} \label{eq:dmft_ridge}
\end{subequations}

Next, we eliminate the stochastic processes to close the system in terms of the correlation and response functions.
To this end, we discretize time with step size $\gamma$ as in \cref{app:dmft_disc}, reduce the equations, and then take the continuous-time limit $\gamma \to 0$. This operation can be justified along the lines of the proof of \cref{thm:dmft_sgf}.

Let $t_i = i \gamma$ for $i=0,1,\ldots,\lfloor T/\gamma \rfloor$.
The discretized version of \cref{eq:dmft_ridge} is given by
\begin{subequations}
    \begin{align}
        \theta^{t_i}      & = \gamma \sum_{j = 0}^{i-1} \ab( u^{t_i} - (1 + \lambda) \theta^{t_j} + \theta^* - \gamma \sum_{k=0}^{j-1} R_\ell(t_j,t_k) (\theta^{t_k} - \theta^*)) \,, \quad U \sim \GP(0,C_\ell/\delta) \,, \label{eq:theta_ridge} \\
        C_\theta(t_i,t_j) & = \E[\theta^{t_i} \theta^{t_j}] \,, \quad C_\theta(t_i,*) = \E[\theta^{t_i}\theta^*] \,, \quad R_\theta(t_i,t_j) = \gamma^{-1} \E\ab[\diffp{\theta^{t_i}}{u^{t_j}}] \quad (i > j) \,,                                  \\
        r^{t_i}           & = w^{t_i} -\frac{1}{\delta} \sum_{j=0}^{i-1} R_\theta(t_i,t_j) (r^{t_j} - r^* - z) (\gamma + \sqrt{\tau\delta} (B^{t_{j+1}} - B^{t_j})) \,, \quad w \sim \GP(0,C_\theta) \,, \label{eq:r_ridge}                        \\
        C_\ell(t_i,t_j)   & = \E\ab[(r^{t_i} - r^* - z)\ab(1 + \sqrt{\tau\delta}\frac{B^{t_{i+1}} - B^{t_i}}{\gamma})(r^{t_j} - r^* - z)\ab(1 + \sqrt{\tau\delta}\frac{B^{t_{j+1}} - B^{t_j}}{\gamma})] \,,                                        \\
        R_\ell(t_i,t_j)   & = \gamma^{-1} \E\ab[\diffp{r^{t_i}}{w^{t_j}}] \quad (i > j) \,.
    \end{align} \label{eq:dmft_ridge_disc}
\end{subequations}

Let $K = \floor{T/\gamma} + 1$ and let $\bC_\theta,\bR_\theta,\bC_\ell,\bR_\ell \in \reals^{K \times K}$ be the matrices with entries $(\bC_\theta)_{ij} = C_\theta(t_i,t_j)$, $(\bR_\theta)_{ij} = R_\theta(t_i,t_j)$, $(\bC_\ell)_{ij} = C_\ell(t_i,t_j)$, and $(\bR_\ell)_{ij} = R_\ell(t_i,t_j)$ for $i,j = 0,1,\ldots,K-1$.
Let $\bc^* \in \reals^K$ be the vector with entries $c^{*}_i = C_\theta(t_i,*)$ for $i=0,1,\ldots,K-1$.
Let $\btheta,\br,\bu,\bw,\bG \in \reals^K$ be the vectors with entries $\theta^{t_i},r^{t_i},u^{t_i},w^{t_i},G^{t_i} \coloneqq (B^{t_{i+1}} - B^{t_i})/\sqrt{\gamma}$ for $i=0,1,\ldots,K-1$.

\paragraph{Equation for $R_\theta$.}
Differentiating \cref{eq:theta_ridge} with respect to $u^{t_j}$ for $j < i$ and taking the expectation, we have
\begin{align}
    R_\theta(t_i,t_j) & = 1 - \gamma \sum_{k=j}^{i-1} \ab((1 + \lambda) R_\theta(t_k,t_j) + \gamma \sum_{l=0}^{k-1} R_\ell(t_k,t_l) R_\theta(t_l,t_j)) \,. \label{eq:R_theta_ridge_disc}
\end{align}
Let $\bT \coloneqq \reals^{K \times K}$ be the lower-triangular matrix with entries $T_{ij} = 1$ for $i > j$ and $T_{ij} = 0$ for $i \geq j$.
\cref{eq:R_theta_ridge_disc} can be written in matrix form as
\begin{align}
    \bR_\theta = \bT - \gamma \bT ((1 + \lambda) \bR_\theta + \gamma \bR_\ell \bR_\theta) \,.
\end{align}
Solving for $\bR_\theta$, we obtain
\begin{align}
    \bR_\theta = (\bI + \gamma \bT ((1 + \lambda) \bI + \gamma \bR_\ell))^{-1} \bT \,. \label{eq:R_theta_ridge_matrix}
\end{align}

\paragraph{Equation for $C_\theta$.}
Multiplying \cref{eq:theta_ridge} by $\theta^*$ and taking the expectation, we have
\begin{align}
    C_\theta(t_i,*) & = -\gamma \sum_{j=0}^{i-1} \ab( (1 + \lambda) C_\theta(t_j,*) - \rho^2 + \gamma \sum_{k=0}^{j-1} R_\ell(t_j,t_k) (C_\theta(t_k,*) - \rho^2)) \,.
\end{align}
In matrix form, this can be written as
\begin{align}
    \bc^* & = - \gamma \bT ((1 + \lambda) \bc^* - \rho^2 \bone + \gamma \bR_\ell (\bc^* - \rho^2 \bone)) \,.
\end{align}
Solving for $\bc^*$, we obtain
\begin{align}
    \bc^* & = \rho^2 \gamma (\bI + \gamma \bT ((1 + \lambda) \bI + \gamma \bR_\ell))^{-1} \bT (\bI + \gamma \bR_\ell) \bone = \rho^2 \gamma \bR_\theta (\bI + \gamma \bR_\ell) \bone \,. \label{eq:C_star_ridge_matrix}
\end{align}

Multiplying \cref{eq:theta_ridge} by $\theta^{t_j}$ and taking the expectation, we have
\begin{align}
     & C_\theta(t_i,t_j) \notag                                                                                                                                                                                                      \\
     & = \gamma \sum_{k=0}^{i-1} \ab(\E[u^{t_k} \theta^{t_j}] - (1 + \lambda) C_\theta(t_k,t_j) + C_\theta(t_j,*) - \gamma \sum_{l=0}^{j-1} R_\ell(t_k,t_l) (C_\theta(t_l,t_j) - C_\theta(t_j,*))) \,. \label{eq:C_theta_ridge_disc}
\end{align}
By Stein's lemma (the Gaussian integration by parts), we have
\begin{align}
    \E[u^{t_k} \theta^{t_j}] & = \sum_{l=0}^{j-1} \Cov(u^{t_k},u^{t_l}) \E\ab[\diffp{\theta^{t_j}}{u^{t_l}}] = \frac{\gamma}{\delta} \sum_{l=0}^{j-1} C_\ell(t_k,t_l) R_\theta(t_j,t_l) \,.
\end{align}
Therefore, \cref{eq:C_theta_ridge_disc} can be written in matrix form as
\begin{align}
    \bC_\theta & = \gamma \bT \ab(\frac{\gamma}{\delta} \bC_\ell \bR_\theta^\transpose - (1 + \lambda) \bC_\theta + \bone \bc^{*\transpose} - \gamma \bR_\ell (\bC_\theta - \bone \bc^{*\transpose})) \,.
\end{align}
Solving for $\bC_\theta$, we obtain
\begin{align}
    \bC_\theta & = \gamma (\bI + \gamma \bT ((1 + \lambda) \bI + \gamma \bR_\ell))^{-1} \bT (\gamma \bC_\ell \bR_\theta^\transpose + (\bI + \gamma \bR_\ell) \bone \bc^{*\transpose}) \notag \\
               & = \frac{\gamma^2}{\delta} \bR_\theta \bC_\ell \bR_\theta^\transpose + \frac{1}{\rho^2} \bc^* \bc^{*\transpose} \,. \label{eq:C_theta_ridge_matrix}
\end{align}
Let $C_\theta^*(t_i,t_j) \coloneqq \E[(\theta^{t_i} - \theta^*)(\theta^{t_j} - \theta^*)] = C_\theta(t_i,t_j) - C_\theta(t_i,*) - C_\theta(t_j,*) + \rho^2$.
Let $\bC_\theta^* \in \reals^{K \times K}$ be the matrix with entries $(\bC_\theta^*)_{ij} = C_\theta^*(t_i,t_j)$ for $i,j = 0,1,\ldots,K-1$.
From \cref{eq:C_theta_ridge_matrix}, we have
\begin{align}
    \bC_\theta^* & = \bC_\theta - \bc^* \bone^\transpose - \bone \bc^{*\transpose} + \rho^2 \bone \bone^\transpose = \frac{\gamma^2}{\delta} \bR_\theta \bC_\ell \bR_\theta^\transpose + \frac{1}{\rho^2} (\bc^* - \rho^2 \bone) (\bc^* - \rho^2 \bone)^\transpose \,. \label{eq:C_theta_star_ridge_matrix}
\end{align}

\paragraph{Equation for $R_\ell$.}
Differentiating \cref{eq:r_ridge} with respect to $w^{t_j}$ for $j < i$ and taking the expectation, we have
\begin{align}
    R_\ell(t_i,t_j) & = -\frac{1}{\delta} \sum_{k=j}^{i-1} R_\theta(t_i,t_k) R_\ell(t_k,t_j) - \frac{1}{\delta} R_\theta(t_i,t_j) \,. \label{eq:R_ell_ridge_disc}
\end{align}
This can be written in matrix form as
\begin{align}
    \bR_\ell = -\frac{1}{\delta} \bR_\theta \bR_\ell - \frac{1}{\delta} \bR_\theta \,.
\end{align}
Solving for $\bR_\ell$, we obtain
\begin{align}
    \bR_\ell = -(\delta \bI + \gamma \bR_\theta)^{-1} \bR_\theta \,. \label{eq:R_ell_ridge_matrix}
\end{align}

\paragraph{Equation for $C_\ell$.}
Since $G^{t_i} \sim \normal(0,1)$ are i.i.d.\ standard normal variables, we have
\begin{align}
    C_\ell(t_i,t_j) & = \E[(r^{t_i} - r^* - z) (r^{t_j} - r^* - z)] + + \frac{\tau\delta}{\gamma} \E[(r^{t_i} - r^* - z)^2] \ind(i = j) \notag                                 \\
                    & \qquad + \sqrt{\frac{\tau\delta}{\gamma}} \ab(\E[(r^{t_i} - r^* - z) (r_{t_j} - r^* - z) G^{t_j}] + \E[(r^{t_i} - r^* - z) (r^{t_j} - r^* - z) G^{t_i}]) \\
                    & = L(t_i,t_j) + \frac{\tau\delta}{\gamma} L(t_i,t_i) \ind(i = j) + \sqrt{\tau\delta} (M(t_i,t_j) + M(t_j,t_i)) \,, \label{eq:C_ell_ridge_disc}
\end{align}
where we defined
\begin{align}
    L(t_i,t_j) & = \E[(r^{t_i} - r^* - z) (r^{t_j} - r^* - z)] \,,                                                                                           \\
    M(t_i,t_j) & = \gamma^{-1/2} \E[(r^{t_i} - r^* - z) (r^{t_j} - r^* - z) G^{t_j}] = \gamma^{-1/2} \E\ab[\diffp{r^{t_i}}{G^{t_j}} (r^{t_j} - r^* - z)] \,.
\end{align}
In the definition of $M(t_i,t_j)$, we used Stein's lemma.

We first derive a closed equation for $M(t_i,t_j)$. Differentiating \cref{eq:r_ridge} with respect to $G^{t_j}$, multiplying by $\gamma^{-1/2} (r^{t_j} - r^* - z)$, and taking the expectation, we have
\begin{align}
    M(t_i,t_j) & = -\frac{\gamma}{\delta} \sum_{k=j}^{i-1} R_\theta(t_i,t_k) M(t_k,t_j) - \sqrt{\frac{\tau}{\delta}} R_\theta(t_i,t_j) L(t_j,t_j) \,.
\end{align}
In matrix form, this can be written as
\begin{align}
    \bM & = -\frac{\gamma}{\delta} \bR_\theta \bM - \sqrt{\frac{\tau}{\delta}} \bR_\theta \diag(\bL) \,,
\end{align}
where $\bM,\bL\in\reals^{K \times K}$ are the matrices with entries $M_{ij} = M(t_i,t_j)$ and $L_{ij} = L(t_i,t_j)$, and $\diag(\bL)$ is the diagonal matrix with diagonal entries equal to those of $\bL$.
Solving for $\bM$, we obtain
\begin{align}
    \bM & = -\sqrt{\tau\delta} (\delta \bI + \gamma \bR_\theta)^{-1} \bR_\theta \diag(\bL) = \sqrt{\tau\delta} \bR_\ell \diag(\bL) \,. \label{eq:M_ridge_matrix}
\end{align}

Next, we derive a closed equation for $L(t_i,t_j)$. Multiplying \cref{eq:r_ridge} by $r^{t_j} - r^* - z$ and taking the expectation, we have
\begin{align}
    L(t_i,t_j) & = \E[(w^{t_i} - r^* - z) (r^{t_j} - r^* - z)] - \frac{\gamma}{\delta} \sum_{k=0}^{i-1} R_\theta(t_i,t_k) (L(t_k,t_j) + \sqrt{\tau\delta} M(t_j,t_k)) \,. \label{eq:L_ridge_disc}
\end{align}
By Stein's lemma, we have
\begin{align}
     & \E[(w^{t_i} - r^* - z) (r^{t_j} - r^* - z)] \notag                                                                                                           \\
     & = \sum_{k=0}^{j} \Cov(w^{t_i}-r^*-z,w^{t_k}) \E\ab[\diffp{r^{t_j}}{w^{t_k}}] + \Cov(w^{t_i}-r^*-z,r^* + z) \ab(\E\ab[\diffp{r^{t_j}}{(r^* + z)}] - 1) \notag \\
     & = \gamma \sum_{k=0}^{j-1} (C_\theta(t_i,t_k) - C_\theta(t_k,*)) R_\ell(t_j,t_k) + C_\theta(t_i,t_j) - C_\theta(t_j,*) \notag                                 \\
     & \qquad - (C_\theta(t_i,*) - \rho^2 - \sigma^2) \ab(\gamma \sum_{k=0}^{j-1} R_\ell(t_j,t_k) + 1) \notag                                                       \\
     & = C_\theta(t_i,t_j) - C_\theta(t_i,*) - C_\theta(t_j,*) + \rho^2 + \sigma^2 \notag                                                                           \\
     & \qquad + \gamma \sum_{k=0}^{j-1} (C_\theta(t_i,t_k) - C_\theta(t_i,* ) - C_\theta(t_k,*) + \rho^2 + \sigma^2) R_\ell(t_j,t_k) \notag                         \\
     & = C_\theta^*(t_i,t_j) + \sigma^2 + \gamma \sum_{k=0}^{j-1} (C_\theta^*(t_i,t_k) + \sigma^2) R_\ell(t_j,t_k) \,.
\end{align}
Therefore, \cref{eq:L_ridge_disc} can be written in matrix form as
\begin{align}
    \bL & = (\bC_\theta^* + \sigma^2 \bone \bone^\transpose) (\bI + \gamma \bR_\ell^\transpose) - \frac{\gamma}{\delta} \bR_\theta (\bL + \sqrt{\tau\delta} \bM^\transpose) \,.
\end{align}
Solving for $\bL$, we obtain
\begin{align}
    \bL & = \delta (\delta \bI + \gamma \bR_\theta)^{-1} (\bC_\theta^* + \sigma^2 \bone \bone^\transpose) (\bI + \gamma \bR_\ell^\transpose) - \sqrt{\tau\delta} \gamma (\delta \bI + \gamma \bR_\theta)^{-1} \bR_\theta \bM^\transpose \notag \\
        & = (\bI + \gamma \bR_\ell) (\bC_\theta^* + \sigma^2 \bone \bone^\transpose) (\bI + \gamma \bR_\ell^\transpose) + \tau\delta \gamma \bR_\ell \diag(\bL) \bR_\ell^\transpose \,, \label{eq:L_ridge_matrix}
\end{align}
where we used that
\begin{align}
    \bI + \gamma \bR_\ell = \bI - \gamma (\delta \bI + \gamma \bR_\theta)^{-1} \bR_\theta & = \bI - (\delta \bI + \gamma \bR_\theta)^{-1} (\delta \bI + \gamma \bR_\theta - \delta \bI) \notag \\
                                                                                          & = \delta (\delta \bI + \gamma \bR_\theta)^{-1} \,.
\end{align}

\paragraph{Continuous-time limit.}
Taking the continuous-time limit $\gamma \to 0$ in \cref{eq:R_theta_ridge_disc,eq:C_star_ridge_matrix,eq:C_theta_star_ridge_matrix}, we obtain
\begin{align}
    R_\theta(t,t')   & = 1 - \int_{t'}^t \ab((1 + \lambda) R_\theta(s,t') + \int_{t'}^s R_\ell(s,s') R_\theta(s',t') \de s') \de s \,, \label{eq:R_theta_closed}                                                                  \\
    C_\theta(t,*)    & = \rho^2 \int_0^t R_\theta(t,s) \ab(1 + \int_0^s R_\ell(s,s') \de s') \de s \,, \label{eq:C_star_closed}                                                                                                   \\
    C_\theta^*(t,t') & = \frac{1}{\delta} \int_0^{t} \int_0^{t'} R_\theta(t,s) R_\theta(t',s') C_\ell(s,s') \de s' \de s + \frac{1}{\rho^2} (C_\theta(t,*) - \rho^2) (C_\theta(t',*) - \rho^2) \,. \label{eq:C_theta_star_closed}
\end{align}
Taking the continuous-time limit $\gamma \to 0$ in \cref{eq:R_ell_ridge_disc,eq:C_ell_ridge_disc,eq:L_ridge_matrix}, we obtain
\begin{align}
    R_\ell(t,t') & =  -\frac{1}{\delta} \int_{t'}^t R_\theta(t,s) R_\ell(s,t') \de s - \frac{1}{\delta} R_\theta(t,t') \,, \label{eq:R_ell_closed}              \\
    C_\ell(t,t') & = L(t,t') + \tau\delta (R_\ell(t',t) L(t',t') + R_\ell(t,t') L(t,t) + L(t,t) \dirac(t-t'))  \,, \label{eq:C_ell_closed}                    \\
    L(t,t')      & = \int_0^t \int_0^{t'} (\dirac(t - s) + R_\ell(t,s)) (\dirac(t' - s') + R_\ell(t',s')) (C_\theta^*(s,s') + \sigma^2) \de s' \de s \notag \\
                 & \qquad + \tau\delta \int_0^{t \wedge t'} R_\ell(t,s) R_\ell(t',s) L(s,s) \de s \,. \label{eq:L_closed}
\end{align}
Here, $\dirac(\cdot)$ is the Dirac delta function.

\subsubsection{Solving the DMFT Equations}

We now solve the DMFT equations derived in the previous section for ridge regression.

\begin{lemma} \label{lem:dmft_ridge_solution}
    Define the function $K_i(t)$ for $i \geq 0$ as
    \begin{align}
        K_i(t) & \coloneqq \int x^i \napier^{-(x + \lambda) t} \de \mu_\MP(x) \,,
    \end{align}
    where $\mu_\MP$ is the \emph{Marchenko--Pastur law}, whose density is given by \cref{eq:mp_law}.

    The solution of the DMFT equations for ridge regression in \cref{eq:R_theta_closed,eq:L_closed} is given by
    \begin{align}
        R_\theta(t,t')   & = K_0(t - t') \,, \quad R_\ell(t,t') = -\frac{1}{\delta} K_1(t - t') \,,                                          \\
        C_\theta^*(t,t') & = C_{\theta 0}^*(t,t') + \tau \int_0^{t \wedge t'} K_1(t + t' - 2s) L(s,s) \de s \,, \label{eq:C_theta_ridge_sol} \\
        L(t,t')          & = L_0(t,t') + \tau \int_0^{t \wedge t'} K_2(t + t' - 2s) L(s,s) \de s \,, \label{eq:L_ridge_sol}
    \end{align}
    where $C_{\theta 0}^*$ and $L_0$ are given by
    \begin{align}
        C_{\theta 0}^*(t,t') & = \rho^2 \int \frac{(\lambda + x\napier^{-(x + \lambda) t}) (\lambda + x\napier^{-(x + \lambda) t'})}{(x + \lambda)^2} \de\mu_\MP(x) \notag                                                        \\
                             & \qquad + \frac{\sigma^2}{\delta} \int \frac{x}{(x + \lambda)^2} (1 - \napier^{-(x + \lambda) t})(1 - \napier^{-(x + \lambda) t'}) \de\mu_\MP(x) \,,                                                \\
        L_0(t,t')            & = \rho^2 \int x \frac{(\lambda + x\napier^{-(x + \lambda) t}) (\lambda + x\napier^{-(x + \lambda) t'})}{(x + \lambda)^2} \de\mu_\MP(x) \notag                                                      \\
                             & \qquad + \frac{\sigma^2}{\delta} \int \frac{(\lambda + x\napier^{-(x + \lambda) t}) (\lambda + x\napier^{-(x + \lambda) t'})}{(x + \lambda)^2} \de\mu_\MP(x) + \frac{\delta-1}{\delta}\sigma^2 \,.
    \end{align}
\end{lemma}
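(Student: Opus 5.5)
The plan is to solve the closed system in \cref{eq:R_theta_closed,eq:L_closed} in three stages: first the response functions, then the noiseless parts $C_{\theta0}^*$ and $L_0$, then the $\tau$-corrections. Each candidate is identified through Laplace transforms and the Stieltjes transform of $\mu_\MP$. Uniqueness comes for free: the closed system is the reduction of $\mathfrak S^*$, whose solution is unique for all $T$ by \cref{thm:dmft_sol}, since $\nabla_r^2\ell=0$. So it suffices to check that the proposed functions satisfy the equations.

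\emph{Response functions.} \Cref{eq:R_theta_closed,eq:R_ell_closed} do not involve $\tau$ or any correlation function. I will look for solutions of the form $R_\theta(t,t')=r_\theta(t-t')$ and $R_\ell(t,t')=r_\ell(t-t')$, which turns both equations into convolution Volterra equations. Taking Laplace transforms in the lag gives
\begin{align}
\hat r_\theta(p)=\frac{1}{p+1+\lambda+\hat r_\ell(p)}\,,\qquad \hat r_\ell(p)=-\frac{\hat r_\theta(p)}{\delta+\hat r_\theta(p)}\,.
\end{align}
Eliminating $\hat r_\ell$ yields a quadratic equation for $\hat r_\theta$. I will check that this quadratic is exactly the self-consistent equation of the Marchenko--Pastur Stieltjes transform $m(z)=\int(x-z)^{-1}\de\mu_\MP(x)$, evaluated at $z=-(p+\lambda)$, with the branch fixed by $\hat r_\theta(p)\sim 1/p$ as $p\to\infty$. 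This gives $\hat r_\theta(p)=\int(p+\lambda+x)^{-1}\de\mu_\MP(x)$, i.e.\ $r_\theta=K_0$. The same quadratic can be rewritten as $-\hat r_\theta/(\delta+\hat r_\theta)=-\delta^{-1}\bigl(1-(p+\lambda)\hat r_\theta\bigr)=-\delta^{-1}\int x(p+\lambda+x)^{-1}\de\mu_\MP$, which gives $r_\ell=-K_1/\delta$. Laplace inversion, using uniqueness of transforms of bounded continuous kernels, then finishes this step.

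\emph{Correlations.} Substituting $R_\theta=K_0$ and $R_\ell=-K_1/\delta$ into \cref{eq:C_star_closed} gives a single integral against $\mu_\MP$ after the inner $s'$-integration. Integrating $\int_0^t\napier^{-(x+\lambda)(t-s)}(\ldots)\de s$ explicitly should produce the factor $(\lambda+x\napier^{-(x+\lambda)t})/(x+\lambda)$. Next I insert \cref{eq:C_ell_closed} into \cref{eq:C_theta_star_closed}, and \cref{eq:L_closed} into itself. The terms coming from $C_\theta^*+\sigma^2$ and $(C_\theta(t,*)-\rho^2)^2/\rho^2$ should give $C_{\theta0}^*$ and $L_0$. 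The terms carrying $\tau\delta\,L(s,s)$, coming from the Dirac term and the $R_\ell L$ terms of $C_\ell$ and from the last line of \cref{eq:L_closed}, should combine into $\tau\int_0^{t\wedge t'}K_{1}(t+t'-2s)L(s,s)\de s$ and $\tau\int_0^{t\wedge t'}K_{2}(t+t'-2s)L(s,s)\de s$ respectively. Since all equations are linear in $L(s,s)$, it suffices to verify them separately for the forcing terms and for each fixed $s$.

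\emph{Main obstacle.} The hard part is that the double integrals $\iint K_0(t-s)K_0(t'-s')(\cdots)\de s\de s'$, and their analogues with $\delta+R_\ell$, involve products of two MP integrals. The claimed answers are single MP integrals with argument $t+t'-2s$. I will handle this with a double Laplace transform in $(t,t')$, which reduces each product to terms of the form $\hat r_\theta(p)\hat r_\theta(q)$. Using the quadratic relation above together with the resolvent identity $\bigl(m(z_1)-m(z_2)\bigr)/(z_1-z_2)=\int\bigl((x-z_1)(x-z_2)\bigr)^{-1}\de\mu_\MP$, I expect these to collapse to $\int g(x)\bigl((p+\lambda+x)(q+\lambda+x)\bigr)^{-1}\de\mu_\MP$. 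That is precisely the double transform of $\int g(x)\napier^{-(x+\lambda)(t+t')}\de\mu_\MP$, which matches the claimed single-integral forms. I expect the bookkeeping of the $\sigma^2$ and $\frac{\delta-1}{\delta}\sigma^2$ contributions, including the atom of $\mu_\MP$ at $0$ when $\delta<1$, to be the most error-prone part.
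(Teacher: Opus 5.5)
Your proposal follows essentially the same route as the paper: Laplace transforms in the time lag, identification of $\bar R_\theta(p)$ with the Marchenko--Pastur Stieltjes transform at $-(p+\lambda)$ with the correct branch, the difference-quotient (resolvent) identity to turn products of two transforms into double transforms of single MP integrals in $t+t'$, and a split into the noiseless solution plus a linear response to the source $\tau L(s,s)$.

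Two small repairs are needed. First, uniqueness should be argued from the linear Volterra structure of the closed system itself, since uniqueness for $\mathfrak S^*$ from \cref{thm:dmft_sol} does not by itself exclude other solutions of the reduced equations. Second, $C_\theta(t,*)$ does not become a single MP integral by explicit integration alone, because $K_0(t-s)\int_0^s K_1$ is a product of two MP integrals. You need the $R_\ell$ equation in the form $R_\theta + R_\theta * R_\ell = -\delta R_\ell$ (equivalently $\bar R_\theta(1+\bar R_\ell) = -\delta \bar R_\ell$) to get $C_\theta(t,*) = \rho^2\int_0^t K_1(s)\,\de s$.
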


Before proving \cref{lem:dmft_ridge_solution}, we summarize the necessary background on the \emph{Laplace transform}, which is a useful technique for analyzing linear differential equations and will be used extensively in the proof.
Given a function $f \colon \reals_{\geq 0} \to \reals$, its Laplace transform $\mathcal{L}[f] = \bar f$ is defined as
\begin{align}
    \bar f(p) \coloneqq \int_0^\infty f(t) \napier^{-p t} \de t \,,
\end{align}
for $p \in \complex$ with sufficiently large real part for the integral to be convergent.

We state several of its basic properties used in the proof.
\begin{itemize}
    \item \emph{Linearity:} For $f \colon \reals_{\geq 0} \to \reals$ and $a,b \in \reals$, we have $\mathcal{L}[af + b] = a\mathcal{L}[f] + b$.
    \item \emph{Laplace transforms of derivatives, integrals, and convolutions:} For $f,g \colon \reals_{\geq 0} \to \reals$, we have
        \begin{align}
            \mathcal{L}\ab[f'(t)](p)                      & = p \bar f(p) - f(0) \,,  \\
            \mathcal{L}\ab[\int_0^t f(s) \de s](p)        & = \frac{\bar f(p)}{p} \,, \\
            \mathcal{L}\ab[\int_0^t f(t-s) g(s) \de s](p) & = \bar f(p) \bar g(p) \,.
        \end{align}
    \item \emph{Laplace transform of the Dirac delta function:} We have $\mathcal{L}[\dirac(t)](p) = 1$, where $\dirac(t)$ is the Dirac delta function.
\end{itemize}

We also utilize a two-dimensional version of the Laplace transform, which is defined for $g \colon \reals_{\geq 0}^2 \to \reals$ as
\begin{align}
    \bar g(p,q) \coloneqq \int_0^\infty \int_0^\infty g(t,t') \napier^{-p t} \napier^{-q t'} \de t \de t' \,,
\end{align}
for $p,q \in \complex$ with sufficiently large real parts for the integral to be convergent. Its properties are similar to the one-dimensional case.

\begin{proof}
    We proceed as follows.
    First, we solve the equation for the noiseless case $\tau = 0$ in the frequency domain using the Laplace transform.
    Next, we perform the inverse Laplace transform to obtain the time-domain solution for $\tau = 0$. Along the way, we use ideas and techniques from the random matrix theory.
    Finally, we solve the full equations for $\tau > 0$.

    \paragraph{Solving in the frequency domain for $\tau = 0$.}
    Note that the equations for $R_\theta$, $R_\ell$, and $C_\theta(\cdot,*)$ do not depend on $\tau$ and are thus the same for $\tau = 0$ and $\tau > 0$.
    Let $C_{\theta 0}^*$ and $L_0$ be the solutions of \cref{eq:C_theta_star_closed,eq:L_closed} for $\tau = 0$.

    Since the equations for $R_\theta(t,t')$ \eqref{eq:R_theta_closed} and $R_\ell(t,t')$ \eqref{eq:R_ell_closed} depend on time only through the time difference $t - t'$, they are time-translation invariant, i.e., $R_\theta(t,t') = R_\theta(t - t')$ and $R_\ell(t,t') = R_\ell(t - t')$. Therefore, they satisfy the following one-dimensional integral equations:
    \begin{align}
        R_\theta(t) & = 1 - \int_0^t \ab((1 + \lambda) R_\theta(s) + \int_0^s R_\ell(s - s') R_\theta(s') \de s') \de s  \,, \\
        R_\ell(t)   & = -\frac{1}{\delta} R_\theta(t) - \frac{1}{\delta} \int_0^t R_\theta(t - s) R_\ell(s) \de s \,.
    \end{align}
    Taking the Laplace transforms of these equations, we have
    \begin{align}
        \bar R_\theta(p) & = \frac{1}{p} \ab(1 - (1 + \lambda) \bar R_\theta(p) -\bar R_\ell(p) \bar R_\theta(p)) \,,   \\
        \bar R_\ell(p)   & = - \frac{1}{\delta} \bar R_\theta(p) - \frac{1}{\delta} \bar R_\theta(p) \bar R_\ell(p) \,.
    \end{align}
    Rearranging, we obtain
    \begin{align}
        \bar R_\theta(p) & = -\frac{\delta (1 + p + \lambda)-1 - \sqrt{(\delta(1 + p + \lambda) - 1)^2 + 4\delta (p + \lambda)}}{2 (p + \lambda)} \,, \label{eq:R_theta_ridge_freq} \\
        \bar R_\ell(p)   & = -\frac{\bar R_\theta(p)}{\delta + \bar R_\theta(p)} = \frac{(p + \lambda) \bar R_\theta(p) - 1}{\delta} \,. \label{eq:R_ell_ridge_freq}
    \end{align}

    Taking the Laplace transforms of the correlation functions in \cref{eq:C_star_closed,eq:C_theta_star_closed,eq:L_closed}, we have
    \begin{align}
        \bar C_\theta(p,*)       & = \rho^2 \bar R_\theta(p) \cdot \frac{1 + \bar R_\ell(p)}{p} \,,                                                                                                                \\
        \bar C_{\theta 0}^*(p,q) & = \frac{1}{\delta} \bar R_\theta(p) \bar R_\theta(q) \bar L_0(p,q) + \frac{1}{\rho^2} \ab(\bar C_\theta(p,*) - \frac{\rho^2}{p}) \ab(\bar C_\theta(q,*) - \frac{\rho^2}{q}) \,, \\
        \bar L_0(p,q)            & = (1 + \bar R_\ell(p)) (1 + \bar R_\ell(q)) \ab(\bar C_{\theta 0}^*(p,q) + \frac{\sigma^2}{pq}) \,.
    \end{align}
    Simplifying these equations, we obtain
    \begin{align}
        \bar C_{\theta 0}^*(p,q) & = \frac{\rho^2 (p+\lambda)(q+\lambda)}{pq} \frac{\bar R_\theta(p)\bar R_\theta(q)}{1 - \delta \bar R_\ell(p)\bar R_\ell(q)} + \frac{\sigma^2}{pq} \frac{\delta \bar R_\ell(p)\bar R_\ell(q)}{1 - \delta \bar R_\ell(p)\bar R_\ell(q)} \,, \\
        \bar L_0(p,q)            & = \frac{\rho^2 \delta (p+\lambda)(q+\lambda)}{pq} \frac{\delta \bar R_\ell(p)\bar R_\ell(q)}{1 - \delta \bar R_\ell(p)\bar R_\ell(q)} \notag                                                                                              \\
                                 & \qquad + \frac{\sigma^2(p+\lambda)(q+\lambda)}{\delta pq} \frac{\bar R_\theta(p)\bar R_\theta(q)}{1 - \delta \bar R_\ell(p)\bar R_\ell(q)} + \frac{\delta-1}{\delta} \frac{\sigma^2}{pq} \,.
    \end{align}

    Using the relation
    \begin{align}
        1 - \delta \bar R_\ell(p)\bar R_\ell(q) = \frac{(q - p) \bar R_\theta(p) \bar R_\theta(q)}{\bar R_\theta(p) - \bar R_\theta(q)} \,,
    \end{align}
    we have
    \begin{align}
        \frac{\bar R_\theta(p)\bar R_\theta(q)}{1 - \delta \bar R_\ell(p)\bar R_\ell(q)} = \frac{\bar R_\theta(p) - \bar R_\theta(q)}{q - p} \,, \quad \frac{\delta \bar R_\ell(p)\bar R_\ell(q)}{1 - \delta \bar R_\ell(p)\bar R_\ell(q)} = \frac{\bar R_\ell(q) - \bar R_\ell(p)}{q - p} \,.
    \end{align}
    Thus, we obtain
    \begin{align}
        \bar C_{\theta 0}^*(p,q) & = \frac{\rho^2 (p+\lambda)(q+\lambda)}{pq} \frac{\bar R_\theta(p) - \bar R_\theta(q)}{q - p} + \frac{\sigma^2}{pq} \frac{\bar R_\ell(q) - \bar R_\ell(p)}{q - p} \,, \label{eq:C_theta_ridge_freq}                                                                                  \\
        \bar L_0(p,q)            & =  \frac{\rho^2 \delta (p+\lambda)(q+\lambda)}{pq} \frac{\bar R_\ell(q) - \bar R_\ell(p)}{q - p} + \frac{\sigma^2(p+\lambda)(q+\lambda)}{\delta pq} \frac{\bar R_\theta(p) - \bar R_\theta(q)}{q - p} + \frac{\delta-1}{\delta} \frac{\sigma^2}{pq} \,. \label{eq:C_ell_ridge_freq}
    \end{align}

    \paragraph{Solving in the time domain for $\tau = 0$.}
    We now perform the inverse Laplace transform to obtain the time-domain solution.
    Before proceeding, we introduce \emph{Stieltjes transform}.
    The Stieltjes transform $S: \complex \setminus I \to \complex$ of a (signed) measure $\mu$ on an interval $I \subseteq \reals$ is defined as follows:
    \begin{align}
        S(z) = \int_I \frac{1}{x-z} \de \mu(x) \,.
    \end{align}

    We check that the time-domain solution stated in \cref{lem:dmft_ridge_solution} has the same Laplace transform as the frequency-domain solution obtained above.
    First, we take the Laplace transform of $R_\theta$ to obtain
    \begin{align}
        \int_0^\infty R_\theta(t) \napier^{-pt} \de t & = \int_0^\infty \ab(\int \napier^{-(x+\lambda+p) t} \de\mu_\MP(x))\de t = \int \ab(\int_0^\infty \napier^{-(x+\lambda+p) t} \de t) \de\mu_\MP(x) \notag \\
                                                      & = \int \frac{1}{x+\lambda+p} \de\mu_\MP(x) = S_\MP(-(p+\lambda)) \,.
    \end{align}
    Here, $S_\MP(z)$ is the Stieltjes transform of the Marchenko--Pastur law which is given by
    \begin{align}
        S_\MP(z) = \frac{\delta(1 - z)-1 - \sqrt{(\delta(1 - z) - 1)^2 - 4\delta z}}{2 z} \,.
    \end{align}
    Setting $z = -(p + \lambda)$, we obtain $\bar R_\theta(p)$ given in \cref{eq:R_theta_ridge_freq}.

    Next, we check $R_\ell$.
    \begin{align}
        \int_0^\infty R_\ell(t) \napier^{-pt} \de t & = -\frac{1}{\delta} \int \frac{x}{x+p+\lambda} \de\mu_\MP(x) = -\frac{1}{\delta} \ab(-(p + \lambda) S_\MP(-(p+\lambda)) + 1) \,,
    \end{align}
    which is equal to $\bar R_\ell(p)$ in \cref{eq:R_ell_ridge_freq}.

    Finally, we check $C_{\theta 0}^*$ and $L_0$. Define $F_1(t,t'),F_2(t,t')$ as follows:
    \begin{align}
        F_1(t,t') = K_0(t + t') \,, \quad F_2(t,t') = \frac{1}{\delta} K_1(t + t') \,.
    \end{align}
    The Laplace transforms of these functions are
    \begin{align}
        \int_0^\infty \int_0^\infty F_1(t,t') \napier^{-pt-qt'} \de t \de t' & = \int \frac{1}{(x+p+\lambda)(x+q+\lambda)} \de \mu_\MP(x) = \frac{\bar R_\theta(p) - \bar R_\theta(q)}{q - p} \,,              \\
        \int_0^\infty \int_0^\infty F_2(t,t') \napier^{-pt-qt'} \de t \de t' & = \frac{1}{\delta} \int \frac{x}{(x+p+\lambda)(x+q+\lambda)} \de \mu_\MP(x) = \frac{\bar R_\ell(q) - \bar R_\ell(p)}{q - p} \,.
    \end{align}
    Furthermore, we have
    \begin{align}
        \mathcal{L}^{-1}\ab[\frac{(p+\lambda)(q+\lambda)}{pq} \frac{\bar R_\theta(p) - \bar R_\theta(q)}{q - p}] & = \mathcal{L}^{-1}\ab[\frac{(p+\lambda)(q+\lambda)}{pq}\bar F_1(p,q)] \notag                                                                  \\
                                                                                                                 & = \int_0^t \int_0^{t'} (\dirac(t-s)+\lambda)(\dirac(t'-s')+\lambda) K_0(s + s') \, \de s'\de t'   \notag                                  \\
                                                                                                                 & = \int \frac{(\lambda + x\napier^{-(x+\lambda)t})(\lambda + x\napier^{-(x+\lambda)t'})}{(x+\lambda)^2} \, \de\mu_\MP(x) \,,                   \\
        \mathcal{L}^{-1}\ab[\frac{1}{pq}\frac{\bar R_\ell(q) - \bar R_\ell(p)}{q - p}]                           & = \mathcal{L}^{-1}\ab[\frac{\bar F_2(p,q)}{pq}] = \frac{1}{\delta} \int_0^t \int_0^{t'} K_1(s,s') \de s'\de s \notag                          \\
                                                                                                                 & = \frac{1}{\delta} \int \frac{x}{(x + \lambda)^2} (1 - \napier^{-(x+\lambda)t})(1 - \napier^{-(x+\lambda)t'}) \de\mu_\MP(x) \,,               \\
        \mathcal{L}^{-1}\ab[\frac{(p+\lambda)(q+\lambda)}{pq}\frac{\bar R_\ell(q) - \bar R_\ell(p)}{q - p}]      & = \mathcal{L}^{-1}\ab[\frac{(p+\lambda)(q+\lambda)}{pq}\frac{\bar F_2(p,q)}{pq}] \notag                                                       \\
                                                                                                                 & = \frac{1}{\delta} \int_0^t \int_0^{t'} (\dirac(t-s)+\lambda)(\dirac(t'-s')+\lambda) K_1(s,s') \, \de s'\de t' \notag                     \\
                                                                                                                 & = \frac{1}{\delta} \int x \frac{(\lambda + x\napier^{-(x+\lambda)t})(\lambda + x\napier^{-(x+\lambda)s})}{(x+\lambda)^2} \, \de\mu_\MP(x) \,.
    \end{align}
    Thus, by \cref{eq:C_theta_ridge_freq,eq:C_ell_ridge_freq}, we have the desired expressions for $C_{\theta 0}^*$ and $L_0$.

    \paragraph{Solving for $\tau > 0$.}
    Equations for $\Delta C_\theta^*(t,t') \coloneqq C_\theta^*(t,t') - C_{\theta 0}^*(t,t')$ and $\Delta L(t,t') \coloneqq L(t,t') - L_0(t,t')$ are given by
    \begin{align}
         & \Delta C_\theta^*(t,t') \notag                                                                                                                   \\
         & = \frac{1}{\delta} \int_0^t \int_0^{t'} R_\theta(t-s) R_\theta(t'-s') \Delta L(s,s') \de s' \de s \notag                                         \\
         & + \tau \int_0^t \int_0^{t'} R_\theta(t-s) R_\theta(t'-s) (R_\ell(s-s') L(s',s') + R_\ell(s'-s) L(s,s) + L(s,s') \dirac(s-s')) \de s' \de s \,,
    \end{align}
    and
    \begin{align}
        \Delta L(t,t') & = \frac{1}{\delta} \int_0^t \int_0^{t'} (\dirac(t-s)+R_\ell(t-s)) (\dirac(t'-s')+R_\ell(t'-s')) \Delta C_\theta^*(s,s') \de s' \de s \notag \\
                       & \qquad + \tau \int_0^t \int_0^{t'} R_\ell(t-s) R_\ell(t'-s) L(s,s) \de s \,.
    \end{align}
    Define $L'(t,t') \coloneqq L(t,t) \dirac(t-t')$.
    Taking the Laplace transform of these equations, we obtain
    \begin{align}
        \Delta \bar C_\theta^*(p,q) & = \frac{1}{\delta} \bar R_\theta(p) \bar R_\theta(q) \Delta \bar L(p,q) + \tau \bar R_\theta(p) \bar R_\theta(q) (1 + \bar R_\ell(p) + \bar R_\ell(q)) \bar L'(p,q) \,, \\
        \Delta \bar L(p,q)          & = (1 + \bar R_\ell(p)) (1+\bar R_\ell(q)) \Delta \bar C_\theta^*(p,q) + \tau \delta \bar R_\ell(p) \bar R_\ell(q) \bar L'(p,q) \,.
    \end{align}
    Further simplification gives
    \begin{align}
        \Delta \bar C_\theta^*(p,q) & = \frac{\delta^2 \bar R_\ell(p) \bar R_\ell(q)}{1 - \delta \bar R_\ell(p) \bar R_\ell(q)} \tau \bar L'(p,q) = \frac{\delta (\bar R_\ell(q) - \bar R_\ell(p))}{q - p} \tau \bar L'(p,q) \,, \\
        \Delta \bar L(p,q)          & = \frac{\delta (\delta (\bar R_\ell(p) + \bar R_\ell(q)) + \delta + 1) R_\ell(p) R_\ell(q)}{1 - \delta \bar R_\ell(p) \bar R_\ell(q)} \tau \bar L'(p,q) \notag                             \\
                                    & = \frac{(\delta R_\ell(q)^2 + (\delta+1) R_\ell(q)) - (\delta R_\ell(p)^2 + (\delta+1) R_\ell(p))}{q - p} \tau \bar L'(p,q) \,.
    \end{align}
    We perform the inverse Laplace transform. For $\Delta \bar C_\theta^*(p,q)$, we have
    \begin{align}
        \Delta C_\theta^*(t,t') & = \tau \delta \int_0^t \int_0^{t'} F_2(t-s,t'-s') L'(s,s') \de s' \de s = \tau \int_0^{t \wedge t'} K_1(t + t' - 2s) L(s,s) \de s \,,
    \end{align}
    and we obtain \cref{eq:C_theta_ridge_sol}.
    For $\Delta \bar L(p,q)$, we use that
    \begin{align}
        \int_0^\infty \int_0^{\infty} K_2(t + t') \napier^{-pt-qt'} \de t' \de t & = \int \frac{x^2}{(x + \lambda + p)(x + \lambda + q)} \de\mu_\MP(x) \notag                                    \\
                                                                                 & = \frac{(\delta R_\ell(q)^2 + (\delta+1) R_\ell(q)) - (\delta R_\ell(p)^2 + (\delta+1) R_\ell(p))}{q - p} \,,
    \end{align}
    and proceeding similarly, we obtain \cref{eq:L_ridge_sol}.

\end{proof}

Finally, we prove \cref{prop:dmft_ridge}.
The asymptotic train and test errors can be expressed in terms of the DMFT solution as
\begin{align}
    \Etrain(\btheta^t) & = \frac{1}{n} \sum_{i=1}^n (\bx_i^\transpose \btheta^t - y_i)^2  \to \E[(r^t - r^* - z)^2] = L(t,t) \,,                               \\
    \Etest(\btheta^t)  & = \frac{1}{d} \norm{\btheta^t - \btheta^*}_2^2 + \sigma^2 \to \E[(\theta^t - \theta^*)^2] + \sigma^2 = C_\theta^*(t,t) + \sigma^2 \,.
\end{align}
By \cref{lem:dmft_ridge_solution}, the asymptotic train and test errors $\Etrain(t) \coloneqq L(t,t)$ and $\Etest(t) \coloneqq C_\theta^*(t,t) + \sigma^2$ satisfy
\begin{align}
    \Etrain(t) = \Etrain_0(t) + \tau \int_0^t K_2(2(t-s)) \Etrain(s) \de s \,, \quad \Etest(t) = \Etest_0(t) + \tau \int_0^t K_1(2(t-s)) \Etrain(s) \de s \,,
\end{align}
where $\Etrain_0(t) \coloneqq L_0(t,t)$ and $\Etest_0(t) \coloneqq C_{\theta 0}^*(t,t) + \sigma^2$ are the asymptotic train and test errors for the noiseless case $\tau = 0$, which can be written explicitly as
\begin{align}
    \Etrain_0(t) & = \rho^2 \int \frac{x (\lambda + x\napier^{-(x+\lambda)t})^2}{(x+\lambda)^2} \de\mu_\MP(x) + \frac{\sigma^2}{\delta} \int \frac{(\lambda + x\napier^{-(x+\lambda)t})^2}{(x+\lambda)^2} \de\mu_\MP(x) + \frac{\delta-1}{\delta}\sigma^2 \,, \\
    \Etest_0(t)  & = \rho^2 \int \frac{(\lambda + x\napier^{-(x+\lambda)t})^2}{(x+\lambda)^2} \de\mu_\MP(x) + \frac{\sigma^2}{\delta} \int \frac{x}{(x+\lambda)^2} (1 - \napier^{-(x+\lambda)t})^2 \de\mu_\MP(x) + \sigma^2 \,.
\end{align}
Setting $H_i(t) \coloneqq K_i(2t)$, we obtain Equations \eqref{eq:dmft_ridge_errors}--\eqref{eq:dmft_ridge_test0}. This concludes the proof of \cref{prop:dmft_ridge}.

\section{Details of Numerical Simulations}
\label{app:numerics_details}

We numerically solve the discretized DMFT equation given in \cref{app:dmft_disc} using Monte Carlo sampling.
Instead of directly working with the system $\mathfrak{S}^\gamma$ given in \cref{app:dmft_disc}, we work with the equivalent system \eqref{eq:dmft_informal_disc}, as it is simpler to implement.
We solve \cref{eq:dmft_informal_disc} by iterating the following steps until convergence:
\begin{enumerate}
    \setcounter{enumi}{-1}
    \item Start with a random guess of the DMFT solution $(C_\theta,R_\theta)$.
    \item Given the current estimate of $(C_\theta,R_\theta)$, sample $M$ instances of the stochastic processes $r^{t_i}$ and $\difsp{\ell_{t_i}(r^{t_i};z)}{w^{t_j}}$, and compute the functions $(C_\ell, R_\ell, \Gamma)$ by averaging over the samples.
    \item Given the current estimate of $(C_\ell,R_\ell,\Gamma)$, sample $M$ instances of the stochastic processes $\theta^{t_i}$ and $\difsp{\theta^{t_i}}{u^{t_j}}$, and compute the functions $\tilde C_\theta$ and $\tilde R_\theta$ by averaging over the samples.
    \item Update the DMFT solution as $(C_\theta,R_\theta) \leftarrow (1 - \alpha) (C_\theta,R_\theta) + \alpha (\tilde C_\theta, \tilde R_\theta)$ where $\alpha \in (0,1]$ is a damping factor.
\end{enumerate}
In our experiments, we set the number of samples to $M = 8000$, the damping factor to $\alpha = 0.8$, and the time step to $\gamma = 0.05$.
We observe that the above iteration converges in around 10 iterations for the settings considered in this paper.

For the logistic regression setting with $\ell_t(r,r^*;z) = -y/(1+\exp(yr))$ where $y=\sign(r^*+z)$, there is an issue with computing the function $R_\ell(t_i,*) = \E[\difsp{\ell_{t_i}(r^{t_i},r^*;z)}{r^*}]$ due to the non-differentiability of $\ell$ with respect to $r^*$ at $r^* = -z$.
However, we can avoid the differentiation by $r^*$ entirely by (heuristically) using Stein's lemma:
\begin{align}
    \E[r^* \ell_{t_i}(r^{t_i},r^*;z)] & = \sum_{j=0}^{i-1} \Cov(r^*,w^{t_j}) \E\ab[\diffp{\ell_{t_i}(r^{t_i},r^*;z)}{w^{t_j}}] + \Cov(r^*,r^*) \E\ab[\diffp{\ell_{t_i}(r^{t_i},r^*;z)}{r^*}] \notag \\
                                      & = \gamma \sum_{j=0}^{i-1} C_\ell(t_j,*) R_\ell(t_i,t_j) + \rho^2 R_\ell(t_i,*) \,,
\end{align}
and computing $R_\ell(t_i,*)$ as
\begin{align}
    R_\ell(t_i,*) = \frac{1}{\rho^2} \ab(\E[r^* \ell_{t_i}(r^{t_i},r^*;z)] - \gamma \sum_{j=0}^{i-1} R_\ell(t_i,t_j) C_\ell(t_j,*)) \,,
\end{align}
and calculating the expectation using Monte Carlo integration.

\section{Discretization of SDEs in High Dimensions}
\label{app:sde_discretize}

In this section, we analyze the discretization error of a general stochastic differential equation in high dimensions.
The results presented here are used in the proof of \cref{lem:sgf_discrete}.

Let $T > 0$, $\bb \colon [0,T] \times \reals^d \to \reals^d$, and $\bsigma \colon [0,T] \times \reals^d \to \reals^{d \times n}$.
Consider the following SDE in $\reals^d$:
\begin{align}
    \de \btheta^t = \bb(t,\btheta^t) \de t + \bsigma(t,\btheta^t) \de \bW^t \,, \label{eq:sde_general}
\end{align}
with given initial condition $\btheta^0 \in \reals^d$. Here, $\bW^t\in\reals^n$ is a Brownian motion.

We discretize the SDE using the Euler--Maruyama method with step size $\gamma > 0$ as follows:
\begin{align}
    \btheta_\gamma^{t_{k+1}} = \btheta_\gamma^{t_k} + \bb(t_k,\btheta_\gamma^{t_k}) \gamma + \bsigma(t_k,\btheta_\gamma^{t_k}) (\bW^{t_{k+1}} - \bW^{t_k}) \,, \quad \btheta_\gamma^0 = \btheta^0 \,, \label{eq:sde_general_disc}
\end{align}
where $t_k \coloneqq k\gamma$.

There is a standard result \citep[Theorem 10.2.2]{kloeden1992numerical} for bounding the difference between $\btheta^t$ and $\btheta_\gamma^t$. However, they treat $n,d$ as constants, and the dependence of the bound on $n,d$ is not obvious. Here, we show a bound that explicitly tracks the dependence on $n,d$.

\begin{assumption} \label{ass:sde_discretize}
    There exists a constant $L > 0$ independent of $n$ and $d$ such that the following hold.
    \begin{enumerate}
        \item (Proportional asymptotics): $1 / L \leq n/d \leq L$.
        \item (Lipschitz continuity): $\norm{\bb(t_1,\bx_1) - \bb(t_2,\bx_2)}_2^2 + \frobnorm{\bsigma(t_1,\bx_1) - \bsigma(t_2,\bx_2)}^2 \leq L(d(t_1-t_2)^2 + \norm{\bx_1-\bx_2}_2^2)$ for $t_1,t_2 \in [0,T], \, \bx_1,\bx_2 \in \reals^d$.
        \item (Linear growth): $\norm{\bb(t,\bx)}_2^2 + \frobnorm{\bsigma(t,\bx)}^2 \leq L(d + dt^2 + \norm{\bx}_2^2)$ for $t \in [0,T], \, \bx \in \reals^d$.
    \end{enumerate}
\end{assumption}

\begin{lemma}[Norm bound on the discretized SDE iterates] \label{lem:sde_disc_norm_bound}
    Under \cref{ass:sde_discretize} and $\gamma < 1$, there exists a constant $C \coloneqq C(L) > 0$ independent of $n,d,T,\gamma$ such that the following holds.
    \begin{align}
        \E\ab[\max_{0 \leq k \leq \floor{T/\gamma}} \norm{\btheta_\gamma^{t_k}}_2^2] \leq \napier^{C T}(T d + \norm{\btheta^0}_2^2) \,. \label{eq:sde_disc_norm_bound}
    \end{align}
\end{lemma}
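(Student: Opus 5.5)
The plan is to run a discrete Gr\"onwall argument on the second moment and then use Doob's maximal inequality to handle the supremum over $k$. Write the Euler--Maruyama recursion \eqref{eq:sde_general_disc} in summed form:
\begin{align}
    \btheta_\gamma^{t_k} = \btheta^0 + \gamma\sum_{j=0}^{k-1} \bb(t_j,\btheta_\gamma^{t_j}) + \bM^k \,, \quad \bM^k \coloneqq \sum_{j=0}^{k-1} \bsigma(t_j,\btheta_\gamma^{t_j})(\bW^{t_{j+1}}-\bW^{t_j}) \,.
\end{align}
Here $\bM^k$ is a discrete-time martingale with respect to the filtration generated by the Brownian increments, because $\btheta_\gamma^{t_j}$ is measurable before the increment $\bW^{t_{j+1}}-\bW^{t_j}$ is revealed.

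First I would establish a non-maximal moment bound. Set $a_k \coloneqq \E\norm{\btheta_\gamma^{t_k}}_2^2$. The increment is orthogonal to the past, so
\begin{align}
    a_{k+1} = \E\norm{\btheta_\gamma^{t_k}+\gamma\bb}_2^2 + \gamma\E\frobnorm{\bsigma}^2 .
\end{align}
Apply $2\langle x,\gamma\bb\rangle\le \gamma(\norm{x}^2+\norm{\bb}^2)$, the linear growth condition, and $\gamma<1$. Since $t\le T$ gives $dt^2\le dT^2$, the growth term becomes $d(1+T^2)$. This yields
\begin{align}
    a_{k+1}\le (1+C\gamma)a_k + C\gamma d(1+T^2),
\end{align}
and iterating gives $a_k\le e^{CT}(\norm{\btheta^0}_2^2 + Tdc)$. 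A harmless constant adjustment absorbs factors like $1+T^2\le e^{T}$ into $e^{CT}$.

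Next comes the maximal bound. I would use
\begin{align}
    \max_k\norm{\btheta_\gamma^{t_k}}^2\le 3\norm{\btheta^0}^2+3T\gamma\sum_{j}\norm{\bb(t_j,\btheta_\gamma^{t_j})}^2+3\max_k\norm{\bM^k}^2,
\end{align}
where Cauchy--Schwarz handles the drift sum, with $k\gamma\le T$. For the martingale term, Doob's $L^2$ inequality gives
\begin{align}
    \E\max_k\norm{\bM^k}^2\le 4\E\norm{\bM^{K}}^2 = 4\gamma\sum_j\E\frobnorm{\bsigma(t_j,\btheta_\gamma^{t_j})}^2 ,
\end{align}
by orthogonality of the increments. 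Both sums are bounded by $CT\sup_j(d(1+T^2)+a_j)$ via linear growth. Inserting the bound on $a_j$ from the first step and absorbing polynomial factors of $T$ into $e^{CT}$ gives \eqref{eq:sde_disc_norm_bound}.

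Two points need care, and the second is the only real obstacle.
\begin{itemize}
    \item The dimensions never enter except through the factor $d$ in linear growth. This holds because the noise dimension $n$ only appears via $\frobnorm{\bsigma}^2$, which is already controlled by $L(d+\cdots)$ in \cref{ass:sde_discretize}, so $C$ depends only on $L$.
    \item I must keep the constants in the form $e^{CT}(Td+\norm{\btheta^0}^2)$ rather than with extra powers of $T$. If needed, I will use $T^k\le k!\,e^{T}$ to absorb polynomial factors, and note that the $dT^2$ term from growth appears multiplied by $\gamma$-sums of length $T$, so it can likewise be absorbed.
\end{itemize}
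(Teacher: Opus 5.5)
Your proposal is correct, and it reaches the bound by a different route from the paper's.

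\textbf{Comparison of routes.} The paper works in a single pass:
\begin{itemize}
    \item it telescopes $\norm{\btheta^{k+1}}_2^2$, splitting it into a drift-type sum $A^k$ and the scalar martingale $\sum_j 2(\btheta^j+\gamma\bb^j)^\transpose\bsigma^j\bxi^j$;
    \item it bounds the expected maximum of that martingale by Burkholder--Davis--Gundy with exponent one;
    \item it uses Young's inequality to absorb $\tfrac12\E\max_k\norm{\btheta^k}_2^2$ into the left-hand side, and closes with one Gr\"onwall step on the running maximum.
\end{itemize}
You instead work in two passes. First, the exact orthogonality recursion and a discrete Gr\"onwall give the non-maximal moments $a_k$. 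Second, you use the additive decomposition and apply Doob's $L^2$ inequality to the nonnegative submartingale $\norm{\sum_{j<k}\bsigma^j\bxi^j}_2$, substituting the already-proved bound on $\sup_j a_j$.

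Your route is more elementary. It needs no Gr\"onwall inequality for the maximum and no self-absorption step, and it replaces BDG-plus-Young by the simpler Doob inequality. The paper's route has the merit of mirroring the It\^o/BDG/Young structure it reuses in \cref{lem:sde_discretize_error}. Both arguments use only $\E[\bxi^k\mid\calF^k]=0$ and $\E[\bxi^k\bxi^{k\transpose}]=\gamma\bI_n$, so the remark after the lemma about non-Gaussian increments applies to your proof too.

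\textbf{A correction to your second bullet.} The bound $T^k\le k!\,\napier^{T}$ handles powers of $T$. It does not handle the leading constants in your final estimate: the $3$ in front of $\norm{\btheta^0}_2^2$, the $4$ from Doob, and the constant multiplying $Td$. None of these can be absorbed into $\napier^{CT}$ uniformly as $T\to 0$. The paper's proof has the same feature: after the Young step, the coefficient of $\norm{\btheta^0}_2^2$ is $2$.

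In fact the inequality as literally stated, with prefactor one, is false for small $T$. Take
\begin{itemize}
    \item $n=d$ and $\bb\equiv 0$;
    \item $\bsigma(t,\bx)=\sqrt{L}\,\bx e_1^\transpose$, which satisfies \cref{ass:sde_discretize} when $L\ge 1$;
    \item $\norm{\btheta^0}_2^2=d$ and $T=\gamma$.
\end{itemize}
Then $\btheta^1=\btheta^0(1+\sqrt{L}\,\xi^0_1)$. Since $\max\{1,(1+a)^2\}\ge 1+2\max\{a,0\}$, the left-hand side is at least $d\,(1+\sqrt{2L\gamma/\pi})$. The right-hand side is $d\,(1+\gamma)\napier^{C\gamma}=d\,(1+(C+1)\gamma+O(\gamma^2))$, which is smaller once $\gamma$ is small.

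So what both you and the paper actually prove is
\[
\E\max_k\norm{\btheta_\gamma^{t_k}}_2^2\le C\napier^{CT}\,(Td+\norm{\btheta^0}_2^2),
\]
or equivalently the stated form for $T$ bounded below. That weaker form is all that \cref{lem:sgf_discrete_clip,lem:sde_discretize_error} use downstream, since they only need $\E\norm{\btheta_\gamma^{t_k}}_2^2/d=O(1)$.
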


\begin{proof}
    For notational simplicity, we denote $K \coloneqq \floor{T/\gamma}$, $\btheta^k \coloneqq \btheta_\gamma^{t_k}$, $\bb^k \coloneqq \bb(t_k,\btheta_\gamma^{t_k})$, $\bsigma^k \coloneqq \bsigma(t_k,\btheta_\gamma^{t_k})$, and $\bxi^k \coloneqq \bW^{t_{k+1}} - \bW^{t_k}$.
    From \cref{eq:sde_general_disc}, we have
    \begin{align}
        \norm{\btheta^{k+1}}_2^2 = \norm{\btheta^k}_2^2 + 2 \gamma \btheta^{k\transpose} \bb^k + \gamma^2 \norm{\bb^k}_2^2 + \norm{\bsigma^k \bxi^k}_2^2 + 2 (\btheta^{k} + \gamma \bb^{k})^\transpose \bsigma^k \bxi^k  \,.
    \end{align}
    Summing over $k$ steps, we have
    \begin{align}
        \norm{\btheta^{k}}_2^2 = \norm{\btheta^0}_2^2 + \underbrace{\sum_{j=0}^{k-1} \ab(2 \gamma \btheta^{j\transpose} \bb^j + \gamma^2 \norm{\bb^j}_2^2 + \norm{\bsigma^j \bxi^j}_2^2)}_{\eqqcolon A^k} + \underbrace{\sum_{j=0}^{k-1} 2 (\btheta^{j} + \gamma \bb^{j})^\transpose \bsigma^j \bxi^j}_{\eqqcolon M^k} \,.
    \end{align}
    We first bound $A^k$. We have
    \begin{align}
        \max_{0 \leq k \leq K} \abs{A^k} \leq \sum_{j=0}^{K-1} \ab(2 \gamma \abs{\btheta^{j\transpose} \bb^j} + \gamma^2 \norm{\bb^j}_2^2 + \norm{\bsigma^j \bxi^j}_2^2) \,.
    \end{align}
    By the linear growth condition, we have
    \begin{align}
        2 \gamma \abs{\btheta^{j\transpose} \bb^j} + \gamma^2 \norm{\bb^j}_2^2 & \leq 2\gamma \norm{\btheta^j}_2^2 + (2\gamma + \gamma^2) \norm{\bb^j}_2^2 \leq C \gamma ((1 + T^2) d + \norm{\btheta^j}_2^2) \,.
    \end{align}
    Using the covariance of $\bxi^j \sim \normal(0,\gamma \bI)$ and the linear growth condition, we have
    \begin{align}
        \E[\norm{\bsigma^j \bxi^j}_2^2 \mid \calF^j] = \gamma \frobnorm{\bsigma^j}^2 \leq C \gamma ((1 + T^2) d + \norm{\btheta^j}_2^2) \,,
    \end{align}
    where $\calF^j$ is the filtration generated by $\{\bxi^0,\ldots,\bxi^{j-1}\}$.
    Thus, taking the expectation, we have
    \begin{align}
        \E\ab[\max_{0 \leq k \leq K} \abs{A^k}] \leq C \gamma \sum_{j=0}^{K-1} ((1 + T^2) d + \E\norm{\btheta^j}_2^2) \,. \label{eq:sde_disc_drift_bound}
    \end{align}
    Next, we bound $M^k$. Notice that $M^k$ is a martingale with respect to the filtration $\calF^k$ since $\E[\bxi^j \mid \calF^j] = 0$.
    Thus, by the Burkholder--Davis--Gundy inequality, we have
    \begin{align}
        \E\ab[\max_{0 \leq k \leq K} \abs{M^k}] & \leq C \E\ab(\sum_{j=0}^{K-1} \E\ab[\ab(2 (\btheta^j + \gamma \bb^j)^\transpose \bsigma^j \bxi^j)^2 \mid \calF^j])^{1/2} \notag                                          \\
                                                & = C \E\ab(\sum_{j=0}^{K-1} 4 \gamma \norm{(\btheta^j + \gamma \bb^j)^\transpose \bsigma^j}_2^2)^{1/2} \notag                                                           \\
                                                & = C \E\ab(\gamma \sum_{j=0}^{K-1} \norm{\btheta^j + \gamma \bb^j}_2^2 \frobnorm{\bsigma^j}^2)^{1/2} \notag                                                             \\
                                                & \leq C \E\ab[\ab(\max_{0 \leq k \leq K} \norm{\btheta^k + \gamma \bb^k}_2^2)^{1/2} \ab(\gamma \sum_{j=0}^{K-1} \frobnorm{\bsigma^j}^2)^{1/2}] \notag                   \\
                                                & \leq C \E\ab[\ab(\gamma^2(1 + T^2) d + \max_{0 \leq k \leq K} \norm{\btheta^k}_2^2)^{1/2} \ab(\gamma \sum_{j=0}^{K-1} ((1 + T^2)d + \norm{\btheta^j}_2^2) )^{1/2}] \,,
    \end{align}
    where we used the linear growth condition in the last line. By Young's inequality, we have
    \begin{align}
        \E\ab[\max_{0 \leq k \leq K} \abs{M^k}] & \leq \frac{1}{2} \ab(\gamma^2 (1 + T^2) d + \E\ab[\max_{0 \leq k \leq K} \norm{\btheta^k}_2^2]) + C \gamma \sum_{j=0}^{K-1} ((1 + T^2)d + \E\norm{\btheta^j}_2^2) \,. \label{eq:sde_disc_martingale_bound}
    \end{align}
    Combining \cref{eq:sde_disc_drift_bound,eq:sde_disc_martingale_bound}, we have
    \begin{align}
        \E\ab[\max_{0 \leq k \leq K} \norm{\btheta^k}_2^2] \leq \norm{\btheta^0}_2^2 + C\gamma^2 (1 + T^2) d + C \gamma \sum_{j=0}^{K-1} ((1 + T^2) d + \E\norm{\btheta^j}_2^2) + \frac{1}{2} \E\ab[\max_{0 \leq k \leq K} \norm{\btheta^k}_2^2] \,.
    \end{align}
    Rearranging the terms, we have
    \begin{align}
        \E\ab[\max_{0 \leq k \leq K} \norm{\btheta^k}_2^2] \leq C(T (1 + T^2) d + \norm{\btheta^0}_2^2) + C \gamma \sum_{j=0}^{K-1} \E\ab[\max_{0 \leq j \leq k} \norm{\btheta^j}_2^2] \,.
    \end{align}
    The bound \eqref{eq:sde_disc_norm_bound} follows by applying Gr\"onwall's inequality and absorbing $1 + T^2$ into the exponential factor $\napier^{C T}$.
\end{proof}

\begin{remark}
    \Cref{lem:sde_disc_norm_bound} holds verbatim to the case where the Gaussian increments $\bW^{t_{k+1}} - \bW^{t_k}$ are replaced by independent random vectors $\bxi^k \in \reals^n$ with $\E[\bxi^k] = 0$ and $\E[\bxi^k \bxi^{k\transpose}] = \gamma \bI_n$, as the proof only uses up to the second moment of the increments.
\end{remark}

\begin{lemma}[Strong approximation of SDE] \label{lem:sde_discretize_error}
    Under \cref{ass:sde_discretize} and $\gamma < 1$, there exists a constant $C \coloneqq C(L) > 0$ that does not depend on $n,d,T,\gamma$ such that the following holds.
    \begin{align}
        \E\ab[\sup_{0 \leq t \leq T} \norm{\btheta^t - \btheta_\gamma^{t}}_2^2] \leq \napier^{C T} \gamma (T d + \norm{\btheta^0}_2^2) \,. \label{eq:sde_discretize_error}
    \end{align}
\end{lemma}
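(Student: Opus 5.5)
The plan is the standard strong-convergence argument for Euler--Maruyama, following \citet[Theorem 10.2.2]{kloeden1992numerical}, while tracking every constant so that it depends only on $L$. I extend the scheme \eqref{eq:sde_general_disc} to continuous time by freezing the coefficients on each step, exactly as in \eqref{eq:sgf_disc_cont}:
\begin{align}
    \btheta_\gamma^t = \btheta^0 + \int_0^t \bb(\floor{s},\btheta_\gamma^{\floor{s}}) \de s + \int_0^t \bsigma(\floor{s},\btheta_\gamma^{\floor{s}}) \de \bW^s \,.
\end{align}
I then set $\be^t \coloneqq \btheta^t - \btheta_\gamma^t$ and $Z(t) \coloneqq \E[\sup_{0 \le u \le t}\norm{\be^u}_2^2]$. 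The difference $\be^t$ splits into a drift integral and a stochastic integral, each with integrand $\bb(s,\btheta^s) - \bb(\floor{s},\btheta_\gamma^{\floor{s}})$ (respectively the analogue for $\bsigma$).

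To control these integrands I split each of them through the intermediate point $(s,\btheta_\gamma^s)$ and $(\floor{s},\btheta_\gamma^{\floor{s}})$. The Lipschitz condition in \cref{ass:sde_discretize} then bounds the squared integrands by
\[
C\bigl(\norm{\be^s}_2^2 + d(s-\floor{s})^2 + \norm{\btheta_\gamma^s - \btheta_\gamma^{\floor{s}}}_2^2\bigr).
\]
I handle the drift term with Cauchy--Schwarz, which costs a factor $T$. I handle the martingale term with Doob's inequality (or BDG with $p=1$) together with the It\^o isometry, using $\E\norm{\int \bA \,\de\bW}_2^2 = \int \E\frobnorm{\bA}^2$. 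Since $(s-\floor{s})^2 \le \gamma^2 \le \gamma$, this gives
\begin{align}
    Z(t) \le C(1+T)\int_0^t Z(s)\de s + C(1+T)\int_0^T \ab(d\gamma + \E\norm{\btheta_\gamma^s - \btheta_\gamma^{\floor{s}}}_2^2)\de s \,.
\end{align}

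The key local estimate is the one-step increment. On $[\floor{s},s]$ the coefficients are frozen, so
\[
\E\norm{\btheta_\gamma^s - \btheta_\gamma^{\floor{s}}}_2^2 \le 2\gamma^2 \E\norm{\bb^k}_2^2 + 2\gamma\,\E\frobnorm{\bsigma^k}^2 \le C\gamma\bigl((1+T^2)d + \E\norm{\btheta_\gamma^{t_k}}_2^2\bigr),
\]
where the last step uses the linear growth condition. \Cref{lem:sde_disc_norm_bound} then bounds $\E\norm{\btheta_\gamma^{t_k}}_2^2$ by $\napier^{CT}(Td+\norm{\btheta^0}_2^2)$ uniformly in $k$. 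Plugging this in, the forcing term is at most $\napier^{CT}\gamma(Td+\norm{\btheta^0}_2^2)$, after absorbing the polynomial factors in $T$ (and the extra $d$ not multiplied by $T$, using $T$-dependent constants, or simply $d \le Td$ for $T\ge1$) into $\napier^{CT}$. Gr\"onwall's inequality applied to $Z$ then yields \eqref{eq:sde_discretize_error}.

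The main obstacle is bookkeeping rather than any conceptual step. I need every constant to depend only on $L$, and I must not lose the $\gamma$ factor in the diffusion part of the increment. The crucial point is that the Lipschitz and growth assumptions measure $\bsigma$ in Frobenius norm, which is exactly the quantity that the It\^o isometry and BDG produce for an $n$-dimensional Brownian motion, so no stray factor of $n$ or $d$ appears. A second technical point is the supremum inside $Z$: I take $\sup$ before expectation for the martingale part via Doob, and for the drift part I bound $\sup_u\norm{\int_0^u\cdot}_2^2 \le T\int_0^T\norm{\cdot}_2^2$. A third point is integrability of $\sup_t\norm{\btheta^t}_2^2$, so that Gr\"onwall applies with finite $Z$. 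If needed I will first localize with stopping times, or use the analogous a priori bound for the continuous SDE, proved exactly like \cref{lem:sde_disc_norm_bound}.
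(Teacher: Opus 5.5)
Your argument is the textbook Kloeden--Platen route and is sound up to constants. As written, however, it does not prove the bound in the stated form, namely $\napier^{CT}$ with $C=C(L)$ independent of $T$.

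The problem is the drift term. The bound $\sup_{u\le t}\norm{\int_0^u \Delta\bb\,\de s}_2^2 \le T\int_0^t \norm{\Delta\bb}_2^2\,\de s$, with $\Delta\bb \coloneqq \bb(s,\btheta^s)-\bb(\floor{s},\btheta_\gamma^{\floor{s}})$, puts the factor $(1+T)$ into the Gr\"onwall coefficient. Gr\"onwall applied to $Z(t)\le C(1+T)\int_0^t Z(s)\,\de s + F$ then gives $Z(T)\le F\,\napier^{C(T+T^2)}$, not $F\,\napier^{CT}$. The polynomial factors in the forcing $F$ can be absorbed into $\napier^{CT}$, since they come with a factor of $T$ from integrating over $[0,T]$. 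A factor of $T$ in the exponent cannot be absorbed. Your fallback of ``$T$-dependent constants'' is exactly what the statement excludes. For the downstream use in \cref{lem:sgf_discrete}, where $T$ is fixed, your weaker bound would still suffice. The Doob/It\^o-isometry step for the stochastic integral is fine, since it introduces no factor of $T$.

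The paper avoids the extra factor by applying It\^o's formula to $\norm{e^t}_2^2$ with $e^t \coloneqq \btheta^t-\btheta_\gamma^t$, instead of estimating $e^t$ directly. Three pieces then appear:
\begin{itemize}
\item The drift contributes $2\int_0^t e^{s\transpose}\Delta\bb\,\de s \le \int_0^t\bigl(\norm{e^s}_2^2+\norm{\Delta\bb}_2^2\bigr)\de s$, with no factor of $T$.
\item The It\^o correction is $\int_0^t \frobnorm{\Delta\bsigma}^2\,\de s$.
\item The martingale $2\int_0^t e^{s\transpose}\Delta\bsigma\,\de\bW^s$ is handled by BDG followed by Young's inequality, which absorbs $\tfrac12\,\E[\sup_{s\le t}\norm{e^s}_2^2]$ into the left-hand side.
\end{itemize}
This gives $Z(t)\le F + C\int_0^t Z(s)\,\de s$ with $C=C(L)$, and Gr\"onwall yields $\napier^{CT}$. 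The rest of your plan carries over unchanged: the splitting through $\btheta_\gamma^s$ and $\btheta_\gamma^{\floor{s}}$, and the one-step increment bound via linear growth and \cref{lem:sde_disc_norm_bound}.
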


\begin{proof}
    \cref{eq:sde_general} can be written as
    \begin{align}
        \btheta^t = \btheta^0 + \int_0^t \bb(s,\btheta^s) \de s + \int_0^t \bsigma(s,\btheta^s) \de \bW^s \,. \label{eq:sde_int}
    \end{align}
    Let $\floor{t} \coloneqq \max\{k\gamma \mid k\gamma \leq t, k \in \naturals\}$ and consider the following stochastic process that embeds \cref{eq:sde_general_disc} into continuous time.
    \begin{align}
        \btheta_\gamma^t = \btheta^0 + \int_0^t \bb(\floor{s},\btheta_\gamma^{\floor{s}}) \de s + \int_0^t \bsigma(\floor{s},\btheta_\gamma^{\floor{s}}) \de \bW^s \,. \label{eq:sde_disc_int}
    \end{align}

    Using It\^o's lemma on $\norm{\btheta^t - \btheta_\gamma^t}_2^2$, we have
    \begin{align}
        \norm{\btheta^t - \btheta_\gamma^t}_2^2 & = 2 \int_0^t (\btheta^s - \btheta_\gamma^s)^{\transpose} (\bb(s,\btheta^s) - \bb(\floor{s},\btheta_\gamma^{\floor{s}})) \de s \notag                  \\
                                                & \qquad + \int_0^t \frobnorm{\bsigma(s,\btheta^s) - \bsigma(\floor{s},\btheta_\gamma^{\floor{s}})}^2 \de s + M^t \,,                                   \\
        M^t                                     & \coloneqq 2 \int_0^t (\btheta^s - \btheta_\gamma^s)^{\transpose} (\bsigma(s,\btheta^s) - \bsigma(\floor{s},\btheta_\gamma^{\floor{s}})) \de \bW^s \,.
    \end{align}
    By the Lipschitz assumption, we have
    \begin{align}
        2 (\btheta^s - \btheta_\gamma^s)^{\transpose} (\bb(s,\btheta^s) - \bb(\floor{s},\btheta_\gamma^{\floor{s}})) & \leq \norm{\btheta^s - \btheta_\gamma^s}_2^2 + \norm{\bb(s,\btheta^s) - \bb(\floor{s},\btheta_\gamma^{\floor{s}})}_2^2 \notag \\
                                                                                                                     & \leq C (\gamma^2 d + \norm{\btheta^s - \btheta_\gamma^{\floor{s}}}_2^2) \,,                                                   \\
        \frobnorm{\bsigma(s,\btheta^s) - \bsigma(\floor{s},\btheta_\gamma^{\floor{s}})}^2                            & \leq C (\gamma^2 d + \norm{\btheta^s - \btheta_\gamma^{\floor{s}}}_2^2) \,.
    \end{align}
    Therefore, we have
    \begin{align}
        \E\ab[\sup_{0 \leq t \leq T}\norm{\btheta^t - \btheta_\gamma^t}_2^2] & \leq C \int_0^T (\gamma^2 d + \E\norm{\btheta^s - \btheta_\gamma^{\floor{s}}}_2^2) \de s + \E\ab[\sup_{0 \leq t \leq T}\abs{M^t}] \,.
    \end{align}
    We next bound the martingale term $M^t$. By the Burkholder--Davis--Gundy inequality,
    \begin{align}
        \E\ab[\sup_{0 \leq t \leq T}\abs{M^t}] & \leq C \E\ab[\ab(4 \int_0^T \norm{(\btheta^s - \btheta_\gamma^s)^{\transpose} (\bsigma(s,\btheta^s) - \bsigma(\floor{s},\btheta_\gamma^{\floor{s}}))}_2^2 \de s)^{1/2}] \notag          \\
                                               & \leq C \E\ab[\ab(\int_0^T \norm{\btheta^s - \btheta_\gamma^s}_2^2 \frobnorm{\bsigma(s,\btheta^s) - \bsigma(\floor{s},\btheta_\gamma^{\floor{s}})}^2 \de s)^{1/2}] \notag                \\
                                               & \leq C \E\ab[\ab(\sup_{0\leq t \leq T} \norm{\btheta^t - \btheta_\gamma^t}_2^2)^{1/2}\ab(\int_0^T (\gamma^2 d + \norm{\btheta^s - \btheta_\gamma^{\floor{s}}}_2^2) \de s)^{1/2}] \notag \\
                                               & \leq \frac{1}{2} \E\ab[\sup_{0\leq t \leq T} \norm{\btheta^t - \btheta_\gamma^t}_2^2] + C \int_0^T (\gamma^2 d + \E\norm{\btheta^s - \btheta_\gamma^{\floor{s}}}_2^2) \de s \,.
    \end{align}
    In the last line, we used Young's inequality.
    Therefore, we have
    \begin{align}
        \E\ab[\sup_{0 \leq t \leq T} \norm{\btheta^t - \btheta_\gamma^t}_2^2] & \leq C \int_0^T (\gamma^2 d + \E\norm{\btheta^s - \btheta_\gamma^{\floor{s}}}_2^2) \de s + \frac{1}{2} \E\ab[\sup_{0\leq t \leq T} \norm{\btheta^t - \btheta_\gamma^t}_2^2] \,.
    \end{align}
    By rearranging the terms, we have
    \begin{align}
        \E\ab[\sup_{0 \leq t \leq T} \norm{\btheta^t - \btheta_\gamma^t}_2^2] & \leq C \gamma^2 Td + C \int_0^T \E\norm{\btheta^s - \btheta_\gamma^{\floor{s}}}_2^2 \de s \,.
    \end{align}

    Finally, we have $\norm{\btheta^s - \btheta_\gamma^{\floor{s}}}_2^2 \leq 2 \norm{\btheta^s - \btheta_\gamma^{s}}_2^2 + 2 \norm{\btheta_\gamma^{s} - \btheta_\gamma^{\floor{s}}}_2^2$ and
    \begin{align}
        \btheta_\gamma^{s} - \btheta_\gamma^{\floor{s}} & = \bb(\floor{s},\btheta_\gamma^{\floor{s}}) (s-\floor{s}) + \bsigma(\floor{s},\btheta_\gamma^{\floor{s}}) (\bW^s - \bW^{\floor{s}}) \,.
    \end{align}
    By the linear growth condition and \cref{lem:sde_disc_norm_bound}, we have
    \begin{multline}
        \E\norm{\btheta_\gamma^{s} - \btheta_\gamma^{\floor{s}}}_2^2 \leq C \gamma^2 \E\norm{\bb(\floor{s},\btheta_\gamma^{\floor{s}})}_2^2 + C \gamma \E\frobnorm{\bsigma(\floor{s},\btheta_\gamma^{\floor{s}})}^2 \\
        \leq C \gamma (d + \E\norm{\btheta_\gamma^{\floor{s}}}_2^2) \leq \napier^{C T} \gamma (d + \norm{\btheta^0}_2^2) \,.
    \end{multline}
    Combining the above bounds, we have
    \begin{align}
        \E\ab[\sup_{0 \leq t \leq T}\norm{\btheta^t - \btheta_\gamma^t}_2^2] \leq \napier^{C T} \gamma (T d + \norm{\btheta^0}_2^2) + C \int_0^T \E\ab[\sup_{0 \leq s \leq t} \norm{\btheta^s - \btheta_\gamma^s}_2^2] \de t \,.
    \end{align}
    Applying Gr\"onwall's inequality, we obtain the bound \eqref{eq:sde_discretize_error}.
\end{proof}

\end{document}